\title{Active Learning for Optimal Intervention Design
in Causal Models}
\author[1,2]{Jiaqi Zhang}
\author[2,3,4]{Louis Cammarata} 
\author[1,2,4]{Chandler Squires}
\author[1,*]{Themistoklis P. Sapsis}
\author[1,2,*]{Caroline Uhler}
\affil[1]{Massachusetts Institute of Technology}
\affil[2]{Broad Institute of MIT and Harvard}
\affil[3]{Harvard University}
\affil[4]{Equal contributions}
\affil[*]{Corresponding authors: culer@mit.edu, sapsis@mit.edu}
\date{May 13, 2023}
\begin{document}

\maketitle
% \addtocontents{toc}{\protect\setcounter{tocdepth}{0}}

%TC:ignore
\begin{abstract}
Sequential experimental design to discover interventions that achieve a desired outcome is a key problem in various domains including science, engineering and public policy. When the space of possible interventions is large, making an exhaustive search infeasible, experimental design strategies are needed. In this context, encoding the causal relationships between the variables, and thus the effect of interventions on the system, is critical for identifying desirable interventions more efficiently. Here, we develop a causal active learning strategy to identify interventions that are optimal, as measured by the discrepancy between the post-interventional mean of the distribution and a desired target mean. The approach employs a Bayesian update for the causal model and prioritizes interventions using a carefully designed, causally informed acquisition function. This acquisition function is evaluated in closed form, allowing for fast optimization. The resulting algorithms are theoretically grounded with information-theoretic bounds and provable consistency results \rev{for linear causal models with known causal graph}. We apply our approach to both synthetic data and single-cell transcriptomic data from Perturb-CITE-seq experiments to identify optimal perturbations that induce a specific cell state transition. The causally informed acquisition function generally outperforms existing criteria allowing for optimal intervention design with fewer but carefully selected samples. 
\end{abstract}
%TC:endignore

\section{Introduction}

An important problem across multiple disciplines, ranging from bioengineering to mechanical systems, operations research and environmental regulation is the discovery of interventions on a system that can produce a desired outcome. 
With little prior knowledge of the outcome before performing the intervention, the number of possible choices for the optimal design can be huge.
In particular, the interventions in many applications are combinatorial, resulting in an exponential size design space, making an exhaustive search infeasible.
%
%\rev{\sout{In these settings, experimental design strategies are needed in order to identify the desirable interventions.}}
%
Examples include experimental design of genetic perturbations, such as those for cellular reprogramming in regenerative medicine \cite{cherry2012reprogramming}, optimal feedback control in mechanical systems \cite{todorov2002optimal} as well as turbulent flows \cite{blanchard2021bayesian}, dynamic pricing strategies in customer networks \cite{sunar2019optimal} and iterative intervention research for climate change adaption~\cite{serrao2013climate}.

In this context, \emph{active learning} has been proposed as a machine learning strategy to efficiently explore the search space \cite{fu2013survey}. 
Such methods sequentially and strategically acquire new interventions, with the goal being to discover a (close to) optimal intervention using the fewest number of samples.
\rec{{While it can be of interest to identify optimal interventions for \emph{estimating} particular quantities in the model (e.g., \cite{jesson2021causal}), in this work we consider optimality of an intervention with respect to \emph{optimizing} its effect.}}
Specifically, this is done by successively (i) updating the model belief using samples acquired so far from different interventions; (ii) selecting the next intervention to obtain samples from by constructing and optimizing an acquisition function, which prioritizes interventions that are more informative for the desired outcomes; see Fig.~\ref{fig:1} for a schematic.

Standard approaches towards this problem are correlation-based.
More precisely, the idea is to estimate associational relations between intervention and outcome to update the model belief and make decisions about which samples to acquire next.
The two main approaches use either statistical theory, by minimizing the posterior variance of the outcome estimate~\cite{cohn1996active}, or information theory, by maximizing the mutual information between its samples and the quantity of interest~\cite{houlsby2011bayesian}. 
While these methods are being widely applied, correlation-based approaches are not optimal when the underlying model is causal, since they do not take into account the structural information that can reduce the number of feasible models.
 Many systems that are relevant for applications are causal, where an intervention can only have an impact on downstream variables. It is therefore of interest to develop methods that learn what is necessary about the underlying causal mechanisms in order to identify optimal interventions more quickly.

These limitations of correlation-based approaches have been noted in the related bandit setting, where the goal is to minimize the cumulative regret by selecting arms iteratively, and causal relations have been used to improve over standard regret bounds \cite{lattimore2016causal, lee2018structural}.
{Recent works in the related field of Bayesian optimization have also considered exploiting causal structure \cite{aglietti2020causal,branchini2022causal}. We provide a detailed review of these works in Supplementary Information~\ref{sec:a0}.
Our work extends these works in two ways: (1) rather than optimizing a single target
node, we optimize the entire distribution mean; (2) rather than considering discrete or finite interventions, we consider continuous-valued interventions.
This is important for various applications, such as optimizing drug dosages or product prices.}
%
% Recent works exploiting causal structure have also considered treatment-effect estimation in active learning \cite{jesson2021causal}. 
%
%\rev{\sout{However, in all these works, the interventions are discrete and finite; a framework is missing for dealing with continuous-valued interventions, such as the dose of a drug or the speed of an actuator, an important setting for many applications.}
%
% However, these works deal with discrete and finite interventions, whereas many applications involve continuous-valued interventions, such as the dose of a drug or the price of a product. 
%
As a concrete example, consider cellular reprogramming in genomics, a problem of great interest for regenerative medicine \cite{cahan2014cellnet,cherry2012reprogramming}. 
The aim in this field is to reprogram easily accessible cell types into a desired cell type via continuous-valued interventions such as the over-expression of particular transcription factors/genes. 
Since such interventions act on genes, which regulate each other through different pathways~\cite{kemmeren2014large}, this problem can be formulated as optimal intervention design in a causal model represented by a directed network on genes.

% In this work, we present a class of active learning acquisition functions tailored for optimal intervention design specifically for causal models.
% %
% The proposed method has benefits over previous correlation-based approaches since it operates on a reduced model constrained by causal relations which can better predict the effect of unseen interventions from fewer samples; it also provides a systematic framework to analyze continuous-valued interventions, which is missing from previous causal active learning methods. 
% %
In particular, we model the underlying causal model in a Bayesian way using a structurally informed prior. 
% %
We then construct an acquisition function based on this model and show how to efficiently evaluate and optimize it. 
% %
The acquisition function enjoys both an information-theoretic bound and provably recovers the optimal intervention in the appropriate limit.
% %
We demonstrate experimentally that our algorithms outperform baselines on both synthetic data and for the design of genetic perturbations in the context of single-cell gene expression data.
Finally, we conclude with an outlook to future research directions and discuss other potential applications.

\begin{figure}[t]
\centering
\includegraphics[width=.6\linewidth]{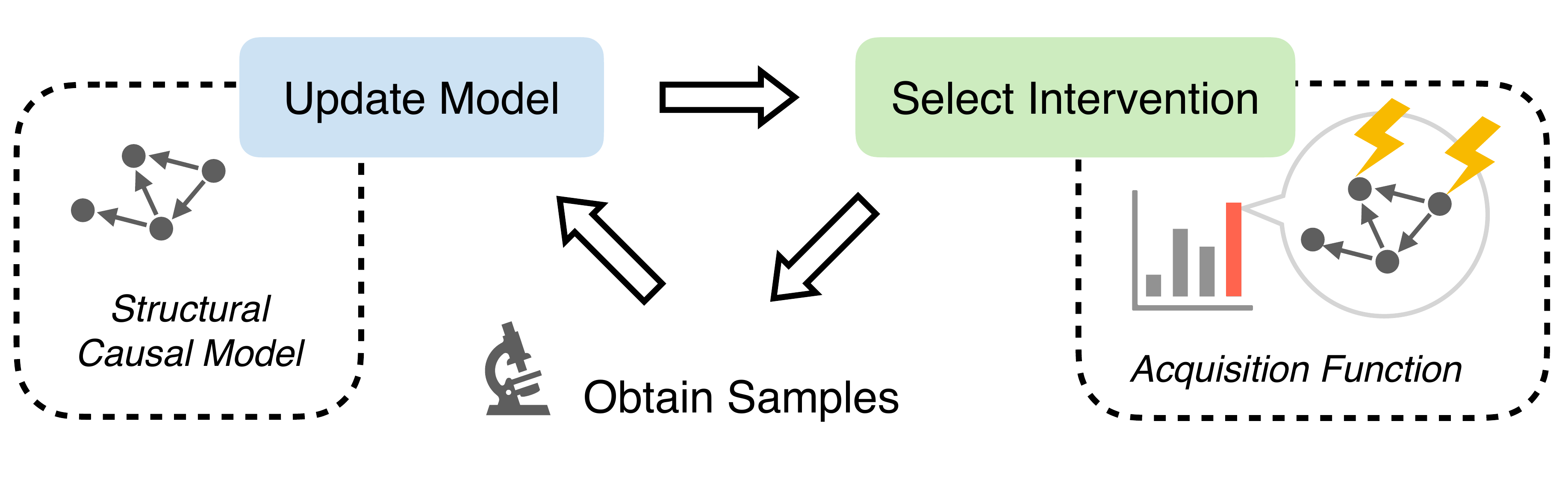}
\caption{\rec{\textbf{Overview schematic of active learning framework for optimal intervention design in causal models.}} Iterative process of active learning for intervention design in causal models, where the main design steps are to update the structural causal model using the obtained samples and to select the next intervention based on an acquisition function.}
\label{fig:1}
\end{figure}

\section{Problem Setup}

The state of the system of interest is described by a $p$-dimensional random variable $\bx=(\sfx_1,...,\sfx_p)\in \bbR^{p}$ sampled from a Structural Causal Model (SCM) \cite{spirtes2000causation, pearl2009causality}.
Precisely, the \textit{causal structure} of the system is represented by a Directed Acyclic Graph (DAG), and we assume that the joint distribution $\rmP$ of $\bx$ factorizes with respect to the DAG, i.e., $\rmP(\bx) = \prod_{i=1}^{p} \rmP(\sfx_i\mid\sfx_{\pa(i)})$, where $\pa(i)=\{j\in [p]: j\rightarrow i\}$ denotes the parents of node $i$ in the DAG.
In this formulation, the conditional distributions are the \textit{causal mechanisms} that generate the variable $\sfx_i$ from its parents $\sfx_{\pa(i)}$. 
Assuming a linear Gaussian model, then
\begin{equation}\label{eq:1}
    \sfx_i = \sum_{k=1}^{p} B_{ik} \sfx_{k} + \epsilon_i, \quad \forall i \in [p],
\end{equation}
where the real-valued coefficients $B_{ik}=0$ if $k\notin\pa(i)$ and the exogenous noise variables $\epsilon_i\sim \cN(0, \sigma_{i}^2)$ with variance $\sigma_i^2 >0$ are mutually independent.
An example of this model is given in Fig.~\ref{fig:2}. 
For simplicity, we assume that the system is centered to be mean zero, but intercepts can be easily added to Eq.~\eqref{eq:1} and the following approaches still apply.

An intervention, denoted by a vector $\ba \in \bbR^p$, modifies the conditional distribution $\rmP(\sfx_i \mid \sfx_{\pa(i)})$ into a new conditional distribution $\rmP^{\ba}(\sfx_i\mid\sfx_{\pa(i)})$. Every $i\in[p]$ for which $a_i\neq 0$ is called an \textit{intervention target}. 
In this work, we consider shift interventions \cite{rothenhausler2015backshift,zhang2021matching} {(which are a special class of soft interventions \cite{eberhardt2007interventions}),} where \rev{the interventional distribution $\rmP^\ba(\bx)$ of the modified system under intervention} is given by
\begin{equation}\label{eq:2}
    \sfx_i = \sum_{k=1}^{p} B_{ik} \sfx_{k} + a_i+ \epsilon_i, \quad \forall i \in [p],
\end{equation}
where $a_i=0$ for every $i$ that is not an intervention target. This can be written in matrix form as $\bx = \bB\bx+\ba+\bepsilon$, or $\bx=(\bI-\bB)^{-1}(\ba+\bepsilon)$, where $\bI$ is the $p$-th order identity matrix and $\bepsilon\sim\cN(\bzero,\bSigma)$ with $\bSigma$ being the diagonal matrix with $\sigma_1^2,...,\sigma_p^2$ on its diagonal. \rev{These interventions can be used to model a broad class of genetic perturbations including CRISPR interference/activation \cite{shalem2015high} and transcription factor overexpression \cite{joung2023transcription}.} \rec{{While we consider the case where all variables can be intervened on, as e.g.~in genetic perturbation experiments~\cite{replogle2022mapping}, for other applications it may be of interest to consider extensions where only a subset of variables can be intervened on. We also note that an extension to hard interventions can be obtained by setting $B_{ik}$ to $0$ for every $i$ that is an intervention target and subsequently defining the post-interventional mean.}}

\begin{figure}[t]
\centering
\includegraphics[width=.3\linewidth]{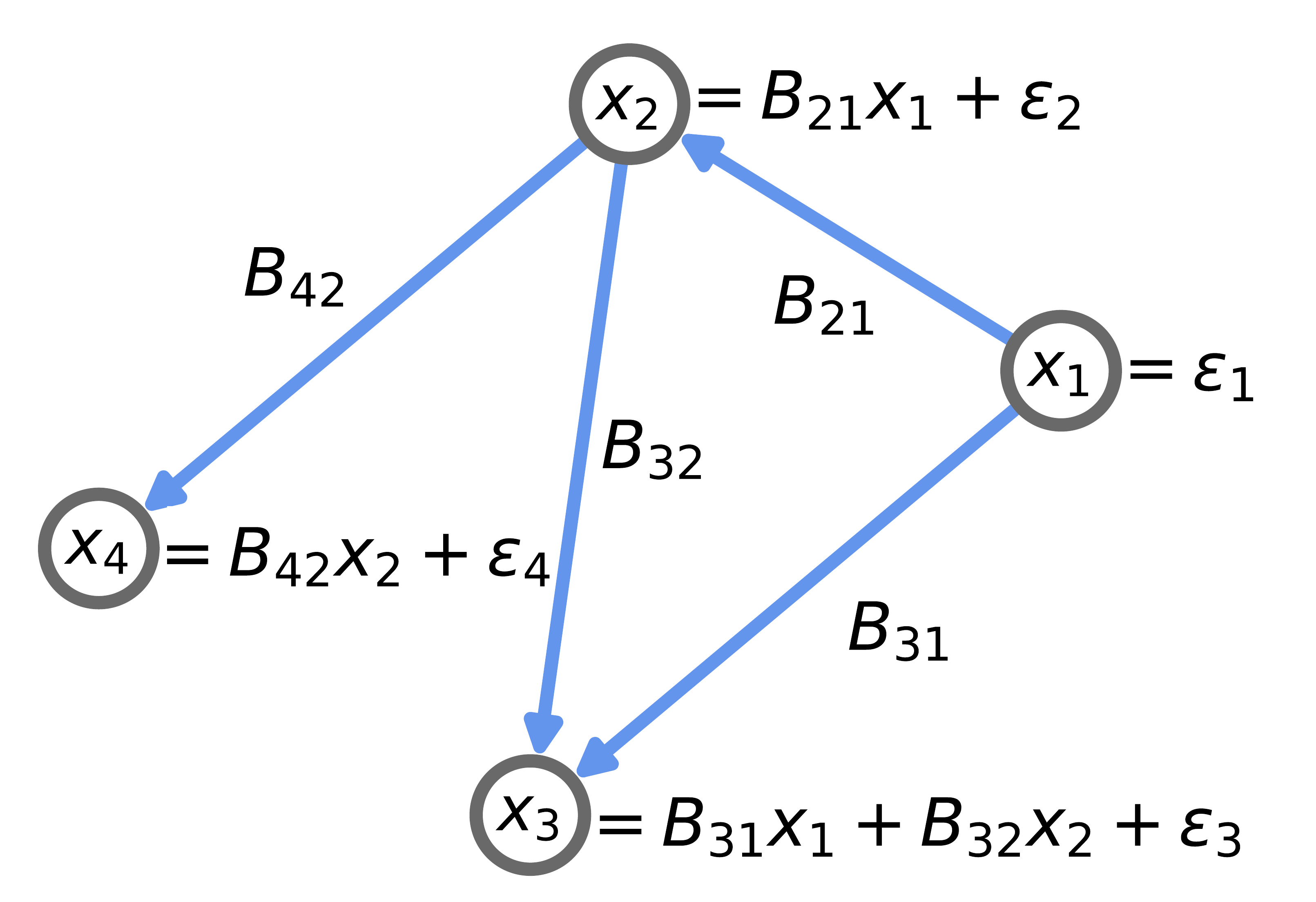}
\caption{\rec{\textbf{Example causal model.}} An example of a linear structural causal model (SCM) with Gaussian noise on a 4-node DAG, where nodes and edges are labeled with variables and coefficients, respectively.}
\label{fig:2}
\end{figure}

Whether or not an intervention induces the desired outcome is decided through samples obtained in the intervened environment, i.e., $\bx\sim\rmP^\ba$. 
Since $\bx$ is random, it is intuitive to use the average over multiple samples \cite{sen2017identifying,zhang2021matching}, i.e., the empirical estimate of the distribution mean, and compare this to the \textit{target mean} $\bmu^*$, which is user-specified and describes the desired outcome. 
Formally, we seek an intervention $\ba\in\mathbb{R}^p$ such that, after obtaining $n$ samples $\bx^{(1)}, \dots , \bx^{(n)}\in \mathbb{R}^p$, which we denote by $\bx_{[n]}$, the squared distance between the empirical mean and the target mean, i.e., the quantity $\|\frac1n \sum_{m=1}^n \bx^{(m)} -\bmu^*\|_2^2$, is minimized. 
Denoting by $\bepsilon^{(1)}, \dots , \bepsilon^{(n)}$ (or short $\bepsilon_{[n]}$) the exogenous noise vectors independently sampled from $\cN(\bzero,\bSigma)$, then the squared distance between the empirical and target mean can be written as
\begin{equation}\label{eq:3}
    \Big\|(\bI-\bB)^{-1}\big(\ba+\frac1n\sum_{m=1}^n \bepsilon^{(m)}\big)-\bmu^*\Big\|_2^2.
\end{equation}
using the matrix form. Here $\frac1n \sum_{m=1}^n\bepsilon^{(m)}$ accounts for
the finite number $n$ of interventional samples, where $n$ is user-specified based on the available budget. 
However, since $\bepsilon_{[n]}$ does not depend on the choice of $\ba$, we can discard this term for the minimization of Eq.~\eqref{eq:3} with respect to $\ba$ by considering its infinite-sample version. 
When the sample size goes to infinity, by the law of large numbers, $\frac1n\sum_{m=1}^n \bepsilon^{(m)}=0$ almost surely, 
and thus the optimal intervention $\ba^*$ achieves the minimum value of zero in Eq.~\eqref{eq:3} and has an explicit form $\ba^*=(\bI-\bB)\bmu^*$, which as expected depends on the unknown parameter $\bB$.

In what follows, we assume that the DAG structure is given,
i.e., the sparsity pattern of $\bB$ is known but not the edge weights $B_{ik}$ for $B_{ik}\neq 0$.
This assumption is natural in many applications including fluid mechanics~\cite{koumoutsakos1995high} and optimal pricing~\cite{sunar2019optimal}, where there are %\rev{\sout{well-established simulation methods that are based on}} 
pre-specified networks given by either known laws or prior information. 
For applications where the network cannot be assumed to be known such as problems in biology, the common approach is to use existing data from various sources to learn the DAG first \cite{cahan2014cellnet,rackham2016predictive}.
Further discussions on the implications of unknown DAG structure are provided in Supplementary Information~\ref{sec:d}.

Let $\cD_{t} = \{ (\bx_{[n_1]}, \ba^{(1)}), \dots , (\bx_{[n_t]}, \ba^{(t)})\}$ be the ``current'' dataset, consisting of all samples obtained so far by performing interventions $\ba^{(1)}, \dots , \ba^{(t)}$;
here, $n_1,...,n_{t}$ denotes the number of samples obtained for each of the $t$ interventions.
To simplify notation, we will assume $n_1=\cdots=n_t=n$ in the following, but all results still hold when using different sample sizes for each intervention.
Given this dataset, the goal is to select the next intervention $\ba^{(t+1)}\in\mathbb{R}^p$ such that the resulting dataset $\cD_{t+1}=\cD_t \cup (\bx_{[n]}, \ba^{(t+1)})$ contains as much information as possible about the underlying optimal intervention $\ba^*$.
The overall aim of iteratively picking interventions is to find the optimal intervention $\ba^*$ with a minimum number of samples.

\section{Design of the Acquisition Function}

To sequentially select the next best intervention, there are two important steps (see Fig.~\ref{fig:1} for a schematic): (i) updating the posterior of the edge weights in the causal model based on the samples in $\cD_t$ collected so far; (ii) constructing an \textit{acquisition function} $h(\ba,\cD_t)$ such that the new dataset $\cD_{t+1}$ after adding samples from $\ba^{(t+1)}=\argmin_{\ba}h(\ba,\cD_t)$ is most informative of the optimal intervention $\ba^*$ and that can be evaluated and optimized efficiently. 
% We present our methods for these two steps in the following subsections. 

{For the first step, we generalize the \emph{DAG-Wishart} distribution \cite{geiger2002parameter,kuipers2022interventional} to define a Bayesian model on the parameters $\bB$, which can be updated efficiently given a dataset $\cD_t$; see \emph{Methods}.} %The second important ingredient of our framework is a carefully designed acquisition function with two essential properties: (i) it should prioritize interventions of interest; (ii) it should be evaluated and optimized efficiently. 
%
% We therefore seek a computationally tractable quantity that measures how informative an intervention is about the optimal intervention.
%
{For the second step}, we first characterize the uncertainty in estimating the optimality of an arbitrary intervention $\ba$. 
Recall that the optimality of $\ba$ is given by the square distance in Eq.~\eqref{eq:3}, which measures how close the intervention is to achieving the target mean. 
Since $\bB$ is unknown, we can only estimate this square distance based on the current collected samples in $\cD_t$. 
The uncertainty of the estimation can be characterized by its variance
$\Var(\|(\bI-\bB)^{-1}(\ba+\frac1n \sum_{m=1}^n \bepsilon_m) - \bmu^*\|_2^2| \cD_t)$.
However, this quantity is typically hard to evaluate as it involves $(\bI - \bB)^{-1}$, whose posterior does not have a closed form.
We instead multiply the term inside the variance by $\bI-\bB$ and characterize the following variance:
\begin{equation}\label{eq:6}
    \sigma_{g(\ba)|\cD_t}^2 := \Var\big(g(\ba)\big| \cD_t\big),
\end{equation}
where we define
\begin{equation}
g(\ba):=\Big\|\big( \ba+\frac1n \sum_{m=1}^n \bepsilon^{(m)} \big) - (\bI-\bB) \bmu^*\Big\|_2^2.
\end{equation}
Interestingly, $g(\ba)$ can be interpreted as a noisy version of the \textit{optimality gap} by noting that the gap between an arbitrary intervention $\ba$ and the optimal intervention $\ba^*=(\bI-\bB)\bmu^*$ can be written as $\|\ba-\ba^*\|_2^2=\|\ba- (\bI-\bB) \bmu^*\|_2^2$, which is a version of $g(\ba)$ without noise terms.

Building upon Eq.~\eqref{eq:6}, the next acquired intervention should be such that after adding its samples to $\cD_{t}$ to obtain $\cD_{t+1}$, the uncertainty $\sigma_{g(\ba)|\cD_{t+1}}^2$ conditioned on $\cD_{t+1}$ is minimized. 
However, since the samples are unobserved before performing the intervention, we do not have access to $\cD_{t+1}$ yet. 
Therefore, when deciding which intervention to perform, we consider the $\ba'$-augmented dataset $\cD_{t}(\ba') = \cD_t \cup (\bar{\bx}'_{[n]}, \ba')$ with \textit{hypothetical samples} $\bar{\bx}'_{[n]}$ of $\ba'$, which are $n$ repetitions of the plug-in estimator defined as $\bar{\bx}':=(\bI-\bbE(\bB|\cD_t))^{-1}\ba'$. \rec{This estimator is obtained through $\bx'=(\bI-\bB)^{-1}(\ba'+\bepsilon)$ by using the Maximum A Posteriori estimate $\bbE(\bB|\cD_t)$, which concentrates to $\bB$ as $\cD_t$ grows \cite{kleijn2012bernstein}, and replacing $\bepsilon$ with its mean $\bzero$.} 

By denoting the feasible set of interventions by $\cA$, a reasonable choice is to select $\ba^{(t+1)}$ based on integrating the uncertainty $\sigma_{g(\ba)|\cD_{t}(\ba_t)}^2$ over all $\ba\in\cA$.
Formally, we let 
\begin{equation}\label{eq:7}
\ba^{(t+1)}=\argmin_{\ba'\in\cA}h(\ba',\cD_t),    
\end{equation}
and define the \textit{causal integrated variance} (CIV) acquisition function $h$ as follows.

\begin{definition}\label{def:2}
The CIV acquisition function evaluated at $\ba'$ with current dataset $\cD_t$ is
\begin{equation}\label{eq:8}
\begin{aligned}
    h(\ba', \cD_t) = \int_{\cA} \sigma_{g(\ba)|\cD_t(\ba')}^2~d\nu(\ba),
\end{aligned}
\end{equation}
where $\nu$ is a non-negative measure on $\cA$.
\end{definition}

Intuitively, this acquisition function provides a one-step look-ahead of the overall uncertainty after acquiring intervention $\ba'$. 
Minimizing it will prioritize interventions that are most informative towards estimating the optimal intervention.
This acquisition function also automatically accounts for the causal model by using a posterior on $\bB$.

Note that in this formulation, we can choose the measure $\nu$.
For example, a uniform measure treats each intervention $\ba\in\cA$ equally and the resulting uncertainty captures how well we can estimate the entire landscape of the optimality gap $g$.
%
%On the other hand, a Dirac measure at an intervention $\ba$ results in an uncertainty that only captures how well we can estimate for this single point.
In most cases, an overly concentrated measure (e.g., a Dirac measure at a single point) is not preferred, as it can lead to the erroneous estimation of $g(\ba)$ for most $\ba\in\cA$, which makes minimizing $g$ hard.
%
% We will describe how to choose the measure $\nu$ in Section~\ref{sec_nu}.
{Inspired by a recent line of work on output-weighted acquisition functions~\cite{sapsis2020output, mohamad2018sequential}, we describe how to choose a non-uniform measure $\nu$ in Section~\ref{sec_nu}.}

{We discuss how to optimize CIV to solve for $\ba^{(t+1)}$ in \emph{Methods}, where we show that the variance $\sigma_{g(\ba)|\cD_t(\ba')}^2$ can be computed in closed form. Considering $\cA$ to be the unit hypersphere and $\nu$ to be the uniform measure on it, this then leads to an explicit formula for CIV, which enables fast gradient-based optimizers to be used.
}

\subsection{Making the Acquisition Function Output-Weighted}
\label{sec_nu}

While the use of a uniform measure $\nu$ places an equal weight on reducing the variance of estimating the optimality gap for all $\ba$ in $\cA$, since our goal is to identify the optimal intervention (i.e., which minimizes the optimality gap), it is desirable to place more weight on interventions $\ba$ in $\cA$ with smaller optimality gap. Note that as the ambient dimension grows, the volume (and thus the probability) of interventions with optimality gap under a certain threshold shrinks (Supplementary Information~\ref{sec:c}). This motivates the following measure, which uses the inverse of the optimality gap probability to up-weight interventions $\ba$ in $\cA$ that are closer to the optimal intervention:
\begin{equation}\label{eq:12}
    d\nu(\ba) = \frac{f_{\ba}(\ba)}{f_{\|\ba-\bb\|_2^2}(\|\ba-\bb\|_2^2)}d\ba,
\end{equation}
where $\bb=(\bI - \bbE(\bB|\cD_t))\bmu^*$ is the estimated optimal intervention, and the probability density function (pdf) $f_{\|\ba-\bb\|_2^2}$ is the distribution on the optimality gap $\|\ba-\bb\|_2^2$ induced by the uniform distribution $f_{\ba}$ on $\cA$. 

\begin{figure}[ht]
\centering
\includegraphics[width=.6\linewidth]{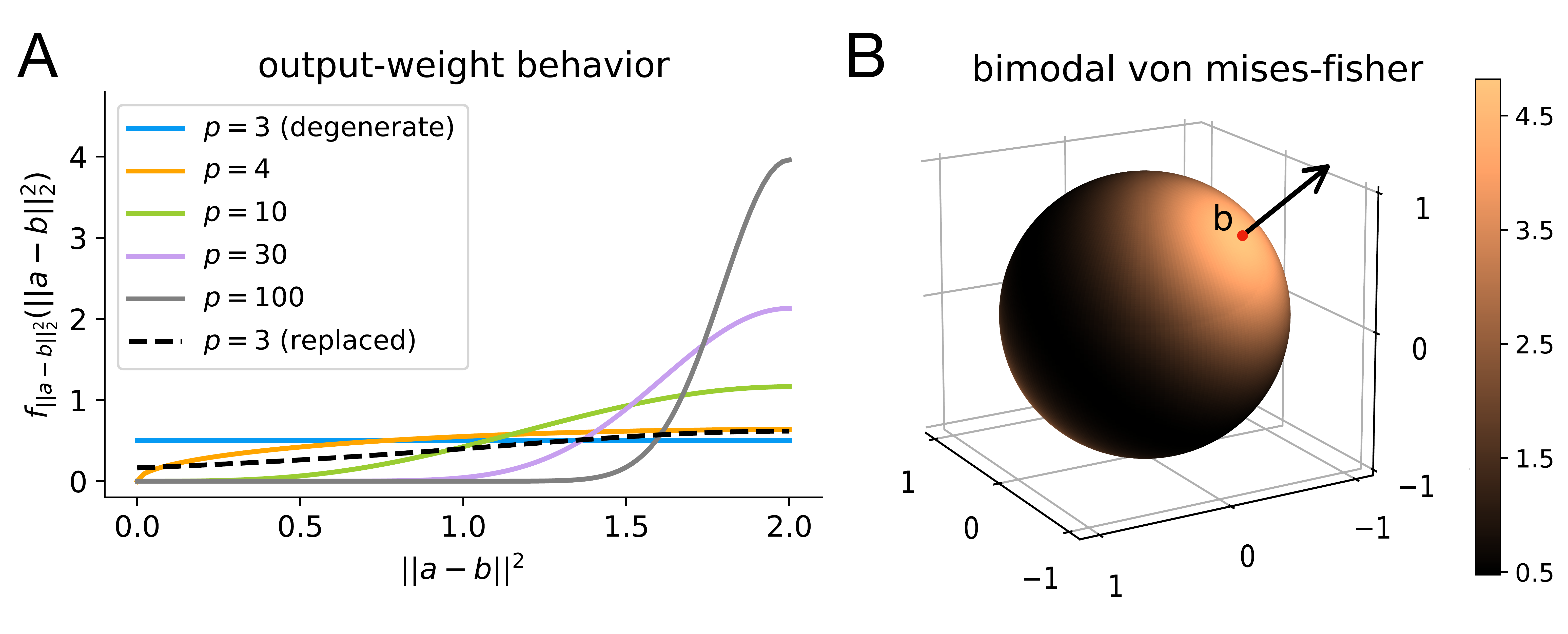}
\caption{\textbf{Illustration of output-weighted non-negative measure $\nu(\ba)$ on the space of all possible interventions $\cA$.} (A) Values of $\nicefrac{d\ba}{d\nu(\ba)}$ (up to constant multipliers) plotted against values of $\|\ba-\bb\|_2^2$ on the half sphere with $\|\ba-\bb\|_2^2\leq 2$. Solid lines correspond to $f_{\|\ba-\bb\|_2^2}$ in different dimensions; the dashed line is the non-degenerate replacement for $p=3$ using the bimodal von Mises-Fisher distribution.  (B) Visualization of $\nu(\ba)$ in 3 dimensions using the bimodal von Mises-Fisher distribution. Here, $\bb\in\bbS^2$ was randomly generated and the points on the sphere are colored corresponding to the value of $\nicefrac{d\nu(\ba)}{d\ba}$, which is larger for directions that are more aligned with $\bb$.}
\label{fig:3}
\end{figure}

Fig.~\ref{fig:3}A shows that, as desired, the proposed weighting puts more mass on interventions that are closer to $\bb$ as dimension increases. 
When $p=3$, however, $\nu$ degenerates to a uniform weighting (Supplementary Information~\ref{sec:c}). 
In this case, we can use the bimodal von Mises-Fisher distribution, which behaves similar to Eq.~\eqref{eq:12} in higher dimensions (Fig.~\ref{fig:3}B, Supplementary Information~\ref{sec:c}). 
Note that the weighting proposed here is symmetric and also puts more mass on interventions that are closer to $-\bb$. The reason for this is that the optimal intervention can also be recovered by maximizing the optimality gap $\|\ba-\ba^*\|_2$ which gives $-\ba^*$ (Supplementary Information~\ref{sec:c}).

The corresponding \textit{causal output-weighted integrated variance} (CIV-OW) acquisition function is given by
\begin{equation}\label{eq:civ-ow}
    \begin{aligned}
    h_{\textrm{OW}}(\ba',\cD_t) = \int_{\cA} \sigma^2_{g(\ba)|\cD_t(\ba')}\cdot\frac{f_\ba(\ba)}{f_{\|\ba-\bb\|_2^2}(\|\ba-\bb\|_2^2)}~d\ba.
    \end{aligned}   
\end{equation}
Methods for evaluating and optimizing CIV-OW are given in Supplementary Information~\ref{sec:c}, Supplementary Fig.~\ref{fig:s1}-\ref{fig:s2}.

\subsection{Theoretical Results}

We provide two interpretations of the introduced CIV acquisition function. First, taking an information-theoretic perspective, we show that the proposed uncertainty measure can be lower bounded by the \emph{negative} mutual information between the variables of interest and the newly acquired samples.
Thus, minimizing the uncertainty corresponds to maximizing a lower bound to the mutual information, which means that CIV {\emph{approximately}} prioritizes the most informative interventions.
Second taking a graphical perspective, we illustrate how the causal structure is utilized by the CIV acquisition function to identify an intervention that is asymptotically consistent with the optimal intervention. 
{While the details underlying these theoretical results are provided in \emph{Methods}, Fig.~\ref{fig:4} illustrates our consistency results experimentally, showing that a gradient-based optimizer with initialization close to the optimal intervention $\ba^*$ converges to $\ba^*$ as $t$ increases.}

\begin{figure}[!hb]
\centering
\includegraphics[width=.6\linewidth]{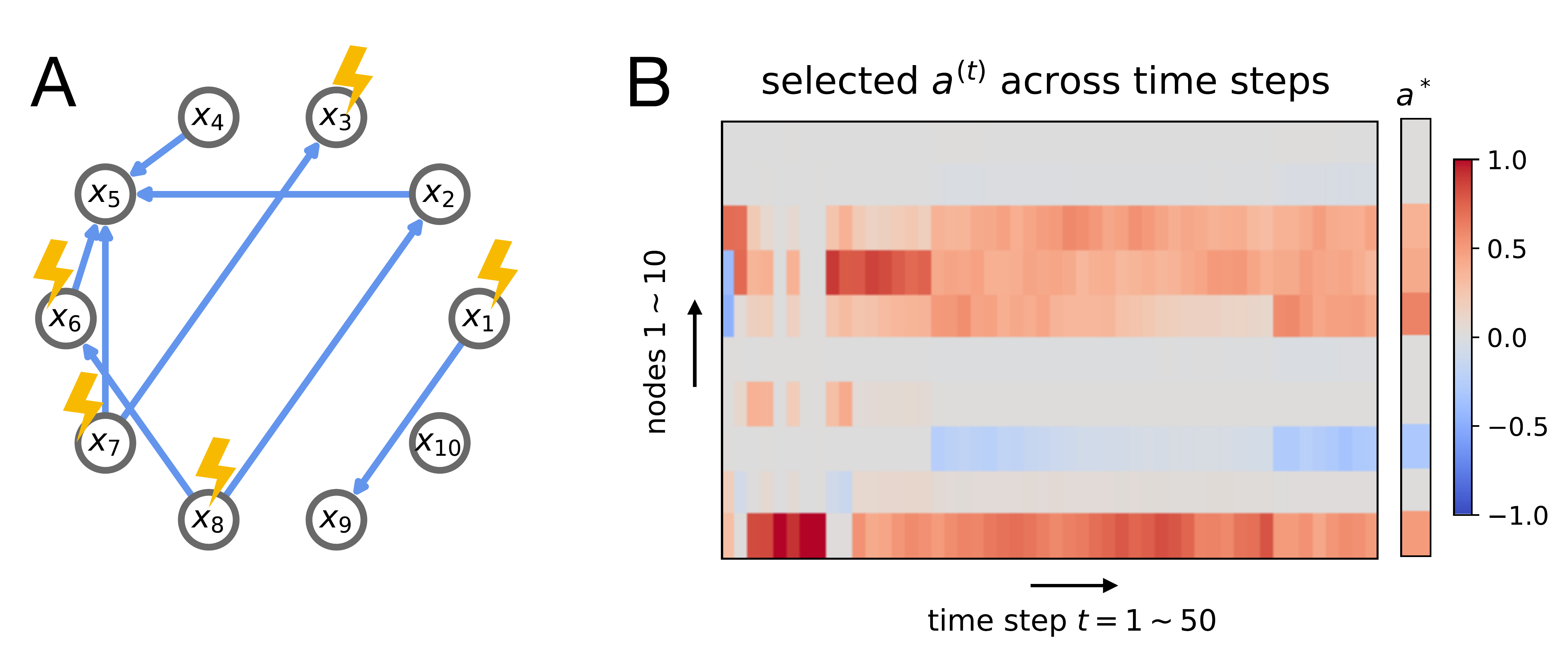}
\caption{\textbf{Convergence of the selected intervention $\ba^{(t)}$ to the optimal intervention $\ba^*$.} (A) DAG showing the targets of the optimal intervention $\ba^*$. (B) Minimizing the CIV acquisition function using a sample size of $n=1$ at each time step yields interventions $\ba^{(t)}$ for $t=1,\dots,50$ that converge to $\ba^*$.}
\label{fig:4}
\end{figure}

\section{Applications}

% We evaluate and benchmark our proposed approach in two ways. The first set of experiments focuses on simulations with DAGs of various structures and sizes, where the aim is to demonstrate the main ideas of the method. The second setting involves a biological dataset with perturbational single-cell samples \cite{frangieh2021multimodal}; this example showcases the applicability of our proposed method to real-world problems, demonstrating that the CIV acquisition function aids in identifying optimal interventions even in a high measurement noise setting and under violations of the assumption that the causal relationships are linear.

\subsection{Experiments on Synthetic Dataset}

To create a generative model following the linear SCM in Eq.~\eqref{eq:1}, we first generate a DAG with $p$ nodes (see Supplementary Information~\ref{sec:g}, Supplementary Fig.~\ref{fig:s3}).
Then, we randomly draw edge weights, namely, $B_{ik}$ with $k\in\pa(i)$, from a uniform distribution bounded away from zero. 
Next, we generate a sparse set of intervention targets and a randomly sampled optimal intervention $\ba^*$ over these targets.
Finally, we calculate the target mean $\bmu^*$ using $\ba^*$ and the ground-truth causal model. 
These steps construct a synthetic instance of a causal system and optimal intervention $\ba^*$. 
%
% We also generate the noise levels $\bSigma$ of the exogenous variables for each instance so as to ensure that the variances of the endogenous variables are of the same scale~\cite{reisach2021beware}. 
%
A more detailed description of the above procedure is given in \textit{Methods} and Supplementary Information~\ref{sec:g}.
%
% In simulations, the learner only has access to the structure of the DAG and the target mean $\bmu^*$ at first. 
%
% The learner then proceeds iteratively to select interventions, and at each time step, $n$ samples of the selected intervention are generated from Eq.~\eqref{eq:2} and revealed to the learner. 

We compare our two acquisition functions, CIV in Eq.~\eqref{eq:8} and CIV-OW in Eq.~\eqref{eq:civ-ow}, against four relevant baselines. 
The \emph{random} baseline correspond to a passive setting, where each intervention is selected at random and no information from the collected samples is used. 
We also compare against three other active methods.
The \emph{greedy} baseline selects the next intervention $\ba^{(t)}=(\bI-\bbE(\bB|\cD_t))\bmu^*$ purely based on the current estimate of $\ba^*$ (which is given by $\bbE(\bB|\cD_t)$, where $\bB$ is estimated from $\cD_t$). 
The \emph{MaxV} baseline seeks to select interventions that minimize the posterior variance of the estimate of the model parameters $\bB$. It uses as acquisition function a scalar version of this variance, namely $h_{\textrm{MaxV}}(\ba',\cD_t)=\max_{i\in[p]}\|\Var(\bB_{i,\pa(i)}|\ba',\cD_t)\|_2$, where $\|\cdot\|_2$ denotes the spectral norm of the covariance matrix of $\bB_{i,\pa(i)}$.
Since different rows of $\bB$ (e.g., $\bB_{i,\pa(i)}$ and $\bB_{j,\pa(j)}$ for $i\neq j$) are independent (Definition \ref{def:1}), we use the maximum of the spectral norms over $i\in[p]$. While our acquisition functions concentrate on estimating the model parameters that are relevant for $\ba^*$, this baseline estimates the entire model.
Finally, the \emph{CV} baseline seeks to minimize the posterior variance of estimating $\ba^*$ and uses the spectral norm of its covariance matrix as acquisition function, i.e., $h_{\textrm{CV}}(\ba',\cD_t)=\|\Var((\bI-\bB)\bmu^*|\ba',\cD_t)\|_2$. This is in contrast to our approach that integrates the estimation uncertainty over the entire feasible set of interventions.

To reduce evaluation noise effects, we run each method $20$ times over $10$ instances of a fixed DAG and optimal intervention $\ba^*$. Fig.~\ref{fig:5} shows our results for $30$-node DAGs with $10$ intervention targets and sample size $n=1$ per time step.
\rec{Denoting the obtained samples at time step $t$ by $\cD_t$, we use the unbiased estimate $\ba_t^*=(\bI-\bbE(\bB|\cD_t))\bmu^*$ of $\ba^*$ to obtain the current estimate of the target mean using the true model parameters, i.e., $\bmu^*_t=(\bI-\bB)^{-1}\ba_t^*$.}
Fig.~\ref{fig:5}A shows the decline in the {relative distance} between current and target mean {$\nicefrac{\|\bmu^*_t-\bmu^*\|_2}{\|\bmu^*\|_2}$} across time steps. 
Fig.~\ref{fig:5}B highlights the statistics of the last time step. As is apparent from these experiments, our proposed acquisition functions consistently outperform all baselines, with CIV-OW improving upon CIV by using an output-weighted measure. {In Supplementary Information~\ref{sec:g}, we examine the effect of varying different parameters including graph size, number of intervention targets, and graph structure, demonstrating the robustness of our results (Extended Data Fig.~\ref{fig:s4}-\ref{fig:s7}). We also include three additional baselines from prior works \cite{aglietti2020causal,astudillo2021bayesian,houlsby2011bayesian,bubeck2012regret} (discussed in detail in Supplementary Information~\ref{sec:g} and \ref{sec:a0}) on a $10$-node DAG (Extended Data Fig.~\ref{fig:rebut-s7}), and an experiment to examine the effect of misspecifying the underlying causal structure (Supplementary Fig.~\ref{fig:rebut-figs9} and Extended Data Fig.~\ref{fig:rebut-figs10}). These results further demonstrate the effectiveness of our proposed method in terms of both accuracy and efficiency.}

\begin{figure}[ht]
\centering
\includegraphics[width=.6\linewidth]{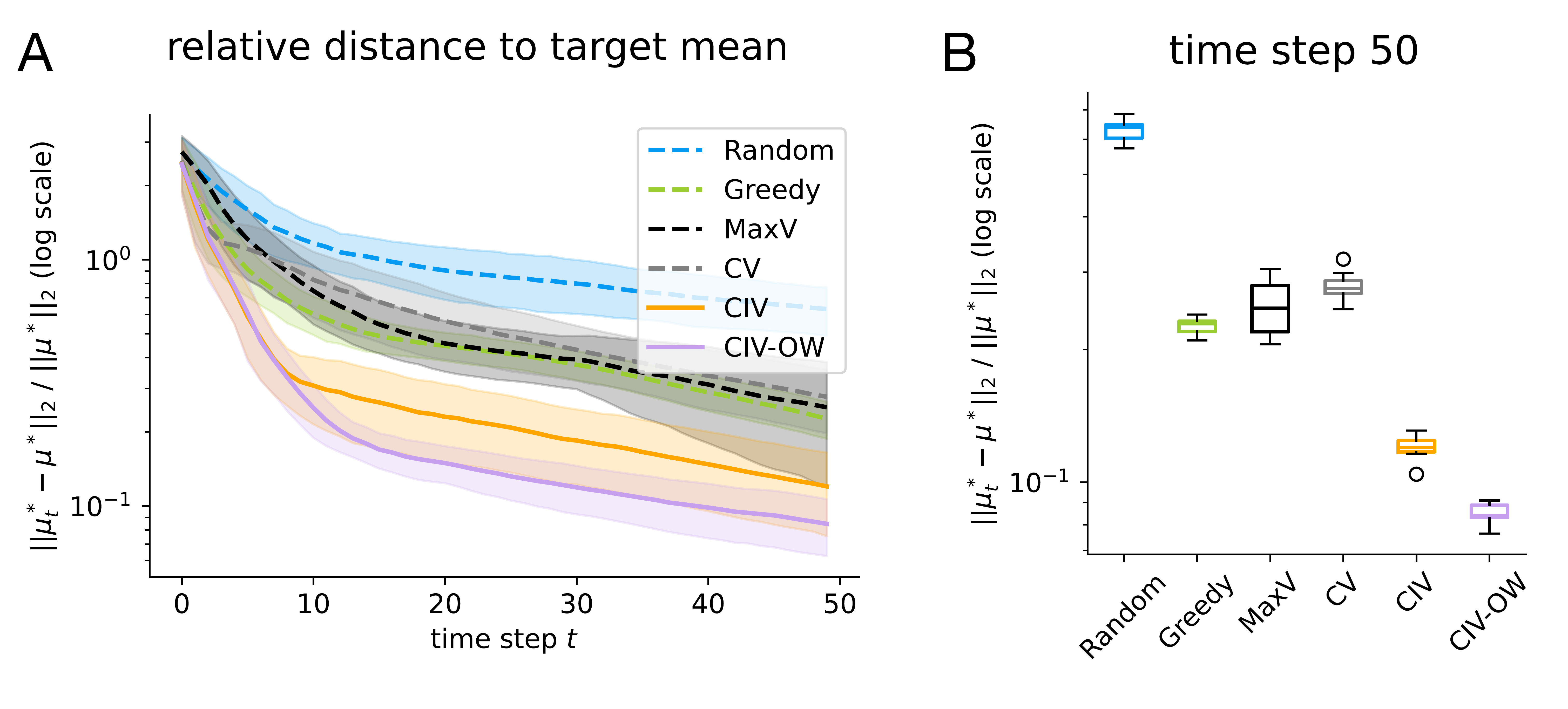}
\caption{\rec{\textbf{Comparison of different acquisition functions (random baseline, greedy baseline, baselines based on minimizing the posterior variance of estimating the model parameters (MaxV) or the optimal intervention target (CV), as well as our proposed Causal Integrated Variance (CIV) acquisition function and its output-weighted (CIV-OW) extension)  in a simulation study.} Simulations on} $10$ instances of a $30$-node DAG where the optimal intervention $\ba^*$ has $10$ targets. Each method is run $20$ times and averaged. (A) {Relative distance} between the target mean $\bmu^*$ and the best approximation $\bmu^*_t$ across time step $t$. Lines denote the mean over the $10$ instances; shading corresponds to one standard deviation. (B) {Relative distance} statistics of each method \rec{averaged over $10$ instances} at the last time step in (A).}
\label{fig:5}
\end{figure}

\subsection{Experiments on Biological Dataset}

We next study the performance of our method to identify the optimal intervention for inducing a desired cell state change in human melanoma cells, thereby mimicking a cellular reprogramming experiment. 
For this, we use Perturb-CITE-seq data from~\cite{frangieh2021multimodal} consisting of single-cell transcriptomic readouts for a large collection of patient-derived melanoma cells.
Here, an intervention is a genetic perturbation that targets one or multiple genes and drives the expression of these genes towards zero (more precisely, these are knockout interventions).
Cell states are measured by the joint distribution of the expression of a collection of genes. 
Samples from these distributions correspond to gene expression vectors of individual melanoma cells. To avoid dealing with batch effects, we use only one of the screens from \cite{frangieh2021multimodal} (namely, the control screen with no additional treatment) which contains $5,039$ cells with no perturbation and $30,486$ cells with interventions on subsets of $248$ genes associated with immunotherapy resistance (Supplementary Fig.~\ref{fig:s8}).
Gene expression of each sampled cell is captured as a vector of log-transformed Transcripts-Per-Million (log-TPM).

\begin{figure}[!t]
\centering
\includegraphics[width=.6\linewidth]{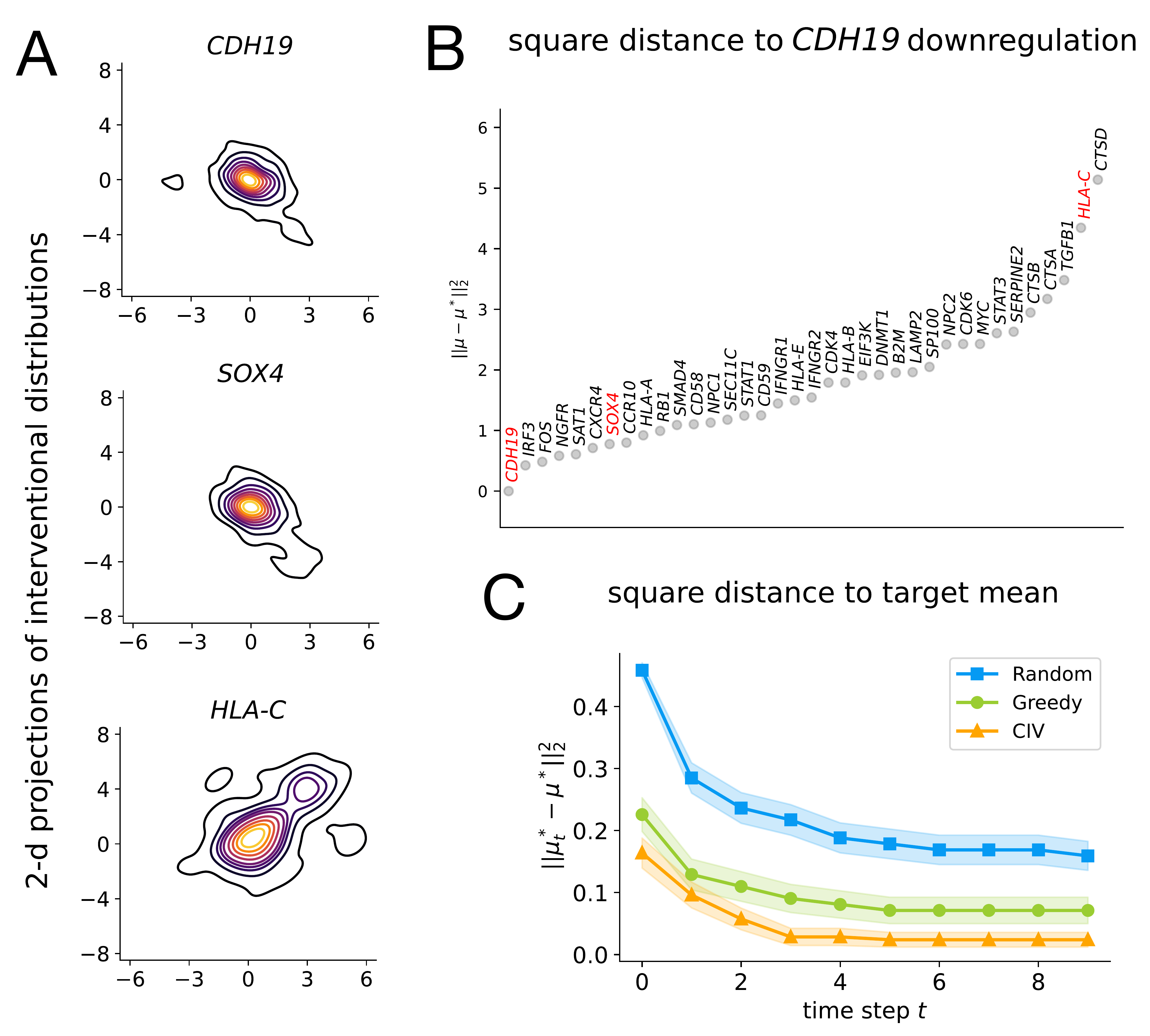}
\caption{\textbf{Results on perturbational single-cell gene expression dataset.} (A) Kernel density estimate (KDE) plot of the 2-d projection of the interventional distributions obtained by targeting 3 representative genes: \emph{CDH19}, \emph{SOX4} and \emph{HLA-C}. (B) Squared distances between the target mean of the optimal intervention targeting \emph{CDH19} and other interventions. (C) Comparison of different acquisition functions for identifying interventions that match the target mean \rec{presented as mean value +/- SEM}.}
\label{fig:6}
\end{figure}

For our optimal intervention design task, we focused on a particular functional context, namely the $p=36$ genes among the $248$ interventional targets that are involved in interferon-$\gamma$ signaling and immune response \cite{frangieh2021multimodal} (see Supplementary Information~\ref{sec:h}).
\rec{Since multi-target interventional samples are extremely scarce in this dataset, with most such interventions having no more than $1$ sample (Supplementary Fig.~\ref{fig:s9}), we only consider single-target interventions.}
We use the observational samples, i.e., the cells with no perturbation, to learn a DAG over these $36$ genes (see \textit{Methods}, Supplementary Fig.~\ref{fig:s10}, and Extended Data Fig.~\ref{fig:s11}). 
We model each of \rec{the $36$ single-target} interventions as a down-shift of the target gene by its observational mean; i.e., knocking out target gene $i\in[p]$ is modeled as a shift intervention $\ba$ with $a_i$ being the negative of the observed mean of this gene, and all other entries being zero.
This model assumes that the perturbations are effective at knocking out their target genes.
This assumption is not always met by current technologies: we observe that some interventions do not down-regulate their target genes in a statistically significant manner (\rec{Supplementary Information~\ref{sec:h}}). 
However, this is only known after performing the experiments and thus we use the idealized model. \rec{If another dataset or pre-screen (usually on a smaller scale) is available to help determine the effectiveness of an intervention to target a specific gene, then we can easily replace the idealized model with this estimate.}
As we will show, the idealized model still extracts enough signal for our method to aid in finding good interventions.

To evaluate our approach, we use the observational distribution as the source cell state and a particular single-node interventional distribution as the target cell state. 
The aim is to identity this single-node intervention or an intervention with similar effect using the least number of samples. 
For benchmarking purposes, we use a setting where the optimal intervention is contained in the feasible set of interventions,
but we note that this is not a requirement and the target cell state can be any desired distribution, the experimental design goal then being to identify an intervention that moves the distribution as close as possible to the target distribution.

Fig.~\ref{fig:6}A shows three representative examples of interventional distributions (targeting \emph{CDH19}, \emph{SOX4} and \emph{HLA-C}), visualized by a $2$-dimensional projection along the most variable directions obtained using contrastive PCA (see \textit{Methods}). 
These examples highlight that the alterations induced by single-target interventions are subtle (see also Supplementary Fig.~\ref{fig:s12}), with \emph{SOX4} being more similar to \emph{CDH19} than \emph{HLA-C} as also corroborated by the squared distance between the interventional means (see Fig.~\ref{fig:6}B). 
This observation is consistent with previous melanoma studies showing that \emph{HLA-C} is associated with positive immune response \cite{carretero2008analysis}, while \emph{CDH19} and \emph{SOX4} are associated with metastasis and immune evasion~\cite{jaeger2007gene,cheng2017sox4}.
Fig.~\ref{fig:6}C shows the performance of our results with the optimal intervention targeting gene \emph{CDH19}.
More examples using other target genes are given in Supplementary Information~\ref{sec:h}. 
Each method is run $50$ times, where each run starts with a warm-up set of $100$ observational samples and then $n=10$ interventional samples per time step. 
In Fig.~\ref{fig:6}C, we present the comparison between three acquisition functions: \emph{CIV} and the two baselines \emph{random} and \emph{greedy} baselines. More implementation details as well as the full results are given in Extended Data Fig.~\ref{fig:s13}-\ref{fig:s15}. 
Similar to what we observed in the synthetic data experiments, CIV outperforms the benchmarks in terms of distance to target mean across all time steps.
%
%\rev{\sout{This indicates that the proposed approach is beneficial in real biological dataset.}
This suggests that the proposed approach is beneficial for identifying perturbations to induce a desired cell state change. %in the future when conducting a real cellular reprogramming experiment.}

\section{Discussion}
In this work, we developed an active learning framework for optimal intervention design in causal models. Our method has two main ingredients: (i) modeling and updating the edge weights in the causal model using a Bayesian approach (using the DAG-BLR distribution); and (ii) optimizing the next intervention from which to obtain samples using a class of causally aware acquisition functions (causal integrated variance acquisition (CIV) functions). 
The DAG-BLR distribution respects the underlying causal structure and allows for efficient posterior updates. 
The proposed CIV class of acquisition functions prioritizes the most informative intervention with respect to identifying the optimal intervention for moving the system towards a desired mean by minimizing an uncertainty quantity weighted based on the directions of interest. 
Importantly, the designed CIV acquisition function allows for efficient optimization by having tractable closed-form evaluations. 
In addition,  we showed that the introduced acquisition function is characterized by attractive theoretical properties, such as by mutual information bounds and consistency. 
Finally, we demonstrated the developed active learning framework on both synthetic data and a biological dataset. In both cases, the designed acquisition function  outperforms empirical analogs allowing for accurate predictions with fewer experiments.

{
%\paragraph{Limitations and Future Work.} 
We made various assumptions that may be limiting for some applications and motivate future research directions. 
%Our work has various limitations that motivate interesting questions for future work. 
First, we focused on the setting with known causal structure. An important future research direction is to consider the case where the causal structure is partially or entirely unknown. We discuss potential avenues to approach this problem in Supplementary Information~\ref{sec:d}. Second, we considered linear SCMs with additive Gaussian noise. A potential extension to the non-linear setting could be achieved using kernels with linearity over the feature space. Similar derivations as in our work could be used to evaluate or approximate the CIV acquisition functions in this setting. Third, in our model we assumed \textit{causal sufficiency} \cite{spirtes2000causation}, which excludes the existence of latent confounders as well as the possibility to perform interventions only on some nodes. This is violated when some system variables are unobserved and thus cannot be intervened on or specified in the desired state. A possible approach is to use a more agnostic model between interventions and their effects as proposed for example in~\cite{aglietti2020causal}; however, this will generally lead to weaker results and a loss of structural information. %Another interesting direction is to consider nonlinear mechanisms with arbitrary losses. 

While we discussed our work in the context of applications to cellular reprogramming, we envision our framework to be applicable broadly for sequential design problems arising in complex systems. In Supplementary Information~\ref{sec:j}, we discuss several other applications and how they fit into our proposed framework.}

%TC:ignore
\section*{Methods}

\subsection*{Posterior Update of Edge Weights}

The current dataset $\cD_t$ induces a belief on the model parameters $\bB$. 
We consider the Bayesian setting where the belief corresponds to a distribution. 
To account for the known causal structure, we assume a generalization of the \emph{DAG-Wishart} distribution \cite{geiger2002parameter,kuipers2022interventional}, which places a prior on $\bB$ that respects the causal structure and allows for efficient posterior updates (Supplementary Information~\ref{sec:a}). 
The prior is as follows.

\begin{definition}\label{def:1}
The DAG-Bayesian linear regression (DAG-BLR) prior models $\bB$ and the noise variances $\bSigma$ jointly as $\bbP(\bB, \bSigma)=\prod_{i=1}^p\bbP(\bB_{i,\pa(i)}, \sigma_i^2)$, where each $\bbP(\bB_{i,\pa(i)}, \sigma_i^2)$ is
\begin{equation}\label{eq:4}
\begin{aligned}
    \sigma_i^2 & \sim \cI\cG(\alpha_i,\beta_i),\\
    \bB_{i,\pa(i)}|\sigma_i^2 & \sim \cN(\bmm_i, \sigma_i^2 \bM_i).
\end{aligned}
\end{equation}
Here $\cI\cG$ denotes the inverse-Gamma distribution and $\{\alpha_i,\beta_i,\bmm_i,\bM_i\}_{i=1}^p$ are hyperparameters satisfying certain constraints specified in Supplementary Information~\ref{sec:a}. 
\end{definition}

This prior is consistent with the DAG structure since it sets $B_{ik}=0$ for all $k\notin\pa(i)$. 
While it was developed in the observational setting \cite{cao2019posterior}, the following lemma shows that it can be extended to the interventional setting and is a conjugate prior for the model in Eq.~\eqref{eq:2}. 
Thus the posterior lies in the same family of distributions.

\begin{lemma}\label{lm:1}
The posterior corresponding to the DAG-BLR prior satisfies 
$
\bbP(\bB, \bSigma | \cD_t)
=
\prod_{i=1}^p \bbP(\bB_{i,\pa(i)}, \sigma_i^2 | \cD_t)
$, where $\bbP(\bB_{i,\pa(i)}, \sigma_i^2 | \cD_t)$ is
\begin{equation}\label{eq:5}
\begin{aligned}
    \sigma_i^2 | \cD_t & \sim \cI\cG\big(\alpha_i(\cD_t),\beta_i(\cD_t)\big),\\
    \bB_{i,\pa(i)}|\sigma_i^2, \cD_t & \sim \cN\big(\bmm_i(\cD_t), \sigma_i^2 \bM_i(\cD_t)\big).
\end{aligned}
\end{equation}
The hyperparameters $\{\alpha_i(\cD_t),\beta_i(\cD_t),\bmm_i(\cD_t),\bM_i(\cD_t)\}_{i=1}^p$ are specified in Supplementary Information~\ref{sec:a}.
\end{lemma}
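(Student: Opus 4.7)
\textbf{Proof plan for Lemma \ref{lm:1}.} The plan is to exploit the fact that both the likelihood induced by Eq.~\eqref{eq:2} and the DAG-BLR prior factorize across nodes, reducing the problem to a collection of $p$ independent Bayesian linear regression problems, each of which is conjugate under a Normal--Inverse-Gamma prior. First, I would write out the likelihood of a single interventional sample $\bx$ obtained under intervention $\ba$: since $\epsilon_1,\dots,\epsilon_p$ are independent and each conditional mechanism is linear Gaussian, Eq.~\eqref{eq:2} yields
\begin{equation*}
p(\bx\mid\bB,\bSigma,\ba) \;=\; \prod_{i=1}^p \cN\!\bigl(\sfx_i;\,\bB_{i,\pa(i)}\sfx_{\pa(i)} + a_i,\;\sigma_i^2\bigr).
\end{equation*}
Aggregating over the dataset $\cD_t$ and interchanging products gives a likelihood that factorizes as $\prod_{i=1}^p L_i(\bB_{i,\pa(i)},\sigma_i^2;\cD_t)$, where each $L_i$ depends only on the regression of the shifted response $\sfx_i - a_i$ on its parent values $\sfx_{\pa(i)}$.

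Next, I would combine this factorization with the prior factorization from Definition \ref{def:1}. Since $\bbP(\bB,\bSigma)=\prod_i \bbP(\bB_{i,\pa(i)},\sigma_i^2)$, Bayes' rule yields
\begin{equation*}
\bbP(\bB,\bSigma\mid\cD_t) \;\propto\; \prod_{i=1}^p \bbP(\bB_{i,\pa(i)},\sigma_i^2)\, L_i(\bB_{i,\pa(i)},\sigma_i^2;\cD_t),
\end{equation*}
so the posterior factorizes across $i$ as claimed. It then suffices to show, for each fixed $i$, that the single-node posterior on $(\bB_{i,\pa(i)},\sigma_i^2)$ is again Normal--Inverse-Gamma. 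For this I would introduce the per-node design matrix $\bX_i$ with rows $\sfx_{\pa(i)}^{(m,s)}$ and the shifted response vector $\by_i$ with entries $\sfx_i^{(m,s)} - a_i^{(s)}$, stacked across all $s,m$. The key observation is that, conditional on the parent observations (which are simply inputs to the regression for node $i$), the shift $a_i^{(s)}$ only translates the response, so $L_i$ reduces to the standard Gaussian linear regression likelihood
\begin{equation*}
L_i \;\propto\; (\sigma_i^2)^{-N/2}\exp\!\bigl(-\tfrac{1}{2\sigma_i^2}\|\by_i - \bX_i\bB_{i,\pa(i)}^\top\|_2^2\bigr),
\end{equation*}
where $N = \sum_s n_s$.

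Given this, the remainder is a textbook Normal--Inverse-Gamma conjugacy computation: completing the square in $\bB_{i,\pa(i)}$ converts the quadratic form $\|\by_i-\bX_i\bB_{i,\pa(i)}^\top\|_2^2$ combined with the prior quadratic $(\bB_{i,\pa(i)}-\bmm_i)^\top \bM_i^{-1}(\bB_{i,\pa(i)}-\bmm_i)/\sigma_i^2$ into a Gaussian in $\bB_{i,\pa(i)}$ with updated covariance $\bM_i(\cD_t) = (\bM_i^{-1} + \bX_i^\top\bX_i)^{-1}$ and updated mean $\bmm_i(\cD_t) = \bM_i(\cD_t)(\bM_i^{-1}\bmm_i + \bX_i^\top\by_i)$, plus a residual term that combines with the $\cI\cG(\alpha_i,\beta_i)$ prior on $\sigma_i^2$ to yield $\cI\cG(\alpha_i + N/2,\,\beta_i(\cD_t))$ with $\beta_i(\cD_t) = \beta_i + \tfrac{1}{2}(\by_i^\top\by_i + \bmm_i^\top\bM_i^{-1}\bmm_i - \bmm_i(\cD_t)^\top\bM_i(\cD_t)^{-1}\bmm_i(\cD_t))$. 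These are precisely the updated hyperparameters referenced in the statement.

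The main obstacle is essentially bookkeeping rather than conceptual: one must correctly absorb the intervention shift $a_i^{(s)}$ into the response and verify that it does not interact with the coefficient prior (so that conjugacy is preserved verbatim), and one must check that samples from different interventions $\ba^{(s)}$ in $\cD_t$ can be aggregated into a single regression problem per node. Both facts follow because (i) the shift is additive and known, so it simply translates $\sfx_i$ without altering the Gaussian likelihood in $\bB_{i,\pa(i)}$, and (ii) conditional on parent values and under the DAG-BLR independence across nodes, samples from distinct interventions contribute independent rows to $(\bX_i,\by_i)$. Once these points are confirmed, the per-node conjugacy computation closes the argument.
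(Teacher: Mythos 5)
Your proposal is correct and follows essentially the same route as the paper: factorize the shift-interventional likelihood node-wise (with the known shift absorbed into the response $\sfx_i - a_i$), combine with the node-wise DAG-BLR prior factorization to get a product-form posterior, and then invoke standard Normal--Inverse-Gamma conjugacy per node, arriving at the same updated hyperparameters. The only cosmetic difference is that you stack all samples into one regression at once, whereas the paper handles a single sample and extends to general $n$ by induction.
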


The hyperparameters $\{\alpha_i(\cD_t),\beta_i(\cD_t),\bmm_i(\cD_t),\bM_i(\cD_t)\}_{i=1}^p$ can be updated easily using the new samples obtained at step $t$, and thus this choice of a \textit{conjugate} prior allows for efficient posterior updates.

% Before proceeding to describe the acquisition function
We make one comment about the DAG-Wishart distribution. 
In previous literature \cite{geiger2002parameter,kuipers2022interventional}, this distribution has been extended to model the posterior beliefs of both the edge weights \textit{and} the DAG structure.
The posterior for the DAG structure is known as the Bayesian Gaussian equivalence (BGe) score. 
Thus while we here assume that the DAG is known or prefixed and we only update the posterior on the non-zero entries of $\bB$, our framework can be extended to the unknown DAG setting by placing a probability on the DAG structure using the BGe score, as discussed in Supplementary Information~\ref{sec:d}.

\subsection*{Evalution of CIV in Closed Form}

We discuss how to optimize CIV in order to solve for $\ba^{(t+1)}$. We start by providing a closed form of the variance $\sigma^2_{g(\ba)|\cD_t(\ba')}$. For ease of reading, we here provide the formula for the special case where the exogenous noise variances $\bSigma$ are known. The general formula, which is similar in flavor but more complicated, including the proof are given in Supplementary Information~\ref{sec:b}.

\begin{proposition}\label{prop:1}
Conditioning on $\bSigma$, we have
\begin{equation}\label{eq:10}
    \begin{aligned}
    \sigma^2_{g(\ba)|\cD_t(\ba')}=2\sum_{i=1}^p \big(v_i^2+\frac2nv_i\sigma_i^2+2v_i(a_i-b_i)^2\big)+ c,
    \end{aligned}
\end{equation}
where only $v_i := \sigma_i^2\bmu^{*\top}_{\pa(i)}\bM_i(\cD_t(\ba'))\bmu_{\pa(i)}^*$ depends on the augmented dataset $\cD_t(\ba')$, while $b_i:=\mu_i^*-\bmm_i(\cD_t)^\top\bmu^*_{\pa(i)}$ and the constant $c$ do not depend on $\ba'$.
\end{proposition}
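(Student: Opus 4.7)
The plan is to decompose $g(\ba)$ coordinate-wise, exploit the factorization of the posterior granted by Lemma~\ref{lm:1}, and then apply the elementary formula for the variance of the square of a Gaussian. Writing $\bar{\bepsilon} := \tfrac1n\sum_{m=1}^n \bepsilon^{(m)}$ and using that $B_{ik}=0$ for $k\notin\pa(i)$, we have
\begin{equation*}
g(\ba) \;=\; \sum_{i=1}^p Y_i^2, \qquad Y_i \;:=\; (a_i-\mu_i^*) \;+\; \bar\epsilon_i \;+\; \bB_{i,\pa(i)}^\top \bmu^*_{\pa(i)}.
\end{equation*}
By Lemma~\ref{lm:1} the posterior factorizes across $i$, so conditional on $(\cD_t(\ba'),\bSigma)$ the vectors $\bB_{i,\pa(i)}$ are mutually independent; combined with independence of the $\bar\epsilon_i$'s and their independence from $\bB$, this makes $Y_1,\dots,Y_p$ conditionally independent, giving $\sigma^2_{g(\ba)|\cD_t(\ba')} = \sum_i \Var(Y_i^2 \mid \cD_t(\ba'),\bSigma)$.

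The key observation is that the hypothetical samples do not move the posterior mean. Since $\bar{\bx}' = (\bI-\bbE(\bB|\cD_t))^{-1}\ba'$ rearranges to $\bar x'_i - a'_i = \bmm_i(\cD_t)^\top \bar{\bx}'_{\pa(i)}$, the hypothetical responses exactly match the plug-in predictions at each node $i$. Plugging into the standard Bayesian linear regression update $\bmm_i^{\mathrm{new}} = \bM_i^{\mathrm{new}}\bigl(\bM_i^{\mathrm{old},-1}\bmm_i^{\mathrm{old}} + X^\top y\bigr)$, one checks that $\bmm_i(\cD_t(\ba')) = \bmm_i(\cD_t)$, whereas $\bM_i(\cD_t(\ba'))^{-1} = \bM_i(\cD_t)^{-1} + n\,\bar{\bx}'_{\pa(i)}\bar{\bx}'^{\top}_{\pa(i)}$ does depend on $\ba'$. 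Using Lemma~\ref{lm:1} conditional on $\bSigma$, we conclude
\begin{equation*}
Y_i \mid \cD_t(\ba'), \bSigma \;\sim\; \cN\!\Bigl(a_i-b_i,\; v_i + \tfrac{\sigma_i^2}{n}\Bigr),
\end{equation*}
since $\bmm_i(\cD_t)^\top \bmu^*_{\pa(i)} = \mu_i^* - b_i$ and $\bar\epsilon_i$ contributes the additional variance $\sigma_i^2/n$.

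Finally, for $Y \sim \cN(\mu,\tau^2)$ one has $\Var(Y^2) = 2\tau^4 + 4\mu^2\tau^2$. Applying this with $\mu=a_i-b_i$ and $\tau^2=v_i+\sigma_i^2/n$ gives
\begin{equation*}
\Var(Y_i^2 \mid \cD_t(\ba'),\bSigma) \;=\; 2v_i^2 + \tfrac{4}{n}v_i\sigma_i^2 + 4v_i(a_i-b_i)^2 \;+\; \tfrac{2\sigma_i^4}{n^2} + \tfrac{4(a_i-b_i)^2 \sigma_i^2}{n}.
\end{equation*}
Summing over $i$ and collecting the last two terms of each summand into $c := \sum_i\bigl[\tfrac{2\sigma_i^4}{n^2} + \tfrac{4(a_i-b_i)^2\sigma_i^2}{n}\bigr]$ (which depends only on $\bSigma$, $\ba$, and $\cD_t$, not on $\ba'$) yields the claimed formula.

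The main obstacle is the second step: establishing that the hypothetical-sample construction leaves $\bmm_i(\cD_t)$ invariant. Everything else is a routine conditional-variance computation once the independence structure inherited from the DAG-BLR posterior in Lemma~\ref{lm:1} is in place; in particular, it is this invariance that lets us express the nontrivial dependence of $\sigma^2_{g(\ba)|\cD_t(\ba')}$ on $\ba'$ entirely through the quadratic forms $v_i = \sigma_i^2\,\bmu^{*\top}_{\pa(i)}\bM_i(\cD_t(\ba'))\bmu^*_{\pa(i)}$, with all other quantities becoming $\ba'$-independent.
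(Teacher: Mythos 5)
Your proof is correct. The crucial ingredient — that augmenting with the hypothetical samples leaves the posterior mean unchanged, i.e.\ $\bmm_i(\cD_t(\ba'))=\bmm_i(\cD_t)$, because $\bar{\sfx}'_i-a'_i=\bmm_i(\cD_t)^\top\bar{\sfx}'_{\pa(i)}$ — is exactly the paper's Lemma~\ref{lm:2}, which you re-derive correctly, and your constant $c$ matches the paper's. Where you genuinely differ is in how the variance itself is computed: the paper proves the unknown-$\bSigma$ version first (Supplementary Proposition~1) via two nested applications of the law of total expectation/variance — conditioning on $(\bB,\bSigma)$ it views $g(\ba)$ as a generalized chi-square in $\bepsilon_{[n]}$, then takes variance and expectation over $\bB$ given $(\bSigma,\cD_t(\ba'))$ using squared-Gaussian moments, and finally integrates over $\bSigma$ with inverse-Gamma moments — and reads off the known-$\bSigma$ statement as a corollary. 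You instead prove the conditional statement directly by merging the two Gaussian sources of randomness coordinate-wise, observing $Y_i\mid\cD_t(\ba'),\bSigma\sim\cN\big(a_i-b_i,\,v_i+\tfrac{\sigma_i^2}{n}\big)$ with the $Y_i$ independent across $i$ thanks to the factorized DAG-BLR posterior, and then applying $\Var(Y^2)=2\tau^4+4\mu^2\tau^2$ once. Your route is shorter and avoids the generalized chi-square bookkeeping; what the paper's layered decomposition buys is that the unknown-$\bSigma$ formula falls out of the same computation, whereas your argument would need an additional outer law of total variance over $\bSigma$ (applied to your per-coordinate conditional mean and variance) to recover that general case.
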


We note that the resulting expression is a function of $b_i$ and $v_i$, which can be interpreted as follows: Lemm~ \ref{lm:1} can be used to rewrite $b_i=[(\bI-\bbE(\bB|\cD_t))\bmu^*]_i$ as the estimated shift of the optimal intervention at node $i$, and $v_i=\bmu_{\pa(i)}^{*\top}\Var(\bB_{i,\pa(i)}\big|\cD_t(\ba'))\bmu^*_{\pa(i)}$ as the covariance matrix of $\bB_{i,\pa(i)}$ scaled in the direction of the target mean $\bmu^*$.

Next, we discuss how to integrate Eq.~\eqref{eq:10} over the space of possible interventions $\cA$ to evaluate $h(\ba', \cD_t)$.
Here we consider $\cA$ to be the hypersphere $\bbS^{p-1}=\{\ba\in\bbR^p: \|\ba\|_2=1\}$ and $\nu$ to be the uniform measure on the hypersphere
\rec{(other types of feasible sets $\cA$, e.g. with sparsity constraints, can be considered, and similar derivations can be used to identify if a closed-form integration exists)}.
This corresponds to fixing the magnitude of the intervention and only optimizing over its direction. This is suitable for various applications including the biological problem considered below, since the strength of the intervention is often designed separately or prefixed manually by the experimenter. 
We also note that for many linear problems the uncertainty decreases by making the magnitude of the selected point larger \cite{sapsis2020output}. We show in Supplementary Information~\ref{sec:b} that this is the case also for the problem considered in this paper. 
Considering the hypersphere allows us to avoid this ambiguity. 
The following proposition provides the resulting formula when $\bSigma$ is known; see Supplementary Information~\ref{sec:b} for the general case including the proof.

\begin{proposition}\label{prop:2}
For $\cA=\bbS^{p-1}$ and $\nu$ being a uniform measure, the CIV acquisition function evaluated at $\ba'$, conditioned on $\bSigma$, is
\begin{equation}\label{eqn:civ-aquisition-function}
\begin{aligned}
    h(\ba',\cD_t) =c_1\cdot\sum_{i=1}^p \big(v_i^2+2v_i(\frac{\sigma_i^2}{n}+b_i^2+\frac1p)\big) + c_2,
\end{aligned}
\end{equation}
where $c_1>0$ and $c_2$ are constants that do not depend on $\ba'$.
\end{proposition}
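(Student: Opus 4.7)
\textbf{Proof plan for Proposition \ref{prop:2}.} The plan is to substitute the closed-form expression for $\sigma^2_{g(\ba)|\cD_t(\ba')}$ from Proposition \ref{prop:1} into the definition of the CIV acquisition function and integrate term by term over the hypersphere. Since $v_i$ depends only on $\ba'$ (via $\cD_t(\ba')$) and $b_i$ does not depend on $\ba$ at all, the only $\ba$-dependence in the integrand is through the factor $(a_i - b_i)^2$. Thus the integration reduces to computing spherical moments of the individual coordinates $a_i$.

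First, I would expand $(a_i - b_i)^2 = a_i^2 - 2a_i b_i + b_i^2$ and pull $v_i$, $b_i$, $\sigma_i^2$ and the constant $c$ outside the integral (they are all constant with respect to $\ba$). This leaves three types of spherical integrals to evaluate: $\int_{\bbS^{p-1}} d\nu(\ba)$, $\int_{\bbS^{p-1}} a_i \, d\nu(\ba)$, and $\int_{\bbS^{p-1}} a_i^2 \, d\nu(\ba)$. The first is just the total mass $|\nu|$ of the uniform measure on the sphere, which is a constant in $\ba'$. The second vanishes by the symmetry $\ba \mapsto -\ba$ under which the uniform measure on $\bbS^{p-1}$ is invariant but $a_i$ is odd. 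The third can be evaluated by a classical symmetry argument: since the uniform measure is invariant under any permutation of coordinates, $\int a_i^2 \, d\nu$ is independent of $i$, and since $\sum_i a_i^2 = 1$ on $\bbS^{p-1}$, we get $\int_{\bbS^{p-1}} a_i^2 \, d\nu(\ba) = |\nu|/p$.

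Combining these, the integral of $(a_i - b_i)^2$ over the sphere equals $|\nu|(\tfrac{1}{p} + b_i^2)$, and plugging back yields
\begin{equation*}
h(\ba',\cD_t) = 2|\nu|\sum_{i=1}^p \Big(v_i^2 + \tfrac{2}{n}v_i \sigma_i^2 + 2v_i\big(\tfrac{1}{p}+b_i^2\big)\Big) + c|\nu|.
\end{equation*}
Factoring the $2v_i$ inside the sum groups the three terms $\sigma_i^2/n$, $b_i^2$, and $1/p$ together, giving exactly the claimed form with $c_1 = 2|\nu| > 0$ and $c_2 = c|\nu|$, both independent of $\ba'$.

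The steps are all routine given Proposition \ref{prop:1}; the only subtlety is the symmetry argument for the second moment, but this is standard for the uniform measure on the sphere. The main conceptual point worth emphasizing in the write-up is that the dependence of $h$ on $\ba'$ enters solely through the quantities $v_i = \sigma_i^2 \bmu^{*\top}_{\pa(i)}\bM_i(\cD_t(\ba'))\bmu^*_{\pa(i)}$, which makes the expression a simple quadratic in $v_i$ whose gradient with respect to $\ba'$ can be computed analytically---this is what enables the fast gradient-based optimization mentioned in the paper.
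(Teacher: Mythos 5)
Your proposal is correct and follows essentially the same route as the paper: plug the closed form of $\sigma^2_{g(\ba)|\cD_t(\ba')}$ into the integral, note that only $(a_i-b_i)^2$ carries $\ba$-dependence in the $v_i$-terms, kill the cross term $\int a_i b_i\,d\nu$ by the symmetry $\ba\mapsto-\ba$, and get $\int a_i^2\,d\nu = \nicefrac{1}{p}$ (times the total mass) from permutation invariance and $\|\ba\|_2=1$. One cosmetic caveat: the constant $c$ from Proposition~\ref{prop:1} actually depends on $\ba$ (it contains $(a_i-b_i)^2$ terms), so $c_2$ is not literally $c$ times the total mass, but since $c$ is independent of $\ba'$ its integral over $\cA$ is still an $\ba'$-independent constant, which is all the proposition claims.
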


The gradient of this objective function can be calculated explicitly also in the general case when $\bSigma$ is unknown, and thus we can use gradient-based optimization methods with a ball constraint to solve for $\ba^{(t+1)}=\argmin_{\ba'\in\bbS^{p-1}}\sum_{i=1}^p(v_i^2+2v_i(\frac{\sigma_i^2}{n}+b_i^2+\frac1p))$ {(see the section on \emph{Implementations} below for details)}. \rec{{For other types of feasible sets $\cA$, e.g.~with sparsity constraints or consisting of hard interventions, the appropriate optimization method needs to be adjusted accordingly.}}
We also note that the objective function is not necessarily convex (Supplementary Information~\ref{sec:b}), and gradient-based optimizers may therefore  only find a local minimum.

% A full workflow of our active learning approach for optimal intervention design is obtained by combining the two ingredients discussed so far, namely iteratively updating the posterior belief of the causal model using Lemma~\ref{lm:1} and acquiring the next intervention by minimizing the CIV acquisition function evaluated in Proposition~\ref{prop:2}. 
%
% Inspired by a recent line of work on output-weighted acquisition functions~\cite{sapsis2020output, mohamad2018sequential}, we end this section by discussing another realization of the CIV acquisition function that uses a non-uniform measure $\nu$ to put more weight on interventions with small optimality gap.  

\subsection*{Mutual Information Bound}\label{sec:42}

To provide an information-theoretic interpretation of the CIV acquisition function, we use the \textit{relative decay} of the uncertainty $\sigma^2_{g(\ba)|\cD_t}$ measured by
\begin{equation}
\label{rel_decay}
    \frac{\sigma^2_{g(\ba)|\cD_t}-\bbE_{\bx'}\big(\sigma^2_{g(\ba)|\cD_t\cup(\bx',\ba')}\big)}{\sigma^2_{g(\ba)|\cD_t}},
\end{equation}
where the expectation $\bbE_{\bx'}$ is taken with respect to a new sample $\bx'$ from $\ba'$ whose distribution is given by $\bbP(\bx'|\cD_t,\ba')$. \rec{To simplify the notation, we restrict the discussion in this section to the case of $n=1$. Similar results can easily be derived for sample sizes $n>1$.} The following theorem shows that the mutual information between $g(\ba)$ and a sample $\bx'$ from $\ba'$ can be lower bounded by this relative decay up to a multiplicative factor. {The proof can be found in Supplementary Information~\ref{sec:e}.}
 
\begin{theorem}\label{thm:1}
Conditioning on $\cD_t$, the following inequality holds for any $\ba\in\cA$ for the mutual information between $g(\ba)$ and a sample $\bx'$ from $\rmP^{\ba'}$:
\begin{equation}\label{eq:14}
    I(g(\ba);\bx'|\ba',\cD_t)\geq \rho^2\cdot \frac{\sigma^2_{g(\ba)|\cD_t}-\bbE_{\bx'}\big(\sigma^2_{g(\ba)|\cD_t\cup(\bx',\ba')}\big)}{\sigma^2_{g(\ba)|\cD_t}},
\end{equation}
where $\rho^2$ is a constant that does not depend on $\ba'$.
\end{theorem}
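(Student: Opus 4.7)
The plan is to rewrite the relative decay in \eqref{eq:14} via the law of total variance, obtain a closed-form expression for the conditional expectation $\bbE[g(\ba)\mid \bx',\ba',\cD_t]$ from the DAG-BLR posterior update, and then lower-bound the mutual information by a multiple of this variance using a Gaussian-channel surrogate. As a first step, the law of total variance gives
\begin{equation*}
  \sigma^2_{g(\ba)|\cD_t}-\bbE_{\bx'}\bigl[\sigma^2_{g(\ba)\mid\cD_t\cup(\bx',\ba')}\bigr] \;=\; \Var\!\bigl(\bbE[g(\ba)\mid \bx',\ba',\cD_t]\bigr),
\end{equation*}
so the numerator of \eqref{eq:14} is a variance of a conditional mean. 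Using Lemma~\ref{lm:1} and the quadratic form $g(\ba)=\|\ba-(\bI-\bB)\bmu^*+\bar{\bepsilon}\|_2^2$, the inner expectation can be evaluated in closed form: the DAG-BLR posterior on $\bB\mid \cD_t\cup(\bx',\ba')$ is Gaussian with hyperparameters that are standard Bayesian-linear-regression updates of $\{\bmm_i(\cD_t),\bM_i(\cD_t)\}$ against the new sample $\bx'$, so $\bbE[g(\ba)\mid \bx',\ba',\cD_t]$ becomes an explicit quadratic in $\bx'$.

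For the lower bound on $I(g(\ba);\bx'|\ba',\cD_t)$, the idea is to isolate a Gaussian surrogate of $g(\ba)$ and apply data-processing. Fix any scalar statistic $\phi(\bx')$; by data-processing, $I(g(\ba);\bx'|\ba',\cD_t)\ge I(g(\ba);\phi(\bx')|\ba',\cD_t)$. I would choose $\phi$ to be the best linear predictor of the linear-in-$\bB$ part $-2\ba^\top(\bI-\bB)\bmu^*$ of $g(\ba)$ from $\bx'$ under the DAG-BLR predictive distribution, which — conditional on $\bSigma$ — makes $\phi(\bx')$ and this linear part jointly Gaussian. The Gaussian-channel identity $I=-\tfrac12\log(1-r^2)$ combined with the elementary inequality $-\log(1-x)\ge x$ for $x\in[0,1)$ then yields a lower bound of the form $\tfrac12\, r^2$. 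Re-expressing $r^2$ via the identification $\Var(\bbE[\cdot\mid\bx'])= r^2\cdot \Var(\cdot)$ and dividing and multiplying by $\sigma^2_{g(\ba)|\cD_t}$ produces exactly the quotient appearing in \eqref{eq:14}, with a leading factor $\rho^2$ that collects all $\ba'$-independent terms (posterior covariances $\bM_i(\cD_t)$, the target $\bmu^*$, the noise variances $\sigma_i^2$, the direction $\ba$, and the constants from the linearization).

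The main obstacle is that $g(\ba)$ is a \emph{quadratic} form in the Gaussian variable $\bB$ and therefore $(g(\ba),\bx')$ is not jointly Gaussian, so the Gaussian-channel formula does not apply to $g(\ba)$ directly. The delicate step is to show that the purely quadratic residual (after peeling off the linear-in-$\bB$ surrogate above) can be absorbed into the multiplicative constant $\rho^2$ uniformly over $\ba'$: concretely, one needs to show that the ratio $\Var(\bbE[g(\ba)\mid \bx'])/\Var(\bbE[\text{linear part}\mid \bx'])$ is bounded below by a constant that depends only on $\ba$, $\cD_t$, and the hyperparameters. A Cauchy–Schwarz decomposition of the two posterior-expectation variances, together with the fact that the DAG-BLR update to the posterior covariance $\bM_i(\cD_t\cup(\bx',\ba'))$ depends on $\ba'$ only through a rank-one correction, should give such a uniform bound and pin down the constant $\rho^2$.
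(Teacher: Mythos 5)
Your high-level architecture (data-processing to a statistic of $\bx'$, the Gaussian-channel bound $-\tfrac12\log(1-r^2)\ge\tfrac12 r^2$, a law-of-total-variance factorization, and absorbing $\ba'$-independent factors into $\rho^2$) parallels the paper's, but two of your key steps fail as written. First, the claimed joint Gaussianity is false: under the posterior predictive, $\bx'=(\bI-\bB)^{-1}(\ba'+\bepsilon')$ is a nonlinear (rational) function of the Gaussian-distributed $\bB$, so neither $\bx'$ nor any linear statistic $\phi(\bx')$ is Gaussian, even conditionally on $\bSigma$; the Gaussian-channel identity therefore cannot be invoked for the pair $(\phi(\bx'),\,-2\ba^\top(\bI-\bB)\bmu^*)$. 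The paper handles this by first applying an invertible Rosenblatt-type Gaussianization $\Phi$ to the \emph{sample} $\bx'$ (its Lemma~\ref{lm:7}) together with the data-processing inequality, so that the conditioning variable is exactly $\cN(\bzero,\bI)$. Second, "peeling off" the linear-in-$\bB$ part of $g(\ba)$ is not a data-processing move: that linear part is not a function of $g(\ba)$, so $I(g(\ba);\phi(\bx'))$ does not dominate the mutual information of the linear surrogate, and your proposed Cauchy--Schwarz comparison of the two conditional-mean variances addresses a variance ratio, not the missing information-theoretic inequality. The paper avoids any linearization of $g(\ba)$ via its Lemma~\ref{lm:5}: for a Gaussian channel variable and an \emph{arbitrary} (here, quadratic-in-$\bB$) scalar $g(\ba)$, the mutual information is bounded below by $-\tfrac12\log(1-\|\rho\|_2^2)$ with $\rho$ the true correlation, obtained by constructing a Gaussian surrogate of $g(\ba)$ matching its first two moments and correlation.

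There is also a quantitative slip in your last step: the identification $\Var(\bbE[g(\ba)\mid\bx'])=r^2\,\Var(g(\ba))$ holds only when the conditional mean is linear in $\phi(\bx')$; in general the best linear predictor satisfies $r^2\sigma^2_{g(\ba)|\cD_t}\le\Var(\bbE[g(\ba)\mid\bx'])$, which is the wrong direction for converting $\tfrac12 r^2$ into $\rho^2$ times the relative decay. The paper's Lemma~\ref{lm:6} supplies the exact factorization $\|\rho_{\Phi(\bx'),g(\ba)}\|_2^2=\bigl(1-\bbE_{\bx'}(\sigma^2_{g(\ba)|\cD_t\cup(\bx',\ba')})/\sigma^2_{g(\ba)|\cD_t}\bigr)\cdot\|\rho_{\Phi(\bx'),\bbE(g(\ba)|\cD_t,\ba',\bx')}\|_2^2$, and the $\ba'$-independence of $\rho^2$ is then obtained simply by taking $\rho^2:=\tfrac12\min_{\ba'\in\cA}$ of the second factor over the class of posterior predictive distributions, not by a rank-one-update argument. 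To repair your proof you would essentially need these three ingredients (Gaussianization of $\bx'$, a Gaussian-surrogate bound valid for non-Gaussian $g(\ba)$, and the exact correlation factorization), at which point it coincides with the paper's argument.
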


This result holds in general for any function $g(\ba)$ and its posterior modeling on $\cD_t$; but the constant $\rho$ in the bound depends on the specific choice of $g$ and its posterior modeling. For example, for the prominent Bayesian linear regression task, Theorem \ref{thm:1} holds with $\rho^2=\nicefrac12$ (see Supplementary Information~\ref{sec:e}).

Note that since $\sigma^2_{g(\ba)|\cD_t}$ does not depend on $\ba'\in\cA$, minimizing the uncertainty $\bbE_{\bx'}(\sigma^2_{g(\ba)|\cD_t\cup(\bx',\ba')})$ is equivalent to maximizing its relative decay in Eq.~\eqref{rel_decay}. The uncertainty $\sigma^2_{g(\ba)|\cD_t(\ba')}$ used by the CIV acquisition function (Eq.~\eqref{eq:8}) is a computationally efficient estimator of $\bbE_{\bx'}(\sigma^2_{g(\ba)|\cD_t\cup(\bx',\ba')})$ in the relative decay (Eq.~\eqref{rel_decay}), where the expectation with respect to $\bx'\sim\bbP(\bx'|\cD_t,\ba')$ is replaced by a plug-in estimator $\bar{\bx}'=\bbE(\bx'|\cD_t,\ba')$. It follows from Theorem~\ref{thm:1} that minimizing the integrated variance in Eq.~\eqref{eq:8} corresponds to selecting an intervention that maximizes a lower bound on the mutual information between the resulting samples and $g(\ba)$.
This perspective shows how the CIV acquisition function connects to a prominent line of previous works, known as Bayesian Active Learning by Disagreement (BALD) \cite{houlsby2011bayesian, kirsch2019batchbald, jesson2021causal}, where the idea is to select interventions that maximize the information gain at each step. 
For example, if the quantity of interest is $g(\ba)$, then BALD seeks to find $\ba'$ such that the information gain measured by the entropy decay 
\begin{equation}
    H\big(g(\ba)|\cD_t\big)-H\big(g(\ba)|\cD_t\cup(\bx',\ba')\big)
\end{equation}
is maximized. 
Note that this term equals $I(g(\ba);\bx'|\ba',\cD_t)$. 
As a consequence, BALD directly operates on the mutual information, which involves high-dimensional moments of the random variable $g(\ba)$. 
This highlights an important drawback of BALD, namely that it is generally computationally difficult to evaluate and often requires approximation techniques such as MCMC \cite{houlsby2011bayesian}.
In contrast, our CIV acquisition function only depends on the first and the second moments of $g(\ba)$, thereby allowing for direct computation and efficient optimization.

\subsection*{Consistency of CIV for Identifying the Optimal Intervention}

We now provide a graphical interpretation of the proposed CIV acquisition function.
For simplicity, we restrict our discussion to the expression developed in Proposition~\ref{prop:2} for the case of uniform measure and known variances. The general setting is discussed in Supplementary Information~\ref{sec:f}.
Since $c_1, c_2$ in Proposition \ref{prop:2} are constants that do not depend on $\ba'$, the new intervention is obtained by minimizing $\sum_{i=1}^p(v_i^2+2v_i(\frac{\sigma_i^2}{n}+b_i^2+\frac1p))$; with a slight abuse of notation let 
\begin{equation}\label{eq:16}
    h(\ba',\cD_t)=\sum_{i=1}^p \big(v_i^2+2v_i(\frac{\sigma_i^2}{n}+b_i^2+\frac1p)\big).
\end{equation}

The term $v_i = \bmu_{\pa(i)}^{*\top}\Var(\bB_{i,\pa(i)}\big|\cD_t(\ba'))\bmu^*_{\pa(i)}$ can be understood as the epistemic uncertainty of node $i$ in the direction of interest;
namely, since $\sfx_i=\bB_{i,\pa(i)}^\top\sfx_{\pa(i)}+\epsilon_{i}$, the variance of $\bB_{i,\pa(i)}$ characterizes the epistemic uncertainty of estimating $\sfx_i$ using its parent nodes;
in $v_i$, this uncertainty, presented as a covariance matrix, is transformed into a scalar value by scaling it in the direction of the target mean $\bmu^*$. 
As the number of time steps $t$ increases, one can show (see Supplementary Information~\ref{sec:e}) using the Bernstein-von-Mises theorem~\cite{kleijn2012bernstein} that $v_i=O(\nicefrac1t)$, which implies that the second-order terms $v_i^2$ are dominated by the first-order terms in Eq.~\eqref{eq:16}. %
Note that their coefficients $\sigma_i^2/n+b_i^2+1/p$ are larger for nodes $i$ with larger noise variances and larger estimated optimal shift values $b_i=[(\bI-\bbE(\bB|\cD_t))\bmu^*]_i$. 
This scaling is intuitive, since it puts emphasis on reducing the variance of nodes where there is still a high uncertainty (given by $\sigma_i^2$) or that require high shift values (estimated by $b_i^2$). As we show in the following theorem, the combined effect of these terms on acquiring new interventions is that $\ba^*$ will become an approximate local minimizer of Eq.~\eqref{eq:16} as $t$ increases as shown in the following theorem, \rec{where we use the conventional asymptotic notation, $o, O$ and $\Theta$, with respect to the time step $t$.} 

\begin{theorem}\label{thm:2}
For all $\ba'\in\cA$, the CIV acquisition function either decays linearly with respect to $t$, i.e.,  $h(\ba',\cD_t)=\Theta(\nicefrac1t)$, or degenerates to the constant zero, i.e., $h(\ba',\cD_t)\equiv 0$.
The optimal intervention $\ba^*$ satisfies $\|\nabla h(\ba^*,\cD_t)\|_2=O(\nicefrac{1}{t^2})$ and $\nabla^2 h(\ba^*,\cD_t)\succeq -O(\nicefrac1t^2)\bI$.
\end{theorem}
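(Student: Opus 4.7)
} The plan is to exploit two structural facts. First, the only $\ba'$-dependence in $h$ enters through $v_i = \sigma_i^2 \bmu^{*\top}_{\pa(i)}\bM_i(\cD_t(\ba'))\bmu^*_{\pa(i)}$, since $b_i$ and $c_i := \sigma_i^2/n + b_i^2 + 1/p$ are functions of $\cD_t$ alone. Second, augmenting $\cD_t$ with $n$ hypothetical samples $\bar{\bx}' = (\bI - \bbE(\bB|\cD_t))^{-1}\ba'$ induces a rank-one update of the precision $\bM_i^{-1}$, so by Sherman--Morrison
\begin{equation*}
\bM_i(\cD_t(\ba')) = \bM_i(\cD_t) - \frac{n\,\bM_i(\cD_t)\bar{\bx}'_{\pa(i)}\bar{\bx}'^{\top}_{\pa(i)}\bM_i(\cD_t)}{1 + n\,\bar{\bx}'^{\top}_{\pa(i)}\bM_i(\cD_t)\bar{\bx}'_{\pa(i)}}.
\end{equation*}
By Lemma~\ref{lm:1} and standard Bayesian linear regression, $\bM_i(\cD_t)$ is positive definite and has operator norm $\Theta(1/t)$ (by Bernstein--von-Mises, as cited in the paper). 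This will drive every order estimate below.

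For the first statement, I would sandwich $h$. The rank-one correction is negative semi-definite, so $0 \leq v_i \leq \sigma_i^2\bmu^{*\top}_{\pa(i)}\bM_i(\cD_t)\bmu^*_{\pa(i)} = O(1/t)$, with $v_i = 0$ iff $\bmu^*_{\pa(i)} = 0$ (using positive-definiteness of $\bM_i$). Since each $c_i \geq 1/p > 0$ and $v_i^2$ is of smaller order,
\begin{equation*}
\tfrac{2}{p}\sum_i v_i \;\leq\; h(\ba',\cD_t) \;\leq\; \Bigl(\max_i v_i + 2\max_i c_i\Bigr)\sum_i v_i.
\end{equation*}
If $\bmu^*_{\pa(i)} = 0$ for every $i$, then $v_i \equiv 0$ and $h \equiv 0$. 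Otherwise, some $i$ has $\bmu^*_{\pa(i)} \neq 0$, and Bernstein--von-Mises yields $v_i = \Theta(1/t)$, giving $h(\ba',\cD_t) = \Theta(1/t)$.

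For the second statement, I would use the chain rule. Writing $\nabla h = 2\sum_i (v_i + c_i)\nabla v_i$ and noting $\nabla_{\ba'}\bar{\bx}' = (\bI - \bbE(\bB|\cD_t))^{-1} = O(1)$, a direct differentiation of the Sherman--Morrison identity gives
\begin{equation*}
\nabla_{\bar{\bx}'_{\pa(i)}} v_i = -\frac{2n\sigma_i^2 r_i}{(1 + nq_i)^2}\bigl[(1 + nq_i)\bM_i\bmu^*_{\pa(i)} - n r_i \bM_i\bar{\bx}'_{\pa(i)}\bigr],
\end{equation*}
with $r_i := \bmu^{*\top}_{\pa(i)}\bM_i\bar{\bx}'_{\pa(i)}$ and $q_i := \bar{\bx}'^{\top}_{\pa(i)}\bM_i\bar{\bx}'_{\pa(i)}$, both of order $1/t$. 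At $\ba^* = (\bI-\bB)\bmu^*$, posterior consistency gives $\bar{\bx}' = \bmu^* + O_p(1/\sqrt{t})$, so $r_i = O(1/t)$ and $\bM_i\bmu^*_{\pa(i)}, \bM_i\bar{\bx}'_{\pa(i)} = O(1/t)$. Combining, $\nabla v_i = O(1/t^2)$, and since $v_i + c_i = O(1)$ we get $\|\nabla h(\ba^*,\cD_t)\|_2 = O(1/t^2)$.

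For the Hessian, $\nabla^2 h = 2\sum_i[\nabla v_i(\nabla v_i)^\top + (v_i + c_i)\nabla^2 v_i]$. The outer-product terms are PSD and of order $1/t^4$. Differentiating the gradient expression once more, every surviving term in $\nabla^2 v_i$ is a product of at least two factors of $\bM_i$, so $\|\nabla^2 v_i\|_{\mathrm{op}} = O(1/t^2)$. Summing against bounded coefficients $c_i + v_i$ yields $\nabla^2 h(\ba^*,\cD_t) \succeq -O(1/t^2)\,\bI$.

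\paragraph{Expected difficulties.} The algebra is routine; the delicate part is the careful bookkeeping of orders under the Sherman--Morrison update and verifying that the $O_p(1/\sqrt{t})$ fluctuations of $\bbE(\bB|\cD_t)$ around $\bB$ enter only at subleading order because they are always paired against matrices of order $1/t$. A secondary technicality is confirming that the DAG-BLR posterior preserves the $\Theta(1/t)$ spectral rate of $\bM_i$ across the augmentation $\cD_t \mapsto \cD_t(\ba')$; this follows from positive-definiteness of the prior precision together with Lemma~\ref{lm:1}, but must be made uniform in $\ba'$ for the operator-norm bounds on $\nabla^2 v_i$ to hold.
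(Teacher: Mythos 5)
Your proposal is correct and follows essentially the same route as the paper's proof: a Sherman--Morrison treatment of the rank-one augmentation of $\bM_i$, the Bernstein--von-Mises $\Theta(\nicefrac1t)$ rate for the posterior covariance, boundedness of $(\bI-\bbE(\bB|\cD_t))^{-1}\ba'$ over $\cA$, and order bookkeeping of the resulting rational expressions for $v_i$, $\nabla v_i$, and $\nabla^2 v_i$ (your gradient formula matches the paper's after the change of variables $\bar{\bx}'_{\pa(i)}=[(\bI-\bbE(\bB|\cD_t))^{-1}]_{\pa(i),:}\ba'$). The only minor differences are that you sandwich $h$ using monotonicity of the posterior covariance where the paper uses the exact Sherman--Morrison identity to get the $\Omega(\nicefrac1t)$ lower bound on $v_i$ (the check you flag, that the augmentation preserves the $\Theta(\nicefrac1t)$ spectral rate uniformly in $\ba'$, is exactly the needed one-line fix), and you bound $\|\nabla^2 v_i\|$ two-sidedly (which is correct, since every term carries at least two factors of $\bM_i$) whereas the paper only retains the negative part to conclude $\nabla^2 h(\ba^*,\cD_t)\succeq -O(\nicefrac{1}{t^2})\bI$.
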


This theorem shows that as $t$ increases the gradient at $\ba^*$ vanishes to zero faster than the acquisition function and the Hessian becomes almost positive semi-definite. This suggests that gradient-based optimizers with an initialization close enough to $\ba^*$ converge to the optimal intervention $\ba^*$ as $t\to\infty$. In experiments, we observe this to hold even for a moderate number of time steps $t$ as illustrated for a 10-node DAG example in Fig.~\ref{fig:4}. {The proof of Theorem~\ref{thm:2} can be found in Supplementary Information~\ref{sec:f}.}

\subsection*{Implementations}
Optimization of the acquisition functions was performed using a gradient-based solver, more precisely, Sequential Least Squares Programming (SLSQP) with the nonlinear constraint $\|\ba^{(t)}\|_2\leq 1$. 
As proven in Supplementary Information~\ref{sec:b}, optimizing the proposed acquisition functions with this constraint always outputs a feasible $\ba^{(t)}$ such that $\|\ba^{(t)}\|_2=1$. 
For other types of feasible sets, e.g.~$\ba^{(t)}\in [0,1]^p$, one can use other solvers such as Limited-memory BFGS with box constraints (L-BFGS-B).
SLSQP was implemented using the SciPy package~\cite{virtanen2020scipy} and initialized in two ways:
1) using the intervention $\ba^{(t-1)}$ that was selected in the previous time step; 2) using the current estimate $(\bI-\bbE(\bB|\cD_t))\bmu^*$ of the optimal intervention. 
Among the two resulting solutions, we used the one with the better acquisition function value as $\ba^{(t)}$.

For the experiments on synthetic data, we generated the DAGs using the NetworkX package \cite{hagberg2008exploring} and the CausalDAG package \cite{squires2018causaldag}. 
We used different graph types and sizes, as detailed in Supplementary Information~\ref{sec:g}. 
The edge weights were uniformly sampled from $[-1,-0.25]\cup [0.25,1]$ to ensure that the parameters were bounded away from zero. The exogenous noise levels were set to $1$ and the resulting linear Gaussian SCM was then rescaled with our implementation of the standardized model as described in Appendix~F of \cite{reisach2021beware}. %
For the DAG-BLR prior (Eq.~(\ref{eq:4})), we set the edge-weight-related hyperparamters as follows: $\bmm_i=0$ and $\bM_i$ equal to the identity matrix for all $i\in[p]$. {We assumed the variances to be known and used the known-variance formula.}

%we set the noise-related hyperparameters in the unknown variance case as follows: $\alpha_i=2$ and $\beta_i=0$; these can be ignored in the known-variance case.

The transcriptomic dataset analyzed in the biological application was processed using the Scanpy package \cite{wolf2018scanpy}. 
The kernel density estimate (KDE) plots {shown in Fig.~\ref{fig:6}A share the same coordinate axes and were produced} by projecting the high-dimensional dataset along the first two principal components obtained using contrastive PCA~\cite{abid2018exploring} with the unperturbed samples as {\emph{background}} and all interventional samples as {\emph{foreground}}. {The purpose of constrative PCA is to identify low-dimensional structures in a foreground dataset relative to a background dataset.}
The Greedy Sparsest Permutation (GSP) algorithm~\cite{solus2021consistency} was used to learn the DAG structure from the unperturbed data.
The parameters used in GSP are given in Supplementary Information~\ref{sec:h}, Supplementary Fig.~\ref{fig:s11}. 
For the DAG-BLR prior, we set the hyperparamters as follows: $\bmm_i=0$, $\bM_i$ equal to the identity matrix, $\alpha_i=2$, and $\beta_i=0$. These hyperparameters were then updated using the posterior formula (\ref{lm:1}) based on $100$ samples from the unperturbed data. 
{Similar warm-up steps are used in the active learning literature (e.g., \cite{sapsis2020output,jesson2021causal}). Since we usually have access to some observational samples in applications of optimal intervention design, we recommend using these to obtain dataset-informed hyperparameters of the DAG-BLR prior.
%
% In later iterations, we observed that using the known-variance formula with the mean of the noise variances estimated from the unperturbed data performed more robustly than using the unknown-variance formula, probably because of the high variability due to the small number of perturbed single cells. Therefore, 
After this warm-up step using unperturbed data,} interventions were acquired based on the known-variance formula, where we used the estimated mean of the noise variances as a proxy.

The codebase for updating the model posterior and optimizing the causally-aware acquisition functions proposed in this work can be obtained via \cite{jiaqi_zhang_2023_8170179}. We also provide a notebook to extract and process the Perturb-CITE-seq data from~\cite{frangieh2021multimodal}.
Our codebase can be used to replicate all the main results and figures as well as for other user-defined applications. 
All methods can be run efficiently on a CPU for a moderate number of variables.

\section*{Data Availability}
The Perturb-CITE-seq \cite{frangieh2021multimodal} data can be obtained from \url{https://doi.org/10.1038/s41588-021-00779-1}.

{\section*{Code Availability}
All code has been deposited at \cite{jiaqi_zhang_2023_8170179}.
}

\section*{Acknowledgements}

J.Z., C.S. and C.U.~were partially supported by NCCIH/NIH, ONR (N00014-22-1-2116), NSF (DMS-1651995), the MIT-IBM Watson AI Lab, MIT J-Clinic for Machine Learning and Health, the Eric and Wendy Schmidt Center at the Broad Institute, and a Simons Investigator Award to C.U. T.P.S. acknowledges support by ONR (N00014-21-1-2357) and AFOSR (MURI FA9550-21-1-0058). C.S. was partially supported by an NSF Graduate Fellowship.

{
\subsection*{Author Contributions Statement}
J.Z., L.C., T.P.S., and C.U. conceived the research and designed the method. J.Z. derived the theoretical results and performed the numerical experiments. L.C. and J.Z. processed the biological data. C.S. and J.Z. derived the extension of the DAG-Wishart distribution. All authors interpreted the results and wrote the manuscript.
}

{
\subsection*{Competing Interests Statement}
The authors declare no competing interests.}
\clearpage

\renewcommand{\figurename}{Extended Data Figure}
\renewcommand{\thefigure}{\arabic{figure}}
\setcounter{figure}{0}

\bibliographystyle{abbrvurl}
\bibliography{references}

\begin{thebibliography}{10}

\bibitem{abid2018exploring}
A.~Abid, M.~J. Zhang, V.~K. Bagaria, and J.~Zou.
\newblock {Exploring patterns enriched in a dataset with contrastive principal
  component analysis}.
\newblock {\em Nature Communications}, 9(1):1--7, 2018.

\bibitem{agarwal2017finding}
N.~Agarwal, Z.~Allen-Zhu, B.~Bullins, E.~Hazan, and T.~Ma.
\newblock {Finding approximate local minima faster than gradient descent}.
\newblock In {\em Proceedings of the 49th Annual ACM SIGACT Symposium on Theory
  of Computing}, pages 1195--1199, 2017.

\bibitem{aglietti2020causal}
V.~Aglietti, X.~Lu, A.~Paleyes, and J.~Gonz{\'a}lez.
\newblock {Causal Bayesian optimization}.
\newblock In {\em International Conference on Artificial Intelligence and
  Statistics}, pages 3155--3164. PMLR, 2020.

\bibitem{agrawal2019abcd}
R.~Agrawal, C.~Squires, K.~Yang, K.~Shanmugam, and C.~Uhler.
\newblock {ABCD-strategy: Budgeted experimental design for targeted causal
  structure discovery}.
\newblock In {\em The 22nd International Conference on Artificial Intelligence
  and Statistics}, pages 3400--3409. PMLR, 2019.

\bibitem{agrawal2018minimal}
R.~Agrawal, C.~Uhler, and T.~Broderick.
\newblock {Minimal I-MAP MCMC for scalable structure discovery in causal DAG
  models}.
\newblock In {\em International Conference on Machine Learning}, pages 89--98.
  PMLR, 2018.

\bibitem{alabed2022bograph}
S.~Alabed and E.~Yoneki.
\newblock {BoGraph: Structured Bayesian optimization from logs for expensive
  systems with many parameters}.
\newblock In {\em Proceedings of the 2nd European Workshop on Machine Learning
  and Systems}, pages 45--53, 2022.

\bibitem{aastrom2021feedback}
K.~J. {\AA}str{\"o}m and R.~M. Murray.
\newblock {\em {Feedback Systems: an Introduction for Scientists and
  Engineers}}.
\newblock Princeton University Press, 2021.

\bibitem{astudillo2021bayesian}
R.~Astudillo and P.~Frazier.
\newblock Bayesian optimization of function networks.
\newblock {\em Advances in Neural Information Processing Systems},
  34:14463--14475, 2021.

\bibitem{audibert2010best}
J.-Y. Audibert and S.~Bubeck.
\newblock Best arm identification in multi-armed bandits.
\newblock In {\em COLT-23th Conference on Learning Theory-2010}, pages 13--p,
  2010.

\bibitem{ben2011high}
E.~Ben-David, T.~Li, H.~Massam, and B.~Rajaratnam.
\newblock {High dimensional Bayesian inference for Gaussian directed acyclic
  graph models}.
\newblock {\em arXiv preprint arXiv:1109.4371}, 2011.

\bibitem{blanchard2021bayesian}
A.~B. Blanchard, G.~Y. Cornejo~Maceda, D.~Fan, Y.~Li, Y.~Zhou, B.~R. Noack, and
  T.~P. Sapsis.
\newblock Bayesian optimization for active flow control.
\newblock {\em Acta Mechanica Sinica}, 37(12):1786--1798, 2021.

\bibitem{bock2022high}
C.~Bock, P.~Datlinger, F.~Chardon, M.~A. Coelho, M.~B. Dong, K.~A. Lawson,
  T.~Lu, L.~Maroc, T.~M. Norman, B.~Song, et~al.
\newblock {High-content CRISPR screening}.
\newblock {\em Nature Reviews Methods Primers}, 2(1):1--23, 2022.

\bibitem{branchini2022causal}
N.~Branchini, V.~Aglietti, N.~Dhir, and T.~Damoulas.
\newblock Causal entropy optimization.
\newblock {\em arXiv preprint arXiv:2208.10981}, 2022.

\bibitem{bubeck2012regret}
S.~Bubeck, N.~Cesa-Bianchi, et~al.
\newblock Regret analysis of stochastic and nonstochastic multi-armed bandit
  problems.
\newblock {\em Foundations and Trends{\textregistered} in Machine Learning},
  5(1):1--122, 2012.

\bibitem{cahan2014cellnet}
P.~Cahan, H.~Li, S.~A. Morris, E.~L. Da~Rocha, G.~Q. Daley, and J.~J. Collins.
\newblock {CellNet: network biology applied to stem cell engineering}.
\newblock {\em Cell}, 158(4):903--915, 2014.

\bibitem{cao2019posterior}
X.~Cao, K.~Khare, and M.~Ghosh.
\newblock {Posterior graph selection and estimation consistency for
  high-dimensional Bayesian DAG models}.
\newblock {\em The Annals of Statistics}, 47(1):319--348, 2019.

\bibitem{cardoso2003dependence}
J.-F. Cardoso.
\newblock Dependence, correlation and gaussianity in independent component
  analysis.
\newblock {\em The Journal of Machine Learning Research}, 4:1177--1203, 2003.

\bibitem{carretero2008analysis}
R.~Carretero, J.~M. Romero, F.~Ruiz-Cabello, I.~Maleno, F.~Rodriguez, F.~M.
  Camacho, L.~M. Real, F.~Garrido, and T.~Cabrera.
\newblock {Analysis of HLA class I expression in progressing and regressing
  metastatic melanoma lesions after immunotherapy}.
\newblock {\em Immunogenetics}, 60(8):439--447, 2008.

\bibitem{squires2018causaldag}
{Chandler Squires}.
\newblock {\em {CausalDAG: creation, manipulation, and learning of causal
  models}}, 2018.
\newblock URL: \url{https://github.com/uhlerlab/causaldag}.

\bibitem{cheng2017sox4}
Q.~Cheng, J.~Wu, Y.~Zhang, X.~Liu, N.~Xu, F.~Zuo, and J.~Xu.
\newblock {SOX4 promotes melanoma cell migration and invasion though the
  activation of the NF-$\kappa$B signaling pathway}.
\newblock {\em International Journal of Molecular Medicine}, 40(2):447--453,
  2017.

\bibitem{cherry2012reprogramming}
A.~B. Cherry and G.~Q. Daley.
\newblock Reprogramming cellular identity for regenerative medicine.
\newblock {\em Cell}, 148(6):1110--1122, 2012.

\bibitem{cohn1996active}
D.~A. Cohn, Z.~Ghahramani, and M.~I. Jordan.
\newblock Active learning with statistical models.
\newblock {\em Journal of Artificial Intelligence Research}, 4:129--145, 1996.

\bibitem{davies1980algorithm}
R.~B. Davies.
\newblock Algorithm as 155: The distribution of a linear combination of $\chi$
  2 random variables.
\newblock {\em Applied Statistics}, pages 323--333, 1980.

\bibitem{de2020causal}
A.~A. de~Kroon, D.~Belgrave, and J.~M. Mooij.
\newblock Causal discovery for causal bandits utilizing separating sets.
\newblock {\em arXiv preprint arXiv:2009.07916}, 2020.

\bibitem{den2015dynamic}
A.~V. Den~Boer.
\newblock {Dynamic pricing and learning: Historical origins, current research,
  and new directions}.
\newblock {\em Surveys in Operations Research and Management Science},
  20(1):1--18, 2015.

\bibitem{eberhardt2007interventions}
F.~Eberhardt and R.~Scheines.
\newblock Interventions and causal inference.
\newblock {\em Philosophy of science}, 74(5):981--995, 2007.

\bibitem{erdos1960evolution}
P.~Erd\H{o}s, A.~R{\'e}nyi, et~al.
\newblock On the evolution of random graphs.
\newblock {\em Publ. Math. Inst. Hung. Acad. Sci}, 5(1):17--60, 1960.

\bibitem{frangieh2021multimodal}
C.~J. Frangieh, J.~C. Melms, P.~I. Thakore, K.~R. Geiger-Schuller, P.~Ho, A.~M.
  Luoma, B.~Cleary, L.~Jerby-Arnon, S.~Malu, M.~S. Cuoco, et~al.
\newblock {Multimodal pooled Perturb-CITE-seq screens in patient models define
  mechanisms of cancer immune evasion}.
\newblock {\em Nature Genetics}, 53(3):332--341, 2021.
\newblock \href {https://doi.org/10.1038/s41588-021-00779-1}
  {\path{doi:10.1038/s41588-021-00779-1}}.

\bibitem{friedman2013data}
N.~Friedman, M.~Goldszmidt, and A.~Wyner.
\newblock {Data analysis with Bayesian networks: A bootstrap approach}.
\newblock {\em arXiv preprint arXiv:1301.6695}, 2013.

\bibitem{fu2013survey}
Y.~Fu, X.~Zhu, and B.~Li.
\newblock A survey on instance selection for active learning.
\newblock {\em {Knowledge and Information Systems}}, 35(2):249--283, 2013.

\bibitem{gamella2020active}
J.~L. Gamella and C.~Heinze-Deml.
\newblock Active invariant causal prediction: Experiment selection through
  stability.
\newblock {\em Advances in Neural Information Processing Systems},
  33:15464--15475, 2020.

\bibitem{geiger2002parameter}
D.~Geiger and D.~Heckerman.
\newblock Parameter priors for directed acyclic graphical models and the
  characterization of several probability distributions.
\newblock {\em The Annals of Statistics}, 30(5):1412--1440, 2002.

\bibitem{hagberg2008exploring}
A.~Hagberg, P.~Swart, and D.~S~Chult.
\newblock {Exploring network structure, dynamics, and function using NetworkX}.
\newblock Technical report, Los Alamos National Lab.(LANL), Los Alamos, NM
  (United States), 2008.

\bibitem{he2008active}
Y.-B. He and Z.~Geng.
\newblock Active learning of causal networks with intervention experiments and
  optimal designs.
\newblock {\em Journal of Machine Learning Research}, 9(Nov):2523--2547, 2008.

\bibitem{hill2006network}
S.~Hill, F.~Provost, and C.~Volinsky.
\newblock Network-based marketing: Identifying likely adopters via consumer
  networks.
\newblock {\em Statistical Science}, 21(2):256--276, 2006.

\bibitem{hillen2017moments}
T.~Hillen, K.~J. Painter, A.~C. Swan, and A.~D. Murtha.
\newblock {Moments of von Mises and Fisher distributions and applications}.
\newblock {\em Mathematical Biosciences \& Engineering}, 14(3):673, 2017.

\bibitem{holmes2012turbulence}
P.~Holmes, J.~L. Lumley, G.~Berkooz, and C.~W. Rowley.
\newblock {\em {Turbulence, Coherent structures, Dynamical systems and
  Symmetry}}.
\newblock Cambridge university press, 2012.

\bibitem{houlsby2011bayesian}
N.~Houlsby, F.~Husz{\'a}r, Z.~Ghahramani, and M.~Lengyel.
\newblock Bayesian active learning for classification and preference learning,
  2011.
\newblock arXiv preprint arXiv:1112.5745.

\bibitem{jackman2009bayesian}
S.~Jackman.
\newblock {\em {Bayesian Analysis for the Social Sciences}}.
\newblock John Wiley \& Sons, 2009.

\bibitem{jaeger2007gene}
J.~Jaeger, D.~Koczan, H.-J. Thiesen, S.~M. Ibrahim, G.~Gross, R.~Spang, and
  M.~Kunz.
\newblock Gene expression signatures for tumor progression, tumor subtype, and
  tumor thickness in laser-microdissected melanoma tissues.
\newblock {\em Clinical Cancer Research}, 13(3):806--815, 2007.

\bibitem{jesson2021causal}
A.~Jesson, P.~Tigas, J.~van Amersfoort, A.~Kirsch, U.~Shalit, and Y.~Gal.
\newblock {Causal-BALD: Deep Bayesian active learning of outcomes to infer
  treatment-Effects from Observational Data}.
\newblock {\em Advances in Neural Information Processing Systems}, 34, 2021.

\bibitem{joung2023transcription}
J.~Joung, S.~Ma, T.~Tay, K.~R. Geiger-Schuller, P.~C. Kirchgatterer, V.~K.
  Verdine, B.~Guo, M.~A. Arias-Garcia, W.~E. Allen, A.~Singh, et~al.
\newblock A transcription factor atlas of directed differentiation.
\newblock {\em Cell}, 186(1):209--229, 2023.

\bibitem{kemmeren2014large}
P.~Kemmeren, K.~Sameith, L.~A. Van De~Pasch, J.~J. Benschop, T.~L. Lenstra,
  T.~Margaritis, E.~O’Duibhir, E.~Apweiler, S.~van Wageningen, C.~W. Ko,
  et~al.
\newblock Large-scale genetic perturbations reveal regulatory networks and an
  abundance of gene-specific repressors.
\newblock {\em Cell}, 157(3):740--752, 2014.

\bibitem{kirsch2019batchbald}
A.~Kirsch, J.~Van~Amersfoort, and Y.~Gal.
\newblock {Batchbald: Efficient and diverse batch acquisition for deep Bayesian
  active learning}.
\newblock {\em Advances in Neural Information Processing Systems}, 32, 2019.

\bibitem{kleijn2012bernstein}
B.~J. Kleijn and A.~W. van~der Vaart.
\newblock {The Bernstein-von-Mises theorem under misspecification}.
\newblock {\em Electronic Journal of Statistics}, 6:354--381, 2012.

\bibitem{koumoutsakos1995high}
P.~Koumoutsakos and A.~Leonard.
\newblock High-resolution simulations of the flow around an impulsively started
  cylinder using vortex methods.
\newblock {\em Journal of Fluid Mechanics}, 296:1--38, 1995.

\bibitem{kuipers2022interventional}
J.~Kuipers and G.~Moffa.
\newblock {The interventional Bayesian Gaussian equivalent score for Bayesian
  causal inference with unknown soft interventions}, 2022.
\newblock arXiv preprint arXiv:2205.02602.

\bibitem{kuipers2014addendum}
J.~Kuipers, G.~Moffa, and D.~Heckerman.
\newblock {Addendum on the scoring of Gaussian directed acyclic graphical
  models}.
\newblock 2014.

\bibitem{kusakawa2021bayesian}
S.~Kusakawa, S.~Takeno, Y.~Inatsu, K.~Kutsukake, S.~Iwazaki, T.~Nakano,
  T.~Ujihara, M.~Karasuyama, and I.~Takeuchi.
\newblock Bayesian optimization for cascade-type multi-stage processes.
\newblock {\em arXiv preprint arXiv:2111.08330}, 2021.

\bibitem{lattimore2016causal}
F.~Lattimore, T.~Lattimore, and M.~D. Reid.
\newblock Causal bandits: Learning good interventions via causal inference.
\newblock {\em Advances in Neural Information Processing Systems}, 29, 2016.

\bibitem{lee2018structural}
S.~Lee and E.~Bareinboim.
\newblock {Structural causal bandits: Where to intervene?}
\newblock {\em Advances in Neural Information Processing Systems}, 31, 2018.

\bibitem{lee2020characterizing}
S.~Lee and E.~Bareinboim.
\newblock Characterizing optimal mixed policies: Where to intervene and what to
  observe.
\newblock {\em Advances in Neural Information Processing Systems},
  33:8565--8576, 2020.

\bibitem{li2011concise}
S.~Li.
\newblock Concise formulas for the area and volume of a hyperspherical cap.
\newblock {\em Asian Journal of Mathematics and Statistics}, 4(1):66--70, 2011.

\bibitem{lorch2021dibs}
L.~Lorch, J.~Rothfuss, B.~Sch{\"o}lkopf, and A.~Krause.
\newblock Dibs: Differentiable bayesian structure learning.
\newblock {\em Advances in Neural Information Processing Systems},
  34:24111--24123, 2021.

\bibitem{lu2021causal}
Y.~Lu, A.~Meisami, and A.~Tewari.
\newblock Causal bandits with unknown graph structure.
\newblock {\em Advances in Neural Information Processing Systems},
  34:24817--24828, 2021.

\bibitem{mcinnes2018umap}
L.~McInnes, J.~Healy, N.~Saul, and L.~Gro{\ss}berger.
\newblock {UMAP: Uniform manifold approximation and projection}.
\newblock {\em Journal of Open Source Software}, 3(29):861, 2018.

\bibitem{milone2021engineering}
M.~C. Milone, J.~Xu, S.-J. Chen, M.~A. Collins, J.~Zhou, D.~J. Powell, and
  J.~J. Melenhorst.
\newblock {Engineering-enhanced CAR T cells for improved cancer therapy}.
\newblock {\em Nature Cancer}, 2(8):780--793, 2021.

\bibitem{mohamad2018sequential}
M.~A. Mohamad and T.~P. Sapsis.
\newblock Sequential sampling strategy for extreme event statistics in
  nonlinear dynamical systems.
\newblock {\em Proceedings of the National Academy of Sciences},
  115(44):11138--11143, 2018.

\bibitem{mooij2020joint}
J.~M. Mooij, S.~Magliacane, and T.~Claassen.
\newblock Joint causal inference from multiple contexts.
\newblock {\em Journal of Machine Learning Research}, 2020.

\bibitem{murphy2001active}
K.~P. Murphy.
\newblock {Active learning of causal Bayes net structure}.
\newblock Technical report, technical report, UC Berkeley, 2001.

\bibitem{nair2018networked}
A.~G. Nair, S.~L. Brunton, and K.~Taira.
\newblock Networked-oscillator-based modeling and control of unsteady wake
  flows.
\newblock {\em Physical Review E}, 97(6):063107, 2018.

\bibitem{nesterov2006cubic}
Y.~Nesterov and B.~T. Polyak.
\newblock Cubic regularization of newton method and its global performance.
\newblock {\em Mathematical Programming}, 108(1):177--205, 2006.

\bibitem{painsky2017gaussian}
A.~Painsky and N.~Tishby.
\newblock Gaussian lower bound for the information bottleneck limit.
\newblock {\em The Journal of Machine Learning Research}, 18:213--1, 2017.

\bibitem{pearl2009causality}
J.~Pearl.
\newblock {\em Causality}.
\newblock Cambridge university press, 2009.

\bibitem{peluso2020compatible}
S.~Peluso and G.~Consonni.
\newblock {Compatible priors for model selection of high-dimensional Gaussian
  DAGs}.
\newblock {\em Electronic Journal of Statistics}, 14(2):4110--4132, 2020.

\bibitem{rackham2016predictive}
O.~J. Rackham, J.~Firas, H.~Fang, M.~E. Oates, M.~L. Holmes, A.~S. Knaupp,
  H.~Suzuki, C.~M. Nefzger, C.~O. Daub, J.~W. Shin, et~al.
\newblock A predictive computational framework for direct reprogramming between
  human cell types.
\newblock {\em Nature Genetics}, 48(3):331--335, 2016.

\bibitem{reisach2021beware}
A.~Reisach, C.~Seiler, and S.~Weichwald.
\newblock {Beware of the simulated DAG! causal discovery benchmarks may be easy
  to game}.
\newblock {\em Advances in Neural Information Processing Systems},
  34:27772--27784, 2021.

\bibitem{replogle2022mapping}
J.~M. Replogle, R.~A. Saunders, A.~N. Pogson, J.~A. Hussmann, A.~Lenail,
  A.~Guna, L.~Mascibroda, E.~J. Wagner, K.~Adelman, G.~Lithwick-Yanai, et~al.
\newblock {Mapping information-rich genotype-phenotype landscapes with
  genome-scale Perturb-seq}.
\newblock {\em Cell}, 185(14):2559--2575, 2022.

\bibitem{rothenhausler2015backshift}
D.~Rothenh{\"a}usler, C.~Heinze, J.~Peters, and N.~Meinshausen.
\newblock Backshift: Learning causal cyclic graphs from unknown shift
  interventions.
\newblock {\em Advances in Neural Information Processing Systems}, 28, 2015.

\bibitem{sapsis2020output}
T.~P. Sapsis.
\newblock {Output-weighted optimal sampling for Bayesian regression and rare
  event statistics using few samples}.
\newblock {\em Proceedings of the Royal Society A}, 476(2234):20190834, 2020.

\bibitem{sen2017identifying}
R.~Sen, K.~Shanmugam, A.~G. Dimakis, and S.~Shakkottai.
\newblock Identifying best interventions through online importance sampling.
\newblock In {\em International Conference on Machine Learning}, pages
  3057--3066. PMLR, 2017.

\bibitem{serrao2013climate}
S.~Serrao-Neumann, G.~M. Di~Giulio, L.~C. Ferreira, and D.~L. Choy.
\newblock {Climate change adaptation: Is there a role for intervention
  research?}
\newblock {\em Futures}, 53:86--97, 2013.

\bibitem{shalem2015high}
O.~Shalem, N.~E. Sanjana, and F.~Zhang.
\newblock {High-throughput functional genomics using CRISPR--Cas9}.
\newblock {\em Nature Reviews Genetics}, 16(5):299--311, 2015.

\bibitem{shanmugam2015learning}
K.~Shanmugam, M.~Kocaoglu, A.~G. Dimakis, and S.~Vishwanath.
\newblock Learning causal graphs with small interventions.
\newblock {\em Advances in Neural Information Processing Systems}, 28, 2015.

\bibitem{shayevitz2011optimal}
O.~Shayevitz and M.~Feder.
\newblock Optimal feedback communication via posterior matching.
\newblock {\em IEEE Transactions on Information Theory}, 57(3):1186--1222,
  2011.

\bibitem{solus2021consistency}
L.~Solus, Y.~Wang, and C.~Uhler.
\newblock Consistency guarantees for greedy permutation-based causal inference
  algorithms.
\newblock {\em Biometrika}, 108(4):795--814, 2021.

\bibitem{spirtes2000causation}
P.~Spirtes, C.~N. Glymour, R.~Scheines, and D.~Heckerman.
\newblock {\em {Causation, Prediction, and Search}}.
\newblock MIT press, 2000.

\bibitem{sterner2021car}
R.~C. Sterner and R.~M. Sterner.
\newblock {CAR-T cell therapy: Current limitations and potential strategies}.
\newblock {\em Blood Cancer Journal}, 11(4):1--11, 2021.

\bibitem{sunar2019optimal}
N.~Sunar, J.~R. Birge, and S.~Vitavasiri.
\newblock Optimal dynamic product development and launch for a network of
  customers.
\newblock {\em Operations Research}, 67(3):770--790, 2019.

\bibitem{sussex2023modelbased}
S.~Sussex, A.~Makarova, and A.~Krause.
\newblock {Model-based causal Bayesian optimization}.
\newblock In {\em The Eleventh International Conference on Learning
  Representations}, 2023.
\newblock URL: \url{https://openreview.net/forum?id=Vk-34OQ7rFo}.

\bibitem{sussex2021near}
S.~Sussex, C.~Uhler, and A.~Krause.
\newblock Near-optimal multi-perturbation experimental design for causal
  structure learning.
\newblock {\em Advances in Neural Information Processing Systems}, 34:777--788,
  2021.

\bibitem{taira2022network}
K.~Taira and A.~G. Nair.
\newblock Network-based analysis of fluid flows: Progress and outlook.
\newblock {\em Progress in Aerospace Sciences}, 131:100823, 2022.

\bibitem{tigas2022interventions}
P.~Tigas, Y.~Annadani, A.~Jesson, B.~Sch{\"o}lkopf, Y.~Gal, and S.~Bauer.
\newblock Interventions, where and how? experimental design for causal models
  at scale.
\newblock In A.~H. Oh, A.~Agarwal, D.~Belgrave, and K.~Cho, editors, {\em
  Advances in Neural Information Processing Systems}, 2022.

\bibitem{todorov2002optimal}
E.~Todorov and M.~I. Jordan.
\newblock Optimal feedback control as a theory of motor coordination.
\newblock {\em Nature Neuroscience}, 5(11):1226--1235, 2002.

\bibitem{tong2001active}
S.~Tong and D.~Koller.
\newblock {Active learning for structure in Bayesian networks}.
\newblock In {\em International Joint Conference on Artificial Intelligence},
  volume~17, pages 863--869. Citeseer, 2001.

\bibitem{toth2022active}
C.~Toth, L.~Lorch, C.~Knoll, A.~Krause, F.~Pernkopf, R.~Peharz, and
  J.~Von~K{\"u}gelgen.
\newblock {Active Bayesian causal inference}.
\newblock {\em arXiv preprint arXiv:2206.02063}, 2022.

\bibitem{varici2022causal}
B.~Varici, K.~Shanmugam, P.~Sattigeri, and A.~Tajer.
\newblock Causal bandits for linear structural equation models.
\newblock {\em arXiv preprint arXiv:2208.12764}, 2022.

\bibitem{virtanen2020scipy}
P.~Virtanen, R.~Gommers, T.~E. Oliphant, M.~Haberland, T.~Reddy, D.~Cournapeau,
  E.~Burovski, P.~Peterson, W.~Weckesser, J.~Bright, et~al.
\newblock {SciPy 1.0: fundamental algorithms for scientific computing in
  Python}.
\newblock {\em Nature Methods}, 17(3):261--272, 2020.

\bibitem{weichwald2022learning}
S.~Weichwald, S.~W. Mogensen, T.~E. Lee, D.~Baumann, O.~Kroemer, I.~Guyon,
  S.~Trimpe, J.~Peters, and N.~Pfister.
\newblock Learning by doing: Controlling a dynamical system using causality,
  control, and reinforcement learning.
\newblock In {\em NeurIPS 2021 Competitions and Demonstrations Track}, pages
  246--258. PMLR, 2022.

\bibitem{wessels2022efficient}
H.-H. Wessels, A.~Méndez-Mancilla, E.~Papalexi, W.~Mauck, L.~Lu, J.~Morris,
  E.~Mimitou, P.~Smibert, N.~Sanjana, and R.~Satija.
\newblock {Efficient combinatorial targeting of RNA transcripts in single cells
  with Cas13 RNA Perturb-seq}, 2022.
\newblock \href {https://doi.org/10.1101/2022.02.02.478894}
  {\path{doi:10.1101/2022.02.02.478894}}.

\bibitem{wienobst2021polynomial}
M.~Wien{\"o}bst, M.~Bannach, and M.~Liskiewicz.
\newblock {Polynomial-time algorithms for counting and sampling Markov
  equivalent DAGs}.
\newblock In {\em Proceedings of the AAAI Conference on Artificial
  Intelligence}, volume~35, pages 12198--12206, 2021.

\bibitem{wolf2018scanpy}
F.~A. Wolf, P.~Angerer, and F.~J. Theis.
\newblock Scanpy: large-scale single-cell gene expression data analysis.
\newblock {\em Genome Biology}, 19(1):1--5, 2018.

\bibitem{yabe2018causal}
A.~Yabe, D.~Hatano, H.~Sumita, S.~Ito, N.~Kakimura, T.~Fukunaga, and K.-i.
  Kawarabayashi.
\newblock Causal bandits with propagating inference.
\newblock In {\em International Conference on Machine Learning}, pages
  5512--5520. PMLR, 2018.

\bibitem{jiaqi_zhang_2023_8170179}
J.~Zhang.
\newblock uhlerlab/actlearn\_optint: v1, July 2023.
\newblock \href {https://doi.org/10.5281/zenodo.8170179}
  {\path{doi:10.5281/zenodo.8170179}}.

\bibitem{zhang2022online}
J.~Zhang and E.~Bareinboim.
\newblock Online reinforcement learning for mixed policy scopes.
\newblock In {\em Advances in Neural Information Processing Systems}, 2022.

\bibitem{zhang2021matching}
J.~Zhang, C.~Squires, and C.~Uhler.
\newblock Matching a desired causal state via shift interventions.
\newblock {\em Advances in Neural Information Processing Systems}, 34, 2021.

\bibitem{zheng2018dags}
X.~Zheng, B.~Aragam, P.~K. Ravikumar, and E.~P. Xing.
\newblock Dags with no tears: Continuous optimization for structure learning.
\newblock {\em Advances in neural information processing systems}, 31, 2018.

\end{thebibliography}

%TC:endignore
\clearpage

\renewcommand{\figurename}{Supplementary Figure}
\renewcommand{\thefigure}{\arabic{figure}}
\setcounter{figure}{0}
\nolinenumbers

\appendix
%TC:ignore

% \section*{Supplementary Information\\\vspace{-0.5cm}\\Active Learning for Optimal Intervention Design in Causal Models}

% \vspace{0.3cm}

% \noindent Jiaqi Zhang$^{1,3}$, Louis Cammarata$^{2,3}$, Chandler Squires$^{1,3}$, Themistoklis P. Sapsis$^{1}$, and Caroline Uhler$^{1,3}$
% \vspace{0.2cm}

% \noindent $^{1}$Massachusetts Institute of Technology, \;
% $^{2}$Harvard University, \;
% $^{3}$Broad Institute of MIT and Harvard

\vspace{0.5cm}

\section{Conjugacy of the DAG-BLR prior}\label{sec:a}

We begin by remarking on the relationship between the DAG-BLR prior and the well-studied DAG-Wishart prior.

\begin{remark}
The DAG-BLR prior is a generalization of the (multi-shape) DAG-Wishart prior introduced in \cite{ben2011high} and used in later works including \cite{cao2019posterior}.
In particular, as mentioned in Section 3.1 of \cite{cao2019posterior}, the DAG-Wishart prior with parameters $U$ and $\tilde{\alpha}$, for $U \in \bbR^{p \times p}$ positive definite, is a special case of the prior used in this work with
\begin{align*}
    \alpha_i &= \frac{\tilde{\alpha}_i}{2} - \frac{|\pa(i)|}{2} - 1,
    \\
    \beta_i &= \frac{1}{2} \left( U_{ii} - U_{i,\pa(i)} U_{\pa(i),\pa(i)}^\inv U_{\pa(i),i} \right),
    \\
    \bmm_i &= U_{\pa(i),\pa(i)}^\inv U_{\pa(i),i},
    \\
    \bM_i &= U_{\pa(i),\pa(i)}^\inv
\end{align*}
Note that, in general, the DAG-Wishart prior imposes more constraints than the DAG-BLR prior.
For example, given two nodes $i$ and $j$ such that $\pa(i) = \pa(j)$, the DAG-Wishart prior requires that $\bM_i = \bM_j$, i.e., the covariance over the edge weights into $i$ must be equal to the covariance over the edge weights into $j$.
These constraints ensure that the DAG-Wishart prior satisfies desirable properties for model selection such as \emph{score equivalence} \cite{peluso2020compatible}.
However, these constraints are not necessary to ensure that the DAG-BLR prior is a conjugate prior.
In this paper, we do not use our prior over $\bB, \bSigma$ for model selection - we only require that the prior is a conjugate prior.
Thus, we present our results for the more general case.
\end{remark}

\subsection{Conjugacy in the observational setting}

Here, we prove that the DAG-BLR prior is a conjugate prior to the Gaussian likelihood, and provide equations for the posterior hyperparameters. This section applies in the observational setting without interventions, where all samples acquired follow Eq.~\eqref{eq:1} in the main text.

\begin{claim}\label{claim:hyperparameter-update}
Let $\bbP(\bB, \bSigma)$ be a DAG-BLR distribution with hyperparameters $\{ \alpha_i, \beta_i, \bmm_i, \bM_i \}_{i=1}^p$.
Let $\cD_t \in \bbR^{n \times p}$ be sampled from a linear Gaussian DAG model with parameters $(\bB, \bSigma)$.
Then $\bbP(\bB, \bSigma|\cD_t)$ is a DAG-BLR distribution with hyperparameters $\{ \alpha_i(\cD_t), \beta_i(\cD_t), \bmm_i(\cD_t), \bM_i(\cD_t) \}_{i=1}^p$, where
\begin{equation}\label{eq:hyperparameter-update-dag-blr}
    \begin{aligned}
    \alpha_i(\cD_t) & = \alpha_i + \frac{n}{2},
    \\
    \beta_i(\cD_t) & = \beta_i + \frac12
    \left(
    \sum_{m=1}^n
    (\sfx^{(m)}_i)^2 + \bmm_i^\top \bM_i^{-1}\bmm_i - \bmm_i(\cD_t)^\top \bM_i(\cD_t)^{-1} \bmm_i(\cD_t)
    \right),
    \\
    \bmm_i(\cD_t) &= \bM_i(\cD_t)
    \left(\bM_i^{-1}\bmm_{i} + \sum_{m=1}^n \sfx_i^{(m)} \sfx_{\pa(i)}^{(m)}
    \right),
    \\
    \bM_i(\cD_t) & = \left(
    \bM_i^{-1} + \sum_{m=1}^n \sfx_{\pa(i)}^{(m)} \sfx_{\pa(i)}^{(m)~\top}
    \right)^{-1}.
    \end{aligned}
\end{equation}
\end{claim}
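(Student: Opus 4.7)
The plan is to exploit the fact that both the DAG-BLR prior and the Gaussian likelihood factorize over nodes, reducing the problem to $p$ independent Bayesian linear regression updates, each of which is a standard Normal-Inverse-Gamma conjugacy computation. Concretely, because the DAG factorization gives
\[
\bbP(\cD_t \mid \bB, \bSigma) = \prod_{i=1}^p \prod_{m=1}^n \bbP\bigl(\sfx_i^{(m)} \mid \sfx_{\pa(i)}^{(m)}, \bB_{i,\pa(i)}, \sigma_i^2\bigr),
\]
and the prior factorizes as $\prod_{i=1}^p \bbP(\bB_{i,\pa(i)}, \sigma_i^2)$, Bayes' rule immediately yields a posterior of the form $\prod_{i=1}^p \bbP(\bB_{i,\pa(i)}, \sigma_i^2 \mid \cD_t)$. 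So it suffices to verify the update for a single node $i$.

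For that node, I would stack the responses $y_i = (\sfx_i^{(1)}, \ldots, \sfx_i^{(n)})^\top$ and the design matrix $X_i \in \bbR^{n \times |\pa(i)|}$ whose rows are $\sfx_{\pa(i)}^{(m)\top}$. The per-node problem is then standard Bayesian linear regression: likelihood $y_i \mid X_i, \bB_{i,\pa(i)}, \sigma_i^2 \sim \cN(X_i \bB_{i,\pa(i)}, \sigma_i^2 \bI_n)$ with a Normal-Inverse-Gamma prior $\sigma_i^2 \sim \cI\cG(\alpha_i, \beta_i)$ and $\bB_{i,\pa(i)} \mid \sigma_i^2 \sim \cN(\bmm_i, \sigma_i^2 \bM_i)$. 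I would then write out the joint density of prior times likelihood, collect all terms containing $\bB_{i,\pa(i)}$ inside the exponential, and complete the square in $\bB_{i,\pa(i)}$. The quadratic coefficient of $\bB_{i,\pa(i)}$ yields the posterior precision $\bM_i^{-1} + X_i^\top X_i = \bM_i(\cD_t)^{-1}$, and the linear coefficient yields the posterior mean $\bmm_i(\cD_t) = \bM_i(\cD_t)(\bM_i^{-1}\bmm_i + X_i^\top y_i)$, matching the stated formulas once the sums are expanded.

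The remaining terms (those not involving $\bB_{i,\pa(i)}$) should combine into an inverse-Gamma density in $\sigma_i^2$. Here the key identity is the standard ``Bayesian sum of squares'' decomposition
\[
\bmm_i^\top \bM_i^{-1} \bmm_i + y_i^\top y_i - \bmm_i(\cD_t)^\top \bM_i(\cD_t)^{-1} \bmm_i(\cD_t)
\]
appearing as twice the update to $\beta_i$; verifying this identity algebraically is the one place the computation is not completely mechanical, and will be the main obstacle. It follows from expanding $\bmm_i(\cD_t)^\top \bM_i(\cD_t)^{-1} \bmm_i(\cD_t)$ using the posterior precision and mean expressions, and noting that the cross terms cancel. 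The power of $\sigma_i^2$ outside the exponential in the combined density increases by $n/2$ relative to the prior, giving $\alpha_i(\cD_t) = \alpha_i + n/2$. After reassembling, the posterior for $(\bB_{i,\pa(i)}, \sigma_i^2)$ is exactly a Normal-Inverse-Gamma with the claimed hyperparameters, and taking the product over $i$ completes the proof.
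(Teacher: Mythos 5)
Your proposal is correct and follows essentially the same route as the paper: reduce to $p$ independent node-wise problems via the factorization of prior and likelihood, then apply Normal-Inverse-Gamma conjugacy for Bayesian linear regression at each node. The only difference is presentational — the paper cites the standard conjugacy result (and handles general $n$ by induction on single samples), whereas you carry out the completing-the-square computation directly for all $n$ samples at once; both are fine.
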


\begin{proof}
We begin by proving that $\bbP(\bx) = \prod_{i=1}^p \bbP(\sfx_i \mid \sfx_{\pa(i)})$.
In particular, it holds by the DAG factorization structure on $\bbP(\bx \mid \bB, \bSigma)$ and the product structure on $\bbP(\bB, \bSigma)$ that
\begin{equation}\label{eq:dag-blr-step1}
\begin{aligned}
    \bbP(\bx)
    &=
    \bbE_{\bB,\bSigma} (\bbP(\bx \mid \bB, \bSigma)) 
   % \nonumber
    \\
    &=
    \bbE_{\bB,\bSigma} \left( \prod_{i=1}^p \bbP(\sfx_i \mid \bB_{i,\pa(i)}, \sigma_i^2, \sfx_{\pa(i)}) \right)
   % \label{eq:dag-factorization1}
    \\
    &=
    \prod_{i=1}^p \bbE_{\bB, \bSigma} \left( \bbP(\sfx_i \mid \bB_{i,\pa(i)}, \sigma_i^2, \sfx_{\pa(i)}) \right)
   % \nonumber
    \\
    &=
    \prod_{i=1}^p \bbP(\sfx_i \mid \sfx_{\pa(i)}).
  %  \nonumber
\end{aligned}
\end{equation}

Next, we prove that
$
\bbP(\bB, \bSigma, \sfx) 
=
\prod_{i=1}^p 
\left( 
\bbP(\sfx_i \mid \sfx_{\pa(i)})
\cdot
\bbP(\bB_{i,\pa(i)}, \sigma_i^2 \mid \sfx_{\pa(i)}, \sfx_i)
\right)
$. Namely:
\begin{equation}\label{eq:dag-blr-step2}
\begin{aligned}
    \bbP(\bB, \bSigma, \sfx)
    &=
    \bbP(\sfx \mid \bB, \bSigma) \cdot \bbP(\bB, \bSigma)
  %  \nonumber
    \\
    &=
    \left( \prod_{i=1}^p \bbP(\sfx_i \mid \sfx_{\pa(i)}, \bB_{i,\pa(i)}, \sigma_i^2) \right)
    \cdot
    \left( \prod_{i=1}^p \bbP(\bB_{i,\pa(i)}, \sigma_i^2) \right)
%     \label{eq:dag-factorization2}
    \\
    &= 
    \prod_{i=1}^p 
    \bbP(\sfx_i \mid \sfx_{\pa(i)}, \bB_{i,\pa(i)}, \sigma_i^2) 
    \cdot 
    \bbP(\bB_{i,\pa(i)}, \sigma_i^2)
 %  \nonumber
    \\
    &=
    \prod_{i=1}^p 
    \bbP(\sfx_i, \bB_{i,\pa(i)}, \sigma^2_i \mid \sfx_{\pa(i)})
 %   \nonumber
    \\
    &=
    \prod_{i=1}^p 
    \left(
    \bbP(\sfx_i \mid \sfx_{\pa(i)})
    \cdot
    \bbP(\bB_{i,\pa(i)}, \sigma^2_i \mid \sfx_{\pa(i)}, \sfx_i)
    \right)
%    \nonumber
\end{aligned}
\end{equation}
In the fourth line, we used the chain rule together with the independence of $\bB_i, \sigma_i$ from $\sfx_{\pa(i)}$, and in the fifth line we used the chain rule again.
Combining Eq.~\eqref{eq:dag-blr-step1} and Eq.~\eqref{eq:dag-blr-step2}, we obtain that
\begin{align*}
    \bbP(\bB, \bSigma \mid \bx)
    =
    \frac{\bbP(\bB, \bSigma, \bx)}{\bbP(\bx)}
    =
    \prod_{i=1}^p 
    \bbP(\bB_{i,\pa(i)}, \sigma^2_i \mid \sfx_{\pa(i)}, \sfx_i).
\end{align*}

Finally, we note that, for $\cD_t = \{ \bx \}$, it holds that, for $\alpha_i(\cD_t), \beta_i(\cD_t), \bm_i(\cD_t),$ and $\bM_i(\cD_t)$ defined in Eq.~\eqref{eq:hyperparameter-update-dag-blr},
\[
\bbP(\bB_{i,\pa(i)}, \sigma_i^2 \mid \sfx_{\pa(i)}, \sfx_i) = \cN(\bB_{i,\pa(i)}; \bmm_i(\cD_t), \sigma_i^2 \bM_i(\cD_t)) \cdot \cI\cG(\sigma_i^2 ; \alpha_i(\cD_t), \beta_i(\cD_t))
\]
by standard results on the conjugacy of the normal-inverse-gamma distribution for Bayesian linear regression (e.g., Proposition 2.11 of \cite{jackman2009bayesian}).
The result for the case of general $n$ follows by induction, which completes the proof.
\end{proof}

\subsection{Conjugacy in the setting of shift interventions with known values}
Next, we extend the previous section by considering the interventional setting with known shift interventions. In particular, we present the proof of Lemma~\ref{lm:1} in the main text, which we restate here for ease of reading and also specify the hyperparameter updates.

\begin{namedenv}[Lemma 1]\label{lm:1-a}
The posterior corresponding to the DAG-BLR prior satisfies $\bbP(\bB, \bSigma|\cD_t)=\prod_{i=1}^p\bbP(\bB_{i,\pa(i)},\sigma_i^2|\cD_t)$, where $\bbP(\bB_{i,\pa(i)}, \sigma_i^2|\cD_t)$ is given by
\begin{equation*}\label{eq:5-a}
\begin{aligned}
    \sigma_i^2|\cD_t & \sim \cI\cG\big(\alpha_i(\cD_t),\beta_i(\cD_t)\big),\\
    \bB_{i,\pa(i)}|\sigma_i^2, \cD_t & \sim \cN\big(\bmm_i(\cD_t), \sigma_i^2 \bM_i(\cD_t)\big),
\end{aligned}
\end{equation*}
where
\begin{equation}\label{eq:hyperparameter-update-dag-blr-shift}
\begin{aligned}
    \alpha_i(\cD_t) & = \alpha_i + \frac{n}{2},
    \\
    \beta_i(\cD_t) & = \beta_i + \frac12\left(
    \sum_{m=1}^n(\sfx^{(m)}_i-a_i^{(m)})^2 + \bmm_i^\top \bM_i^{-1}\bmm_i - \bmm_i(\cD_t)^\top \bM_i(\cD_t)^{-1} \bmm_i(\cD_t)
    \right),
    \\
    \bmm_i(\cD_t) &= \bM_i(\cD_t)
    \left(\bM_i^{-1}\bmm_{i} + \sum_{m=1}^n (\sfx_i^{(m)}-a_i^{(m)})\sfx_{\pa(i)}^{(m)}
    \right),
    \\
    \bM_i(\cD_t) & = \left(
    \bM_i^{-1} + \sum_{m=1}^n \sfx_{\pa(i)}^{(m)} \sfx_{\pa(i)}^{(m)~\top}
    \right)^{-1}.
\end{aligned}
\end{equation}
\end{namedenv}

\begin{proof}
Note that, by the definition of a shift intervention,
\begin{equation*}
    \bbP(\bx \mid \bB, \bSigma; \ba)
    =
    \prod_{i=1}^p \bbP \left( \sfx_i^{(m)} - a_i^{(m)} \mid \sfx_{\pa(i)}, \bB, \bSigma \right).
\end{equation*}
Thus, we have
\begin{equation}\label{eq:dag-blr-shift-step1}
\begin{aligned}
    \bbP(\bx; \ba)
    &=
    \bbE_{\bB,\bSigma} (\bbP(\bx \mid \bB, \bSigma ; \ba)) 
   % \nonumber
    \\
    &=
    \bbE_{\bB,\bSigma} \left( \prod_{i=1}^p \bbP(\sfx_i - a_i \mid \bB_{i,\pa(i)}, \sigma_i^2, \sfx_{\pa(i)}) \right)
   % \label{eq:dag-factorization1}
    \\
    &=
    \prod_{i=1}^p \bbE_{\bB, \bSigma} \left( \bbP(\sfx_i - a_i \mid \bB_{i,\pa(i)}, \sigma_i^2, \sfx_{\pa(i)}) \right)
   % \nonumber
    \\
    &=
    \prod_{i=1}^p \bbP(\sfx_i - a_i \mid \bx_{\pa(i)}).
\end{aligned}
\end{equation}
Similarly, we have
\begin{equation}\label{eq:dag-blr-shift-step2}
\begin{aligned}
    \bbP(\bB, \bSigma, \bx ; \ba)
    &=
    \bbP(\bx \mid \bB, \bSigma ; \ba) \cdot \bbP(\bB, \bSigma)
  %  \nonumber
    \\
    &=
    \left( \prod_{i=1}^p \bbP(\sfx_i - a_i \mid \sfx_{\pa(i)}, \bB_{i,\pa(i)}, \sigma_i^2) \right)
    \cdot
    \left( \prod_{i=1}^p \bbP(\bB_{i,\pa(i)}, \sigma_i^2) \right)
%     \label{eq:dag-factorization2}
    \\
    &= 
    \prod_{i=1}^p 
    \bbP(\sfx_i - a_i \mid \sfx_{\pa(i)}, \bB_{i,\pa(i)}, \sigma_i^2) 
    \cdot 
    \bbP(\bB_{i,\pa(i)}, \sigma_i^2)
 %  \nonumber
    \\
    &=
    \prod_{i=1}^p 
    \bbP(\sfx_i - a_i, \bB_{i,\pa(i)}, \sigma^2_i \mid \sfx_{\pa(i)})
 %   \nonumber
    \\
    &=
    \prod_{i=1}^p 
    \left(
    \bbP(\sfx_i - a_i \mid \sfx_{\pa(i)})
    \cdot
    \bbP(\bB_{i,\pa(i)}, \sigma^2_i \mid \sfx_{\pa(i)}, \sfx_i - a_i)
    \right).
%    \nonumber
\end{aligned}
\end{equation}
Then by combining Eq.~\eqref{eq:dag-blr-shift-step1} and Eq.~\eqref{eq:dag-blr-shift-step2}, we obtain that
\begin{align*}
    \bbP(\bB, \bSigma \mid \bx)
    =
    \frac{\bbP(\bB, \bSigma, \bx)}{\bbP(\bx)}
    =
    \prod_{i=1}^p 
    \bbP(\bB_{i,\pa(i)}, \sigma^2_i \mid \sfx_{\pa(i)}, \sfx_i - a_i).
\end{align*}
Once again, by conjugacy of the normal-inverse Gamma distribution for Bayesian linear regression, we have
\[
\bbP(\bB_{i,\pa(i)}, \sigma_i^2 \mid \sfx_{\pa(i)}, \sfx_i - a_i) = \cN(\bB_{i,\pa(i)}; \bmm_i(\cD_t), \sigma_i^2 \bM_i(\cD_t)) \cdot \cI\cG(\sigma_i^2 ; \alpha_i(\cD_t), \beta_i(\cD_t))
\]
for $\alpha_i(\cD_t), \beta_i(\cD_t), \bm_i(\cD_t),$ and $\bM_i(\cD_t)$ defined in Eq.~\eqref{eq:hyperparameter-update-dag-blr-shift},
which concludes the proof.
\end{proof}

{
\begin{remark}
    Note that Lemma~1 can easily be extended to hard interventions by updating only the hyperparameters indexed by nodes $i$ that are not an interventional target.
\end{remark}
}

\section{Details for Causal Integrated Variance}\label{sec:b}

In this section, we show how to evaluate the CIV acquisition function in closed form. 
This then allows for the use of gradient-based optimization methods.
Recall that the causal integrated variance (CIV) acquisition function is defined as
\begin{equation*}
    h(\ba',\cD_t) = \int_{\cA} \sigma_{g(\ba)|\cD_t(\ba')}^2 d\nu(\ba),
\end{equation*}
where $g(\ba)=\|(\ba+\frac1n\sum_{m=1}^n\bepsilon_n)-(\bI-\bB)\bmu^*\|_2^2$ and $\cD_t(\ba')=\cD_t\cup(\bar{\bx}'_{[n]}, \ba')$ with $\bar{\bx}'_{[n]}$ being $n$ repetitions of $\bar{\bx}'=(\bI-\bbE(\bB|\cD_t))^{-1}\ba'$.

{
We remark here that if only a subset of variables $S$ are of interest in the target mean $\bmu^*$, we can replace $g(\ba)$ with a projected version $g(\ba)=\min_{[\bmu]_S=[\bmu^*]_S}\|( \ba+\frac1n \sum_{m=1}^n \bepsilon^{(m)}) - (\bI-\bB) \bmu\|_2^2$ that measures the discrepancy only over the variables in $S$. This minimization problem can be solved exactly by writing $\bmu=\bmu^*+\mathbf{P}_S\bv$ where $\mathbf{P}_S$ is a projection matrix that sets the $S$-entries of $\bv$ to zero. Solving for $\bv$ results in $g(\ba)=\| (\ba+\frac1n \sum_{m=1}^n \bepsilon^{(m)}) - (\bI-\bB) \bmu\|_{\mathbf{Q}_S}^2$, where $\mathbf{Q}_S=\bI-(\bI-\bB)\mathbf{P}_S((\bI-\bB)\mathbf{P}_S)^\dagger$. To evaluate and optimize CIV, we could for example use a plug-in estimand for $\mathbf{Q}_S$ and follow similar derivations as below.

For minimizing (or maximizing) the values of a subset of nodes $S$, we can modify the optimality gap to be a scalar objective function that describes how well the subset of nodes is optimized (e.g., $g(\ba)=\max_{i\in S} \bmu^*_i = \max_{i\in S} [(\bI-\bB)^{-1}\ba]_i$ for minimization). However, since the variance of this function may not be computable in closed form, approximation methods may need to be developed to evaluate and optimize CIV in this case.}

%We remark here that if only a subset of variables $S$ is of interest in the target mean $\bmu^*$, we can replace $g(\ba)$ with a projected version $g(\ba)=\min_{[\bmu]_S=[\bmu^*]_S}\|( \ba+\frac1n \sum_{m=1}^n \bepsilon^{(m)}) - (\bI-\bB) \bmu\|_2^2$ measuring just the discrepancy over variables in $S$. This minimization problem can be solved exactly by writing $\bmu=\bmu^*+\mathbf{P}_S\bv$ where $\mathbf{P}_S$ is a projection matrix that sets the $S$-entries of $\bv$ to zero and optimizing over all $\bv$. Solving for $\bv$ results in $g(\ba)=\| (\ba+\frac1n \sum_{m=1}^n \bepsilon^{(m)}) - (\bI-\bB) \bmu\|_{\mathbf{Q}_S}^2$, where $\mathbf{Q}_S=\bI-(\bI-\bB)\mathbf{P}_S((\bI-\bB)\mathbf{P}_S)^\dagger$. To evaluate and optimize CIV, we could for example use a plug-in estimand for $\mathbf{Q}_S$ and follow similar derivations below.}

\subsection{Evaluations}

We first show how to evaluate the term inside the integral, $\sigma_{g(\ba)|\cD_t(\ba')}^2$, which characterizes the post-$\ba'$ uncertainty of estimating the optimality of intervention $\ba$. 
To this end, we will need the following lemma on the hyperparameters of the DAG-BLR distribution.

\begin{lemma}\label{lm:2}
Given the current dataset $\cD_t$ and the augmented dataset $\cD_t(\ba')$, the posterior updates on the hyperparameters $\alpha_i(\cD_t(\ba')), \beta_i(\cD_t(\ba'))$ and $\bmm_i(\cD_t(\ba'))$ do not depend on $\ba'$. In particular, $\alpha_i(\cD_t(\ba'))=\alpha_i(\cD_t)+\frac{n}{2}$, whereas $\beta_i(\cD_t(\ba'))=\beta_i(\cD_t)$ and $\bmm_i(\cD_t(\ba'))=\bmm_i(\cD_i)$.
\end{lemma}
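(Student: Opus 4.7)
The plan is to exploit the defining property of the plug-in sample $\bar{\bx}'=(\bI-\bbE(\bB|\cD_t))^{-1}\ba'$, which rearranges to $\bar{\bx}'=\bbE(\bB|\cD_t)\bar{\bx}'+\ba'$. By Lemma~1, $\bbE(\bB_{i,\pa(i)}|\cD_t)=\bmm_i(\cD_t)$, so this identity becomes componentwise
\begin{equation*}
\bar{x}'_i - a'_i \;=\; \bmm_i(\cD_t)^\top \bar{\bx}'_{\pa(i)} \qquad \forall\, i \in [p].
\end{equation*}
Intuitively, the hypothetical samples are noiseless realizations of the MAP-calibrated structural equations, so they carry no fresh information beyond what $\cD_t$ already provides. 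This single identity powers the whole argument; the rest is bookkeeping on the posterior-update formulas from Lemma~1 (Eq.~\eqref{eq:hyperparameter-update-dag-blr-shift}).

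The $\alpha_i$ claim is immediate, since Eq.~\eqref{eq:hyperparameter-update-dag-blr-shift} simply counts samples and yields $\alpha_i(\cD_t(\ba'))=\alpha_i(\cD_t)+n/2$. For $\bmm_i$, I would substitute the $n$ identical copies of $\bar{\bx}'$ into the update formula. The sums collapse to $n\bar{\bx}'_{\pa(i)}(\bar{\bx}'_{\pa(i)})^\top$ in the precision and to $n(\bar{x}'_i-a'_i)\bar{\bx}'_{\pa(i)}$ in the regression target; the key identity rewrites the latter as $n\bar{\bx}'_{\pa(i)}(\bar{\bx}'_{\pa(i)})^\top\bmm_i(\cD_t)$. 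The mean update then reads
\begin{equation*}
\bM_i(\cD_t(\ba'))^{-1}\bmm_i(\cD_t(\ba')) = \bM_i(\cD_t)^{-1}\bmm_i(\cD_t) + n\bar{\bx}'_{\pa(i)}(\bar{\bx}'_{\pa(i)})^\top\bmm_i(\cD_t) = \bM_i(\cD_t(\ba'))^{-1}\bmm_i(\cD_t),
\end{equation*}
and inverting gives $\bmm_i(\cD_t(\ba'))=\bmm_i(\cD_t)$.

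For $\beta_i$, after substituting $\bmm_i(\cD_t(\ba'))=\bmm_i(\cD_t)$, the increment in $\beta_i$ simplifies to $n(\bar{x}'_i-a'_i)^2 - \bmm_i(\cD_t)^\top\bigl(\bM_i(\cD_t(\ba'))^{-1}-\bM_i(\cD_t)^{-1}\bigr)\bmm_i(\cD_t)$. The second term equals $n\bmm_i(\cD_t)^\top\bar{\bx}'_{\pa(i)}(\bar{\bx}'_{\pa(i)})^\top\bmm_i(\cD_t)$, which by the key identity equals $n(\bar{x}'_i-a'_i)^2$. Thus the increment vanishes and $\beta_i(\cD_t(\ba'))=\beta_i(\cD_t)$.

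There is no substantive obstacle: the whole proof is a one-line algebraic cancellation hinged on the plug-in identity. The only care required is using the shift-interventional version of Lemma~1 (so the $-a'_i$ correction appears in both the mean and $\beta_i$ updates) and noting that the $n$ hypothetical samples are exact repetitions of $\bar{\bx}'$, so the added sufficient statistics are rank-one and the cancellations are exact rather than approximate.
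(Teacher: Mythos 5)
Your proof is correct and follows essentially the same route as the paper's: the plug-in identity $\bar{x}'_i - a'_i = \bmm_i(\cD_t)^\top \bar{\bx}'_{\pa(i)}$ combined with the hyperparameter update formulas of Lemma~1 gives the immediate $\alpha_i$ update, the collapse of the $\bmm_i$ update, and the exact cancellation in the $\beta_i$ increment. No gaps.
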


\begin{proof}
First, by \nameref{lm:1-a}, it holds that $\alpha_i(\cD_t(\ba'))=\alpha_i(\cD_t)+\frac{n}{2}$. Second, it holds that
\begin{align*}
    \bmm_i\big(\cD_t(\ba')\big) & = \left(\bM_i(\cD_t)^{-1}+n\bar\sfx'_{\pa(i)}\bar\sfx_{\pa(i)}^{\prime\top}\right)^{-1}\left(\bM_i^{-1}(\cD_t)\bmm_{i}(\cD_t) +n(\bar{\sfx}_i-a'_i)\bar\sfx_{\pa(i)} \right)\\
    & = \left(\bM_i(\cD_t)^{-1}+n\bar\sfx'_{\pa(i)}\bar\sfx_{\pa(i)}^{\prime\top}\right)^{-1}\left(\bM_i^{-1}(\cD_t)\bmm_{i}(\cD_t) +n\big(\bar\sfx_{\pa(i)}^{\prime\top}\bmm_{i}(\cD_t)\big)\bar\sfx'_{\pa(i)} \right)\\
    & = \left(\bM_i(\cD_t)^{-1}+n\bar\sfx'_{\pa(i)}\bar\sfx_{\pa(i)}^{\prime\top}\right)^{-1}\left(\bM_i^{-1}(\cD_t)\bmm_{i}(\cD_t) +n\big(\bar\sfx'_{\pa(i)}\bar\sfx_{\pa(i)}^{\prime\top}\big)\bmm_{i}(\cD_t) \right)\\
    & = \bmm_i(\cD_t),
\end{align*}
where we used $\bar{\sfx}'_i-a'_i=\bar\sfx_{\pa(i)}^{\prime\top}\bmm_{i}(\cD_t)$ in the second equation. Lastly, it follows that,
\begin{align*}
    \beta_i\big(\cD_t(\ba')\big) 
    &= 
    \beta_i(\cD_t) + \frac12\Big[n(\bar{\sfx}'_i-a'_i)^2 + \bmm_i^\top(\cD_t) \bM_i^{-1}(\cD_t)\bmm_i(\cD_t) - \bmm_i^\top\big(\cD_t(\ba')\big) \bM_i^{-1}\big(\cD_t(\ba')\big)\bmm_i\big(\cD_t(\ba')\big)\Big] 
    \\
    &= 
    \beta_i(\cD_t) + \frac12\Big[n(\bar{\sfx}'_i-a'_i)^2 + \bmm_i^\top(\cD_t) \Big(\bM_i^{-1}(\cD_t)-\bM_i^{-1}\big(\cD_t(\ba')\big)\Big)\bmm_i(\cD_t)\Big] 
    \\
    &= \beta_i(\cD_t) + \frac12\big[n(\bar{\sfx}'_i-a'_i)^2 - \bmm_i^\top(\cD_t) (n\bar\sfx_{\pa(i)}'\bar\sfx_{\pa(i)}^{\prime\top})\bmm_i(\cD_t)\big] 
    \\
    &= \beta_i(\cD_t).
\end{align*}
Therefore, the updates for these three variables do not depend on $\ba'$, which completes the proof.
\end{proof}

With this result, we can now calculate $\sigma_{g(\ba)|\cD_t(\ba')}^2$ by proving a generalized version of Proposition~\ref{prop:1} from the main text in which we do not condition on a known $\bSigma$. 
The major techniques we will employ are the law of total expectation and variance. These allow us to separate various sources of randomness and obtain a closed-form expression.

\begin{remark}
For convenience, we record the mean and variance (or moments) for a number of distributions which will be used in the following proof.
First, suppose $W \sim ( \cN(\mu, \sigma^2))^2$.
Then
\begin{equation}\label{eq:moment-chi-square}
    \begin{aligned}
        \bbE (W) &= \sigma^2 + \mu^2,
        \\
        \Var (W) &= 2 \sigma^4 + 4\mu^2 \sigma^2.
    \end{aligned}
\end{equation}

Next, suppose $W \sim \cI\cG(\alpha, \beta)$.
Then for $k\in\mathbb{N}_+$,
\begin{equation}\label{eq:moment-inverse-gamma}
    \begin{aligned}
        \bbE (W^k) &= \frac{\beta^k\Gamma(\alpha-k)}{\Gamma(\alpha)} = \frac{\beta^k}{(\alpha - 1)...(\alpha-k)}.
    \end{aligned}
\end{equation}

Finally, suppose $W \sim \bar{\chi}^2(\bw, \bk, \blambda, 0, 0)$, where $\bar{\chi}^2$ denotes a generalized chi-square distribution \cite{davies1980algorithm}.
Then
\begin{equation}\label{eq:moment-generalized-chi-square}
    \begin{aligned}
        \bbE (W) &= \sum_j w_j (k_j + \lambda_j),
        \\
        \Var (W) &= 2 \sum_j w_j^2 (k_j + 2 \lambda_j).
    \end{aligned}
\end{equation}
\end{remark}

\begin{namedenv}[Proposition 1]\label{prop:1-a}
The variance $\sigma_{g(\ba)|\cD_t(\ba')}^2$ satisfies
\begin{equation*}
    \sigma_{g(\ba)|\cD_t(\ba')}^2 
    = 
    \sum_{i=1}^p
    \frac{\beta_i'}{\alpha_i'-1} \left[ \frac{\beta_i'(2\alpha'_i-1)}{(\alpha'_i-1)(\alpha'_i-2)} \left( u_i^2 + \frac2n u_i \right) 
    + 
    4 u_i (a_i - b_i)^2
    \right] + c,
\end{equation*}
where only $u_i:=\bmu_{\pa(i)}^{*\top} \bM_i(\cD_t(\ba'))\bmu_{\pa(i)}^*$ depends on $\ba'$. The variables $\alpha_i':=\alpha_i(\cD_t)+\frac{n}{2}$,  $\beta_i':=\beta_i(\cD_t)$ and $b_i:=\bmu_i^*-\bmm_i(\cD_t)^\top \bmu_{\pa(i)}^*$ only depend on $\cD_t$. The scalar $c$ is a constant that does not depend on $\ba'$.
\end{namedenv}

\begin{proof}
First, note that the uncertainty in $\sigma_{g(\ba)|\cD_t(\ba')}^2$ comes from two sources: $\bB$ and $\bepsilon_{[n]}$, which jointly depend on $\bSigma$. By the law of total variance, we condition out $\bSigma$ as follows:
\begin{equation}\label{prop1:eq1}
    \begin{aligned}
    \sigma_{g(\ba)|\cD_t(\ba')}^2 
    &= 
    \Var_{\bepsilon_{[n]},\bB} \big(g(\ba)|\cD_t(\ba')\big) 
    \\
    &= 
    \Var_{\bSigma} \Big(\bbE_{\bepsilon_{[n]},\bB}\big(g(\ba)\big|\bSigma,\cD_t(\ba')\big)\Big|\cD_t(\ba')\Big) + \bbE_{\bSigma} \Big(\Var_{\bepsilon_{[n]},\bB}\big(g(\ba)\big|\bSigma,\cD_t(\ba')\big)\Big|\cD_t(\ba')\Big).
    \end{aligned}
\end{equation}

Now, we evaluate the inside terms conditioning on $\bSigma$ using another layer of the law of total expectation and variance:
\begin{equation}\label{prop1:eq2}
    \begin{aligned}
    &\bbE_{\bepsilon_{[n]},\bB}
    \big(
    g(\ba)\big|\bSigma,\cD_t(\ba')\big) \\
    ={}& 
    \bbE_{\bB} \big(
    \bbE_{\bepsilon_{[n]}}
    (g(\ba) | \bB, \bSigma, \cD_t(\ba') )
    \big|
    \bSigma,\cD_t(\ba')
    \big)
    \\
    ={}& \bbE_{\bB}\big(\bbE_{\bepsilon_{[n]}}(g(\ba)|\bB, \bSigma)\big|\bSigma,\cD_t(\ba')\big),\\
    & \Var_{\bepsilon_{[n]},\bB}\big(g(\ba)\big|\bSigma,\cD_t(\ba')\big)\\
    ={}&  \Var_{\bB}\big(\bbE_{\bepsilon_{[n]}}(g(\ba)|\bB,\bSigma,\cD_t(\ba'))\big|\bSigma,\cD_t(\ba')\big) + \bbE_{\bB}\big(\Var_{\bepsilon_{[n]}}(g(\ba)|\bB,\bSigma,\cD_t(\ba'))\big|\bSigma,\cD_t(\ba')\big)\\
    ={}& \Var_{\bB}\big(\bbE_{\bepsilon_{[n]}}(g(\ba)|\bB,\bSigma)\big|\bSigma,\cD_t(\ba')\big) + \bbE_{\bB}\big(\Var_{\bepsilon_{[n]}}(g(\ba)|\bB,\bSigma)\big|\bSigma,\cD_t(\ba')\big),
    \end{aligned}
\end{equation}
where we used the fact that $g(\ba)$ is independent of $\cD_t(\ba')$ conditioning on $\bB$ and $\bSigma$. Conditioning on $\bB$ and $\bSigma$, we can see that the random variable
\begin{equation*}
    g(\ba)
    =
    \sum_{i=1}^p 
    \left( 
    \left[\frac1n\sum_{m=1}^n \bepsilon_m \right]_i
    +
    [\ba-(\bI-\bB)\bmu^*]_i
    \right)^2
\end{equation*}
is a sum of squares of $p$ independent Gaussian variables $[\frac1n\sum_{m=1}^n \bepsilon_m]_i+[\ba-(\bI-\bB)\bmu^*]_i\sim \cN([\ba-(\bI-\bB)\bmu^*]_i,\frac{1}{n}\sigma_i^2)$, and therefore follows a generalized chi-square distribution
\begin{equation*}
    g(\ba) | \bB, \bSigma
    \sim \bar{\chi}^2
    \left(
    \frac1n\diag(\bSigma), 
    \mathbf{1},
    \blambda,
    0, 
    0
    \right),
\end{equation*}
where $\blambda = n\diag(\bSigma^{-1}[\ba-(\bI-\bB)\bmu^*][\ba-(\bI-\bB)\bmu^*]^\top)$ \rec{and notation $\diag(\cdot)$ denotes the diagonal vector of a matrix.}
Thus, by Eq.~\eqref{eq:moment-generalized-chi-square}, we obtain
\begin{equation}\label{prop1:eq3}
    \begin{aligned}
    \bbE_{\bepsilon_{[n]}}\big(g(\ba)|\bB,\bSigma\big) 
    &= 
    \sum_{i=1}^p \left(
    \frac{\sigma_i^2}{n} +[\ba-(\bI-\bB)\bmu^*]_i^2
    \right), 
    \\
    \Var_{\bepsilon_{[n]}}\big(g(\ba)|\bB,\bSigma\big) 
    &= 
    2\sum_{i=1}^p \left(
    \frac{\sigma_i^4}{n^2} + \frac{2\sigma_i^2}{n} [\ba-(\bI-\bB)\bmu^*]_i^2
    \right).
    \end{aligned}
\end{equation}
Then, conditioning the right-hand sides on $\bSigma$ and $\cD_t(\ba')$, we obtain by Lemma~\ref{lm:1} that the only variable terms are $[\ba-(\bI-\bB)\bmu^*]_i^2=(a_i-\mu_i^*+\bmu^{*\top}_{\pa(i)}\bB_{i,\pa(i)})^2\sim \cN(a_i-b_i,\sigma_i^2 u_i)^2$, which are independent among different $i\in[p]$. 
Therefore, by Eq.~\eqref{prop1:eq2}, Eq.~\eqref{prop1:eq3}, and Eq.~\eqref{eq:moment-chi-square}, we have
\begin{equation}\label{prop1:eq4}
    \begin{aligned}
     &~~~~\bbE_{\bepsilon_{[n]},\bB}\big(g(\ba)\big|\bSigma,\cD_t(\ba')\big) \\
     &= 
     \bbE_\bB \left( 
     \sum_{i=1}^p \left(
     \frac{\sigma_i^2}{n} +[\ba-(\bI-\bB)\bmu^*]_i^2
     \right)
     \bigg|
     \bSigma,\cD_t(\ba')
     \right)
     \\
     & =
     \sum_{i=1}^p \left(
     \frac{\sigma_i^2}{n}+\sigma_i^2u_i+(a_i-b_i)^2
     \right), 
     \\
    %  ==== BEGIN VARIANCE ===
    &~~~~ \Var_{\bepsilon_{[n]},\bB}
     \big(
     g(\ba)\big|\bSigma,\cD_t(\ba')\big)\\
     & = \Var_\bB\left(  \sum_{i=1}^p \left(
    \frac{\sigma_i^2}{n} +[\ba-(\bI-\bB)\bmu^*]_i^2
    \right)\bigg|\bSigma,\cD_t(\ba')\right) 
    \\
    &\qquad\qquad\qquad + \bbE_\bB \left(
    2\sum_{i=1}^p \left(
    \frac{\sigma_i^4}{n^2} + \frac{2\sigma_i^2}{n} [\ba-(\bI-\bB)\bmu^*]_i^2
    \right)
    \bigg|
    \bSigma,\cD_t(\ba')
    \right)
     \\
     &= 
     \sum_{i=1}^p 
     \left[ 
     \Var_\bB \big([\ba-(\bI-\bB)\bmu^*]_i^2\big|\bSigma,\cD_t(\ba')\big) 
     + 
     2\bbE_\bB
     \left(
     \frac{\sigma_i^4}{n^2} + \frac{2\sigma_i^2}{n}[\ba-(\bI-\bB)\bmu^*]_i^2
     \Big|
     \bSigma,\cD_t(\ba')
     \right) 
     \right]
     \\
     & = \sum_{i=1}^p \left[ \big(2\sigma_i^4u_i^2 +4
     \sigma_i^2u_i(a_i-b_i)^2\big)+2\Big(\frac{\sigma_i^4}{n^2}+\frac{2\sigma_i^2}{n}\big(\sigma_i^2u_i+(a_i-b_i)^2\big)\Big)
     \right]
     \\
     &= 
     2\sum_{i=1}^p 
     \left[
     \sigma_i^4 \left( u_i^2+\frac2n u_i \right) + 2\sigma_i^2 u_i (a_i-b_i)^2
     \right] + c_1,
\end{aligned}
\end{equation}
where $c_1=2\sum_{i=1}^p(\nicefrac{\sigma_i^4}{n^2}+2\nicefrac{\sigma_i^2(a_i-b_i)^2}{n})$ is a scalar that only depends on $\bSigma, \ba$ and $\bb$.

Next, we calculate $\sigma_{g(\ba)|\cD_t(\ba')}^2$. 
Using Lemma~\ref{lm:1} and Lemma~\ref{lm:2}, we know that conditioning on $\cD_t(\ba')$, the random variable $\sigma_i^2$ follows an inverse Gamma distribution with hyperparameters $\alpha'_i$ and $\beta'_i$.
Since $\alpha'_i$ and $\beta'_i$ do not depend on $\ba'$, we then know that the distribution of $\bSigma$ conditioned on $\cD_t(\ba')$ does not depend on $\ba'$.
In turn, conditioning on $\cD_t(\ba')$, $c_1$ in Eq.~\eqref{prop1:eq4}, which only depends on $\bSigma$, $\ba$ and $\bb$, does not depend on $\ba'$. Thus, using Eq.~\eqref{prop1:eq4} in Eq.~\eqref{prop1:eq1} and then Eq.~\eqref{eq:moment-inverse-gamma}, we obtain
\begin{align*}
    &~~~~\sigma_{g(\ba)|\cD_t(\ba')}^2 \\
    &= \Var_{\bSigma} \Big(\sum_{i=1}^p \big(
     \frac{\sigma_i^2}{n}+\sigma_i^2u_i+(a_i-b_i)^2
     \big)\Big|\cD_t(\ba')\Big) + \bbE_{\bSigma} \Big(2\sum_{i=1}^p 
     \big[
     \sigma_i^4 \left( u_i^2+\frac2n u_i \right) + 2\sigma_i^2 u_i (a_i-b_i)^2
     \big] + c_1\Big|\cD_t(\ba')\Big)
    \\
    &= \sum_{i=1}^p\Var_{\bSigma|\cD_t(\ba')}( \sigma_i^2)\left( u_i+\frac1n \right)^2  + 2\sum_{i=1}^p\left[ \bbE_{\bSigma|\cD_t(\ba')}(\sigma_i^4)\left( u_i^2+\frac2n u_i \right)+ 2\bbE_{\bSigma|\cD_t(\ba')}(\sigma_i^2) u_i(a_i-b_i)^2\right] + c_1
    \\
    &= 
    \sum_{i=1}^p
    \frac{\beta_i'}{\alpha_i'-1}\left[\frac{\beta_i'}{(\alpha_i'-1)(\alpha'_i-2)} \left( u_i+\frac1n \right)^2 
    +
    \frac{2\beta_i'}{\alpha'_i-2} \left( u_i^2+\frac2nu_i \right) 
    +
    4u_i(a_i-b_i)^2
    \right] + c_1 
    \\
    &= 
    \sum_{i=1}^p
    \frac{\beta_i'}{\alpha_i'-1}\left[ \frac{\beta_i'(2\alpha'_i-1)}{(\alpha'_i-1)(\alpha'_i-2)}
    \left( u_i^2+\frac2nu_i \right) 
    + 
    4u_i(a_i-b_i)^2
    \right] + c,
\end{align*}
where $c$ is constant that does not depend on $\ba'$, thereby completing the proof.
\end{proof}

Conditioning on $\bSigma$ in the above result gives rise to the following corollary, which is  Proposition~\ref{prop:1} in the main text. The calculation is similar to the case of unknown $\bSigma$, but simplifies since we do not need to consider the randomness in $\bSigma$ anymore.
\begin{corollary}\label{cor:1}
Conditioning on $\bSigma$, the variance $\sigma^2_{g(\ba)|\cD_t(\ba')}$ satisfies
\begin{equation*}
    \begin{aligned}
    \sigma^2_{g(\ba)|\cD_t(\ba')}=2\sum_{i=1}^p \big(v_i^2+\frac2nv_i\sigma_i^2+2v_i(a_i-b_i)^2\big)+ c,
    \end{aligned}
\end{equation*}
where only $v_i := \sigma_i^2\bmu^{*\top}_{\pa(i)}\bM_i\big(\cD_t(\ba')\big)\bmu_{\pa(i)}^*$ depends on the augmented dataset $\cD_t(\ba')$; the variable $b_i:=\mu_i^*-\bmm_i(\cD_t)^\top\bmu^*_{\pa(i)}$ and constant $c$ do not depend on $\ba'$.
\end{corollary}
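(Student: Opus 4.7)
The plan is to mirror the proof of Proposition~\ref{prop:1-a} but to skip the outermost layer of the law of total variance (which there accounts for the randomness in $\bSigma$). Concretely, since $\bSigma$ is now known, only $\bB$ and $\bepsilon_{[n]}$ are random, and $g(\ba)$ is independent of $\cD_t(\ba')$ once $\bB$ is fixed. I would therefore start by applying the law of total variance with $\bB$ playing the role of the conditioning variable:
\begin{equation*}
\sigma^2_{g(\ba)|\cD_t(\ba')}
=
\Var_{\bB}\!\bigl(\bbE_{\bepsilon_{[n]}}(g(\ba)\mid\bB)\bigm|\cD_t(\ba')\bigr)
+
\bbE_{\bB}\!\bigl(\Var_{\bepsilon_{[n]}}(g(\ba)\mid\bB)\bigm|\cD_t(\ba')\bigr).
\end{equation*}

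Next, I would compute the inner moments by noting that, conditional on $\bB$, the quantity $g(\ba)$ is a sum of $p$ independent squared Gaussians whose $i$-th term is $\cN([\ba-(\bI-\bB)\bmu^*]_i,\sigma_i^2/n)^2$. Standard formulas (Eq.~\eqref{eq:moment-chi-square}) then give
\begin{equation*}
\bbE_{\bepsilon_{[n]}}(g(\ba)\mid\bB)=\sum_{i=1}^p\Bigl(\tfrac{\sigma_i^2}{n}+[\ba-(\bI-\bB)\bmu^*]_i^2\Bigr),\quad
\Var_{\bepsilon_{[n]}}(g(\ba)\mid\bB)=2\sum_{i=1}^p\Bigl(\tfrac{\sigma_i^4}{n^2}+\tfrac{2\sigma_i^2}{n}[\ba-(\bI-\bB)\bmu^*]_i^2\Bigr).
\end{equation*}

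Then I would use the DAG-BLR posterior (Lemma~\ref{lm:1-a}), under which $\bB_{i,\pa(i)}\mid\cD_t(\ba')\sim\cN(\bmm_i(\cD_t(\ba')),\sigma_i^2\bM_i(\cD_t(\ba')))$. By Lemma~\ref{lm:2}, the mean equals $\bmm_i(\cD_t)$, so the scalar $[\ba-(\bI-\bB)\bmu^*]_i=a_i-\mu_i^*+\bmu_{\pa(i)}^{*\top}\bB_{i,\pa(i)}$ is distributed as $\cN(a_i-b_i,v_i)$, with $v_i=\sigma_i^2\bmu_{\pa(i)}^{*\top}\bM_i(\cD_t(\ba'))\bmu_{\pa(i)}^*$ being the only $\ba'$-dependent quantity. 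Using Eq.~\eqref{eq:moment-chi-square} once more (and the independence across $i$ coming from the product structure of the DAG-BLR posterior) gives $\bbE_{\bB}([\,\cdot\,]_i^2\mid\cD_t(\ba'))=v_i+(a_i-b_i)^2$ and $\Var_{\bB}([\,\cdot\,]_i^2\mid\cD_t(\ba'))=2v_i^2+4v_i(a_i-b_i)^2$. Summing and rearranging yields the claimed formula, with $c=\tfrac{2}{n^2}\sum_i\sigma_i^4+\tfrac{4}{n}\sum_i\sigma_i^2(a_i-b_i)^2$ absorbing all $\ba'$-independent terms.

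The main obstacle is bookkeeping rather than mathematical depth: one must carefully invoke Lemma~\ref{lm:2} to argue that $\bmm_i(\cD_t(\ba'))=\bmm_i(\cD_t)$ (so $b_i$ is indeed $\ba'$-free) and confirm that the independence of $\bB_{i,\pa(i)}$ across $i$ in the posterior is preserved after conditioning on the plug-in augmented dataset, so that cross-$i$ covariances vanish when computing $\Var_\bB$ of the sum. Once these two points are secured, the remaining work is substituting into the standard mean/variance formulas for (noncentral) chi-square variables and collecting terms.
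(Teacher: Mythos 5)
Your proposal is correct and follows essentially the same route as the paper: the paper proves the corollary by specializing the intermediate equations in the proof of the general (unknown-$\bSigma$) proposition — i.e., the same law-of-total-variance decomposition over $\bB$ and $\bepsilon_{[n]}$, the same squared-Gaussian moment formulas, and Lemma~\ref{lm:2} to ensure $\bmm_i(\cD_t(\ba'))=\bmm_i(\cD_t)$ — which is precisely the computation you re-derive, with the same constant $c=c_1=\tfrac{2}{n^2}\sum_i\sigma_i^4+\tfrac{4}{n}\sum_i\sigma_i^2(a_i-b_i)^2$.
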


\begin{proof}
From Eq.~\eqref{prop1:eq1} and Eq.~\eqref{prop1:eq4}, we know that conditioning on $\bSigma$ results in
\begin{align*}
    \sigma^2_{g(\ba)|\cD_t(\ba')} 
    &=  
    2\sum_{i=1}^p 
    \left[\sigma_i^4 \left(u_i^2 + \frac2n u_i\right) + 2\sigma_i^2 u_i (a_i-b_i)^2 \right] + c_1 
    \\
    & = 2\sum_{i=1}^p \left( v_i^2 + \frac2n v_i \sigma_i^2 + 2v_i (a_i-b_i)^2 \right) + c_1,
\end{align*}
where $c=c_1$ does not depend on $\ba'$, thereby completing the proof.
\end{proof}

Next, we show in the following lemma that, for any $\ba\in\cA$, we can trivially make the uncertainty $\sigma^2_{g(\ba)|\cD_t(\ba')}$ smaller by increasing the magnitude of $\ba'$. 
This has previously been observed in the Bayesian linear regression setting~\cite{sapsis2020output}. 
To show this result formally for our problem, we need the above closed-form results and the posterior of the DAG-BLR distribution.

\begin{lemma}\label{lm:3}
For any $r>1$, there is 
\begin{equation*}
    \sigma^2_{g(\ba)|\cD_t(r\ba')}\leq \sigma^2_{g(\ba)|\cD_t(\ba')}.
\end{equation*}
\end{lemma}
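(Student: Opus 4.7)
The plan is to reduce the inequality to a monotonicity statement about the posterior covariance matrix $\bM_i(\cD_t(\ba'))$, which is the only $\ba'$-dependent quantity entering the closed-form expression for $\sigma^2_{g(\ba)|\cD_t(\ba')}$.

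First, I would invoke \nameref{prop:1-a} to write
\begin{equation*}
\sigma^2_{g(\ba)|\cD_t(\ba')} = \sum_{i=1}^p \frac{\beta_i'}{\alpha_i'-1}\left[ \frac{\beta_i'(2\alpha_i'-1)}{(\alpha_i'-1)(\alpha_i'-2)}\Big(u_i^2+\frac{2}{n}u_i\Big) + 4u_i(a_i-b_i)^2 \right] + c,
\end{equation*}
noting that the coefficients $\tfrac{\beta_i'}{\alpha_i'-1}$, $\tfrac{\beta_i'(2\alpha_i'-1)}{(\alpha_i'-1)(\alpha_i'-2)}$, $\tfrac{2}{n}$, and $4(a_i-b_i)^2$ are all non-negative (the first two by the hyperparameter constraints ensuring the inverse-Gamma moments exist), and that only $u_i = \bmu_{\pa(i)}^{*\top}\bM_i(\cD_t(\ba'))\bmu^*_{\pa(i)}$ depends on $\ba'$. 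Since $\bM_i(\cD_t(\ba'))$ is positive semidefinite, $u_i \geq 0$, and therefore both $u_i^2 + \tfrac{2}{n}u_i$ and $u_i(a_i-b_i)^2$ are non-decreasing in $u_i$. It thus suffices to show $u_i$ decreases (weakly) when $\ba'$ is replaced by $r\ba'$ for $r>1$.

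Next I would use the posterior update from \nameref{lm:1-a}, which gives
\begin{equation*}
\bM_i(\cD_t(\ba'))^{-1} = \bM_i(\cD_t)^{-1} + n\, \bar\sfx'_{\pa(i)}\bar\sfx'^{\top}_{\pa(i)}.
\end{equation*}
The plug-in estimator $\bar\sfx' = (\bI-\bbE(\bB|\cD_t))^{-1}\ba'$ is linear in $\ba'$, so replacing $\ba'$ by $r\ba'$ replaces $\bar\sfx'$ by $r\bar\sfx'$, giving
\begin{equation*}
\bM_i(\cD_t(r\ba'))^{-1} = \bM_i(\cD_t)^{-1} + nr^2\, \bar\sfx'_{\pa(i)}\bar\sfx'^{\top}_{\pa(i)}.
\end{equation*}
For $r>1$, we have $nr^2 \geq n$, so $\bM_i(\cD_t(r\ba'))^{-1} - \bM_i(\cD_t(\ba'))^{-1} = n(r^2-1)\bar\sfx'_{\pa(i)}\bar\sfx'^{\top}_{\pa(i)} \succeq 0$. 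Inverting preserves the Loewner order in reverse, hence $\bM_i(\cD_t(r\ba')) \preceq \bM_i(\cD_t(\ba'))$, which yields $\bmu_{\pa(i)}^{*\top}\bM_i(\cD_t(r\ba'))\bmu_{\pa(i)}^* \leq \bmu_{\pa(i)}^{*\top}\bM_i(\cD_t(\ba'))\bmu_{\pa(i)}^*$, i.e., $u_i$ at $r\ba'$ is at most $u_i$ at $\ba'$.

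Plugging this back into the expression for $\sigma^2_{g(\ba)|\cD_t(\cdot)}$ and using that every coefficient is non-negative finishes the proof. The only subtlety worth double-checking is the sign of the coefficient $\tfrac{\beta_i'(2\alpha_i'-1)}{(\alpha_i'-1)(\alpha_i'-2)}$, which requires $\alpha_i' > 2$; this is already implicit in the moment computations underlying Proposition~\ref{prop:1-a} (second moment of the inverse-Gamma), so no additional assumption is needed.
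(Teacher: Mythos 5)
Your proposal is correct and follows essentially the same route as the paper: reduce the claim via \nameref{prop:1-a} to showing $u_i(\cD_t(r\ba'))\leq u_i(\cD_t(\ba'))$, and obtain this from $\bM_i(\cD_t(r\ba'))=\left(\bM_i(\cD_t)^{-1}+nr^2\bar\sfx'_{\pa(i)}\bar\sfx_{\pa(i)}^{\prime\top}\right)^{-1}\preceq \bM_i(\cD_t(\ba'))$. You simply spell out two steps the paper leaves implicit, namely the monotonicity of the closed-form expression in $u_i\geq 0$ (with the sign condition $\alpha_i'>2$ already needed for the inverse-Gamma moments) and the linear scaling of the plug-in estimator $\bar\sfx'$ in $\ba'$.
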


\begin{proof}
Using \nameref{prop:1-a}, since only $u_i$ depends on the augmented dataset $\cD_t(\ba')$ and $\cD_t(r\ba')$, we can show that $ \sigma^2_{g(\ba)|\cD_t(r\ba')}\leq \sigma^2_{g(\ba)|\cD_t(\ba')}$ by showing $u_i\big(\cD_t(r\ba')\big)\leq u_i\big(\cD_t(\ba')\big)$. This follows directly from
\begin{align*}
    \bM_i\big(\cD_t(r\ba')\big) = \left(\bM_i(\cD_t)^{-1}+nr^2\bar\sfx_{\pa(i)}\bar\sfx_{\pa(i)}^\top \right)^{-1}\preceq  \left(\bM_i(\cD_t)^{-1}+n\bar\sfx_{\pa(i)}\bar\sfx_{\pa(i)}^\top \right)^{-1} = \bM_i\big(\cD_t(\ba')\big),
\end{align*}
which completes the proof.
\end{proof}

Therefore, we set $\cA$ to be the unit hypersphere $\bbS^{p-1}=\{\ba\in\bbR^p: \|\ba\|_2=1\}$ and only optimize over the direction of an intervention and not its magnitude. When $\nu$ is the uniform measure on $\cA$, we can calculate $h(\ba',\cD_t)$ analytically. The following proposition provides the explicit form, which is a generalized version of Proposition~\ref{prop:2} in the main text.

\begin{namedenv}[Proposition 2]\label{prop:2-a}
For $\cA=\bbS^{p-1}$ and $\nu$ being the uniform measure on $\cA$, the CIV acquisition function evaluated at $\ba'$ is given by
\begin{equation*}
\begin{aligned}
    h(\ba',\cD_t) = 
    \sum_{i=1}^p
    \frac{\beta_i'}{\alpha_i'-1}
    \left[ 
    \frac{\beta_i'(2\alpha'_i-1)}{(\alpha'_i-1)(\alpha'_i-2)}
    \left( u_i^2 + \frac2nu_i \right) + 4u_i \left( b_i^2+\frac1p \right)
    \right] + C,
\end{aligned}
\end{equation*}
where $C$ is a constant that does not depend on $\ba'$.
\end{namedenv}

\begin{proof}
Using \nameref{prop:1-a}, we obtain that
\begin{equation}\label{eq_3}
\begin{aligned}
    h(\ba',\cD_t) 
    &=
    \int_{\bbS^{p-1}} \sigma^2_{g(\ba)|\cD_t(\ba')} d\nu(\ba) 
    \\
    &= 
    \int_{\bbS^{p-1}} \left\{\sum_{i=1}^p\frac{\beta_i'}{\alpha_i'-1}\left[ \frac{\beta_i'(2\alpha'_i-1)}{(\alpha'_i-1)(\alpha'_i-2)} \left( u_i^2+\frac2nu_i \right) + 4u_i(a_i-b_i)^2
    \right] + c\right\} d\nu(\ba) 
    \\
    &= 
    \sum_{i=1}^p
    \frac{\beta_i'}{\alpha_i'-1} \left[ \frac{\beta_i'(2\alpha'_i-1)}{(\alpha'_i-1)(\alpha'_i-2)} \left( u_i^2+\frac2nu_i \right) + 4u_i\int_{\bbS^{p-1}}(a_i-b_i)^2d\nu(\ba)
    \right] + C 
    \\
    &= 
    \sum_{i=1}^p
    \frac{\beta_i'}{\alpha_i'-1} \left[ \frac{\beta_i'(2\alpha'_i-1)}{(\alpha'_i-1)(\alpha'_i-2)} \left( u_i^2 + \frac2nu_i \right) 
    + 4u_i
    \left(
    b_i^2 + \int_{\bbS^{p-1}} (a_i^2 - 2 a_i b_i) d\nu(\ba)
    \right)
    \right] + C,
\end{aligned}
\end{equation}
where $C$ is a constant that does not depend on $\ba'$. Since $\bbS^{p-1}$ is symmetric and isometric, we have
\begin{align}\label{eq_4}
    \int_{\bbS^{p-1}} a_ib_id\nu(\ba)  = b_i\int_{\bbS^{p-1}} a_id\nu(\ba) = 0,
\end{align}
and
\begin{align}\label{eq_5}
    \int_{\bbS^{p-1}} a_i^2 d\nu(\ba) & = \frac1p \sum_{i=1}^p\int_{\bbS^{p-1}} a_i^2 d\nu(\ba) =  \frac1p \int_{\bbS^{p-1}}\|\ba\|_2^2 d\nu(\ba) = \frac1p. 
\end{align}
Thus plugging Eq.~\eqref{eq_4} and Eq.~\eqref{eq_5} into Eq.~\eqref{eq_3}, we obtain that
\begin{align*}
    h(\ba',\cD_t) = 
    \sum_{i=1}^p\frac{\beta_i'}{\alpha_i'-1}\left[ \frac{\beta_i'(2\alpha'_i-1)}{(\alpha'_i-1)(\alpha'_i-2)} \left( u_i^2+\frac2nu_i \right) + 4u_i \left( b_i^2+\frac1p \right)
    \right] + C,
\end{align*}
which completes the proof.
\end{proof}

To obtain the simpler version provided in the main text as Proposition~\ref{prop:2} where $\bSigma$ is fixed, we can use a similar calculation following the same simplifications as in Corollary~\ref{cor:1}.

These closed-form expressions also provide a way to save computation computational costs when evaluating the acquisition function for multiple $\ba'$. Since $\alpha_i(\cD_t(\ba')), \beta_i(\cD_t(\ba'))$, and $\bmm_i(\cD_t(\ba'))$ do not depend on the value of $\ba'$, they can be calculated once and saved for evaluating different $\ba'$. The only variable in $h(\ba', \cD_t)$ that depends on $\ba'$, which needs to be calculated for each point, is $u_i$ (which depends on $\bM_i(\cD_t(\ba'))$).
This is because
\begin{align*}
    \bM_i\big(\cD_t(\ba')\big) = \left(\bM_i(\cD_t)^{-1}+n\bar\sfx'_{\pa(i)}\bar\sfx_{\pa(i)}^{\prime\top} \right)^{-1},
\end{align*}
where $\bar\bx'=(\bI-\bmm(\cD_t))^{-1}\ba'$ and $\bmm(\cD_t):=\bbE(\bB|\cD_t)$ is a $p$-by-$p$ matrix consisting of the current estimates of the edge weights of the underlying graph, i.e., its entries are $[\bmm(\cD_t)]_{i,\pa(i)}=\bmm_i(\cD_t)^\top$ for all $i\in [p]$ and  $[\bmm(\cD_t)]_{i,k}=0$ for all $i\in[p],~k\notin \pa(i)$.
Thus, we can evaluate $h(\ba',\cD_t)$ for each $\ba'\in\cA$ efficiently using the updated hyperparameters $\{\alpha(\cD_t),\beta(\cD_t),\bmm_i(\cD_t),\bM_i(\cD_t)\}_{i=1}^p$. {The  computational complexity of evaluating CIV for a single $\ba'$ is $O(p^3)$.}

\subsection{Optimization}\label{sec:b2}

When optimizing $h(\ba',\cD_t)$ on $\bbS^{p-1}$, we can relax the constraint to the unit ball $\mathbb{B}^p=\{\ba\in\bbR^{p}:\|\ba\|_2\leq1\}$ (which is convex). 
This is a consequence of Lemma~\ref{lm:3}, which shows that for all $\ba'\in \mathbb{B}^p$, it holds that $h(\ba',\cD_t)\geq h(\frac{\ba'}{\|\ba'\|_2},\cD_t)$. 
Therefore minimizing $h(\ba',\cD_t)$ over $\mathbb{B}^p$ will always return points on $\bbS^{p-1}$. 
In other words, we solve for $\ba^{(t+1)}$ by
\begin{equation*}
    \min_{\ba'\in\mathbb{B}^{p}} h(\ba',\cD_t),\quad s.t.~ \|\ba'\|_2\leq1.
\end{equation*}
The particular optimizers used for solving this problem are given in \emph{Methods} in the main text. 
We summarize the entire procedure in Algorithm~\ref{alg:1}.

\begin{algorithm}[ht]
\begin{algorithmic}
\Require causal graph $\cG$, samples $\cD$, target mean $\bmu^*$\;
\State initialize DAG-Wishart prior $\bbP(\bB,\bSigma)$;
\If{$\cD$ is not empty}
\State update $\bbP(\bB,\bSigma|\cD)$ as in \nameref{lm:1-a}\;
\EndIf
\While{not exceed time step limit}
\State select $\ba=\argmin_{\ba'} h(\ba',\cD)$ as in \nameref{prop:2-a}\;
\State obtain samples from $\ba$ and append to $\cD$\;
\State update $\bbP(\bB,\bSigma|\cD)$ as in \nameref{lm:1-a}\; 
\EndWhile
\Ensure estimated optimal intervention $(\bI-\bbE(\bB|\cD))\bmu^*$\;
\caption{Active Learning for Intervention Design}
\label{alg:1}
\end{algorithmic}
\end{algorithm}

\section{Details for Causal Output-weighted Integrated Variance}\label{sec:c}

\subsection{Background on Output-weighted Integrated Variance}

When studying sequential sampling for the purpose of estimating linear mappings between inputs and outputs, \cite{cohn1996active} proposed to use the variance of estimation integrated over the entire input space, as a criterion for sampling new data points. 
However, such an integration treats each input point equally and does not account for their different output values. 
This is far from ideal for applications such as extreme event estimation \cite{mohamad2018sequential}, where one would like to estimate the mapping accurately for inputs whose outputs occur with small probabilities. 
Therefore, \cite{sapsis2020output} proposed to weight each variance in the integration by the inverse of its output probability. 
The resulting acquisition function can also be interpreted as an approximation to the $l_1$-difference between the true mapping and the estimated mapping (see Section 4 in their paper).

Translating this to the optimal intervention design problem considered in our work, the output can be taken to be the optimality gap of the input interventions. 
Our goal is to optimize this output value, which depends non-linearly on the input. 
Towards this end, a stronger objective is to estimate the output value accurately over the entire input space. 
This results in the CIV acquisition function described in the main text, which uses a uniform measure $\nu$ on $\cA$. %
We can further improve this by putting more importance on inputs that result in better output values. To achieve this, we make use of the output~weighting scheme in~\cite{sapsis2020output}.

\subsection{Output-weight Calculations}

Let $f_{\ba}$ be the probability density function (pdf) of a uniform distribution on the input space $\cA=\bbS^{p-1}$. This induces a distribution on the optimality gap with pdf $f_{\|\ba-\bb\|_2^2}$. Recall that $\bb=(\bI-\bbE(\bB|\cD_t))\bmu^*$ is the estimated optimal intervention based on $\cD_t$. The output-weighted method uses the measure
\begin{equation*}
    d\nu(\ba) = \frac{f_\ba(\ba)}{f_{\|\ba-\bb\|_2^2}(\|\ba-\bb\|_2^2)}d\ba.
\end{equation*}
Using the formula for the area of the hyperspherical cap, we can calculate this ratio explicitly as follows.

\begin{lemma}\label{lm:4}
For any $\ba\in\bbS^{p-1}$, it holds that
\begin{align*}
    \frac{f_\ba(\ba)}{f_{\|\ba-\bb\|_2^2}(\|\ba-\bb\|_2^2)} \propto \left(\frac{1}{\|\bb\|_2^2-(\ba^\top\bb)^2}\right)^\frac{p-3}{2}.
\end{align*}
\end{lemma}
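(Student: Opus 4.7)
The plan is to reduce everything to the well-known one-dimensional marginal distribution of the inner product of a uniform point on the sphere with a fixed direction, obtained via the spherical cap surface-area formula, and then to use a simple change of variables.

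Since $\ba$ is uniform on $\bbS^{p-1}$, the density $f_\ba(\ba)$ is constant on the sphere, so it suffices to evaluate $f_{\|\ba-\bb\|_2^2}(\|\ba-\bb\|_2^2)$ up to proportionality. Write $D := \|\ba-\bb\|_2^2$ and note that, because $\|\ba\|_2 = 1$, we have $D = 1 + \|\bb\|_2^2 - 2\,\ba^\top \bb$, so $D$ is an affine function of the scalar $T := \ba^\top \bb$. By the change-of-variables formula, $f_D(d) = \tfrac{1}{2}\, f_T\bigl( \tfrac{1+\|\bb\|_2^2 - d}{2} \bigr)$, so up to a constant independent of $\ba$ it is enough to obtain $f_T$.

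For the distribution of $T$, I would factor $\bb = \|\bb\|_2\, \hat{\bb}$ with $\hat{\bb}$ a unit vector and set $Y := \ba^\top \hat{\bb}$, so $T = \|\bb\|_2\,Y$. The density of $Y$ for uniform $\ba \in \bbS^{p-1}$ is standard: the pre-image $\{\ba \in \bbS^{p-1} : \ba^\top \hat{\bb} = y\}$ is a $(p-2)$-dimensional sphere of radius $\sqrt{1-y^2}$, which yields
\begin{equation*}
f_Y(y) \;\propto\; (1-y^2)^{(p-3)/2}, \qquad y \in [-1,1],
\end{equation*}
and this can be read off either directly from the area of a spherical cap or from a standard calculation of the marginal of a single coordinate of a uniform point on $\bbS^{p-1}$. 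Hence $f_T(t) \propto \bigl(1 - t^2/\|\bb\|_2^2\bigr)^{(p-3)/2}$, with the proportionality constant independent of $\ba$.

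Finally, substituting back: when $d = \|\ba-\bb\|_2^2$, we have $\tfrac{1+\|\bb\|_2^2-d}{2} = \ba^\top \bb$, and therefore
\begin{equation*}
f_{\|\ba-\bb\|_2^2}(\|\ba-\bb\|_2^2) \;\propto\; \Bigl(1 - \tfrac{(\ba^\top\bb)^2}{\|\bb\|_2^2}\Bigr)^{(p-3)/2} \;\propto\; \bigl(\|\bb\|_2^2 - (\ba^\top\bb)^2\bigr)^{(p-3)/2},
\end{equation*}
where constants depending only on $\|\bb\|_2$ are absorbed in the proportionality. Taking the reciprocal and multiplying by the constant $f_\ba(\ba)$ gives the claimed expression. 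The only nontrivial step is identifying the density of $Y$, and this is really just the spherical-cap surface-area computation; the rest is bookkeeping with the change of variables.
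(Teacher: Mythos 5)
Your proposal is correct and takes essentially the same route as the paper: the paper also reduces to the hyperspherical-cap surface-area formula, writing the cdf of $\|\ba-\bb\|_2^2$ via the cap angle $\cos\phi=\frac{1+\|\bb\|_2^2-k}{2\|\bb\|_2}$ and differentiating with the Leibniz rule, which is exactly your marginal density of $\ba^\top\bb$ expressed in the variable $k$ rather than $T$. Your packaging (quote the standard $(1-y^2)^{(p-3)/2}$ marginal, then an affine change of variables) is a slightly cleaner presentation of the same computation and reaches the identical simplification $\bigl((1+\|\bb\|_2)^2-k\bigr)\bigl(k-(1-\|\bb\|_2)^2\bigr)=4\bigl(\|\bb\|_2^2-(\ba^\top\bb)^2\bigr)$ implicitly.
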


\begin{proof}
Since $f_\ba$ is the pdf of the uniform distribution on $\bbS^{p-1}$, it holds that 
\begin{align*}
    f_\ba(\ba) \propto 1.
\end{align*}
As illustrated in Supplementary Fig.~\ref{fig:s1}, the induced probability $\bbP(\|\ba-\bb\|_2^2\leq k)$ corresponds to the probability of $\ba$ being on the hyperspherical cap with angle $\phi$ satisfying 
\[\cos\phi =\frac{1+\|\bb\|_2^2-k}{2\|\bb\|_2}.\]
\begin{figure}[ht]
    \centering
    \includegraphics[width=.25\textwidth]{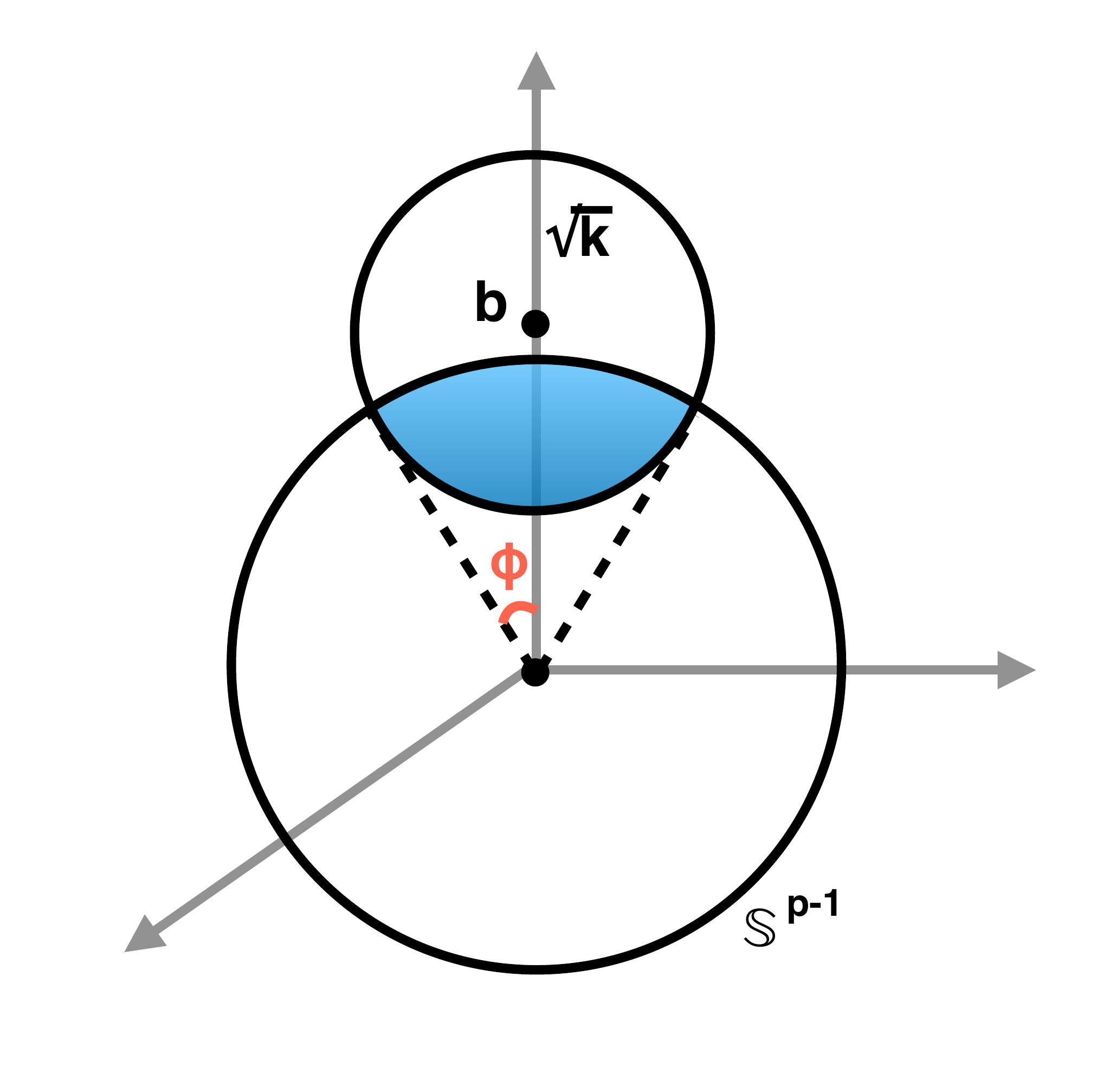}
    \caption{\rec{\textbf{Schematic illustrating the calculation of output-weighting.}} Hyperspherical cap corresponding to $\|\ba-\bb\|_2^2\leq k$ with angle $\phi$.}
    \label{fig:s1}
\end{figure}

\noindent Using the formula for the area of the hyperspherical cap \cite{li2011concise}, we obtain
\begin{align*}
    \bbP(\|\ba-\bb\|_2^2\leq k) \propto \int_0^{1-\cos^2\phi} u^{\frac{p-3}{2}}(1-u)^{-\frac12} du.
\end{align*}
Thus, using the Leibniz integral rule, we obtain
\begin{align*}
    f_{\|\ba-\bb\|_2^2}(\|\ba-\bb\|_2^2=k)&\propto \frac{\partial \bbP(\|\ba-\bb\|_2^2\leq k)}{\partial k} 
    \\
    &= (1-\cos^2\phi)^{\frac{p-3}{2}}(\cos\phi)^{-1}\frac{\partial (1-\cos^2\phi)}{\partial k}
    \\
    & = (1 + \cos \phi)^{\frac{p-3}{2}} (1 - \cos \phi)^{\frac{p-3}{2}} (\cos\phi)^{-1}\frac{\partial (1-\cos^2\phi)}{\partial k}
    \\
    & = \left(\left(1+\frac{1+\|\bb\|_2^2-k}{2\|\bb\|_2}\right)\left(1-\frac{1+\|\bb\|_2^2-k}{2\|\bb\|_2}\right)\right)^{\frac{p-3}{2}} \cdot \frac{1}{\|\bb\|_2}
    \\
    & =\frac{1}{\|\bb\|_2}\left(\frac{\big((1+\|\bb\|_2)^2-k\big)\big(-(1-\|\bb\|_2)^2+k\big)}{4\|\bb\|_2^2}\right)^{\frac{p-3}{2}}.
\end{align*}
As a consequence, we obtain that
\begin{align*}
    \frac{f_\ba(\ba)}{f_{\|\ba-\bb\|_2^2}(\|\ba-\bb\|_2^2)} 
    &\propto 
    \left(\frac{1}{\big((1+\|\bb\|_2)^2-\|\ba-\bb\|_2^2\big)\big(-(1-\|\bb\|_2)^2+\|\ba-\bb\|_2^2\big)}\right)^{\frac{p-3}{2}}
    \\
    & = \left(
    \frac{1}{(2 \| \bb \|_2 + 2 \ba^\top \bb) (2 \| \bb \|_2 - 2 \ba^\top \bb)}
    \right)^{\frac{p-3}{2}}
    \\
    & \propto \left(\frac{1}{\|\bb\|_2^2-(\ba^\top\bb)^2}\right)^\frac{p-3}{2},
\end{align*}
where we used $\|\ba\|_2=1$ in the second line. This completes the proof.
\end{proof}

\begin{remark}\label{rem_1}
Note that based on this formula, the resulting weighting scheme is symmetric, in the sense that $d\nu(\ba)=d\nu(-\ba)$ for any $\ba\in\bbS^{p-1}$. %{\sout{Thus the optimal intervention is obtained by minimizing or maximizing $\|\ba-\ba^*\|_2^2$. Therefore up-weighting $\ba$ that is close to $\ba^*$ is equivalent to up-weight $\ba$ that is close to $-\ba^*$.}
Therefore when up-weighting $\ba$'s that are close to $\ba^*$, we also up-weight $\ba$'s that are close to $-\ba^*$. 
Since the optimal intervention can be obtained by minimizing or maximizing $\|\ba-\ba^*\|_2^2$, a reasonable choice is to up-weight $\ba$'s that are close to its extremum.
The symmetric weighting scheme adopted here achieves this.
\end{remark}

When $p>3$, this measure puts more weight on $\ba\in\bbS^{p-1}$ with larger $|\ba^\top\bb|$, i.e., directions that are more aligned with the estimated optimal intervention $\bb$. 
However, when $p=3$, this measure degenerates to the uniform measure on $\cA$.
To obtain the desired behavior when $p=3$, we instead propose using the bimodal von-Mises Fisher distribution
\begin{align*}
d\nu(\ba) = \left(e^{\kappa\ba^\top\bb}+e^{-\kappa\ba^\top\bb}\right)d\ba,
\end{align*}
with a suitable scale $\kappa>0$. Note that this measure discounts $\ba$ exponentially with the value of $|\ba^\top\bb|$, and therefore behaves similar to the output-weighted measure in high dimensions.

\subsection{Approximation of CIV-OW}
We now show how to evaluate the causal output-weighted integrated variance (CIV-OW) acquisition function
\begin{align*}
    h_\mathrm{OW}(\ba',\cD_t) =  \int_\cA \sigma^2_{g(\ba)|\cD_t(\ba')} d\nu(\ba),
\end{align*}
where using Lemma~\ref{lm:4} and Remark~\ref{rem_1},
\begin{align*}
    d\nu(\ba) \propto  \begin{cases}
    \left(\frac{1}{\|\bb\|_2^2-(\ba^\top\bb)^2}\right)^\frac{p-3}{2}d\ba, & p>3\\
    \left(e^{\kappa\ba^\top\bb}+e^{-\kappa\ba^\top\bb}\right)d\ba, & p=3.
    \end{cases}
\end{align*}
In the case when $p=2$, there are only two variables and it is easy to show that the CIV acquisition function always selects the intervention that targets just the upstream variable, no matter what the measure $\nu$ is. Therefore, we do not need to consider the case $p=2$.

To evaluate CIV-OW for a particular $\ba'$, first, recall that using \nameref{prop:1-a}, we have
\begin{equation}\label{c:1}
    h_\mathrm{OW}(\ba',\cD_t) 
    = 
    \sum_{i=1}^p
    \frac{\beta_i'}{\alpha_i'-1}\left[ \frac{\beta_i'(2\alpha'_i-1)}{(\alpha'_i-1)(\alpha'_i-2)} 
    \left( u_i^2 + \frac2n u_i \right) 
    + 
    4 u_i \int_\cA (a_i-b_i)^2 d\nu(\ba)
    \right] + C,
\end{equation}
where $C$ is a constant that does not depend on $\ba'$. For $p=3$, we use the first and second moments of the bimodal von-Mises Fisher distribution (see Corollary 2 in \cite{hillen2017moments}) to derive that
\begin{equation}\label{c:2}
    \int_\cA (a_i-b_i)^2d\nu(\ba) = \left(
    \frac{\coth{\kappa}}{\kappa}-\frac{1}{\kappa^2}
    \right) 
    + 
    \left(
    2-\frac{3\coth{\kappa}}{\kappa}+\frac{3}{\kappa^2}
    \right) b_i^2.
\end{equation}
For $p>3$, we use the following approximation for the integral:
\begin{equation}\label{c:3}
    \int_\cA (a_i-b_i)^2d\nu(\ba) \approx (1+\|\bb\|_2^2)\frac{(\|\bb\|_2^2-b_i^2)^{\frac{4-p}{2}}b_i^2}{\sum_{j=1}^p(\|\bb\|_2^2-b_j^2)^{\frac{4-p}{2}}b_j^2}.
\end{equation}
This approximation is obtained as follows: First, note that 
\begin{equation}\label{eq_6}
\begin{aligned}
    \sum_{i=1}^p \int_\cA (a_i-b_i)^2 d\nu(\ba) & = \int_\cA (\|\ba\|_2^2 + \|\bb\|_2^2 -2\ba^\top\bb) d\nu(\ba) \\
    & = 1+\|\bb\|_2^2,
\end{aligned}
\end{equation}
where we used the symmetry of $\cA$ and $\nu(\ba)=\nu(-\ba)$ to derive $\int_\cA \ba d\nu(\ba) = \mathbf{0}$ in the second equation. To obtain only one of the terms in the summation instead of the sum over all possible $i\in[p]$, it is sufficient to identify the ratio of the integration between two summands $i,j\in[p]$. We obtain this ratio using the following approximation:
\begin{equation}\label{eq_7}
\begin{aligned}
    \frac{\int_\cA (a_i-b_i)^2 d\nu(\ba)}{\int_\cA (a_j-b_j)^2 d\nu(\ba)}\approx \frac{c_i}{c_j} :& = \frac{b_i^2\int_{\bbS^{p-2}(\sqrt{1-b_i^2/\|\bb\|^2})}\left(\|\bb_{-i}\|_2^2-(\ba_{-i}^\top\bb_{-i})^2\right)^\frac{3-p}{2}d\ba_{-i}
    }{b_j^2\int_{\bbS^{p-2}(\sqrt{1-b_j^2/\|\bb\|^2_2})}\left(\|\bb_{-j}\|_2^2-(\ba_{-j}^\top\bb_{-j})^2\right)^\frac{3-p}{2}d\ba_{-j}} \\
    & = \frac{b_i^2\|\bb_{-i}\|_2^{4-p}\int_{\bbS^{p-2}(1)}\left(\|\bb\|_2^2-\|\bb_{-i}\|^2a_1^2\right)^\frac{3-p}{2}d\ba
    }{b_j^2\|\bb_{-j}\|_2^{4-p}\int_{\bbS^{p-2}(1)}\left(\|\bb\|_2^2-\|\bb_{-j}\|^2a_1^2\right)^\frac{3-p}{2}d\ba} \\
    & = \frac{b_i^2\|\bb_{-i}\|_2^{4-p}\int_{-1}^1\left(\frac{1-a_1^2}{b_i^2+\|\bb_{-i}\|^2(1-a_1^2)}\right)^{\frac{p-3}{2}}da_1
    }{b_j^2\|\bb_{-j}\|_2^{4-p}\int_{-1}^1\left(\frac{1-a_1^2}{b_j^2+\|\bb_{-j}\|^2(1-a_1^2)}\right)^{\frac{p-3}{2}}da_1} \approx \frac{b_i^2 \big(\|\bb\|_2^2-b_i^2\big)^{\frac{4-p}{2}}}{b_j^2\big(\|\bb\|_2^2-b_j^2\big)^{\frac{4-p}{2}}},
\end{aligned}
\end{equation}
where in the third line we used a change of variables using the area element. The intuition behind the first approximation is that $\nu(\ba)$ is exponentially smaller when $|a_i|\neq|b_i|$ in high dimensions with large $p$ and therefore we can approximate the integration around $|a_i|=|b_i|$. Since $(a_i-b_i)^2=0$, this is then contracted to $a_i=-b_i$ for which we calculate the ratio $\frac{c_i}{c_j}$. In the second approximation, we approximated the integration with a constant regardless of $i$. The intuition for this is that when $p$ grows, the integration will be dominated by $a_1=0$, where the integrated terms for all $i$ are the same.

Combining the sum and ratio formulas in Eq.~\eqref{eq_6} and Eq.~\eqref{eq_7} gives rise to the approximation in Eq.~\eqref{c:3}. We also verified this approximation numerically; see Supplementary Fig.~\ref{fig:s2}. Finally, using Eq.~\eqref{c:2} and Eq.~\eqref{c:3} in Eq.~\eqref{c:1}, we can evaluate (approximately when $p>3$) CIV-OW for any given $\ba'$ in closed form as follows: 
\begin{equation}
\begin{aligned}
    &h_\mathrm{OW}(\ba',\cD_t)\\ \approx & \begin{cases}
    \sum_{i=1}^p\frac{\beta_i'}{\alpha_i'-1}\left[ \frac{\beta_i'(2\alpha'_i-1)}{(\alpha'_i-1)(\alpha'_i-2)}(u_i^2+\frac2nu_i) + \frac{4u_i(1+\|\bb\|_2^2)(\|\bb\|_2^2-b_i^2)^{\frac{4-p}{2}}b_i^2}{\sum_{j=1}^p(\|\bb\|_2^2-b_j^2)^{\frac{4-p}{2}}b_j^2}
    \right] + C, & p>3\\
    \sum_{i=1}^p\frac{\beta_i'}{\alpha_i'-1}\left[ \frac{\beta_i'(2\alpha'_i-1)}{(\alpha'_i-1)(\alpha'_i-2)}(u_i^2+\frac2nu_i) + 4u_i\big((\frac{\coth{\kappa}}{\kappa}-\frac{1}{\kappa^2}) + (2-\frac{3\coth{\kappa}}{\kappa}+\frac{3}{\kappa^2})b_i^2\big)
    \right] + C, & p=3.
    \end{cases}.
\end{aligned}
\end{equation}
This then allows for similar optimization methods as described in Sec.~\ref{sec:b2} to be used for selecting $\ba^{(t+1)}$. {The computational complexity of evaluating CIV-OW for a single $\ba'$ is $O(p^3)$.}

\begin{figure}[ht]
    \centering
     \begin{subfigure}[b]{0.23\textwidth}
         \centering
         \includegraphics[width=.9\textwidth]{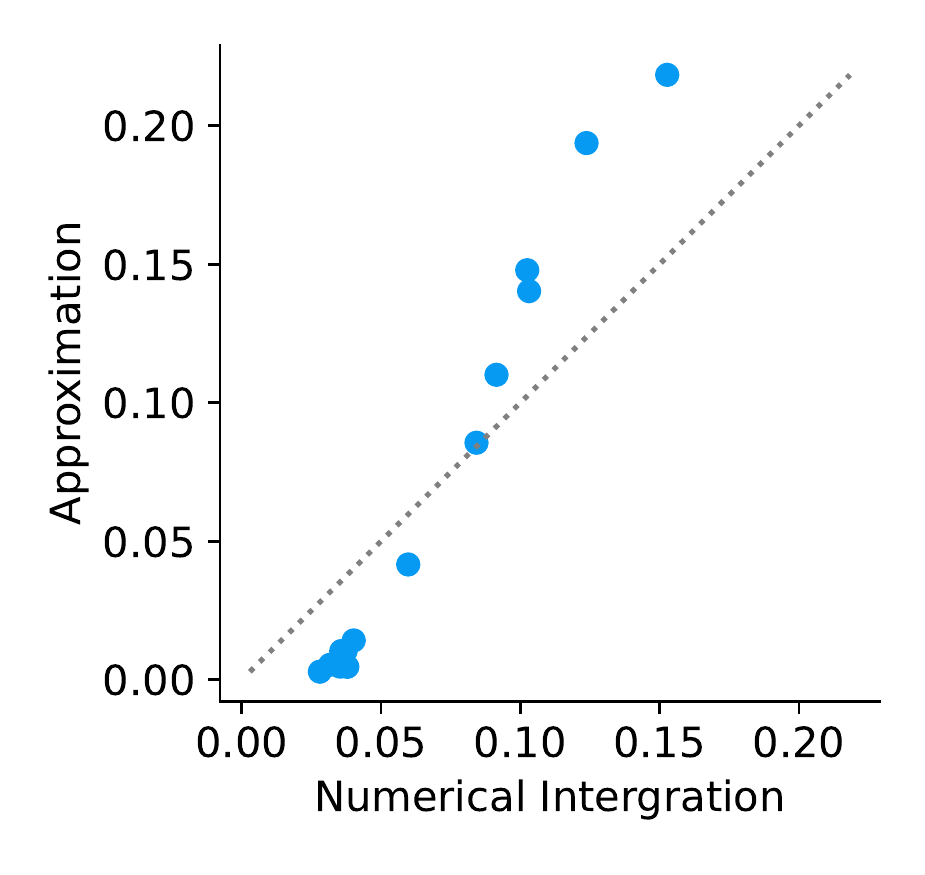}
     \end{subfigure}
     \begin{subfigure}[b]{0.23\textwidth}
         \centering
         \includegraphics[width=.9\textwidth]{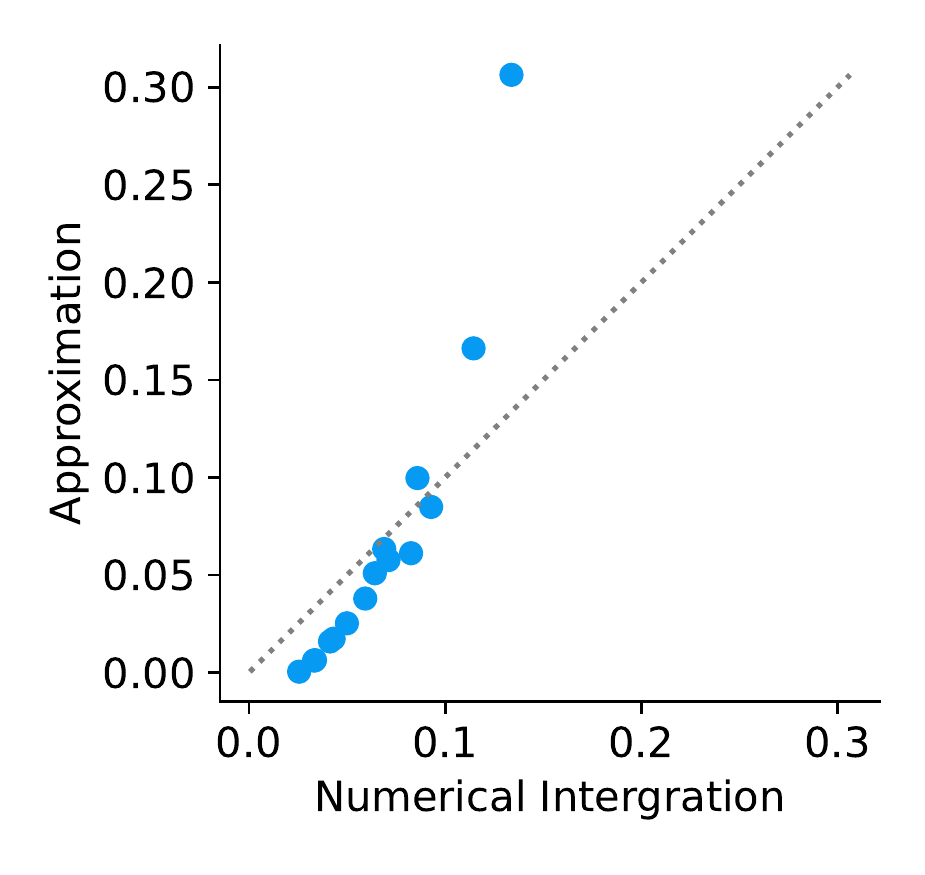}
     \end{subfigure}
     \begin{subfigure}[b]{0.23\textwidth}
         \centering
         \includegraphics[width=.9\textwidth]{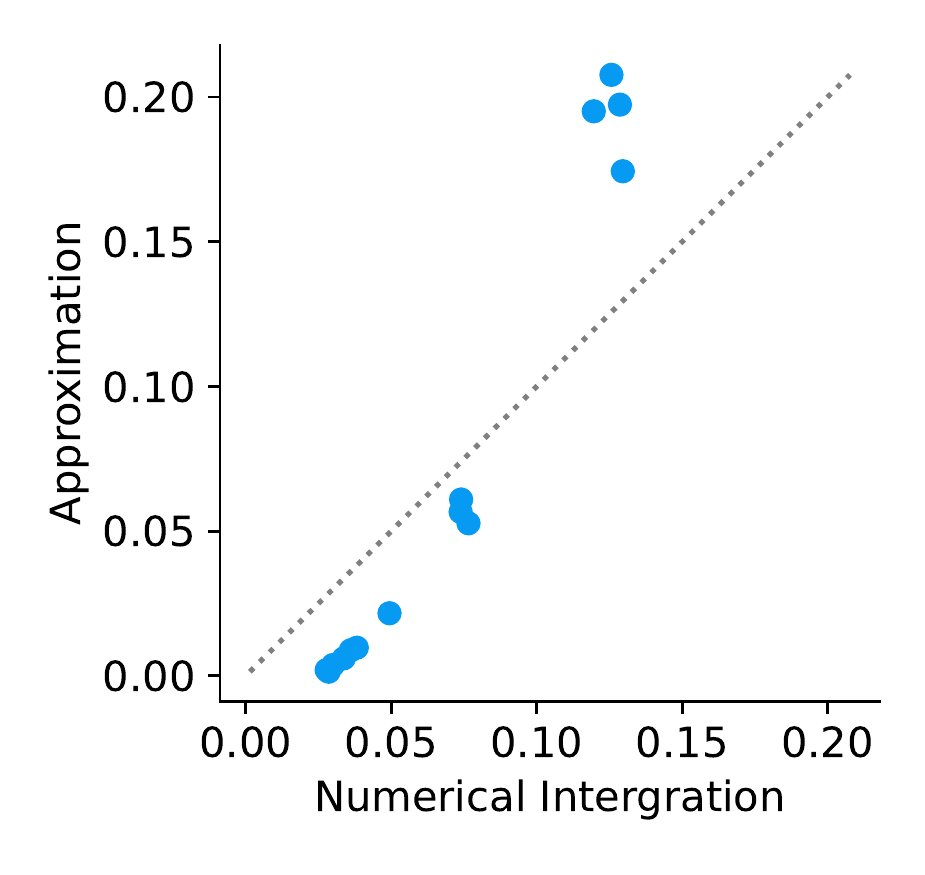}
     \end{subfigure}
    \caption{\rec{\textbf{Verification of the proposed approximation for calculating the output-weighting.} Simulations on} $p=15$ with three randomly generated $\bb$ (with $\|\bb\|_2=1$). Each dot corresponds to an $i\in[p]$, where the x-axis is the numerical integration of $\int_\cA (a_i-b_i)^2d\nu(\ba)$ by sampling $1000$ points from $\cA$ and the y-axis is the approximation $b_i^2(\|\bb\|_2^2-b_i^2)^{\frac{4-p}{2}}$.}\label{fig:s2}
\end{figure}

\section{Extensions to the Unknown DAG Setting}\label{sec:d}

So far, we have assumed that the underlying causal structure is known and represented by a DAG. In the case of an unknown DAG, we also need to take into account the uncertainty over the DAG structure. We can model this uncertainty over the DAG $\cG$ when computing the conditional variance $\sigma^2_{g(\ba) \mid \cD_t}$ that is used to compute the causal integrated variance $h(\ba', \cD_t)$.
In particular, we can add to our generative model a prior $\bbP(\cG)$ over DAGs, and use the DAG-BLR prior \textit{conditional} on the DAG $\cG$.

To update the posterior on the causal structure along with the causal mechanisms (i.e., the edge weights), we can use the interventional Bayesian equivalance (iBGe) score \cite{geiger2002parameter,kuipers2022interventional} as follows. Denoting the collection of samples by $\cD$ for simplicity, then the joint posterior of the unknown DAG $\cG$, the edge weights $\bB$ and noise variances $\bSigma$ satisfy 
\[\bbP(\cG,\bB, \bSigma|\cD) = \bbP(\bB, \bSigma|\cG, \cD) \bbP(\cG|\cD).\]
To select the next intervention based on the CIV acquisition functions, we need to integrate the uncertainty $\sigma^2_{g(\ba)|\cD}$, where $g(\ba)$ only depends on the joint posterior of $\bB$ and $\bSigma$. Marginalizing over $\cG$, we obtain that $\bbP(\bB, \bSigma|\cD) = \sum_{\cG} \bbP(\bB, \bSigma|\cG, \cD) \bbP(\cG|\cD)$. Following \cite{agrawal2019abcd}, we can approximate this posterior by sampling several DAGs $\cG_1,\dots,\cG_k$ from the iBGe score $\bbP(\cG|\cD)$: 
\begin{align*}
    \bbP(\bB,\Sigma|\cD)\approx\sum_{l=1}^k \bbP(\bB, \bSigma|\cG_l, \cD)\cdot \omega(\cG_l),
\end{align*}
where $\omega(\cG_l) = \frac{\bbP(\cG_l|\cD)}{\sum_{l'=1}^k \bbP(\cG_{l'}|\cD)}$ is the importance weight for the $l$-th sampled DAG $\cG_l$. We can approximate the uncertainty using an unbiased estimator of its lower bound:
\begin{align*}
    \sigma^2_{g(\ba)|\cD} & = \Var_{\cG}\big( \bbE(g(\ba)|\cG, \cD)|\cD\big) + \bbE_{\cG}\big( \Var(g(\ba)|\cG, \cD)|\cD\big) \\
    & \geq \bbE_{\cG}\big( \Var(g(\ba)|\cG,\cD)|\cD\big) \\
    & \approx \sum_{l=1}^k \Var(g(\ba)|\cG_l,\cD) \cdot \omega(\cG_l).
\end{align*}
For this, we used the law of total variance in the first equation and then dropped the variance over $\cG$ since it is usually intractable. 
Based on this approximation, we can then approximate the CIV acquisition function as follows:
\begin{align*}
    h(\ba',\cD) & \approx \sum_{l=1}^k \int_{\ba}  \Var\big(g(\ba)|\cG_l,\cD(\ba')\big) \cdot \omega(\cG_l) d\ba\\
    & = \sum_{l=1}^k h(\ba,\cD_t|\cG_l)\cdot\omega(\cG_l),
\end{align*}
where $h(\ba,\cD_t|\cG_l)$ can be evaluated as described in Sec.~\ref{sec:b} for any given DAG $\cG_l$. 
This procedure allows the CIV acquisition function to be optimized even in the unknown DAG case.
Of note, since the number of DAGs grows super-exponentially with the graph size $p$, the computational budget is often not sufficient to obtain a large enough sample size $k$ to guarantee accurate approximation.
Therefore in practice, one often uses just one DAG, for example the one obtained by maximizing the likelihood of or sampling from $\bbP(\cG|\cD)$; see~\cite{agrawal2019abcd}.
{We note that sampling DAGs from $\bbP(\cG|\cD)$ is a difficult problem and is considered an open research direction on its own (c.f. \cite{wienobst2021polynomial}).}

\section{Proofs for the Mutual Information Bound}\label{sec:e}

To prove Theorem~\ref{thm:1} in the main text, we need several lemmas. The following lemma shows that the mutual information between a Gaussian and a non-Gaussian variable can be lower bounded by transforming the non-Gaussian variable into a Gaussian variable. For completeness we provide our proof here and note that alternative proofs exist, for example by using information geometric techniques as in \cite{cardoso2003dependence}.

\begin{lemma}\label{lm:5}
Let $\sfx_g$ be a multivariate Gaussian variable following $\cN(\mu_{\sfx_g},\Sigma_{\sfx_g})$, and let $\sfz$ be an arbitrary univariate random variable with mean $\mu_{\sfz}$ and variance $\sigma_{\sfz}^2$.
Denoting by $\rho=\sigma_{\sfz}^{-1}\Sigma_{\sfx_g}^{-\frac12}\bbE_{\sfx_g,\sfz}((x_g-\mu_{\sfx_g})(z-\mu_{\sfz}))$ and $\sfz_g = \sigma_\sfz \rho^\top\Sigma_{\sfx_g}^{-\frac12}(\sfx_g - \mu_{\sfx_g})+ \cN\big(\mu_{\sfz}, (1-\|\rho\|_2^2)\sigma_{\sfz}^2\big)$, then
\begin{equation*}
    I(\sfz;\sfx_g)\geq I(\sfz_g;\sfx_g).
\end{equation*}
\end{lemma}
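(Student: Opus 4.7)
The plan is to exploit symmetry of mutual information and write $I(\sfz;\sfx_g) = H(\sfx_g) - H(\sfx_g\mid\sfz)$, noting that the same identity gives $I(\sfz_g;\sfx_g) = H(\sfx_g) - H(\sfx_g\mid\sfz_g)$ for the Gaussian surrogate. The claim therefore reduces to showing $H(\sfx_g\mid\sfz) \le H(\sfx_g\mid\sfz_g)$; the strategy is to upper bound the left-hand side using only the prescribed second-order moments of $(\sfx_g,\sfz)$ and then verify that this bound is attained on the right.

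For each realization $z$ of $\sfz$, the Gaussian maximum-entropy principle at fixed covariance gives $H(\sfx_g \mid \sfz=z) \le \tfrac12 \log((2\pi e)^p |\Sigma_{\sfx_g\mid\sfz=z}|)$. Averaging over $\sfz$ and applying Jensen's inequality together with the concavity of $\log\det$ on positive-definite matrices yields
\[
H(\sfx_g \mid \sfz) \le \tfrac12 \log\!\big((2\pi e)^p \,|\bbE_{\sfz}[\Sigma_{\sfx_g\mid\sfz=z}]|\big).
\]
The law of total covariance rewrites the averaged covariance as $\Sigma_{\sfx_g} - \mathrm{Cov}(\bbE[\sfx_g\mid\sfz])$, so the remaining task is to lower bound $\mathrm{Cov}(\bbE[\sfx_g\mid\sfz])$ in the Loewner order using only the prescribed cross-moments.

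To do this, I would use that, for any direction $v$, the conditional expectation $\bbE[v^{\top}\sfx_g\mid\sfz]$ is the $L^{2}$-optimal predictor of $v^{\top}\sfx_g$ from $\sfz$, so it outperforms any linear predictor. Translating this scalar MMSE inequality into a matrix statement produces $\mathrm{Cov}(\bbE[\sfx_g\mid\sfz]) \succeq \mathrm{Cov}(\hat{\sfx}_g)$, where $\hat{\sfx}_g$ is the best linear predictor of $\sfx_g$ from $\sfz$. A direct calculation using the prescribed cross-covariance $\bbE[(\sfx_g-\mu_{\sfx_g})(\sfz-\mu_{\sfz})] = \sigma_{\sfz}\Sigma_{\sfx_g}^{1/2}\rho$ gives $\mathrm{Cov}(\hat{\sfx}_g) = \Sigma_{\sfx_g}^{1/2}\rho\rho^{\top}\Sigma_{\sfx_g}^{1/2}$. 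Combining these, $\bbE_{\sfz}[\Sigma_{\sfx_g\mid\sfz=z}] \preceq \Sigma_{\sfx_g}^{1/2}(I-\rho\rho^{\top})\Sigma_{\sfx_g}^{1/2}$, and taking determinants gives $|\bbE_{\sfz}[\Sigma_{\sfx_g\mid\sfz=z}]| \le |\Sigma_{\sfx_g}|(1-\|\rho\|_{2}^{2})$.

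Finally, $(\sfx_g,\sfz_g)$ is jointly Gaussian by construction and shares its joint second moments with $(\sfx_g,\sfz)$, so standard Gaussian conditioning gives exactly $\Sigma_{\sfx_g\mid\sfz_g} = \Sigma_{\sfx_g}^{1/2}(I-\rho\rho^{\top})\Sigma_{\sfx_g}^{1/2}$, and hence $H(\sfx_g\mid\sfz_g) = \tfrac12\log((2\pi e)^{p}|\Sigma_{\sfx_g}|(1-\|\rho\|_{2}^{2}))$. This matches the upper bound derived for $H(\sfx_g\mid\sfz)$, closing the argument. I expect the main obstacle to be the covariance-domination step: the inequality $\mathrm{Cov}(\bbE[\sfx_g\mid\sfz])\succeq\mathrm{Cov}(\hat{\sfx}_g)$ does not reduce to a coordinatewise comparison and has to be argued by testing against arbitrary $v$ and invoking the orthogonality of $L^{2}$ projection, whereas the other steps are routine Gaussian max-entropy, Jensen, and block-matrix computations.
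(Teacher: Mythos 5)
Your proof is correct, but it takes a genuinely different route from the paper's. The paper works directly on the mutual-information integrand: it inserts the Gaussian surrogate conditional $\bbP_{\sfx_g|\sfz_g}$ into the log-ratio, discards the resulting Kullback--Leibler term by Gibbs' inequality, flips the remaining ratio via Bayes to $\bbP_{\sfz_g|\sfx_g}/\bbP_{\sfz_g}$, and then evaluates that expectation in closed form, observing that it depends only on the first and second moments of $(\sfx_g,\sfz)$, which by construction match those of $(\sfx_g,\sfz_g)$, so the residual equals $\frac12\log\frac{1}{1-\|\rho\|_2^2}=I(\sfz_g;\sfx_g)$. You instead use the entropy decomposition $I(\sfz;\sfx_g)=H(\sfx_g)-H(\sfx_g\mid\sfz)$ and reduce the claim to $H(\sfx_g\mid\sfz)\le H(\sfx_g\mid\sfz_g)$ via Gaussian max-entropy, Jensen with concavity of $\log\det$, the law of total covariance, and the Loewner-order bound $\mathrm{Cov}(\bbE[\sfx_g\mid\sfz])\succeq\mathrm{Cov}(\hat\sfx_g)$; the step you flag as the main obstacle does go through exactly as you describe (for each direction $v$, the law of total variance plus MMSE-versus-linear-MMSE gives $v^\top\mathrm{Cov}(\bbE[\sfx_g\mid\sfz])v\ge \mathrm{Cov}(v^\top\sfx_g,\sfz)^2/\sigma_\sfz^2=v^\top\mathrm{Cov}(\hat\sfx_g)v$), and your computation of $\Sigma_{\sfx_g\mid\sfz_g}$ and of $I(\sfz_g;\sfx_g)=\frac12\log\frac{1}{1-\|\rho\|_2^2}$ matches the paper's value. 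What each buys: the paper's argument is shorter once the surrogate-density trick is seen, needs no matrix-analytic facts, and identifies the residual term as $I(\sfz_g;\sfx_g)$ in one computation; yours is more modular, relies only on textbook ingredients, and makes the structural content transparent, namely that among couplings with prescribed second moments and a Gaussian marginal for $\sfx_g$, the jointly Gaussian one minimizes mutual information. One shared caveat, which affects the paper's proof as much as yours: degenerate situations (e.g., a conditional of $\sfx_g$ given $\sfz=z$ without a Lebesgue density on a set of positive probability, or $\|\rho\|_2=1$) make $H(\sfx_g\mid\sfz)=-\infty$ or $I(\sfz;\sfx_g)=\infty$, in which case the inequality holds trivially; it would be worth a sentence noting that your upper bounds remain valid (or vacuous) there so that the entropy decomposition is never applied to an ill-defined difference.
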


\begin{remark}
Note that here $\sfz_g$ can be understood as a transformation of $\sfz$ to a linear combination of $\sfx_g$ and some independent Gaussian noise. The choices of the linear coefficients are to make sure that i) the correlation between $\sfz_g$ and $\sfx$ is kept the same as that between $\sfz$ and $\sfx$; ii) $\sfz_g$ has the same mean and variance as $\sfz$. The reason for these two requirements will become evident in the proof.
\end{remark}

\begin{proof}[Proof of Lemma~\ref{lm:5}]
First, note that
\begin{align*}
    I(\sfz;\sfx_g) & =
    \bbE_{\sfx_g,\sfz} \left(\log\frac{\bbP_{\sfx_g,\sfz}(x_g,z)}{\bbP_{\sfx_g}(x_g)\bbP_\sfz(z)}\right) \\
    %D_{KL}(\bbP_{\sfx_g|\sfz}~\|~\bbP_{\sfx_g} \mid \bbP_{\sfz}) \\
    %
    & = 
    %D_{KL}(\bbP_{\sfx_g|\sfz}~\|~\bbP_{\sfx_g|\sfz_g} \mid \bbP_{\sfz}) 
    \bbE_{\sfx_g,\sfz}\left(\log\frac{\bbP_{\sfx_g|\sfz}(x_g|z)}{\bbP_{\sfx_g|\sfz_g}(x_g|z)}\right)
    + 
    \bbE_{\sfx_g,\sfz} \left( \log\frac{\bbP_{\sfx_g|\sfz_g}(x_g|z)}{\bbP_{\sfx_g}(x_g)} 
    \right)
    \\
    & = \bbE_{\sfz}\bbE_{\sfx_g|\sfz}\left(\log\frac{\bbP_{\sfx_g|\sfz}(x_g|z)}{\bbP_{\sfx_g|\sfz_g}(x_g|z)}\right)
    + 
    \bbE_{\sfx_g,\sfz} \left( \log\frac{\bbP_{\sfx_g|\sfz_g}(x_g|z)}{\bbP_{\sfx_g}(x_g)} 
    \right)
    \\
    & \geq \bbE_{\sfx_g,\sfz} \left( \log\frac{\bbP_{\sfx_g|\sfz_g}(x_g|z)}{\bbP_{\sfx_g}(x_g)} \right) = \bbE_{\sfx_g,\sfz} \left( \log\frac{\bbP_{\sfz_g|\sfx_g}(z|x_g)}{
    \bbP_{\sfz_g}(z)} \right),
\end{align*}   
% \begin{align*}
%     I(\sfz;\sfx_g)
%     = 
%     \bbE_{\sfx_g,\sfz} \left( \log\frac{\bbP_{\sfz_g|\sfx_g}(z|x_g)}{
%     \bbP_{\sfz_g}(z)} 
%     \right).
% \end{align*}
where $\log = \log_2$ and we used the Gibb's inequality $\bbE_{\sfx_g|\sfz}\left(\log\frac{\bbP_{\sfx_g|\sfz}(x_g|z)}{\bbP_{\sfx_g|\sfz_g}(x_g|z)}\right)\geq0$ in the fourth line.
Since $\sfz_g$ is a Gaussian variable with mean and variance
\begin{equation}\label{lm5:eq1}
    \begin{aligned}
        \mu_{\sfz_g} & =\mu_\sfz, \\
        \sigma^2_{\sfz_g} & =\sigma^2_{\sfz},
    \end{aligned}
\end{equation}
conditioning on $\sfx_g=x_g$ results in a Gaussian variable with mean and variance
\begin{equation}\label{lm5:eq2}
    \begin{aligned}
    \mu_{\sfz_g|x_g} &= \mu_{\sfz} + \sigma_\sfz\rho^\top\Sigma_{\sfx_g}^{-\frac12}(x_g - \mu_{\sfx_g}), \\
    \sigma^2_{\sfz_g|x_g} &= (1-\|\rho\|^2_2)\sigma^2_{\sfz}.
    \end{aligned}
\end{equation}
Therefore, we obtain
\begin{align*}
    I(\sfz;\sfx_g) 
    &\geq 
    % &=
    \bbE_{\sfx_g,\sfz} \left(
    \log\frac{\bbP_{\sfz_g|\sfx_g}(z|x_g)}{\bbP_{\sfz_g}(z)} 
    \right)
    \\
    &= 
    \bbE_{\sfx_g,\sfz} \left(
    \log \frac{\sigma_{\sfz_g|x_g}^{-1}\exp\left(-\frac{1}{2\sigma_{\sfz_g|x_g}^{2}}(z-\mu_{\sfz_g|x_g})^2\right)}{\sigma_{\sfz_g}^{-1}\exp\big(-\frac{1}{2\sigma_{\sfz_g}^{2}}(z-\mu_{\sfz_g})^2\big)} 
    \right)
    \\
    &= 
    \frac12\log\frac{1}{1-\|\rho\|^{2}_2} 
    + 
    \frac{\log e}{2} \bbE_{\sfx_g,\sfz} \left(
    \frac{(z-\mu_{\sfz_g})^2}{\sigma_{\sfz_g}^{2}}-\frac{(z-\mu_{\sfz_g|x_g})^2}{\sigma_{\sfz_g|x_g}^{2}}\right)
 \\
    &= 
    \frac12\log\frac{1}{1-\|\rho\|^{2}_2} 
    + 
    \frac{\log e}{2} \left(
    \frac{\bbE_{\sfz}(z-\mu_{\sfz_g})^2}{\sigma_{\sfz_g}^{2}}-\frac{\bbE_{\sfx_g,\sfz}(z-\mu_{\sfz_g|x_g})^2}{\sigma_{\sfz_g|x_g}^{2}}
    \right)
    \end{align*}
  where we used Eq.~\eqref{lm5:eq1} and Eq.~\eqref{lm5:eq2} for the last equation.  As a consequence,
\begin{align*}
    I(\sfz;\sfx_g) 
    &\geq 
    \frac12\log\frac{1}{1-\|\rho\|^{2}_2} 
    +
    \frac{\log e}{2} \left(
    \frac{\bbE_{\sfz}(z-\mu_{\sfz})^2}{\sigma_{\sfz}^{2}}
    -
    \frac{\bbE_{\sfx_g,\sfz}\big((z-\mu_{\sfz})-\sigma_\sfz \rho^\top\Sigma_{\sfx_g}^{-\frac12}(x_g-\mu_{\sfx_g})\big)^2}{(1-\|\rho\|^2_2)\sigma_{\sfz}^{2}}
    \right) 
    \\
    &= 
    \frac12\log\frac{1}{1-\|\rho\|^{2}_2} + \frac{\log e}{2} \left( 1 - \frac{\sigma_{\sfz}^2+\|\rho\|^2_2\sigma_{\sfz}^2-2\|\rho\|^2_2\sigma_{\sfz}^2}{(1-\|\rho\|^2_2)\sigma_{\sfz}^{2}} \right) 
    \\
    &= 
    \frac12 \log \frac{1}{1-\|\rho\|^{2}_2} = I(\sfz_g;\sfx_g),
\end{align*}
which completes the proof.
\end{proof}

Next, we derive an equation for the correlation between two random variables. Since the mutual information between two Gaussian random variables is a function of their correlation, we can then use the following lemma to get another expression of the mutual information. Note that in the following, we use $\sigma^2_{\cdot}$ to denote the variance of a scalar random variable, e.g., $\sigma_{\sfz|\sfx=x}^2=\Var(\sfz|\sfx=x)$ denotes the variance of the scalar variable $\sfz$ following the conditional distribution $\bbP(\sfz|\sfx=x)$.

\begin{lemma}\label{lm:6}
Let $\sfx$ and $\sfz$ be random variables such that $\sfx$ is multivariate and $\sfz$ is univariate. For any multivariate $\sfx'$ and univariate $\sfz'$, we define $\rho_{\sfx',\sfz'}:=\sigma_{\sfz'}^{-1}\bSigma_{\sfx'}^{-\frac12}\bbE_{\sfx',\sfz'}((x'-\mu_{\sfx'})(z'-\mu_{\sfz'}))$ . It holds that
\begin{equation}\label{eq_8}
    \|\rho_{\sfx,\sfz}\|_2^2 = \left( 1-\frac{\bbE_{\sfx} (\sigma_{\sfz|\sfx}^2)}{\sigma^2_{\sfz}} \right)
    \|\rho_{\sfx,\bbE(\sfz|\sfx)}\|_2^2.
\end{equation}
\end{lemma}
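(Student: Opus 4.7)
The plan is to rewrite both sides of Eq.~\eqref{eq_8} in terms of the cross-covariance $\mathrm{Cov}(\sfx, \bbE(\sfz|\sfx))$ and then match the scalar prefactors using the law of total variance. The identity is essentially a restatement of the fact that the only piece of $\sfz$ that correlates with $\sfx$ is its conditional mean $\bbE(\sfz|\sfx)$, with the total-variance decomposition accounting for the difference in normalization between $\sigma_\sfz$ and $\sigma_{\bbE(\sfz|\sfx)}$.

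First I would expand the left-hand side directly from the definition of $\rho_{\sfx,\sfz}$, giving
\begin{equation*}
\|\rho_{\sfx,\sfz}\|_2^2 = \sigma_\sfz^{-2}\, \bbE_{\sfx,\sfz}\big((\sfx-\mu_\sfx)(\sfz-\mu_\sfz)\big)^\top \bSigma_\sfx^{-1} \bbE_{\sfx,\sfz}\big((\sfx-\mu_\sfx)(\sfz-\mu_\sfz)\big).
\end{equation*}
Then, by the tower property and the fact that $\mu_\sfz = \bbE_\sfx(\bbE(\sfz|\sfx)) = \mu_{\bbE(\sfz|\sfx)}$, I would rewrite the cross moment as
\begin{equation*}
\bbE_{\sfx,\sfz}\big((\sfx-\mu_\sfx)(\sfz-\mu_\sfz)\big) = \bbE_\sfx\big((\sfx-\mu_\sfx)(\bbE(\sfz|\sfx)-\mu_{\bbE(\sfz|\sfx)})\big),
\end{equation*}
which is exactly the cross-covariance of $\sfx$ and $\bbE(\sfz|\sfx)$.

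Next I would unfold the right-hand side analogously, noting that
\begin{equation*}
\|\rho_{\sfx,\bbE(\sfz|\sfx)}\|_2^2 = \sigma_{\bbE(\sfz|\sfx)}^{-2}\, \bbE_\sfx\big((\sfx-\mu_\sfx)(\bbE(\sfz|\sfx)-\mu_{\bbE(\sfz|\sfx)})\big)^\top \bSigma_\sfx^{-1} \bbE_\sfx\big((\sfx-\mu_\sfx)(\bbE(\sfz|\sfx)-\mu_{\bbE(\sfz|\sfx)})\big).
\end{equation*}
Comparing the two expressions shows
\begin{equation*}
\|\rho_{\sfx,\sfz}\|_2^2 = \frac{\sigma_{\bbE(\sfz|\sfx)}^2}{\sigma_\sfz^2}\,\|\rho_{\sfx,\bbE(\sfz|\sfx)}\|_2^2.
\end{equation*}
To finish, I would invoke the law of total variance, $\sigma_\sfz^2 = \bbE_\sfx(\sigma_{\sfz|\sfx}^2) + \sigma_{\bbE(\sfz|\sfx)}^2$, which yields $\sigma_{\bbE(\sfz|\sfx)}^2/\sigma_\sfz^2 = 1 - \bbE_\sfx(\sigma_{\sfz|\sfx}^2)/\sigma_\sfz^2$, matching the claimed identity.

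There is no real obstacle here: every step is a bookkeeping application of the tower property, the definition of $\rho$, and the total-variance decomposition. The only mild care needed is in verifying that $\mu_\sfz = \mu_{\bbE(\sfz|\sfx)}$ so that one can rewrite the cross moment with $\sfz$ replaced by $\bbE(\sfz|\sfx)$ without introducing spurious mean-shift terms.
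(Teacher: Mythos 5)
Your proposal is correct and follows essentially the same route as the paper's proof: both rest on the two identities $\Cov(\sfx,\sfz)=\Cov(\sfx,\bbE(\sfz|\sfx))$ (law of total expectation / tower property) and $\sigma^2_\sfz-\bbE_\sfx(\sigma^2_{\sfz|\sfx})=\sigma^2_{\bbE(\sfz|\sfx)}$ (law of total variance), differing only in bookkeeping (you compare the two squared norms as a ratio, while the paper clears $\sigma_\sfz^2$ first and matches the quadratic forms directly).
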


\begin{proof}
Multiplying both sides of the desired equality \eqref{eq_8} by $\sigma_{\sfz}^2$ and expanding $\| \rho_{\sfx,\sfz} \|_2^2$, we obtain that
\begin{align*}
     {}& \|\rho_{\sfx,\sfz}\|_2^2 
     = 
     \left( 1-\frac{\bbE_{\sfx} (\sigma_{\sfz|\sfx}^2)}{\sigma^2_{\sfz}} \right)
    \|\rho_{\sfx,\bbE(\sfz|\sfx)}\|_2^2
   \quad  \Longleftrightarrow \quad \Cov(\sfx,\sfz)^\top\bSigma_\sfx^{-1}\Cov(\sfx,\sfz) 
    = 
    (\sigma^2_{\sfz}-\bbE_{\sfx} (\sigma_{\sfz|\sfx}^2))
    \cdot \|\rho_{\sfx,\bbE(\sfz|\sfx)}\|_2^2.
\end{align*}
Thus, it suffices to show that $\Cov(\sfx,\sfz)^\top\bSigma_\sfx^{-1}\Cov(\sfx,\sfz) = (\sigma^2_{\sfz}-\bbE_{\sfx} (\sigma_{\sfz|\sfx}^2)) \cdot \|\rho_{\sfx,\bbE(\sfz|\sfx)}\|_2^2$. 
By the law of total expectation,
\begin{equation}\label{lm6:eq1}
    \begin{aligned}
        \Cov(\sfx,\sfz) = \bbE(\sfx\sfz)-\bbE\sfx\cdot\bbE\sfz = \bbE\big(\sfx\bbE(\sfz|\sfx)\big)-\bbE\sfx\cdot\bbE\big(\bbE(\sfz|\sfx)\big)=\Cov\big(\sfx,\bbE(\sfz|\sfx)\big),
    \end{aligned}
\end{equation}
and by the law of total variance,
\begin{equation}\label{lm6:eq2}
    \begin{aligned}
    \sigma^2_{\sfz}-\bbE_{\sfx} (\sigma_{\sfz|\sfx}^2)
    &=
    \Var(\sfz) - \bbE_{\sfx}\big(\Var(\sfz|\sfx=x)\big)
    % \bbE\sfz^2-(\bbE\sfz)^2-\bbE\big(\bbE(\sfz^2|\sfx) - \bbE(\sfz|\sfx)^2\big) 
    \\
    &=
    \Var_{\sfx}\big(\bbE(\sfz|\sfx=x)\big)
    % \bbE\big(\bbE(\sfz|\sfx)\big)^2 - \Big(\bbE\big(\bbE(\sfz|\sfx)\big)\Big)^2 
    = 
    \sigma^2_{\bbE(\sfz|\sfx)}.    
    \end{aligned}
\end{equation}
Therefore, using Eq.~\eqref{lm6:eq1} and Eq.~\eqref{lm6:eq2} together with the definition of $\rho_{\sfx,\bbE(\sfz|\sfx)}$ results in
\begin{align*}
    \Cov(\sfx,\sfz)^\top\bSigma_\sfx^{-1}\Cov(\sfx,\sfz) 
    &= 
    \Cov\big(\sfx,\bbE(\sfz|\sfx)\big)^\top\bSigma_\sfx^{-1}\Cov\big(\sfx,\bbE(\sfz|\sfx)\big), 
    \\
    (\sigma^2_{\sfz}-\bbE_{\sfx} (\sigma_{\sfz|\sfx}^2))\|\rho_{\sfx,\bbE(\sfz|\sfx)}\|_2^2
    &= 
    \Cov\big(\sfx,\bbE(\sfz|\sfx)\big)^\top\bSigma_\sfx^{-1}\Cov\big(\sfx,\bbE(\sfz|\sfx)\big),
\end{align*}
which proves that $\Cov(\sfx,\sfz)^\top\bSigma_\sfx^{-1}\Cov(\sfx,\sfz) = (\sigma^2_{\sfz}-\bbE_{\sfx} (\sigma_{\sfz|\sfx}^2))\|\rho_{\sfx,\bbE(\sfz|\sfx)}\|_2^2$, as desired.
\end{proof}

Next, we show that we can transform a non-Gaussian variable into a Gaussian variable using deterministic functions. This is an extension of Theorem 2 in \cite{painsky2017gaussian}, whose scalar version is proven in \cite{shayevitz2011optimal}. We restate the extended version in the following lemma and provide a proof here.

\begin{lemma}\label{lm:7}
Let $\sfx\in \bbR^p$  be a multivariate random variable following a distribution with continuous and strictly increasing conditional cumulative distribution functions (cdf) $F_{\sfx_1},F_{\sfx_2|\sfx_1},..., F_{\sfx_p|\sfx_{1:p-1}}$, where $\sfx_{1:p-1}$ denotes the collection of $\sfx_1,\sfx_2,...,\sfx_{p-1}$. Let $\Phi_{N}^{-1}$ be the inverse cdf of $\cN(0,1)$. Then the transformation $\sfu=\Phi(\sfx)$ defined by
\begin{align*}
    \sfu_1 & =\Phi^{-1}_N\cdot F_{\sfx_1}(\sfx_1), \\
    \sfu_2|u_1 & = \Phi^{-1}_N\cdot F_{\sfx_2|\sfu_1}(\sfx_2|\sfu_1=u_1), \\
    & ...\\
    \sfu_2|u_{1:p-1} & =\Phi^{-1}_N\cdot F_{\sfx_p|\sfu_{1:p}}(\sfx_p|\sfu_{1:p-1}=u_{1:p-1})
\end{align*}
is invertible and $\sfu\sim \cN(\bzero,\bI)$.
\end{lemma}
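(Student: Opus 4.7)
The plan is to prove this by induction on the coordinate index $i$, establishing that $\sfu_1, \ldots, \sfu_p$ are jointly i.i.d.\ $\cN(0,1)$ by showing that each $\sfu_i$, conditional on the previous coordinates, is a standard normal that is independent of those previous coordinates. The main tool is the classical probability integral transform: if $Y$ is a real random variable with continuous, strictly increasing CDF $F$, then $F(Y) \sim \mathrm{Uniform}(0,1)$, and hence $\Phi_N^{-1}(F(Y)) \sim \cN(0,1)$.

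First, I would establish invertibility of the map $\Phi$. For each $i$, once $\sfu_{1:i-1}$ is known, the function $x_i \mapsto \Phi_N^{-1}\!\big(F_{\sfx_i|\sfx_{1:i-1}}(x_i \mid \sfx_{1:i-1} = x_{1:i-1})\big)$ is a composition of two continuous, strictly increasing functions and is therefore a bijection from $\mathbb{R}$ to $\mathbb{R}$; moreover, because $\sfu_1$ is a bijection of $\sfx_1$, and inductively $\sfu_{1:i-1}$ is a bijection of $\sfx_{1:i-1}$, conditioning on $\sfu_{1:i-1} = u_{1:i-1}$ is equivalent to conditioning on $\sfx_{1:i-1} = x_{1:i-1}$ for the corresponding $x_{1:i-1}$. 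This justifies writing the transformation recursively in terms of $\sfu$'s or $\sfx$'s interchangeably, and shows that $\Phi$ admits a well-defined inverse obtained by inverting each step in order.

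Second, I would verify the marginal/conditional distribution claims. Fix any $x_{1:i-1}$ in the support and write $u_{1:i-1}$ for its image. Conditional on $\sfx_{1:i-1} = x_{1:i-1}$, the random variable $F_{\sfx_i|\sfx_{1:i-1}}(\sfx_i \mid x_{1:i-1})$ is $\mathrm{Uniform}(0,1)$ by the probability integral transform (here the assumption that the conditional CDFs are continuous and strictly increasing is essential), so $\sfu_i \mid \sfu_{1:i-1} = u_{1:i-1} \sim \cN(0,1)$. Crucially, this conditional law does not depend on $u_{1:i-1}$, which means $\sfu_i$ is independent of $\sfu_{1:i-1}$ and is standard normal. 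Chaining this for $i = 1, \ldots, p$ and applying the chain rule of probability gives $\sfu \sim \cN(\bzero, \bI)$.

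The main obstacle, and the step that requires care rather than substantial new work, is the bookkeeping around conditioning: we need the equivalence between conditioning on $\sfu_{1:i-1}$ and conditioning on $\sfx_{1:i-1}$, which is what lets us legitimately apply the conditional probability integral transform at the $i$-th step. Once invertibility of each partial map is established, this equivalence is immediate, and the rest of the argument is a clean induction.
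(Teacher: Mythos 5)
Your proposal is correct and follows essentially the same route as the paper's proof: the coordinate-wise probability integral transform to make each conditional $\cN(0,1)$, the chain rule to conclude $\sfu\sim\cN(\bzero,\bI)$, and invertibility obtained step by step from the continuity and strict monotonicity of the conditional CDFs. Your extra care about the equivalence of conditioning on $\sfu_{1:i-1}$ versus $\sfx_{1:i-1}$ is the same point the paper handles via the invertibility and continuity of the partial maps, so there is no substantive difference.
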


\begin{proof}
First note that the cdf of  $\sfu_1$ satisfies $F_{\sfu_1}(u)=\bbP(\Phi_N^{-1}\cdot F_{\sfx_1} (\sfx_1)\leq u)=\bbP(F_{\sfx_1} (\sfx_1)\leq \Phi_N(u))=\Phi_N(u)$, and therefore $\sfu_1$ is $\cN(0,1)$-distributed. Similar arguments can be used to show that each of $\sfu_2|u_1,\dots,\sfu_p|u_{1:p-1}$ is $\cN(0,1)$-distributed. Therefore, the joint distribution $\bbP(\sfu_{1:p})=\bbP(\sfu_1)\bbP(\sfu_2|\sfu_1)\dots$ $\bbP(\sfu_p|\sfu_{1:p-1})=\cN(0,1)^p$ is a multivariate Gaussian $\cN(\bzero,\bI)$.

Second, since $F_{\sfx_1}$ is continuous and strictly increasing, the transformation $\sfx_1\rightarrow \sfu_1$ is invertible and continuous. This, together with the fact that $F_{\sfx_2|\sfx_1}$ is continuous and strictly increasing, means that $F_{\sfx_2|\sfu_1}$ is continuous and strictly increasing. Therefore $\sfx_1,\sfx_2\rightarrow \sfu_1,\sfu_2$ is invertible and continuous. By iterating this argument it follows that the transformation $\sfx_1,...,\sfx_p\rightarrow \sfu_1,...,\sfu_p$, i.e., $\Phi$, is invertible, which completes the proof.
\end{proof}

With these results, we can now prove Theorem~\ref{thm:1} from the main text. We first use Lemma~\ref{lm:7} to transform the sample distribution into a Gaussian distribution. Then we derive a lower bound on the mutual information using Lemma~\ref{lm:5}. This lower bound is simplified to be a scalar multiplied by the relative decay, using Lemma~\ref{lm:6}.

\begin{proof}[Proof of Theorem 1]
 Let $\Phi$ be the transformation in Lemma~\ref{lm:7} that turns the random variable $\sfx'\sim\bbP(\sfx'|\cD_t,\ba')$ into a Gaussian variable $\Phi(\sfx')\sim\cN(\bzero,\bI)$. Here, the distribution before the transformation is $\bbP(\sfx'|\cD_t,\ba')=\bbP((\bI-\bB)^{-1}(\ba'+\bepsilon')|\cD_t)$, the posterior of  sample $\bx'$ from $\ba'$ given data $\cD_t$. \rec{This posterior can be viewed as an induced distribution of $(\bI-\bB)^{-1}(\ba'+\bepsilon')$, where $\bB$ and $\bepsilon'$ follow the posteriors given $\cD_t$ obtained in Lemma \ref{lm:1}.} Let $\cE = \{\bbP(\sfx'|\cD_t,\ba'):\ba'\in \cA\}$ denote the class of all such posterior distributions.

Using the data processing inequality it holds that
\begin{equation*}
    I(g(\ba);\bx'|\ba',\cD_t) \geq I(g(\ba); \Phi(\bx')|\ba',\cD_t).
\end{equation*}
Now since $\Phi(\bx')$ is multivariate Gaussian, we can use Lemma~\ref{lm:5} to derive that
\begin{equation*}
    \begin{aligned}
    I(g(\ba);\bx'|\ba',\cD_t) & \geq I(g(\ba); \Phi(\bx')|\ba',\cD_t) \\
    & \geq \frac12 \log\frac{1}{1-\|\rho_{ \Phi(\bx'),g(\ba)|\ba',\cD_t}\|_2^2}\geq \frac12\|\rho_{\Phi(\bx'),g(\ba)|\ba',\cD_t}\|_2^2,
\end{aligned}
\end{equation*}
where we denote $\rho_{\Phi(\bx'),g(\ba)|\ba',\cD_t} = \sigma_{g(\ba)|\cD_t}^{-1}\Var(\Phi(\bx')|\ba',\cD_t)^{-\frac12}\Cov(\Phi(\bx'),g(\ba)|\ba',\cD_t)$. Next, we apply Lemma~\ref{lm:6} to obtain
\begin{equation}\label{thm1:eq0}
    \begin{aligned}
    I(g(\ba);\bx'|\ba',\cD_t) 
    &\geq 
    \frac12\|\rho_{\Phi(\bx'),g(\ba)|\ba',\cD_t}\|_2^2 
    \\
    &= 
    \frac12 \left( 
    1-\frac{\bbE_{\Phi(\bx')}\big(\sigma_{g(\ba)|\cD_t, \ba',\Phi(\bx')}^2\big)}{\sigma_{g(\ba)|\cD_t}^2}
    \right)
    \|\rho(g(\ba);\bx'|\ba',\cD_t)\|_2^2,
\end{aligned}
\end{equation}
where we denote
\begin{equation}\label{thm1:eq0.5}
\begin{aligned}
& \rho(g(\ba);\bx'|\ba',\cD_t) \\
={} & \sigma_{\bbE(g(\ba)|\cD_t,\ba',\Phi(\bx'))|\cD_t,\ba'}^{-1} \Var\left(\Phi(\bx')\big|\cD_t,\ba'\right)^{-\frac12}\Cov\Big(\Phi(\bx'),\bbE\big(g(\ba)|\cD_t,\ba',\Phi(\bx')\big)\Big|\cD_t,\ba'\Big) \\
={} & \sigma_{\bbE(g(\ba)|\cD_t,\ba',\Phi(\bx'))|\cD_t,\ba'}^{-1}\Cov\Big(\Phi(\bx'),\bbE\big(g(\ba)|\cD_t,\ba',\Phi(\bx')\big)\Big|\cD_t,\ba'\Big) \quad\quad (since~\Phi(\bx')|\cD_t,\ba'\sim \cN(\bzero,\bI)).
\end{aligned}
\end{equation}

Note that since $\Phi$ is invertible and deterministic, we have that $\bbP(g(\ba')|\cD_t,\ba',\Phi(\bx'))= \bbP(g(\ba')|\cD_t\cup(\bx',\ba')\})$. 
Therefore
\begin{equation}\label{thm1:eq1}
    \begin{aligned}
      \bbE(g(\ba')|\cD_t,\ba',\Phi(\bx')) & =\bbE(g(\ba')|\cD_t\cup(\bx',\ba')) \\
      \Var(g(\ba')|\cD_t,\ba',\Phi(\bx')) & =\Var(g(\ba')|\cD_t\cup(\bx',\ba')).
    \end{aligned}
\end{equation}
Then to show the expected posterior variance $\bbE_{\Phi(\bx')}(\sigma_{g(\ba)|\cD_t, \ba',\Phi(\bx')}^2)$ in Eq.~\eqref{thm1:eq0} equals to the expected posterior variance $\bbE_{\bx'}(\sigma_{g(\ba)|\cD_t, \ba',\bx'}^2)$, we have
\begin{align*}
    &~~~~\bbE_{\Phi(\bx')}\left(\sigma_{g(\ba)|\cD_t, \ba',\Phi(\bx')}^2\right)\\ 
    &= 
    \bbE_{\Phi(\bx')} \left( \Var(g(\ba')|\cD_t,\ba',\Phi(\bx')) \right) \quad \quad (by~definition)
    \\
    &= 
    \bbE_{\Phi(\bx')} \left( \Var(g(\ba')|\cD_t\cup(\bx',\ba'))  \right) \quad \quad(by~Eq.~\eqref{thm1:eq1})
    \\
        &= 
    \int \Var(g(\ba')|\cD_t\cup(\bx',\ba')) f_{\Phi(\bx')}(\Phi(\bx')) d \Phi(\bx')\quad\quad(f~denotes~the~pdf~of~the~corresponding~variable)
    \end{align*}
\begin{align*}    
    &= 
    \int \Var(g(\ba')|\cD_t\cup(\bx',\ba')) f_{\bx'}(\bx') \left|\det \left( \frac{d\bx'}{d\Phi(\bx')} \right)\right| d \Phi(\bx') \quad (change~of~variable~in~probability)
    \\
    &= 
    \int \Var(g(\ba')|\cD_t\cup(\bx',\ba')) f_{\bx'}(\bx')d\bx'\quad(change~of~variable~in~integration)
    \\
    &= 
    \bbE_{\bx'} \left( \Var(g(\ba')|\cD_t\cup(\bx',\ba')) \right) 
    = \bbE_{\bx'}\big(\sigma^2_{g(\ba)|\cD_t\cup(\bx',\ba')}\big).
\end{align*}
It follows by using this in Eq.~\eqref{thm1:eq0} that
\begin{align*}
    I(g(\ba);\bx'|\ba',\cD_t) 
    &\geq 
    \frac12 \left(
    1-\frac{\bbE_{\Phi(\bx')}\big(\sigma_{g(\ba)|\cD_t, \ba',\Phi(\bx')}^2\big)}{\sigma_{g(\ba)|\cD_t}^2}
    \right)
    \|\rho(g(\ba);\bx'|\ba',\cD_t)\|_2^2 
    \\
    &= 
    \frac12 \left(
    1-\frac{\bbE_{\bx'}\big(\sigma_{g(\ba)|\cD_t\cup(\bx',\ba')}^2\big)}{\sigma_{g(\ba)|\cD_t}^2}
    \right)
    \|\rho(g(\ba);\bx'|\ba',\cD_t)\|_2^2.
\end{align*}
% \textcolor{red}{I don't get the following sentence.} 
Now we lower bound $\|\rho(g(\ba);\bx'|\ba',\cD_t)\|_2^2$. Using Eq.~\eqref{thm1:eq1} in Eq.~\eqref{thm1:eq0.5}, we have
\begin{equation}\label{thm1:eq2}
   \rho(g(\ba);\bx'|\ba',\cD_t) = \sigma_{\bbE(g(\ba)|\cD_t,\ba',\bx')|\cD_t,\ba'}^{-1}\Cov\Big(\Phi(\bx'),\bbE\big(g(\ba)|\cD_t,\ba',\bx'\big)\Big|\cD_t,\ba'\Big). 
\end{equation}
Note that both $\bbE(g(\ba)|\cD_t,\ba',\bx')$ and $\Phi(\bx')$ are deterministic functions of $\bx'$ who follows a posterior distribution $\bbP(\bx'|\cD_t,\ba')$. Therefore, the quantity $\rho(g(\ba);\bx'|\ba',\cD_t)$ only depends on the posterior distribution $\bbP(\bx'|\cD_t,\ba')\in\cE$. Defining $\rho^2 := \frac12\min_{\bbP(\bx'|\cD_t,\ba')\in \cE} \|\rho(g(\ba);\bx'|\ba',\cD_t)\|_2^2$, then for all $\ba'\in\cA$ it holds that
\begin{equation*}
    I(g(\ba);\bx'|\ba',\cD_t) 
    \geq 
    \rho^2\left(1-\frac{\bbE_{\bx'} \left( \sigma_{g(\ba)|\cD_t\cup(\bx',\ba')}^2 \right)}{\sigma_{g(\ba)|\cD_t}^2}\right),
\end{equation*}
where $\rho$ only depends on $\cD_t$, the family of posterior distributions $\cE$, and the function of interest $g$. This completes the proof.
\end{proof}

\begin{remark}
[Evaluation of $\rho$]
We note that since $\|\rho_{\sfx,\sfz}\|_2^2\leq 1$ for any multivariate $\sfx$ and univariate $\sfz$, it holds that $\rho^2\leq \frac12$ in the proof of Theorem~1 above. While Theorem~1 applies to general posterior modeling of an intervention effect $\bbP(\bx'|\cD_t, \ba')$ and variable of interest $\bbP(g(\ba)|\cD_t\cup(\bx',\ba'))$, for certain special cases we can compute the constant $\rho^2$ exactly. In particular, in the following we show that $\rho^2=\frac12$ for the prominent special case of Bayesian linear regression, also considered in~\cite{sapsis2020output}.

In the Bayesian linear regression setting, the ``intervention effect'' $\bbP(\sfy|\cD, \sfx)$ corresponds to the output distribution of $\sfy$ given input $\sfx$ and previous data $\cD$, which is modeled as $\sfy=\theta^\top \sfx$ with $\theta$ following a Gaussian posterior.
The ``variable of interest'' is any linear transformation of $\theta$, e.g., $c^\top\theta$ for some constant vector $c$.
Then $\Phi$ is a linear transformation of $\sfy$, and $\bbE(c^\top\theta|\cD\cup(\bx',\ba'))$ is linear in $\sfy$.
Therefore $\rho^2(c^\top\theta;\sfy|\sfx,\cD)$, written out similarly as in Eq.~\eqref{thm1:eq2}, equals $\nicefrac{c}{\|c\|_2}$ for all input $\sfx$.
Thus $\rho^2$ in Theorem 1 equals exactly $\frac12$, and as a consequence, 
\[
I(c^\top\theta;\sfy|\sfx,\cD)
\geq 
\frac12 \left( 
1-\frac{\bbE_{\sfy} \left(\sigma^2_{c^\top\theta|\cD\cup(\sfy,\sfx)}\right)}
{\sigma^2_{c^\top\theta|\cD}}
\right)
\]

For the case of a linear SCM with a DAG-Wishart prior, however, direct calculation of $\rho^2$ can be difficult since (i) the posterior distribution of $\bx'$ is a composition of two normal-Wishart distributions, and (ii) $\bbE(g(\ba)|\cD_t\cup(\bx',\ba'))$ is a complicated function of $\bx'$, given in \nameref{prop:1-a}. Of note, $\rho^2$ can be made larger by considering transformations $\Phi$ beyond Lemma~\ref{lm:7}, which is related to research on Gaussianization \cite{painsky2017gaussian}.
\end{remark}

\section{Consistency Proofs}\label{sec:f}

In this section, we present the proof of Theorem~\ref{thm:2} from the main text and discuss extensions to other realizations of the CIV acquisition function.

\begin{proof}[Proof of Theorem 2]
First, we recall the CIV acquisition function from the main text, namely
\begin{equation}\label{thm2:eq1}
    h(\ba',\cD_t) 
    = 
    \sum_{i=1}^p 
    \left( v_i^2 + 2v_i \big( \frac{\sigma_i^2}{n} + b_i^2 + \frac1p \big) \right),
\end{equation}
for $\ba'\in\cA$. To clarify the dependence on $\ba'\in\cA$, in the following we use $v_i(\ba')$ instead of $v_i$ to denote the variance in the direction of interest. We first prove the following claim, which we will use various times throughout the proof.

\begin{claim}
Define
\begin{equation*}
    \begin{aligned}
        \bM_i'&=\Var(\bB_{i,\pa(i)}|\cD_t),\\
        \bA_i&=[(\bI-\bB)^{-1}]_{\pa(i),:},\\
         \bA_i'&=[(\bI-\bbE(\bB|\cD_t))^{-1}]_{\pa(i),:},
    \end{aligned}
\end{equation*}
where $[\cdot]_{\pa(i),:}$ denotes the sub-matrix corresponding to the $\pa(i)$-indexed rows of the corresponding matrix. Denote the following products by
\begin{equation}\label{thm2:def1}
    \begin{aligned}
    \bK_i&:=\bA_i^\top\bM_i'\bA_i,\\ \bK_i'&=\bA_i^{'\top}\bM_i'\bA_i,\\ \bK_i''&=\bA_i^{'\top}\bM_i'\bA_i'.
\end{aligned}
\end{equation}
We have $\|\bK_i'\ba^*\|_2= O(\nicefrac{1}{t})$ and
\begin{align*}
    \ba^{*\top}\bK_i\ba^*=\begin{cases}\Theta(\nicefrac1t)& if~\bA_i\ba^*\neq \bzero,\\
    0 & if~\bA_i\ba^*= \bzero.
    \end{cases} 
\end{align*}
In addition, for all $\ba'\in\cA$, it holds that
\begin{equation}
    \begin{aligned}
        \ba^{\prime\top}\bK_i'\ba^*&=O(\nicefrac1t)\\
        \ba^{\prime\top}\bK_i''\ba'&=O(\nicefrac1t).
    \end{aligned}
\end{equation}
\end{claim}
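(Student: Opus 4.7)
The plan is to reduce the six assertions to two asymptotic facts about the DAG-BLR posterior obtained from the Bernstein--von-Mises theorem, which the paper has already invoked: (i) the posterior covariance $\bM_i' = \Var(\bB_{i,\pa(i)}|\cD_t)$ concentrates at rate $\Theta(\nicefrac1t)$ in spectral norm, with both $\lambda_{\min}(\bM_i')$ and $\|\bM_i'\|_2$ of order $\Theta(\nicefrac1t)$; and (ii) the posterior mean $\bbE(\bB|\cD_t)$ converges to $\bB$, so that $\bA_i' \to \bA_i$ and in particular $\|\bA_i'\|_2$ stays bounded uniformly in $t$. Both are available from the closed form in Lemma~1: $\bM_i(\cD_t)^{-1} = \bM_i^{-1} + \sum_m \sfx_{\pa(i)}^{(m)}\sfx_{\pa(i)}^{(m)\top}$, so $\bM_i'$ is essentially the inverse of the parent-covariate Gram matrix, which grows linearly in $t$ whenever the parents are persistently excited.

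Given these two facts, the six bounds follow from elementary operator-norm manipulations. I would handle the $\bK_i$ bound first by writing $\ba^{*\top}\bK_i\ba^* = (\bA_i\ba^*)^\top \bM_i' (\bA_i\ba^*)$: the case $\bA_i\ba^* = \bzero$ is immediate, and when $\bA_i\ba^* \neq \bzero$ the two-sided eigenvalue sandwich $\lambda_{\min}(\bM_i')\|\bA_i\ba^*\|_2^2 \le \ba^{*\top}\bK_i\ba^* \le \|\bM_i'\|_2 \|\bA_i\ba^*\|_2^2$ combined with fact~(i) yields $\Theta(\nicefrac1t)$. The remaining three bounds share the same structure: factor the quadratic form and apply submultiplicativity together with $\|\bM_i'\|_2 = O(\nicefrac1t)$. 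Specifically, I would use $\|\bK_i'\ba^*\|_2 \le \|\bA_i'\|_2 \|\bM_i'\|_2 \|\bA_i\ba^*\|_2$, $|\ba'^\top \bK_i'\ba^*| \le \|\bA_i'\|_2 \|\bM_i'\|_2 \|\bA_i\ba^*\|_2$, and $\ba'^\top \bK_i''\ba' = \|(\bM_i')^{1/2}\bA_i'\ba'\|_2^2 \le \|\bM_i'\|_2 \|\bA_i'\|_2^2$. In each case the non-$\bM_i'$ factors are bounded in $t$ (and uniformly in $\ba' \in \cA$ since $\|\ba'\|_2 = 1$), producing the claimed $O(\nicefrac1t)$ rates.

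The main obstacle is making fact~(i) rigorous in the sequential, active-learning setting, where the interventions $\ba^{(1)},\dots,\ba^{(t)}$ are chosen adaptively rather than drawn i.i.d. The concern is that the algorithm could concentrate its probes in directions that fail to excite the parent variables $\sfx_{\pa(i)}$, driving the sample Gram matrix to be rank-deficient and violating $\lambda_{\min}(\bM_i') = \Omega(\nicefrac1t)$. Two observations mitigate this: because interventions are constrained to $\bbS^{p-1}$ and propagate through the full-rank map $(\bI-\bB)^{-1}$, persistent excitation in intervention space transfers to persistent excitation of the parents; and the theorem already accommodates the degenerate regime through its separate ``$h(\ba',\cD_t) \equiv 0$'' clause, so it suffices to establish the $\Theta(\nicefrac1t)$ rate under a standard persistence-of-excitation condition, as is typical in BvM-based analyses of adaptive designs.
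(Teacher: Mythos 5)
Your proof follows essentially the same route as the paper's: invoke Bernstein--von Mises to obtain the two-sided matrix bound $\frac{c}{t}\bI \preceq \bM_i' \preceq \frac{C}{t}\bI$, note that $\bA_i\ba^*$ and $\bA_i'\ba'$ are bounded uniformly in $t$ (the paper argues this via $\bI-\bB$ and $\bI-\bbE(\bB|\cD_t)$ being unit-eigenvalue triangular matrices, you via posterior-mean convergence), and then read off all the bounds from the quadratic forms, with the eigenvalue sandwich giving the $\Theta(\nicefrac1t)$ case when $\bA_i\ba^*\neq\bzero$. One correction to your closing discussion: a rank-deficient parent Gram matrix would leave some posterior variance non-decaying, so it threatens the \emph{upper} bound $\lambda_{\max}(\bM_i')=O(\nicefrac1t)$ rather than $\lambda_{\min}(\bM_i')=\Omega(\nicefrac1t)$ (the latter only requires the Gram matrix to grow at most linearly); also, the ``$h(\ba',\cD_t)\equiv 0$'' clause of Theorem~2 covers the case $\bA_i\ba^*=\bzero$ for all $i$ (optimal interventions targeting only sink nodes, cf.\ Remark~\ref{rmk:4}), not degenerate adaptive designs, and in this model the excitation of $\sfx_{\pa(i)}$ is supplied by the exogenous noise $\bepsilon$ with positive variances rather than by the interventions lying on $\bbS^{p-1}$ — the paper itself simply asserts the $\mathbf{\Theta}(\nicefrac1t)$ rate from BvM without imposing a separate persistence-of-excitation condition.
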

\begin{quote}
\begin{proof}[Proof of Claim 2.]
First note that $\cD_t$ contains $nt$ samples, where $n$ is  the batch size. By the Bernstein-von-Mises Theorem \cite{kleijn2012bernstein}, the posterior of $\bB_{i,\pa(i)}$ converges at a rate of $\nicefrac{1}{\sqrt{t}}$, and thus $\bM_i'=\Var(\bB_{i,\pa(i)}|\cD_t)=\mathbf{\Theta}(\nicefrac1t)$,  where we use the asymptotic notation $\mathbf{\Theta}$ for a matrix to denote that $\frac{C}{t}\bI\succeq\Var(\bB_{i,\pa(i)}|\cD_t)\succeq \frac{c}{t}\bI$ for constants $c,C>0$ that are independent of $t$.

Since $\bB$ and $\bbE(\bB|\cD_t)$ are lower triangular after permutation with respect to the topological order in the DAG, we have that $\bI-\bB$ and $\bI-\bbE(\bB|\cD_t)$ both only have eigenvalues equal to $1$. Therefore $(\bI-\bB)^{-1}$ and $(\bI-\bbE(\bB|\cD_t))^{-1}$ both only have eigenvalues equal to $1$. This means that $\bA_i\ba^*$ and $\bA_i'\ba'$, which correspond to vectors by taking the $\pa(i)$-entries of $(\bI-\bB)^{-1}\ba^*$ and $(\bI-\bbE(\bB|\cD_t))^{-1}\ba'$ respectively, are bounded. Thus $\|\bK_i'\ba^*\|_2=O(\nicefrac{1}{t})$, $\ba^{*\top}\bK_i\ba^*=O(\nicefrac{1}{t})$, $\ba^{\prime\top}\bK_i'\ba^*=O(\nicefrac{1}{t})$, and $\ba^{\prime\top}\bK_i''\ba'=O(\nicefrac{1}{t})$.

It remains to show that $\ba^{*\top}\bK_i\ba^*=\Omega(\nicefrac{1}{t})$ if $\bA_i\ba^*\neq \bzero$, and otherwise $\ba^{*\top}\bK_i\ba^*=\bzero$. This follows directly by noting that $\ba^{*\top}\bK_i\ba^*=(\bA_i\ba^*)^\top\bM_i'(\bA_i\ba^*)$ and $\bM_i'=\mathbf{\Theta}(\nicefrac1t)$, which completes the proof of Claim~2.
\end{proof}
\end{quote}
Now we prove the theorem in three steps. 
\begin{enumerate}[label=(\roman*)]
    \item For any $i\in[p]$, it holds that
\begin{align*}
    0&\leq v_i(\ba')\\
     &= \sigma_i^2\bmu_{\pa(i)}^{*\top} \Var(\bB_{i,\pa(i)}|\cD_t(\ba')) \bmu_{\pa(i)}^* \\
    &\leq\sigma_i^2 \bmu_{\pa(i)}^{*\top} \bM_i' \bmu_{\pa(i)}^*\quad\quad(since~\Var(\bB_{i,\pa(i)}|\cD_t(\ba'))\preceq\Var(\bB_{i,\pa(i)}|\cD_t))\\
    &=\sigma_i^2\ba^{*\top}\bK_i\ba^*\quad\quad(using~\bmu_{\pa(i)}^*=\bA_i\ba^*~and~Eq.~ \eqref{thm2:def1}).
\end{align*}
As a consequence, it follows from Claim~2 that $v_i(\ba')=O(\nicefrac{1}{t})$ if $\bA_i\ba^*\neq \bzero$ and otherwise $v_i(\ba')=0$. Now we show that $v_i(\ba')=\Omega(\nicefrac{1}{t})$ if $\bA_i\ba^*\neq \bzero$.
According to the Sherman-Morrison formula, we have that for a matrix $\Lambda$ and vectors~$u$~and~$v$,
\begin{equation*}
    (\Lambda + u v^\top)^\inv 
    = 
    \Lambda^\inv - \frac{\Lambda^\inv u v^\top \Lambda^\inv}{1 + v^\top \Lambda^\inv u}.
\end{equation*}
Using the Sherman-Morrison formula with $\Lambda = {\bM_i'}^\inv$, $u = n \bA_i' \ba'$ and $v = \bA_i' \ba'$, we obtain that
\begin{equation}\label{thm2:eq2}
    \begin{aligned}
    v_i(\ba') &= \sigma_i^2\bmu_{\pa(i)}^{*\top} \Var(\bB_{i,\pa(i)}|\cD_t(\ba')) \bmu_{\pa(i)}^* \\
    & = \sigma_i^2 \ba^{*\top}\bA_i^\top \left(\bM_i^{\prime -1}+n\bA_i'\ba'\ba^{\prime\top}\bA_i^{\prime\top}\right)^{-1}\bA_i\ba^* \quad\quad(using~Lemma~\ref{lm:1}~and~\bmu_{\pa(i)}^*=\bA_i\ba^*)\\
    & =  \sigma_i^2 \ba^{*\top}\bA_i^\top \left(\bM_i'-\frac{n\bM_i'\bA_i'\ba'\ba^{\prime\top}\bA_i^{\prime\top}\bM_i'}{1+n\ba^{\prime\top}\bA_i^{\prime\top}\bM_i'\bA_i'\ba'}\right) \bA_i\ba^*\\
    & = \sigma_i^2\left(\ba^{*\top}\bK_i\ba^*-\frac{n(\ba^{\prime\top}\bK_i'\ba^*)^2}{1+n\ba^{\prime\top}\bK_i''\ba'}\right)\quad\quad(using~Eq.~\eqref{thm2:def1}).
    \end{aligned}
\end{equation}
 By Claim~2, it follows that $\ba^{*\top}\bK_i\ba^*=\Omega(\nicefrac{1}{t})$ if $\bA_i\ba^*\neq\bzero$ and $\nicefrac{n(\ba^{\prime\top}\bK_i'\ba^*)^2}{1+n\ba^{\prime\top}\bK_i''\ba'}=O(\nicefrac{1}{t^2})$. Therefore $v_i(\ba')=O(\nicefrac{1}{t})$ if $\bA_i\ba^*\neq \bzero$. 
Thus we have for all $\ba'\in\cA$,
\begin{equation*}
    v_i(\ba')=\begin{cases}
    \Theta(\frac1t),\quad & if~\bA_i\ba^*\neq\bzero,\\
    0,\quad & if~\bA_i\ba^*=\bzero.
    \end{cases}
\end{equation*}
Since $\frac{\sigma_i^2}{n}+b_i^2+\frac1p=\Theta(1)$, then following Eq.~\eqref{thm2:eq1}, we know that for all $\ba'\in\cA$, 
\begin{equation*}
    h(\ba',\cD_t)=\begin{cases}
    \Theta(\frac1t),\quad & if~\bA_i\ba^*\neq\bzero~for~some~i\in[p],\\
    0,\quad & if~\bA_i\ba^*=\bzero~for~all~i\in[p].
    \end{cases}
\end{equation*}
\item Next, we study the gradient of $v_i(\ba')$ at $\ba'=\ba^*$. In fact, following Eq.~\eqref{thm2:eq2}, since $\bK_i$, $\bK_i'$ and $\bK_i''$ are constants with respect to $\ba'$, we can calculate according to the quotient rule that
\begin{equation}\label{thm2:eq3}
\begin{aligned}
    \nabla v_i(\ba') &= \sigma_i^2\left(-\frac{(1+n\ba^{\prime\top}\bK_i''\ba')\cdot(2n\ba^{\prime\top}\bK_i'\ba^*)\bK_i'\ba^*-n(\ba^{\prime\top}\bK_i'\ba^*)^2\cdot2n\bK_i''\ba'}{(1+n\ba^{\prime\top}\bK_i''\ba')^2}
    \right)
    \\
    &= -
    \frac{2n\sigma_i^2\cdot\ba^{\prime\top}\bK_i'\ba^*}{(1+n\ba^{\prime\top}\bK_i''\ba')^2} \left(\bK_i'\ba^*+n(\ba^{\prime\top}\bK_i''\ba'\cdot\bK_i'\ba^*-\ba^{\prime\top}\bK_i'\ba^*\cdot\bK_i''\ba')\right).
\end{aligned}
\end{equation}
Therefore, the gradient at $\ba'=\ba^*$ is
\begin{equation*}
    \nabla v_i(\ba^*) = -
    \frac{2n\sigma_i^2\cdot\ba^{*\top}\bK_i'\ba^*}{(1+n\ba^{*\top}\bK_i''\ba^*)^2} \left(\bK_i'\ba^*+n(\ba^{*\top}\bK_i''\ba^*\cdot\bK_i'\ba^*-\ba^{*\top}\bK_i'\ba^*\cdot\bK_i''\ba^*)\right).
\end{equation*}
Using Claim 2 again, we can then deduce that $\|\nabla v_i(\ba^*)\|=O(\nicefrac{1}{t^2})$, where $\|\cdot\|$ is the Euclidean norm.
Together with Eq.~\eqref{thm2:eq1}, we obtain that
\begin{align*}
    \nabla h(\ba^*,\cD_t) 
    = 
    \sum_{i=1}^p 2 \left( v_i+ \frac{\sigma_i^2}{n} + b_i^2 + \frac1p \right) \nabla v_i(\ba^*),
\end{align*}
and thus by $ v_i+ \frac{\sigma_i^2}{n} + b_i^2 + \frac1p=O(1)$, we have that $\|\nabla h(\ba^*,\cD_t)\|=O(\nicefrac{1}{t^2})$.
\item Now, we look at the Hessian.
Following Eq.~\eqref{thm2:eq3}, we have
\begin{equation*}
\begin{aligned}
    & \nabla^2 v_i(\ba')\\
    ={} & \left(\bK_i'\ba^*+n(\ba^{\prime\top}\bK_i''\ba'\cdot\bK_i'\ba^*-\ba^{\prime\top}\bK_i'\ba^*\cdot\bK_i''\ba')\right)\nabla\left(-
    \frac{2n\sigma_i^2\cdot\ba^{\prime\top}\bK_i'\ba^*}{(1+n\ba^{\prime\top}\bK_i''\ba')^2}\right)\\
    &\quad\quad\quad-
    \frac{2n\sigma_i^2\cdot\ba^{\prime\top}\bK_i'\ba^*}{(1+n\ba^{\prime\top}\bK_i''\ba')^2}\nabla\left(\bK_i'\ba^*+n(\ba^{\prime\top}\bK_i''\ba'\cdot\bK_i'\ba^*-\ba^{\prime\top}\bK_i'\ba^*\cdot\bK_i''\ba')\right)\\
    ={} &\left(\bK_i'\ba^*+n(\ba^{\prime\top}\bK_i''\ba'\cdot\bK_i'\ba^*-\ba^{\prime\top}\bK_i'\ba^*\cdot\bK_i''\ba')\right)\left(\frac{-2n\sigma_i^2\cdot\ba^{\prime\top}\bK_i'\ba^*(-4n\ba^{\prime\top}\bK_i'')}{(1+n\ba^{\prime\top}\bK_i''\ba')^3}-\frac{2n\sigma_i^2\cdot\ba^{*\top}\bK_i^{\prime\top}}{(1+n\ba^{\prime\top}\bK_i''\ba')^2}\right)\\
    &\quad\quad\quad-
    \frac{2n\sigma_i^2\cdot\ba^{\prime\top}\bK_i'\ba^*}{(1+n\ba^{\prime\top}\bK_i''\ba')^2}\left(n\big(\bK_i'\ba^*(2\ba^{\prime\top}\bK_i'')-\bK_i''\ba'\ba^{*\top}\bK_i^{\prime\top}-\ba^{\prime\top}\bK_i'\ba^*\cdot\bK_i''\big)\right).
\end{aligned}
\end{equation*}
Let $\ba'=\ba^*$, we have
\begin{equation*}
\begin{aligned}
    & \nabla^2 v_i(\ba^*) \\
    = {} &\left((1+n\ba^{*\top}\bK_i''\ba^*)\cdot\bK_i'\ba^*-n\ba^{*\top}\bK_i'\ba^*\cdot\bK_i''\ba^*\right)\left(\frac{-2n\sigma_i^2\cdot\ba^{*\top}\bK_i'\ba^*(-4n\ba^{*\top}\bK_i'')}{(1+n\ba^{*\top}\bK_i''\ba^*)^3}-\frac{2n\sigma_i^2\cdot\ba^{*\top}\bK_i^{\prime\top}}{(1+n\ba^{*\top}\bK_i''\ba^*)^2}\right)\\
    &\quad\quad\quad-
    \frac{2n\sigma_i^2\cdot\ba^{*\top}\bK_i'\ba^*}{(1+n\ba^{*\top}\bK_i''\ba^*)^2}\left(n\big(\bK_i'\ba^*(2\ba^{*\top}\bK_i'')-\bK_i''\ba^*\ba^{*\top}\bK_i^{\prime\top}-\ba^{*\top}\bK_i'\ba^*\cdot\bK_i''\big)\right)\\
\end{aligned}
\end{equation*}

\begin{equation*}
\begin{aligned}
    \\
    ={} &\frac{2n\sigma_i^2}{(1+n\ba^{*\top}\bK_i''\ba^*)^3}\bigg\{(1+n\ba^{*\top}\bK_i''\ba^*)\Big(
    -(1+n\ba^{*\top}\bK_i''\ba^*)\bK_i'\ba^*\ba^{*\top}\bK_i^{\prime\top} + n(\ba^{*\top}\bK_i'\ba^*)^2\bK_i''
    \\
    & \quad\quad\quad +2n(\ba^{*\top}\bK_i'\ba^*)\big(\bK_i'\ba^*\ba^{*\top}\bK_i''+\bK_i''\ba^*\ba^{*\top}\bK_i^{\prime\top}\big)
    \Big)
    -4n^2(\ba^{*\top}\bK_i'\ba^*)^2\bK_i''\ba^*\ba^{*\top}\bK_i''
    \bigg\} \\
    \succeq{} & \frac{2n\sigma_i^2}{(1+n\ba^{*\top}\bK_i''\ba^*)^3} \Big(-(1+n\ba^{*\top}\bK_i''\ba^*)^2\bK_i'\ba^*\ba^{*\top}\bK_i^{\prime\top}-4n^2(\ba^{*\top}\bK_i'\ba^*)^2\bK_i''\ba^*\ba^{*\top}\bK_i'' \Big) \\
    &.
\end{aligned}
\end{equation*}
Using the claim, we note $\bK_i'\ba^*\ba^{*\top}\bK_i^{\prime\top}\preceq O(\nicefrac1t^2)\bI$ and $(\ba^{*\top}\bK_i'\ba^*)^2\bK_i''\ba^*\ba^{*\top}\bK_i''\preceq O(\nicefrac1t^4)\bI$. Thus $\nabla^2 v_i(\ba^*)\succeq -O(\nicefrac1t^2)\bI$. Finally, with Eq.~\eqref{thm2:eq1}, there is
\begin{align*}
    \nabla^2 h(\ba^*,\cD_t) 
    & = 
    \sum_{i=1}^p 
    \left[
    2\nabla v_i(\ba^*)\nabla v_i(\ba^*)^\top 
    + 
    2 \left( v_i + \frac{\sigma^2_i}{n} + b_i + \frac1p \right) \nabla^2v_i(\ba^*) \right] \\
    & \succeq 
    \sum_{i=1}^p 
    2 \left( v_i+ \frac{\sigma^2}{n} + b_i + \frac1p \right) \nabla^2v_i(\ba^*).
\end{align*}
Using the results developed above, we know $\nabla^2h(\ba^*,\cD_t)\succeq -O(\nicefrac{1}{t^2})\bI$.
\end{enumerate}
These three steps conclude our proof. Note that here our discussion is based on the acquisition function with known noise levels. Similar arguments can be applied to the case with unknown noise levels, using the derived explicit form in Sec.~\ref{sec:b}.
\end{proof}

\begin{remark}\label{rmk:4}
We make a note about the case where $h(\ba',\cD_t)$ degenerates to constant $0$ for all $\ba'\in\cA$. From the proof above, we see that this is equivalent to the case where $\bmu^*_{\pa(i)}=\bA_i\ba^*=\bzero$ for all $i\in [p]$. This is also equivalent to the optimal intervention $\ba^*$ only intervenes on sink nodes, i.e., nodes with no children. For such $\ba^*$, we can directly calculate $\ba^*$ by subtracting the target and source distribution, $\ba^*=(\bI-\bB)\bmu^*=\bmu^*$, since $\bB\bmu^*=\bzero$. Therefore, in simulations, we avoid this case and generate $\ba^*$ that targets not only the sink nodes.
\end{remark}

This theorem implies that $\ba^*$ is an approximate-$O(\nicefrac{1}{t^2})$ local minimum of $h(\ba',\cD_t)$. Note that here we define the approximate local minima following \cite{agarwal2017finding, nesterov2006cubic}, where the asymptotic notation means that there exists a constant $c\geq 0$ such that
\begin{align*}
    \|\nabla h(\ba^*,\cD_t)\|\leq \frac{c^2}{t^2}\quad and\quad \nabla^2 h(\ba^*,\cD_t)\succeq -\frac{c}{t}\bI.
\end{align*}
Note, however, that approximate local minima are not guaranteed to be closed to any local minimum unless additional constraints are satisfied by the function \cite{agarwal2017finding}.

\section{Extended Simulation Results}\label{sec:g}

In this section, we provide a detailed description of the simulated dataset and present additional empirical results.

To construct a synthetic instance of a causal system and an optimal intervention, we use the following procedure:
\begin{enumerate}[label=(\roman*)]
    \item Generate a DAG $\cG$ with $p$ nodes. 
    Specifically, in our experiments $p$ ranges from $10$ to $30$. 
    To analyze the effect of edge density on our results, we consider different types of graphs including complete graphs, Erdös-Rényi graphs~\cite{erdos1960evolution}, and path graphs. The edges in the graph are oriented to obtain a DAG by uniformly sampling a permutation of the nodes and orienting the edges according to the order given by the permutation.
    Examples of these three types of graphs on $p=5$ nodes are shown in Supplementary Fig.~\ref{fig:s3}.
    \item For each edge $k\rightarrow i$ in $\cG$, its corresponding edge weight $\bB_{ik}$ is generated by sampling from a uniform
    distribution bounded away from zero, namely $\bB_{ik}\sim \mathrm{Unif}[-1,-0.25]\cup[0.25,1]$.
    \item The noise levels $\bSigma$ are chosen so as to standardize the causal model, i.e., ensure that the variance of each variable $\sfx_i$ for $i\in[p]$ is of the same scale. Such standardization has been shown to be important for benchmarking causal inference algorithms~\cite{reisach2021beware}, although for the problem considered here we found it to not change the results. We used the standardized model proposed in  \cite{mooij2020joint} with $\epsilon_i\sim\mathcal{N}(0,1)$ and then re-scaling the causal mechanism $\sfx_i=\sum_{k=1}^p\bB_{ik}\sfx_k+\epsilon_i$ by $\sqrt{\Var(\sfx_i)}$.
    We still denote the re-scaled model as $\bx=\bB\bx+\bepsilon, \bepsilon\sim\cN(\bzero,\bSigma)$ in the following.
    
    \item Generate the optimal intervention $\ba^*\in\cA$ with a fixed-size support in $[p]$. 
    Specifically, in our experiments we use the unit hypersphere $\bbS^{p-1}$ as the feasible set $\cA$ and $\|\ba^*\|_0$ ranges between $1$ and $\lfloor\nicefrac{p}{2}\rfloor$. 
    The target mean $\bmu^*=(\bI-\bB)^{-1}\ba^*$ is calculated using the ground truth $\bB$.
\end{enumerate}
Only the DAG $\cG$ and the target mean $\bmu^*$ are revealed to the learner. Then the learner proceeds as indicated in Fig.~\ref{fig:1} in the main text, i.e., the learner selects an intervention based on the acquisition function, samples from this intervention are generated using Eq.~\eqref{eq:2} in the main text, and the model is updated based on \nameref{lm:1-a}.

\begin{figure}[!t]
    \centering
     \begin{subfigure}[b]{0.2\textwidth}
         \centering
         \includegraphics[width=.7\textwidth]{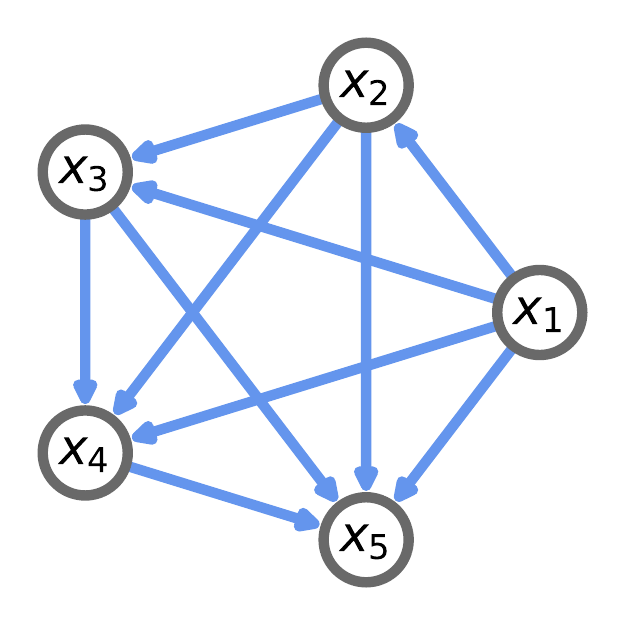}
     \end{subfigure}
     \begin{subfigure}[b]{0.2\textwidth}
         \centering
         \includegraphics[width=.7\textwidth]{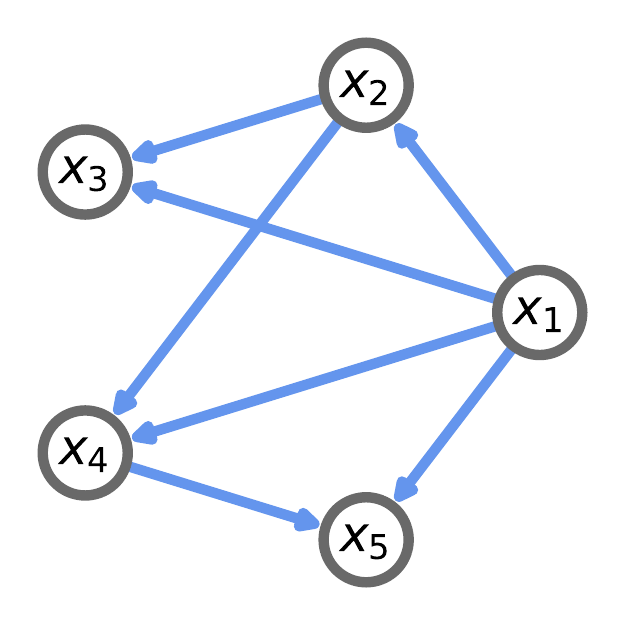}
     \end{subfigure}
     \begin{subfigure}[b]{0.2\textwidth}
         \centering
         \includegraphics[width=.7\textwidth]{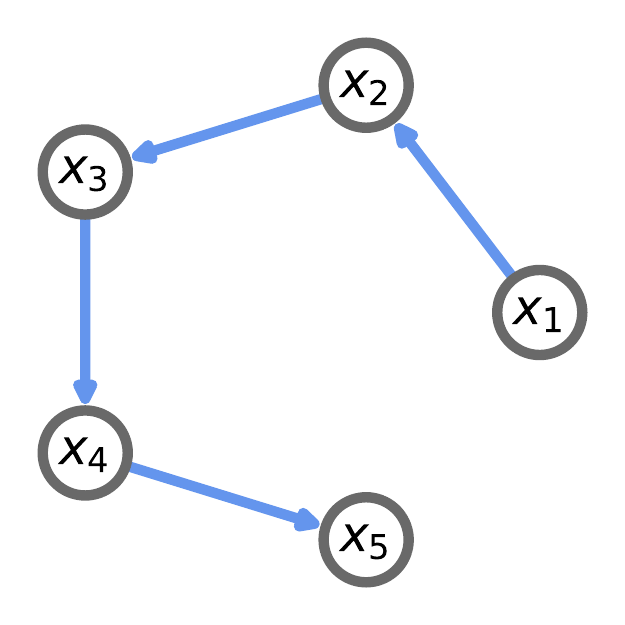}
     \end{subfigure}
    \caption{\rec{\textbf{Example DAGs on five nodes.}} Complete (Left), random (Middle), path (Right) graph on 5 nodes.}
    \label{fig:s3}
\end{figure}

To account for the randomness in the active learning process, each method is run $20$ times and the average is reported for each metric.
For a fixed DAG $\cG$ and a fixed number of intervention targets $\|\ba^*\|_0$, the realization of edge weights $\bB$ and the optimal intervention $\ba^*$ can substantially affect the resulting metric. 
To obtain more representative results, we generate $10$ instances for each fixed DAG $\cG$ and fixed number of intervention targets $\|\ba^*\|_0$. 
In all figures, we report for each metric that we consider both the mean and the standard deviation across the $10$ instances.
In summary, each curve reported below corresponds to running one of the methods $200$ times.

In our experiments, we carefully examine the effect of varying different parameters:
1) we examine the effect of the DAG size $p$  by varying  $p$ between $10$ and $30$, while fixing the ratio of intervention targets $\nicefrac{\|\ba^*\|_0}{p}$;
2) for $p$ fixed, we vary the number of intervention targets  $\|\ba^*\|_0$ between $1$ and $\lfloor\nicefrac{p}{2}\rfloor$;
3) for fixed $p$ and $\|\ba^*\|_0$, we compare the different methods across different classes of DAGs with varying edge density (see Supplementary Fig.~\ref{fig:s3}).
{In addition, we also include three additional baselines discussed below as well as in Supplementary Information~\ref{sec:a0}. Since some of these baselines are highly computationally expensive, we run the comparison over $10$-node DAGs. Finally, we perform an experiment to examine the importance of specifying the correct causal structure.
}

% \clearpage
\begin{figure}[!t]
     \centering
     \begin{subfigure}[b]{0.27\textwidth}
         \centering
         \includegraphics[width=\textwidth]{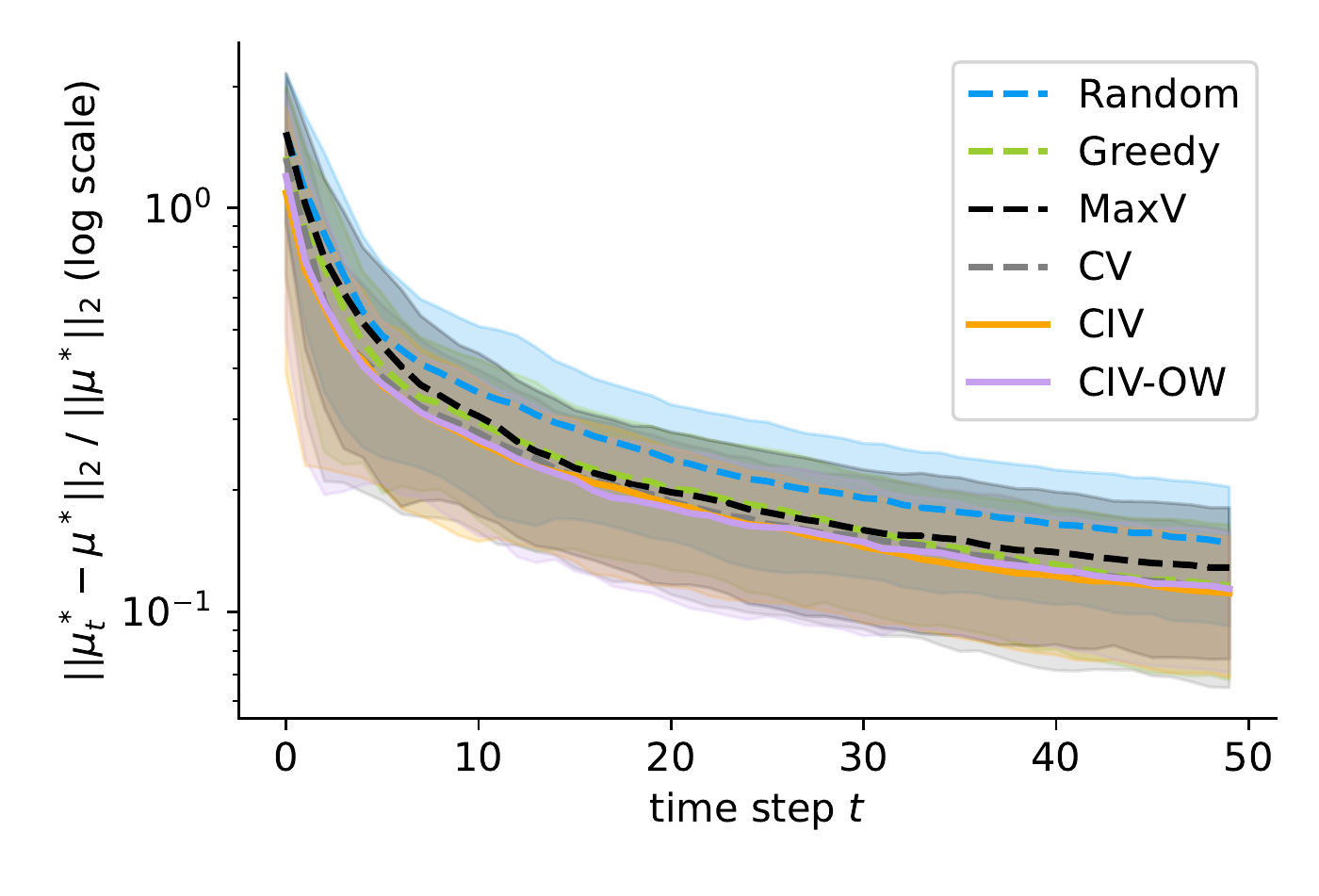}
         \caption{$p=10$}
     \end{subfigure}
    %  \hfill
     \begin{subfigure}[b]{0.27\textwidth}
         \centering
         \includegraphics[width=\textwidth]{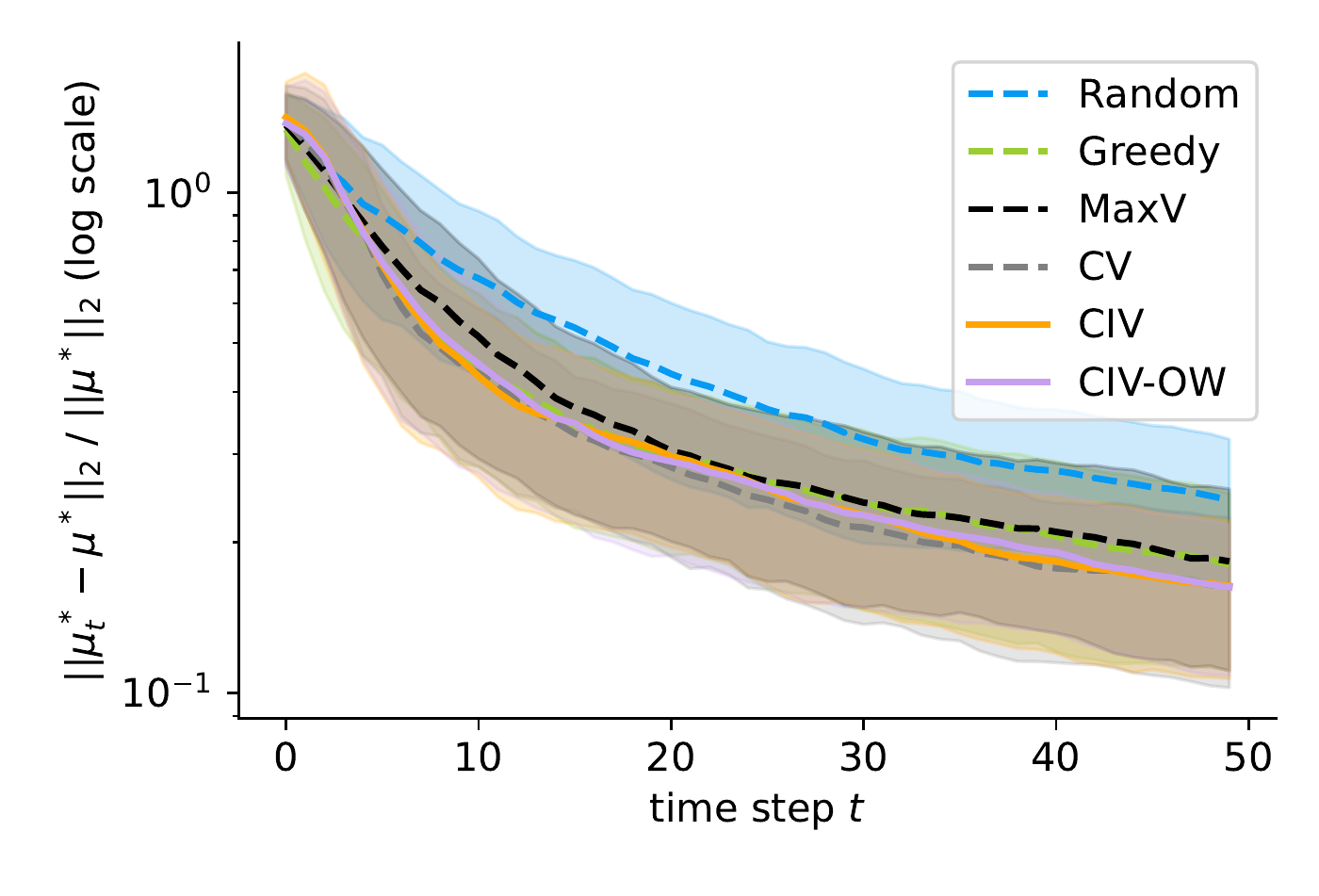}
         \caption{$p=20$}
     \end{subfigure}
    % \hfill
    \begin{subfigure}[b]{0.27\textwidth}
         \centering
         \includegraphics[width=\textwidth]{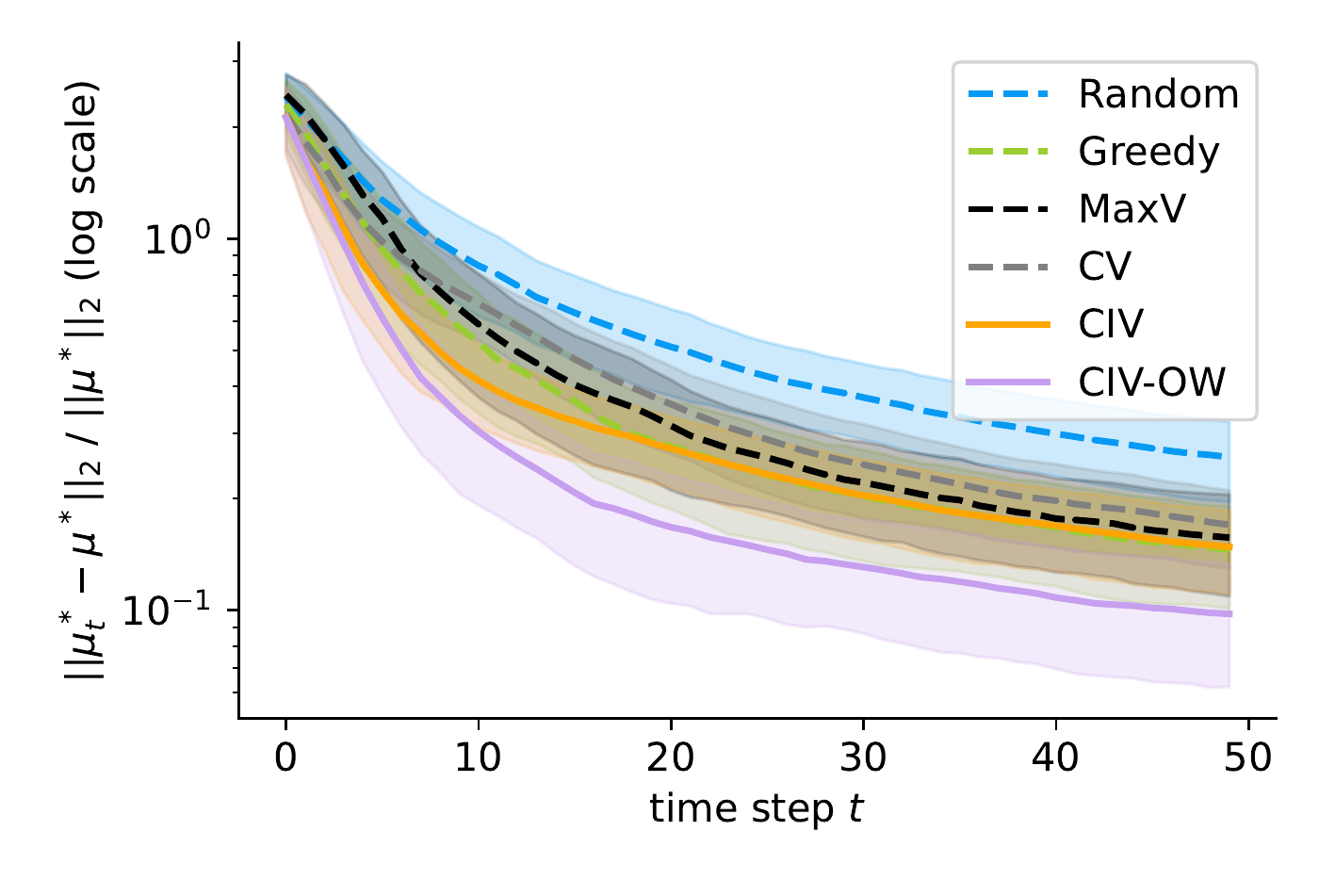}
         \caption{$p=30$}
     \end{subfigure}
     \\
     \begin{subfigure}[b]{0.27\textwidth}
         \centering
         \includegraphics[width=.8\textwidth]{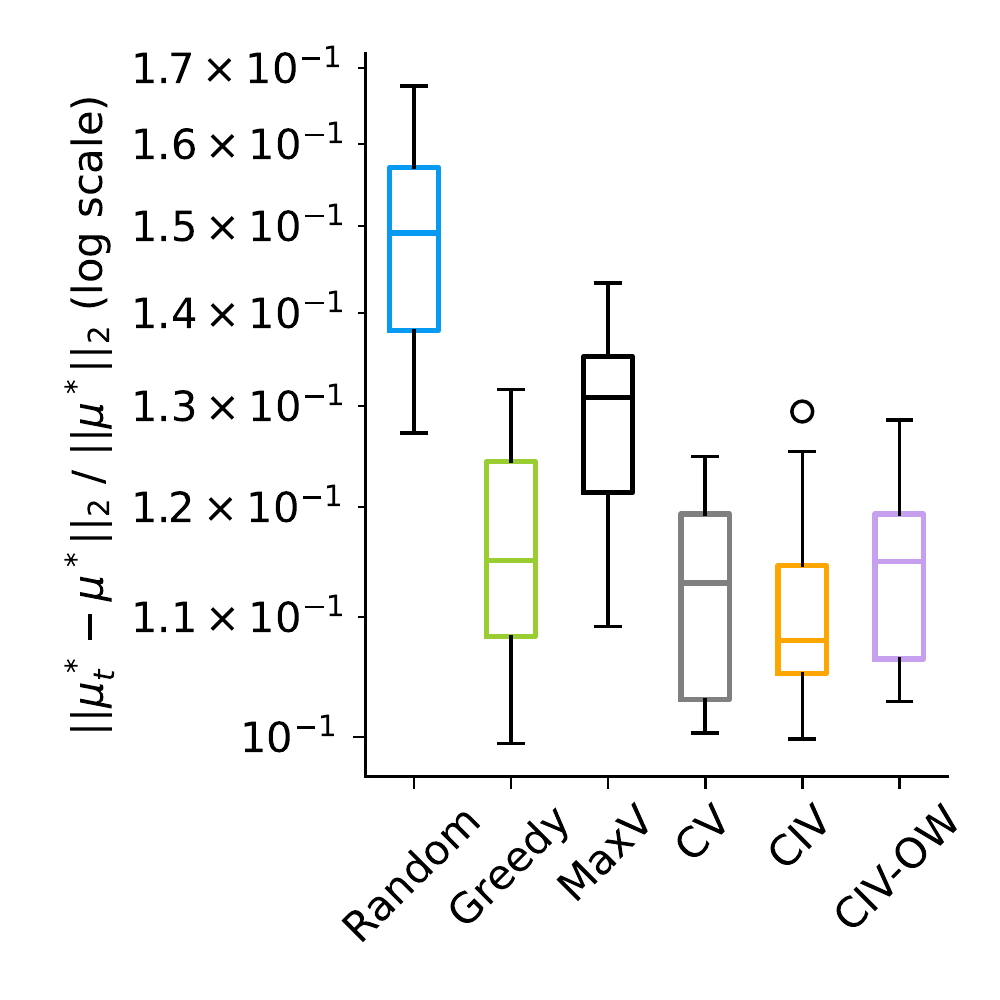}
         \caption{$p=10$}
     \end{subfigure}
    %  \hfill
     \begin{subfigure}[b]{0.27\textwidth}
         \centering
         \includegraphics[width=.8\textwidth]{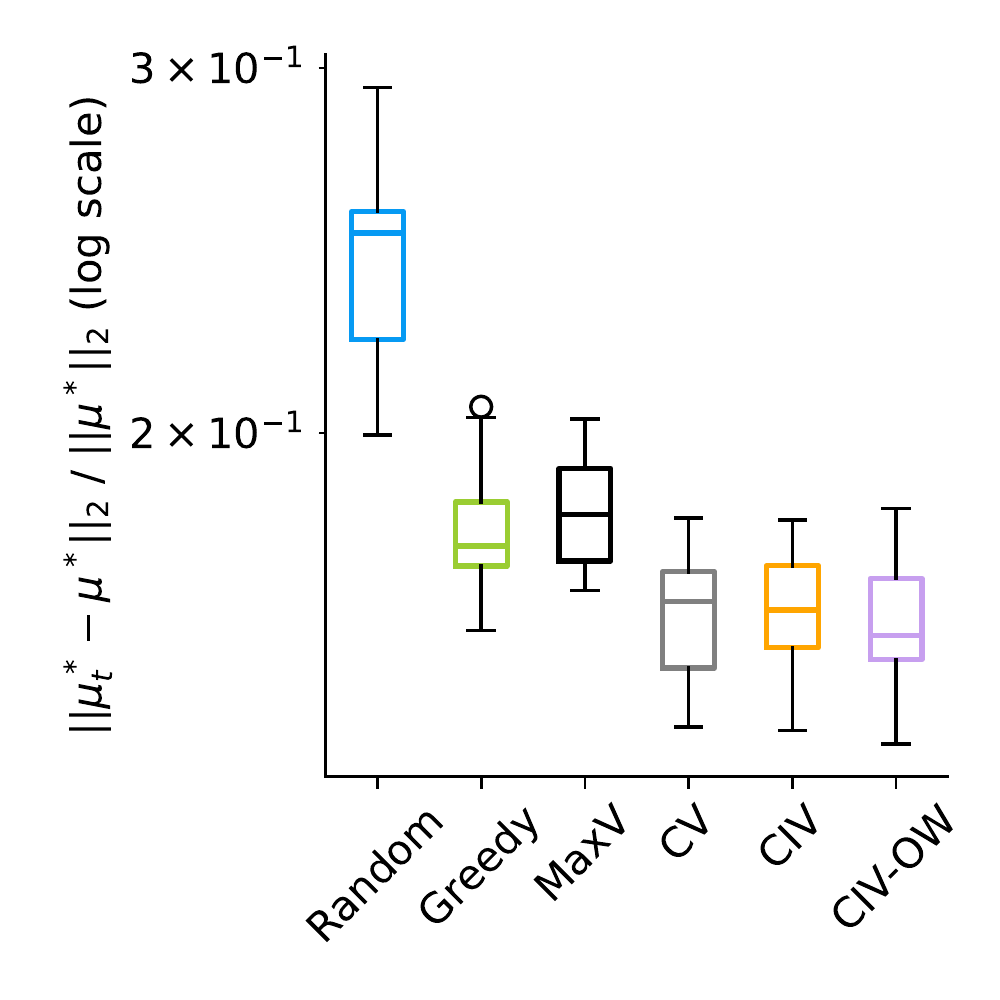}
         \caption{$p=20$}
     \end{subfigure}
    % \hfill
    \begin{subfigure}[b]{0.27\textwidth}
         \centering
         \includegraphics[width=.8\textwidth]{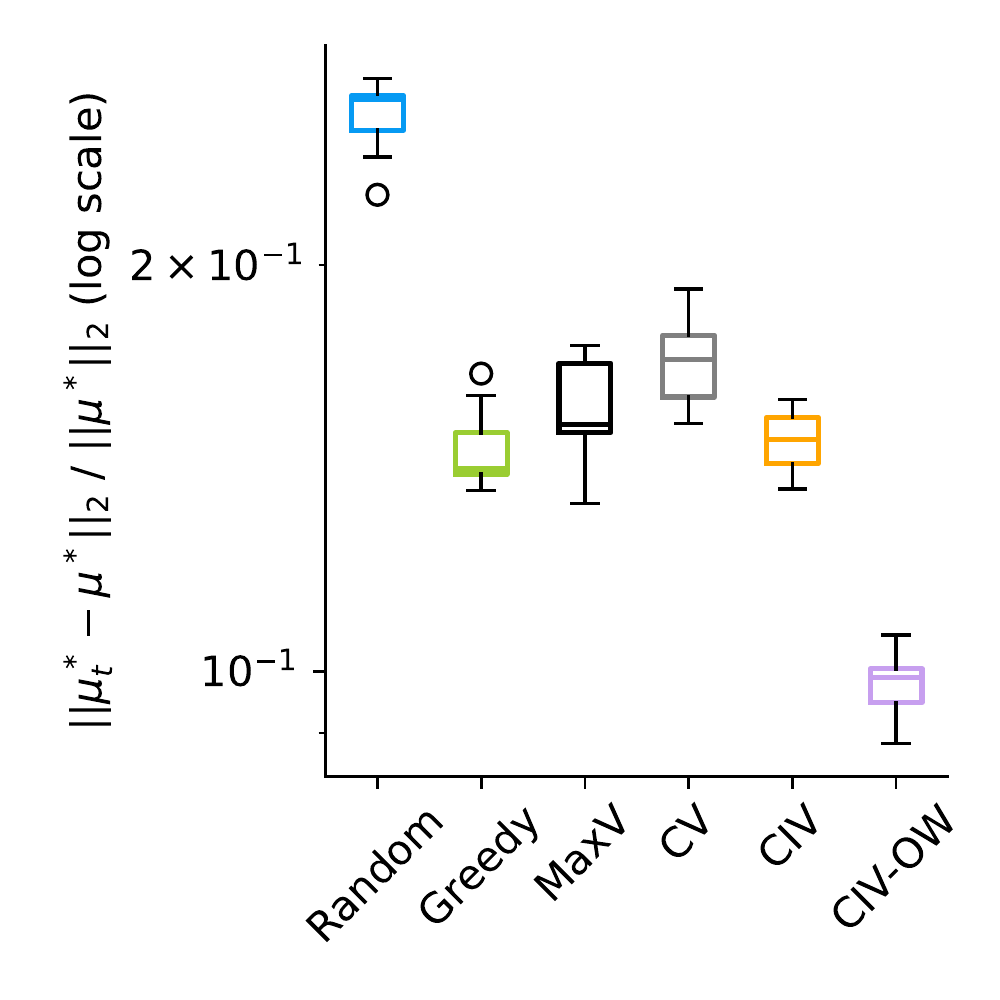}
         \caption{$p=30$}
     \end{subfigure}
     \\
     \begin{subfigure}[b]{0.27\textwidth}
         \centering
         \includegraphics[width=\textwidth]{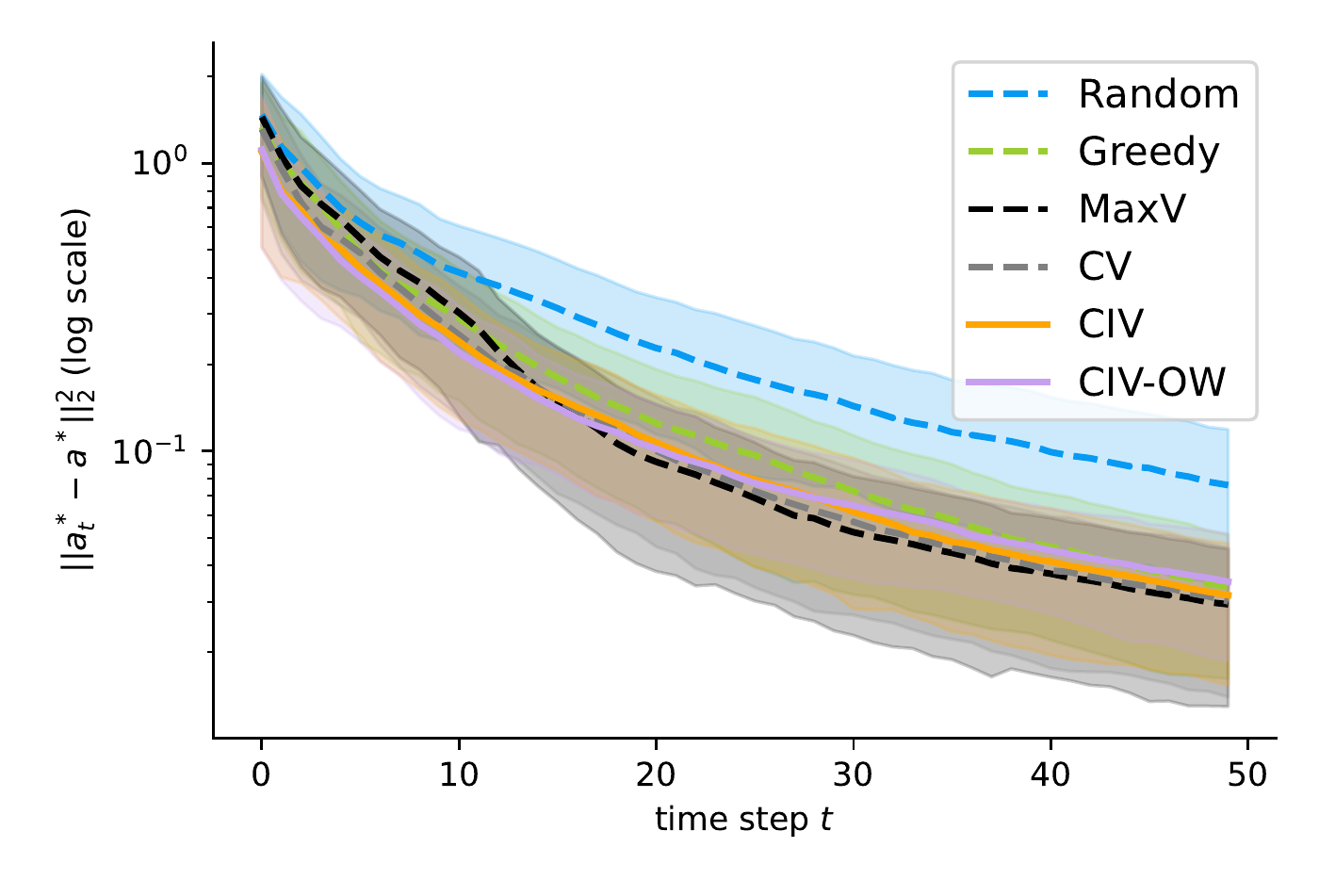}
         \caption{$p=10$}
     \end{subfigure}
    %  \hfill
     \begin{subfigure}[b]{0.27\textwidth}
         \centering
         \includegraphics[width=\textwidth]{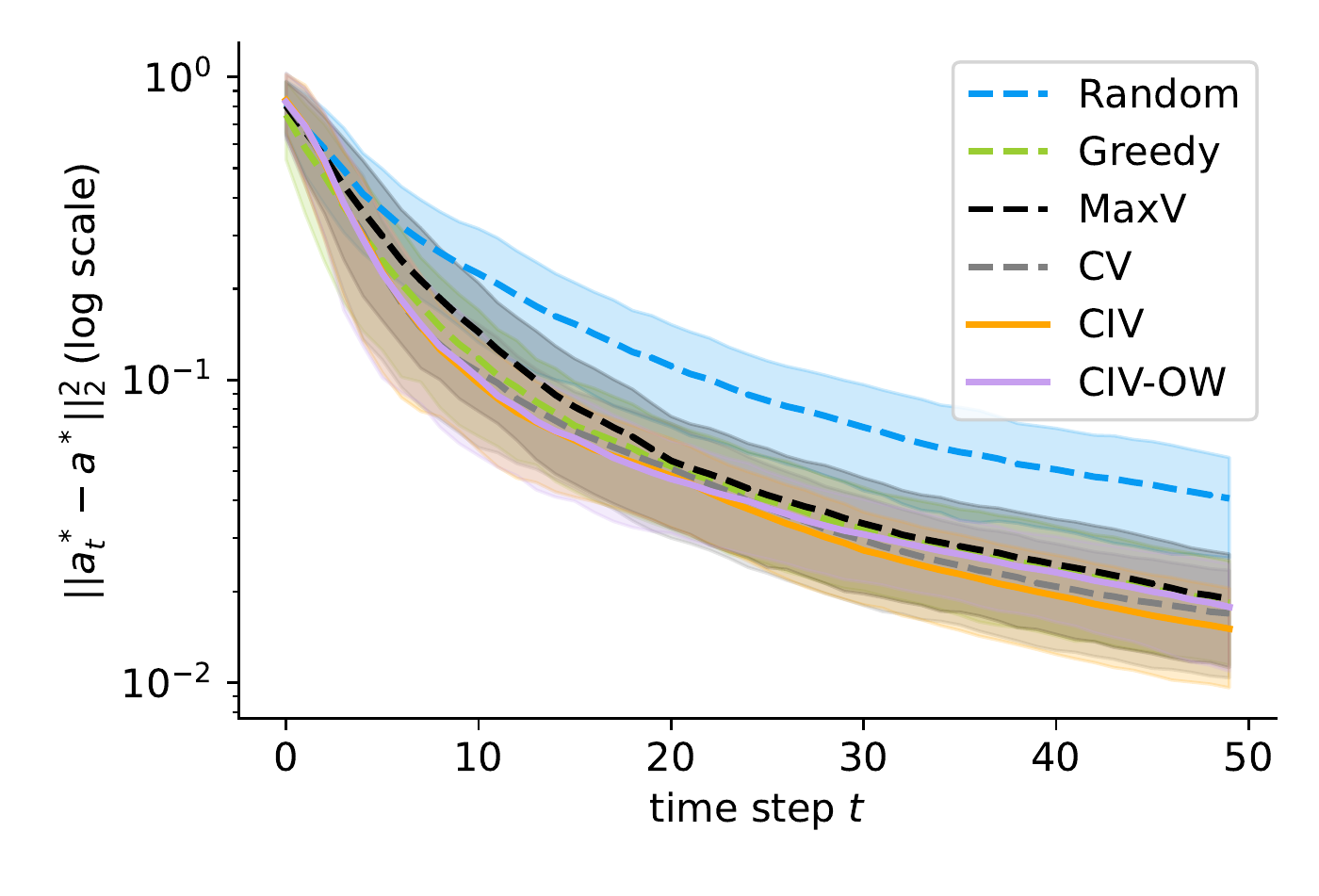}
         \caption{$p=20$}
     \end{subfigure}
    % \hfill
    \begin{subfigure}[b]{0.27\textwidth}
         \centering
         \includegraphics[width=\textwidth]{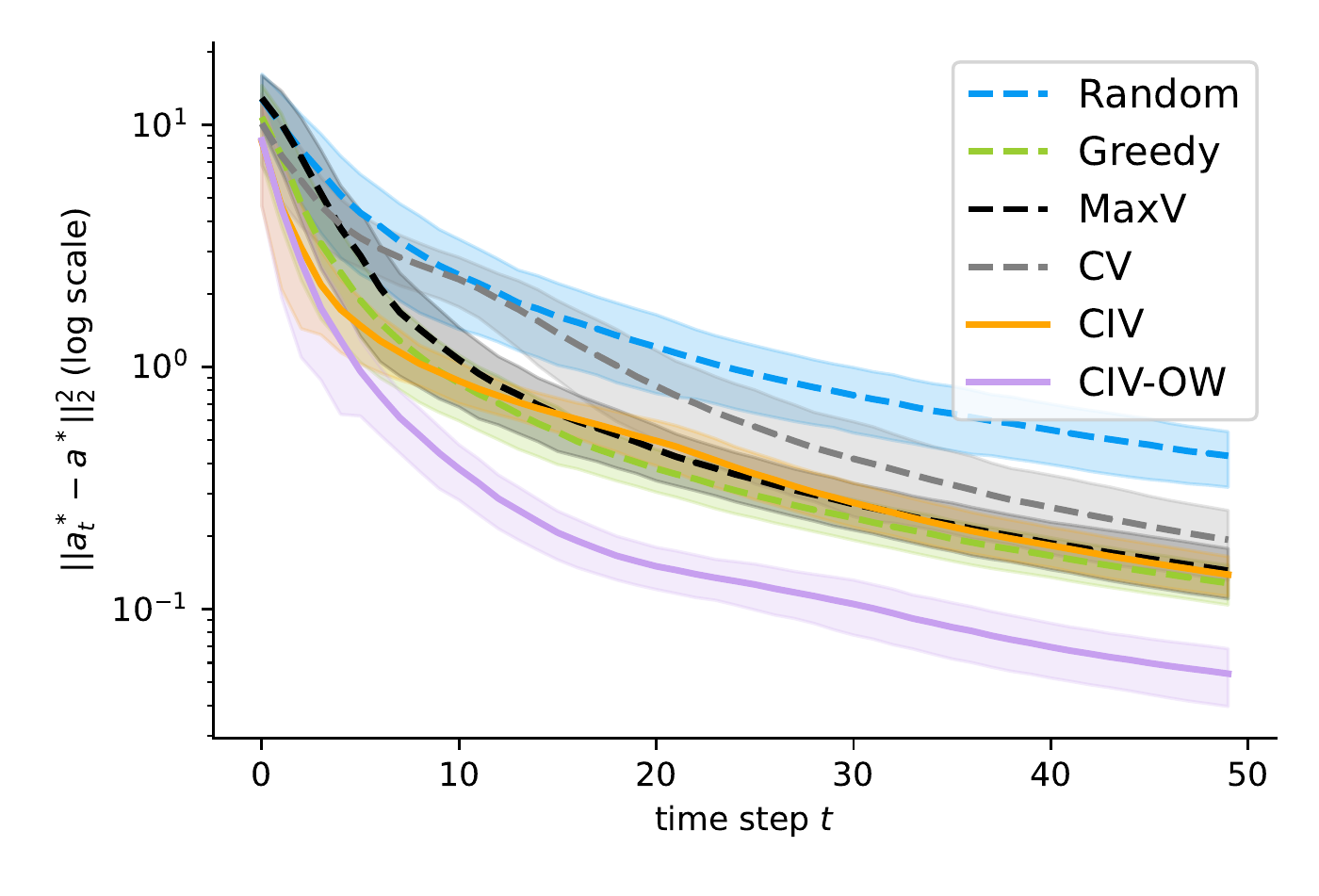}
         \caption{$p=30$}
     \end{subfigure}
    \caption{\rec{\textbf{Comparison of different acquisition functions in a simulation study where the underlying causal graph is the complete graph, half of the nodes are selected at random as intervention targets, and we vary the number of nodes $p$.}} Each plot corresponds to an average of $10$ instances and each method is run 20 times and averaged. 
    {Top row: Relative distance} between the target mean $\bmu^*$ and the best approximation $\bmu^*_t$ (defined in Fig.~\ref{fig:5}A in the main text) up to time step $t$. Lines denote the mean over 10 instances; the shading corresponds to one standard deviation. {Middle row: Relative distance} statistics of each method \rec{averaged over $10$ instances} at the last time step ($t=50$). {Bottom row: Squared distance \rec{presented as mean value +/- SEM} between the optimal intervention $\ba^*$ and the best approximation $\ba_t^*$ that is used to obtain $\bmu^*_t$ up to time step $t$.}   
    }
    \label{fig:s4}
\end{figure}

\subsection{Varying Graph Sizes}

In the following set of experiments, we consider complete graphs with a varying number of nodes $p$. 
For each DAG, we fix the number of intervention targets in $\ba^*$ to be half of all the nodes (e.g., a $20$-node DAG will be paired with an optimal intervention satisfying $\|\ba^*\|_0 = 10$). 
Such a setup ensures that the only difference between each experiment is the number of nodes.

We consider two cases in this set of experiments. 
Extended Data Fig.~\ref{fig:s4} shows the first case where the targets of $\ba^*$ are randomly sampled from the set $[p]$; Extended Data Fig.~\ref{fig:s5} removes the randomness in the selection of the intervention targets by picking the targets to be the most downstream $\nicefrac{p}{2}$ nodes.
In both cases, we observe that the five active methods outperform the passive baseline (i.e., Random), and that the three methods that consider the variance of the estimated optimal intervention (i.e., CV, CIV, and CIV-OW) generally outperform the other methods.
Importantly, the benefit of using active methods is magnified for larger graphs, and
%
% \todo{CBS: What do you mean by ``account for the optimal intervention?" Doesn't the greedy baseline use the target mean? And we can't really take the optimal intervention into account since we don't know it, just the target mean}
%
the two proposed approaches, CIV and CIV-OW, are consistently among the best-performing methods in all experiments. 
This observation is even clearer in the setting where the randomness in selecting the intervention targets is removed; see Extended Data Fig.~\ref{fig:s5}.

\begin{figure}[!t]
     \centering
     \begin{subfigure}[b]{0.27\textwidth}
         \centering
         \includegraphics[width=\textwidth]{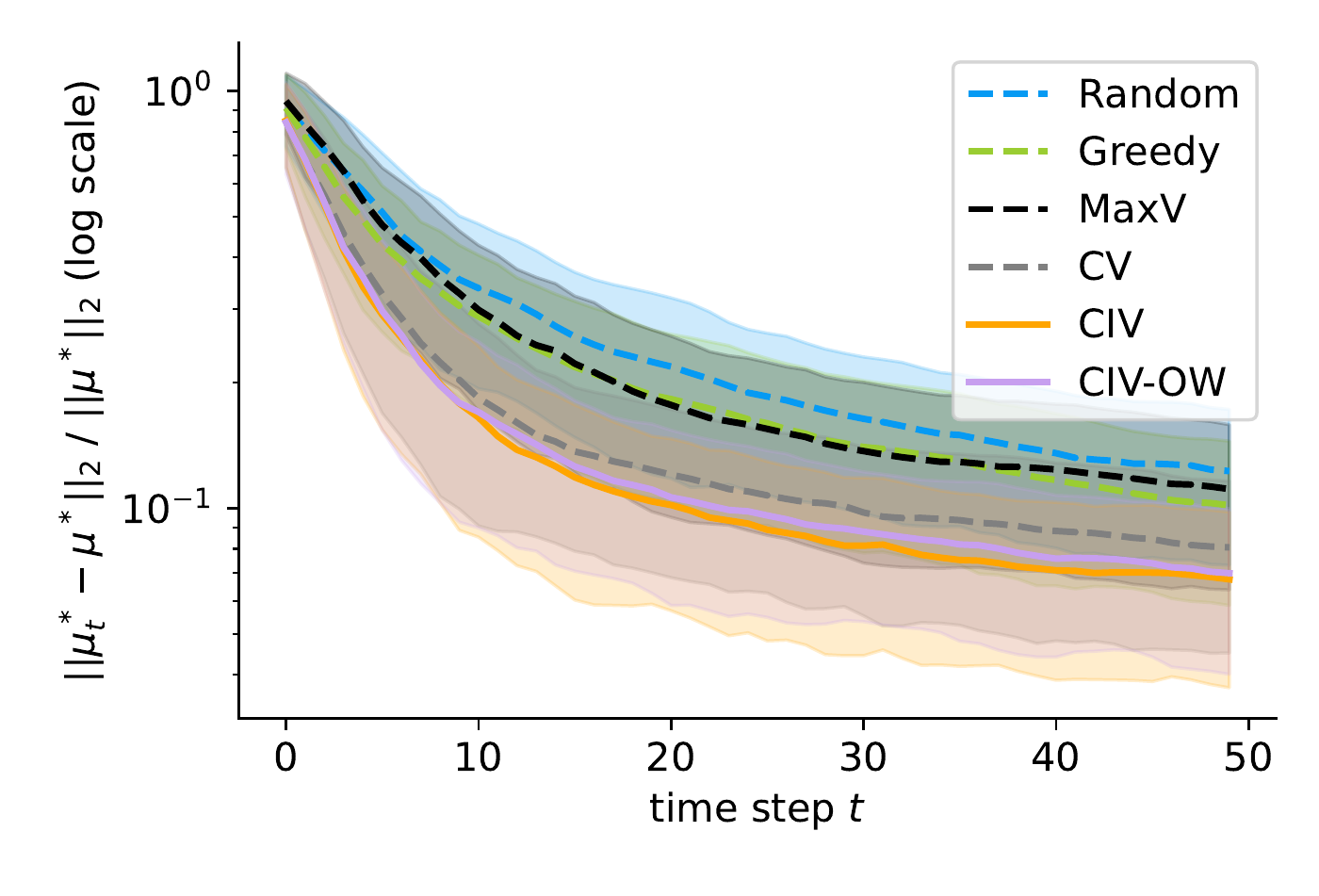}
         \caption{$p=10$}
     \end{subfigure}
    %  \hfill
     \begin{subfigure}[b]{0.27\textwidth}
         \centering
         \includegraphics[width=\textwidth]{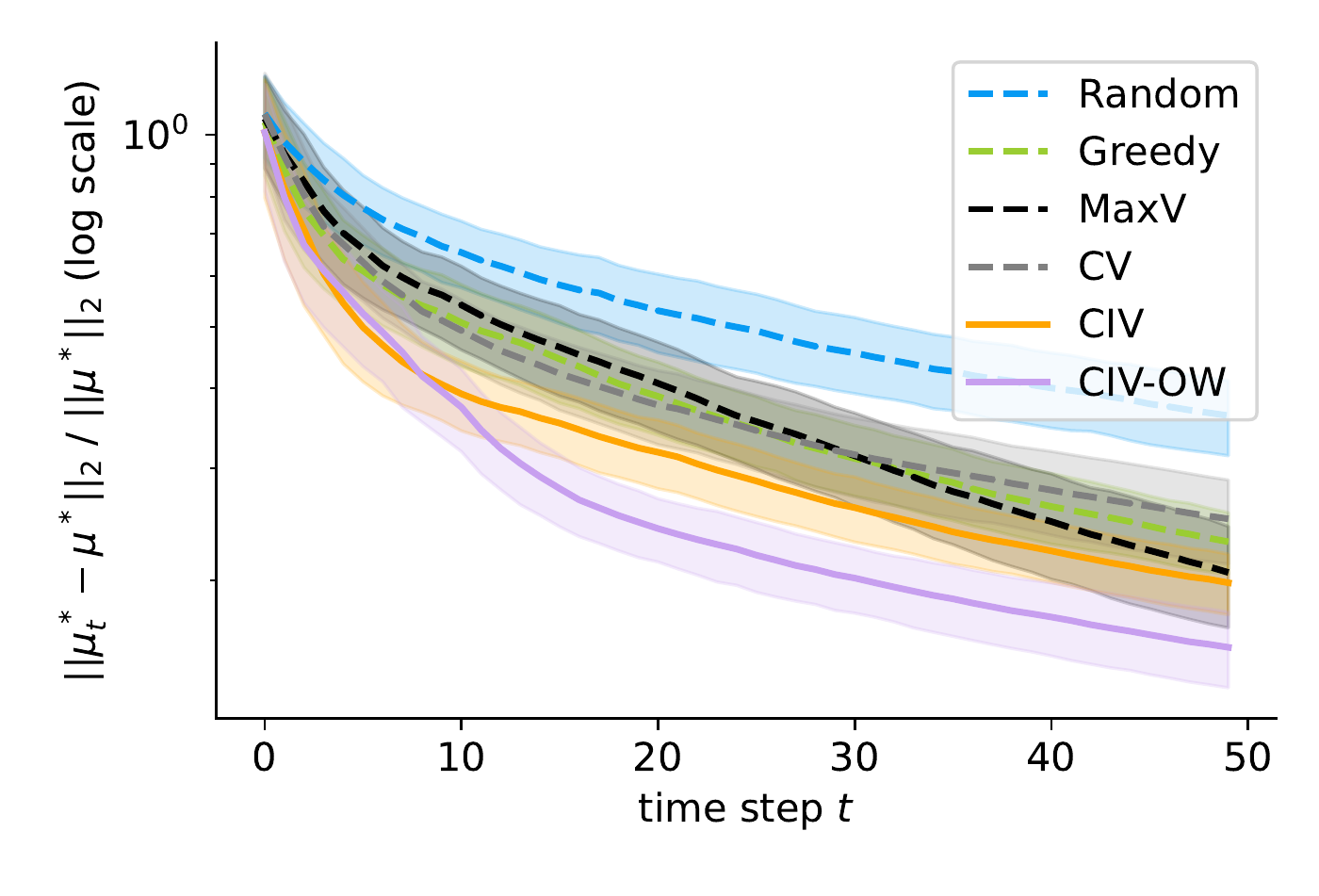}
         \caption{$p=20$}
     \end{subfigure}
    % \hfill
    \begin{subfigure}[b]{0.27\textwidth}
         \centering
         \includegraphics[width=\textwidth]{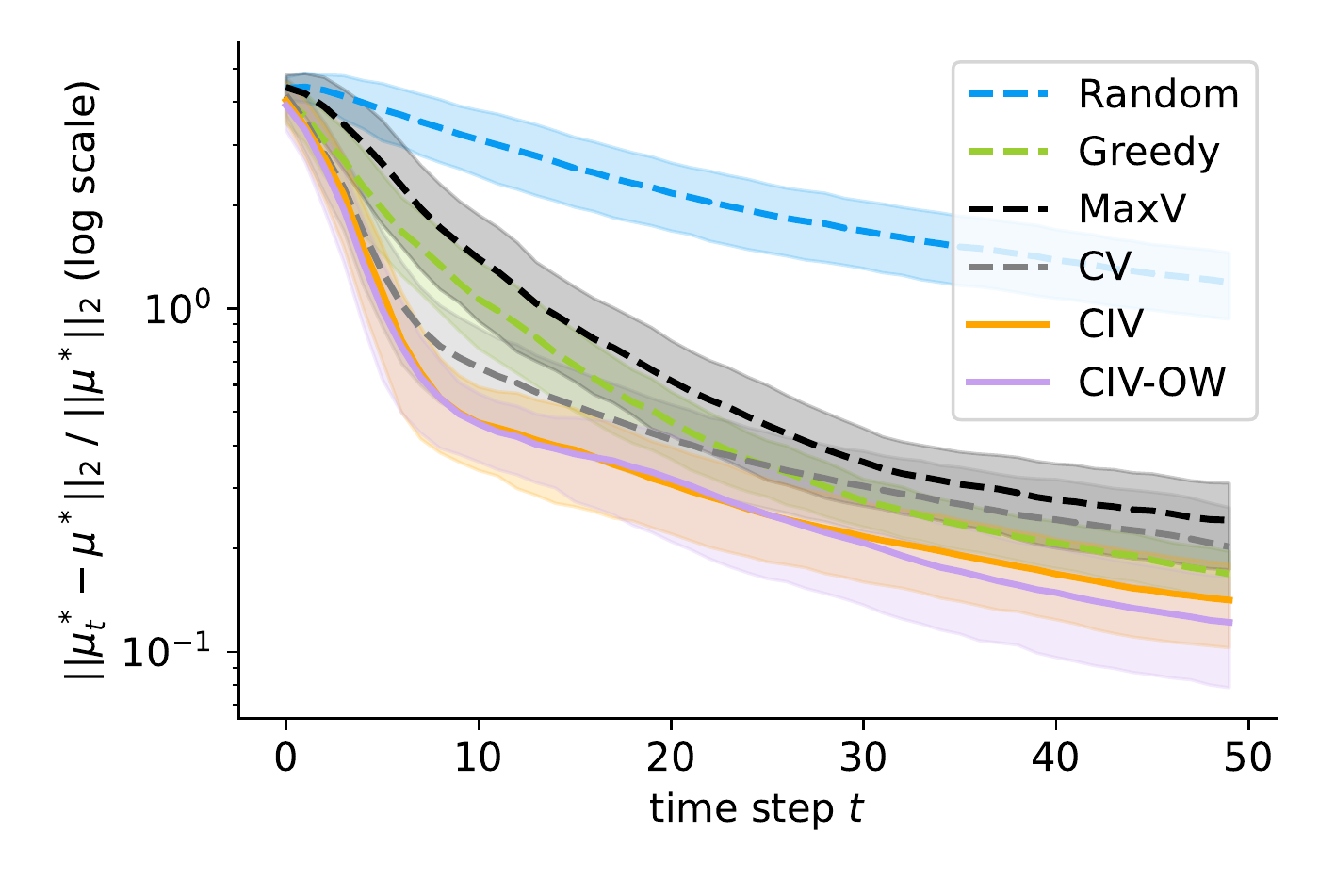}
         \caption{$p=30$}
     \end{subfigure}
     \\
     \begin{subfigure}[b]{0.27\textwidth}
         \centering
         \includegraphics[width=.8\textwidth]{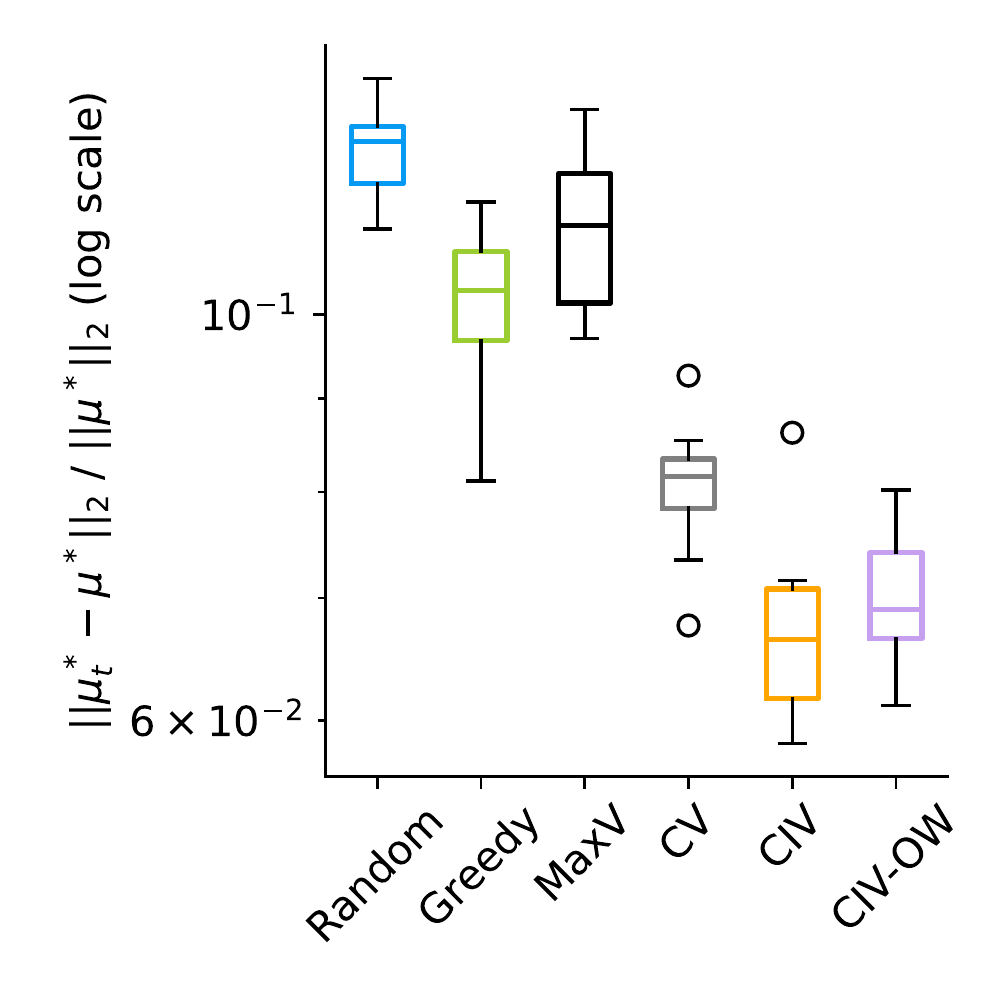}
         \caption{$p=10$}
     \end{subfigure}
    %  \hfill
     \begin{subfigure}[b]{0.27\textwidth}
         \centering
         \includegraphics[width=.8\textwidth]{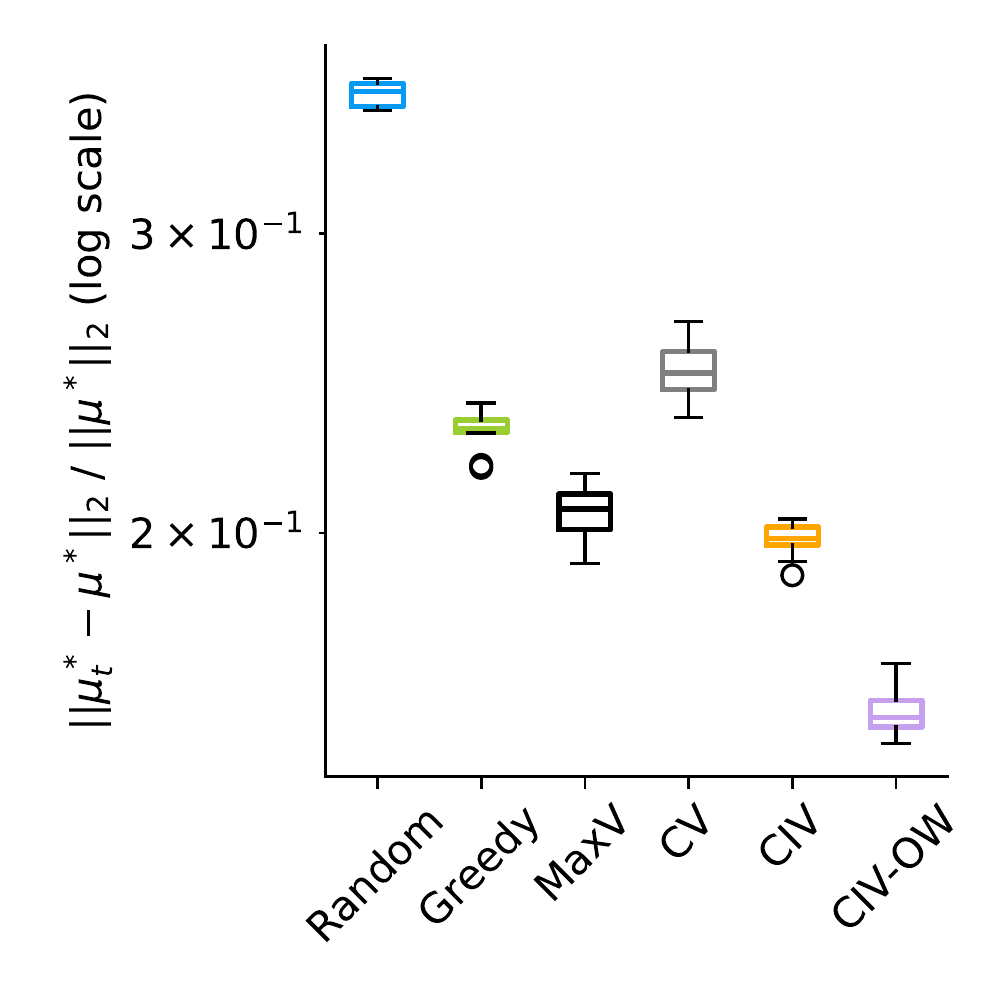}
         \caption{$p=20$}
     \end{subfigure}
    % \hfill
    \begin{subfigure}[b]{0.27\textwidth}
         \centering
         \includegraphics[width=.8\textwidth]{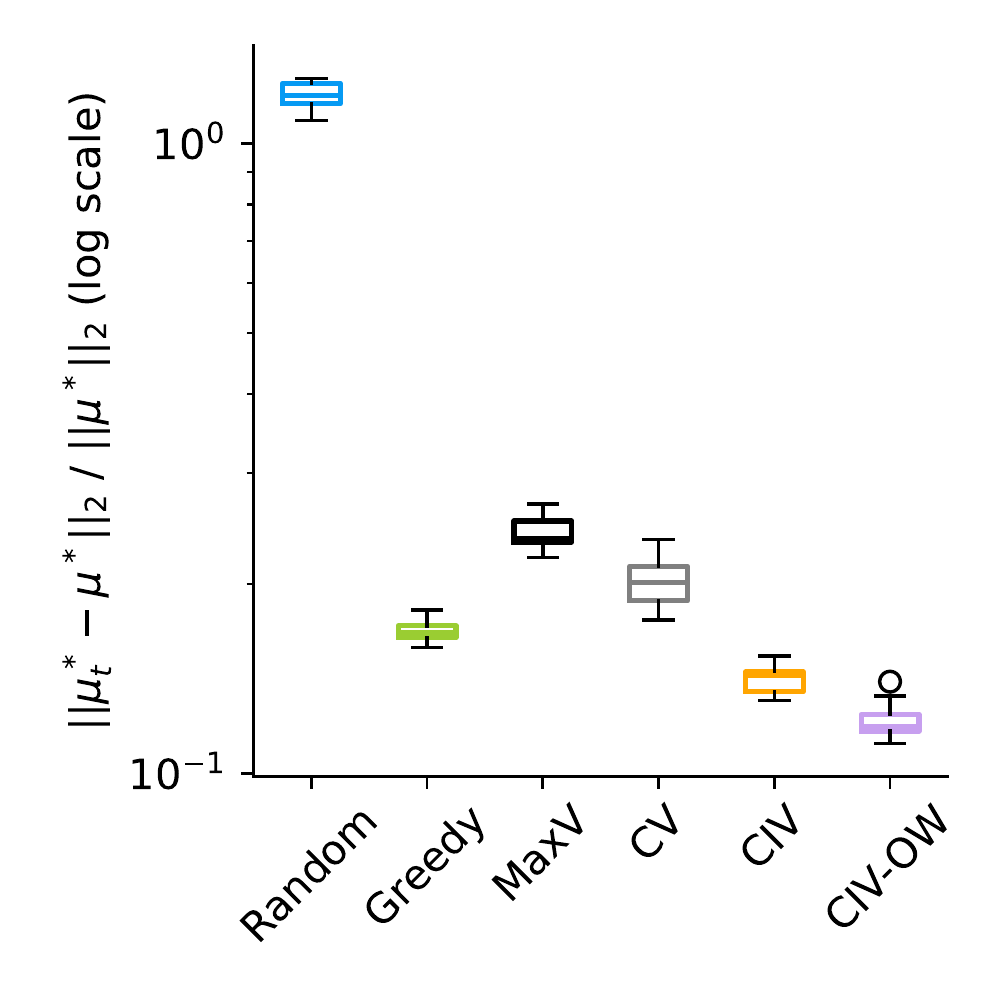}
         \caption{$p=30$}
     \end{subfigure}
    \\
    \begin{subfigure}[b]{0.27\textwidth}
         \centering
         \includegraphics[width=\textwidth]{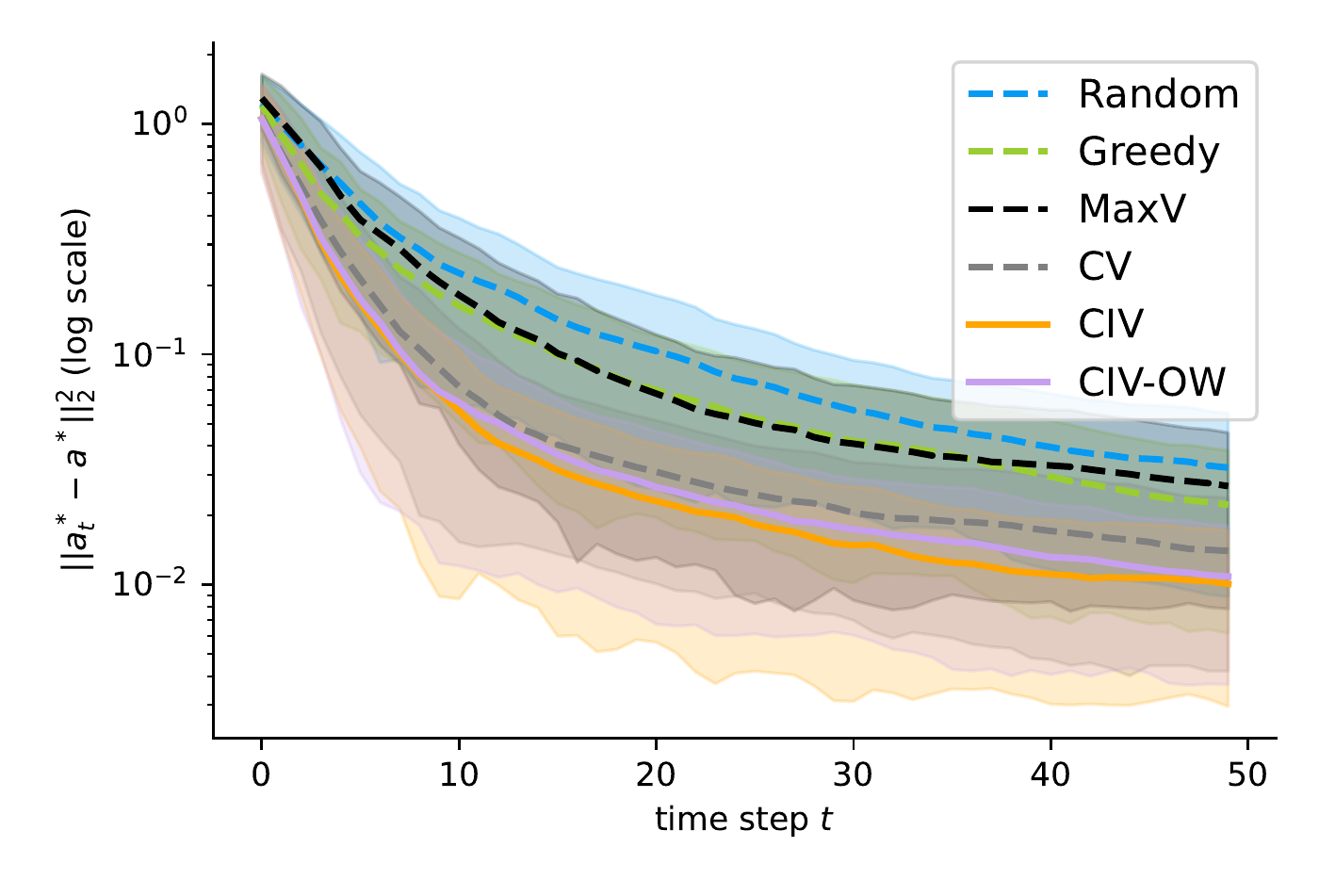}
         \caption{$p=10$}
     \end{subfigure}
    %  \hfill
     \begin{subfigure}[b]{0.27\textwidth}
         \centering
         \includegraphics[width=\textwidth]{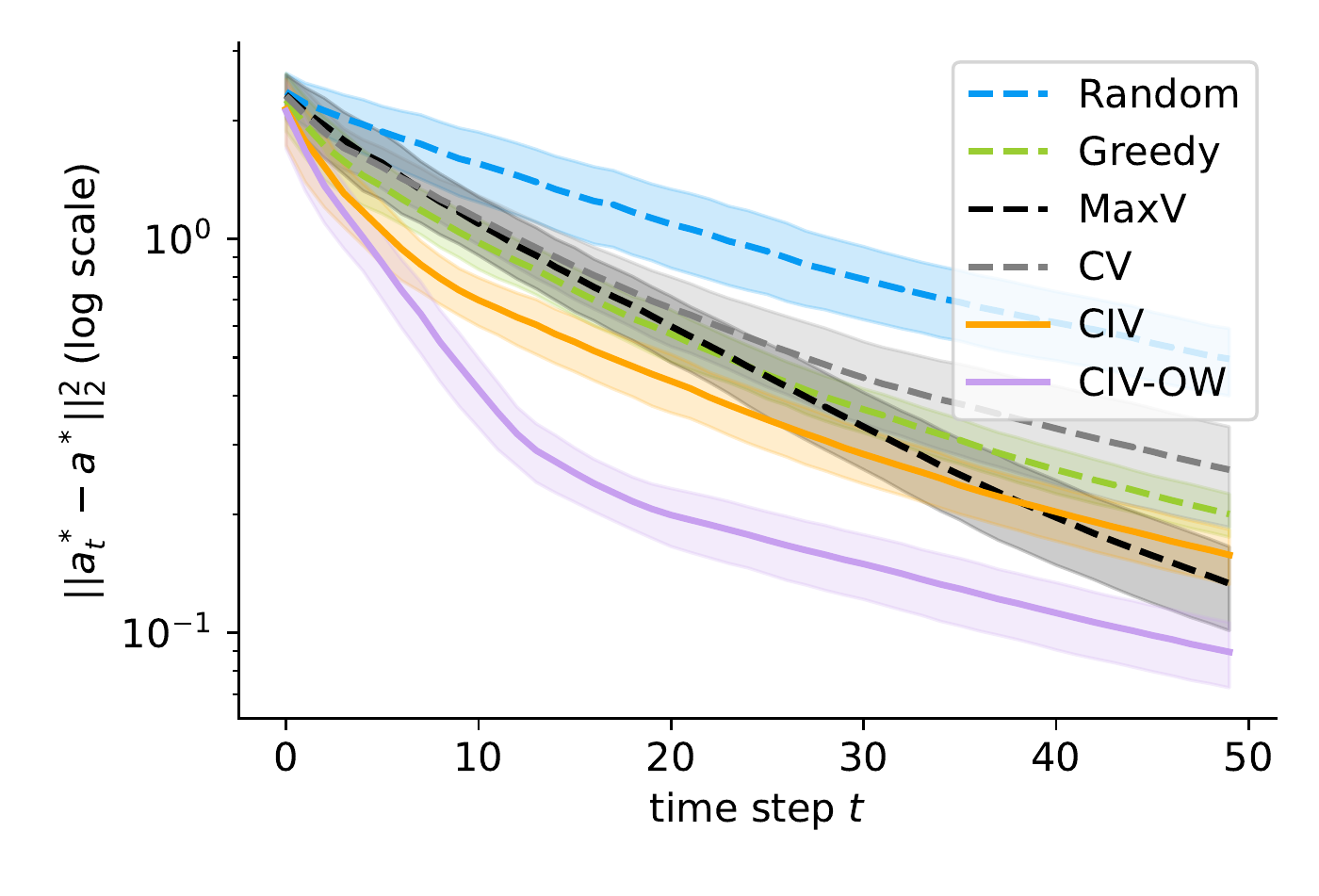}
         \caption{$p=20$}
     \end{subfigure}
    % \hfill
    \begin{subfigure}[b]{0.27\textwidth}
         \centering
         \includegraphics[width=\textwidth]{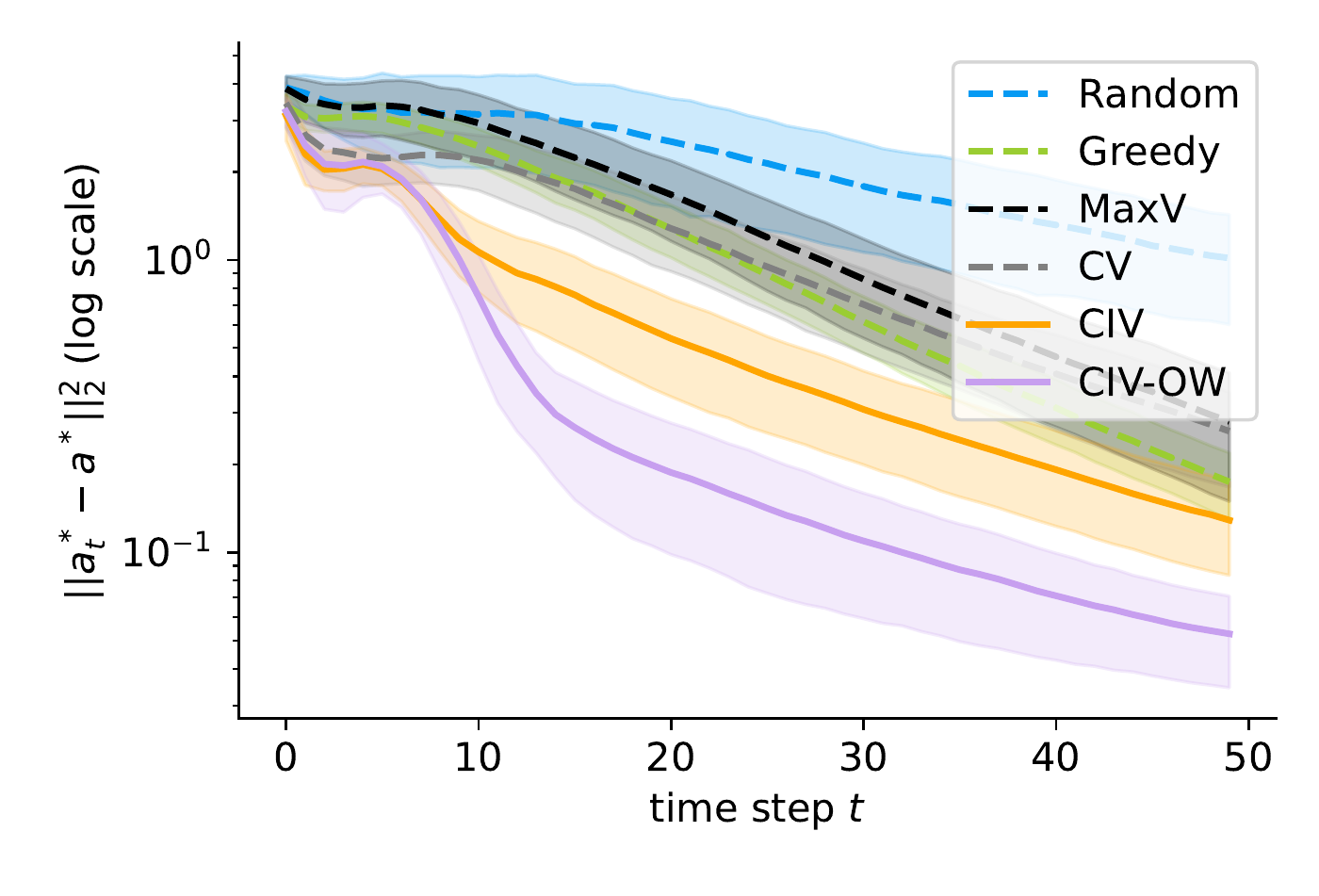}
         \caption{$p=30$}
     \end{subfigure}
    \caption{\rec{\textbf{Comparison of different acquisition functions in a simulation study where the underlying causal graph is the complete graph, the most downstream half of the nodes are fixed as intervention targets, and we vary the number of nodes $p$.}} Each plot corresponds to an average of $10$ instances and each method is run 20 times and averaged.
    {Top row: Relative distance} between the target mean $\bmu^*$ and the best approximation $\bmu^*_t$ (defined in Fig.~\ref{fig:5}A in the main text) up to time step $t$. Lines denote the mean over 10 instances; the shading corresponds to one standard deviation. {Middle row: Relative distance} statistic of each method \rec{averaged over $10$ instances} at the last time step ($t=50$).  {Bottom row: Squared distance \rec{presented as mean value +/- SEM} between the optimal intervention $\ba^*$ and the best approximation $\ba_t^*$ that is used to obtain $\bmu^*_t$ up to time step $t$.}
    }\label{fig:s5}
\end{figure}

\subsection{Varying Number of intervention targets} 

We next compare the different methods when varying the number of intervention targets in $\ba^*$. 
For these experiments, we fix the DAG to be a $30$-node complete graph. 
To examine the effect of $\|\ba^*\|_0$, we fix the intervention targets to be the most downstream $\|\ba^*\|_0$ nodes, except for the case when $\|\ba^*\|_0=1$. 
For $\|\ba^*\|_0=1$, as discussed in Remark~\ref{rmk:4}, setting the intervention target to be a sink node degenerates to a trivial case where one can easily infer $\ba^*$ from $\bmu^*$.
Therefore, in this case, we set the intervention target to be the most downstream node that is not a sink node. 
Extended Data Fig.~\ref{fig:s6} shows the experimental results for $\|\ba^*\|_0$ ranging from $1$ to $15$. 
We observe that the proposed methods, CIV and CIV-OW, consistently outperform other baselines, with CIV-OW outperforming CIV in higher dimensions. 
%
% Importantly, this trend becomes more evident as the number of intervention targets grows.

\begin{figure}[th]
     \centering
     \begin{subfigure}[b]{0.24\textwidth}
         \centering
         \includegraphics[width=\textwidth]{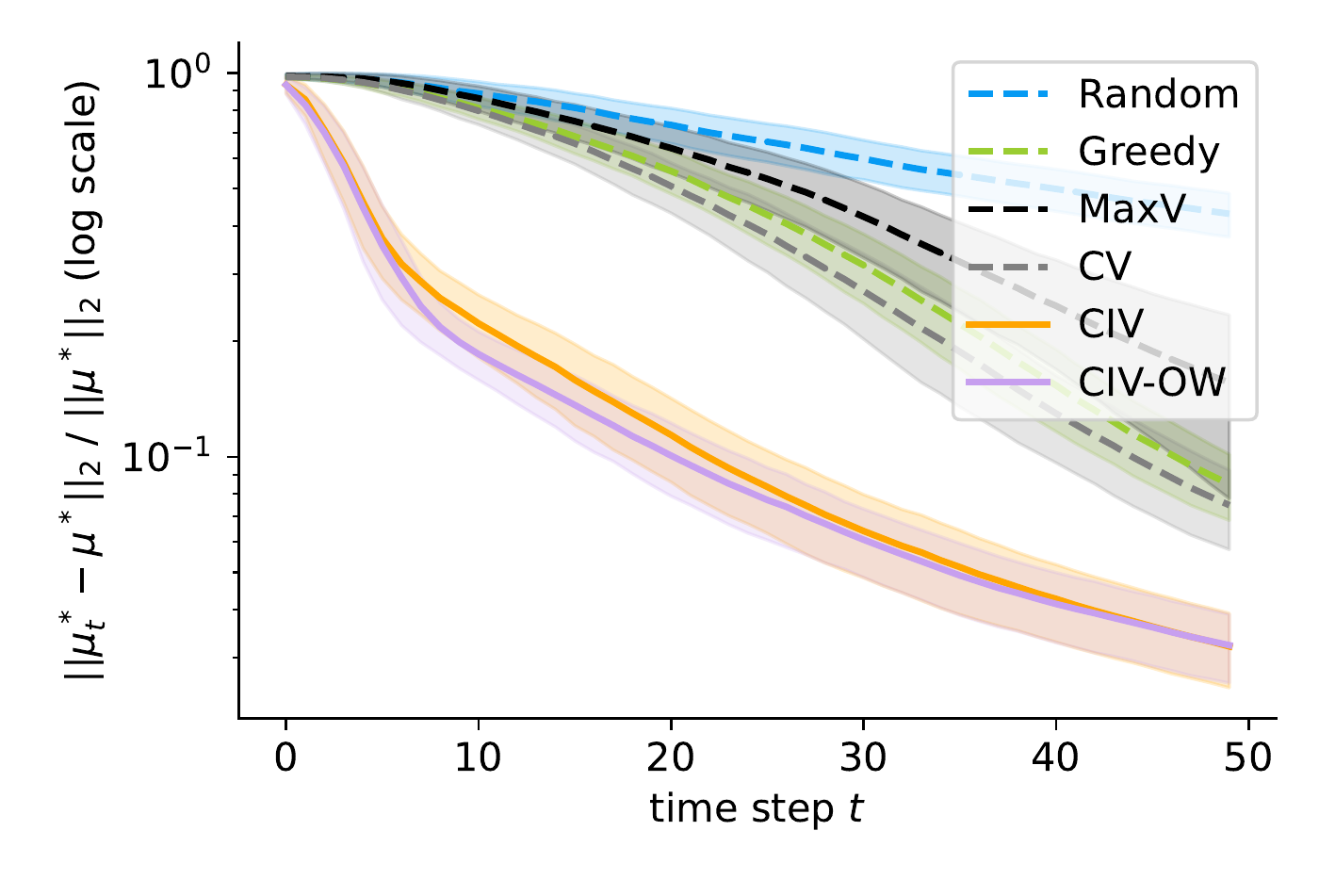}
         \caption{$\|\ba^*\|_0=1$}
     \end{subfigure}
     \begin{subfigure}[b]{0.24\textwidth}
         \centering
         \includegraphics[width=\textwidth]{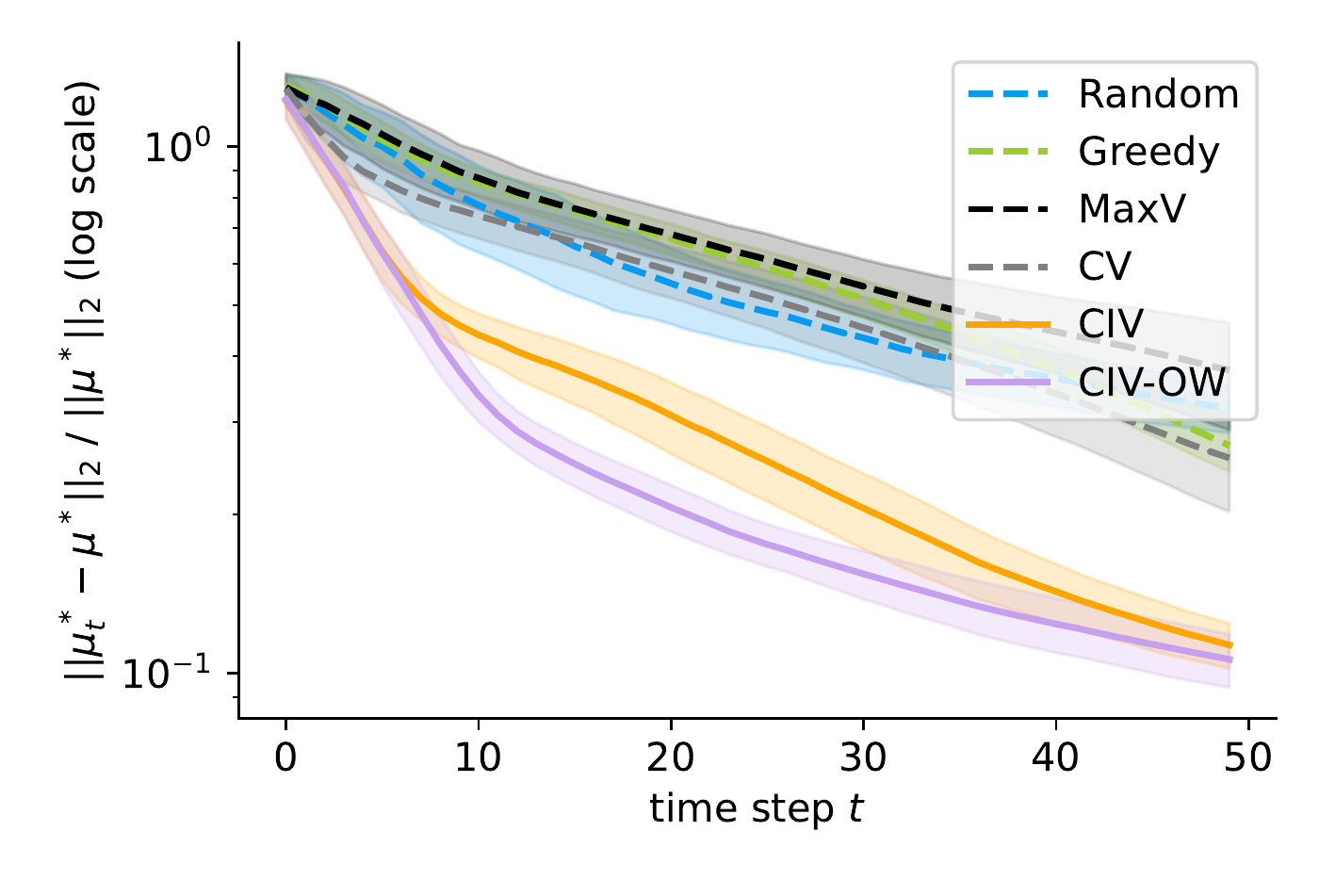}
         \caption{$\|\ba^*\|_0=5$}
     \end{subfigure}
    %  \hfill
     \begin{subfigure}[b]{0.24\textwidth}
         \centering
         \includegraphics[width=\textwidth]{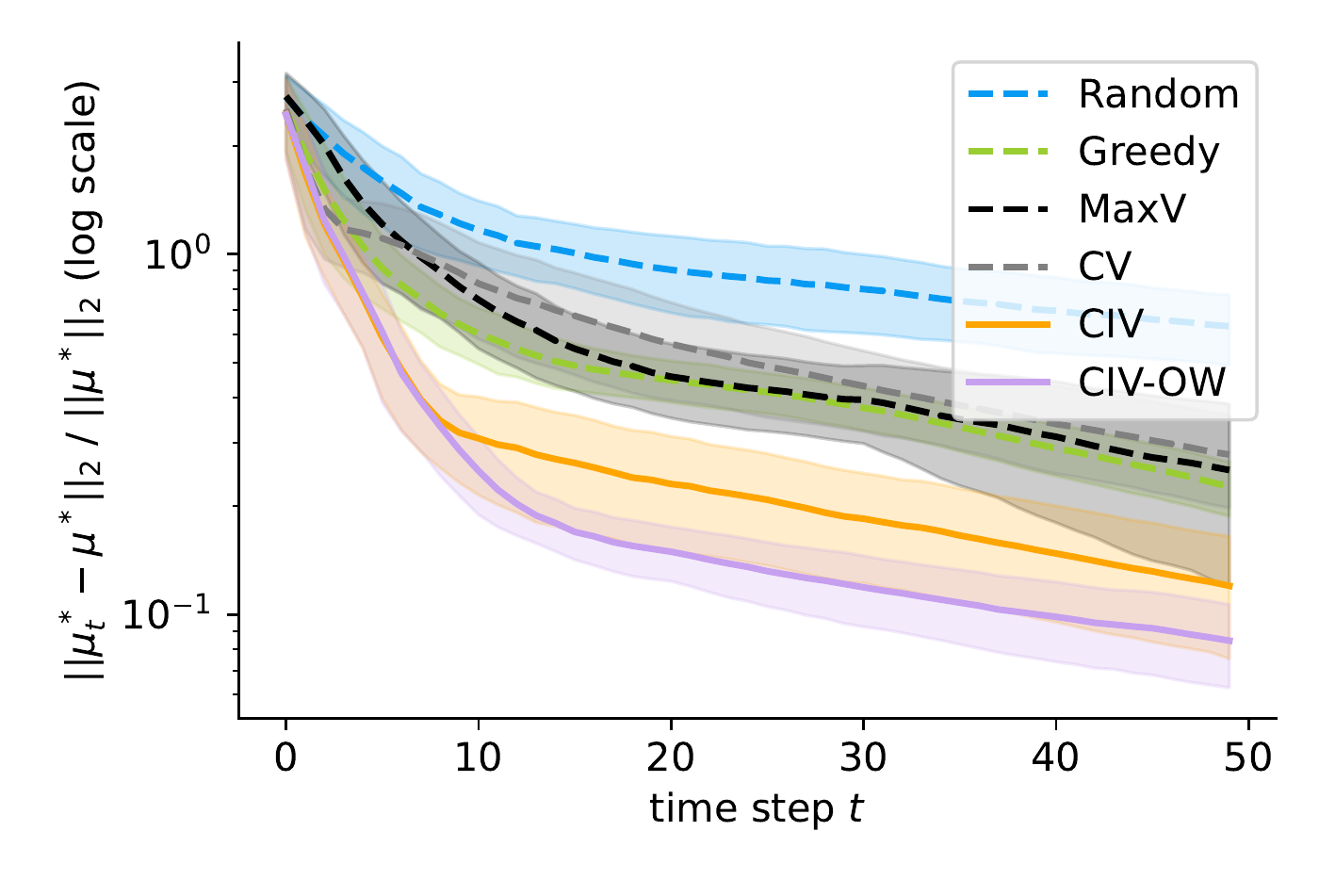}
         \caption{$\|\ba^*\|_0=10$}
     \end{subfigure}
    % \hfill
    \begin{subfigure}[b]{0.24\textwidth}
         \centering
         \includegraphics[width=\textwidth]{final-figs/appendix/relative-rmse_complete-30-15.pdf}
         \caption{$\|\ba^*\|_0=15$}
     \end{subfigure}
     \\
     \begin{subfigure}[b]{0.24\textwidth}
         \centering
         \includegraphics[width=.8\textwidth]{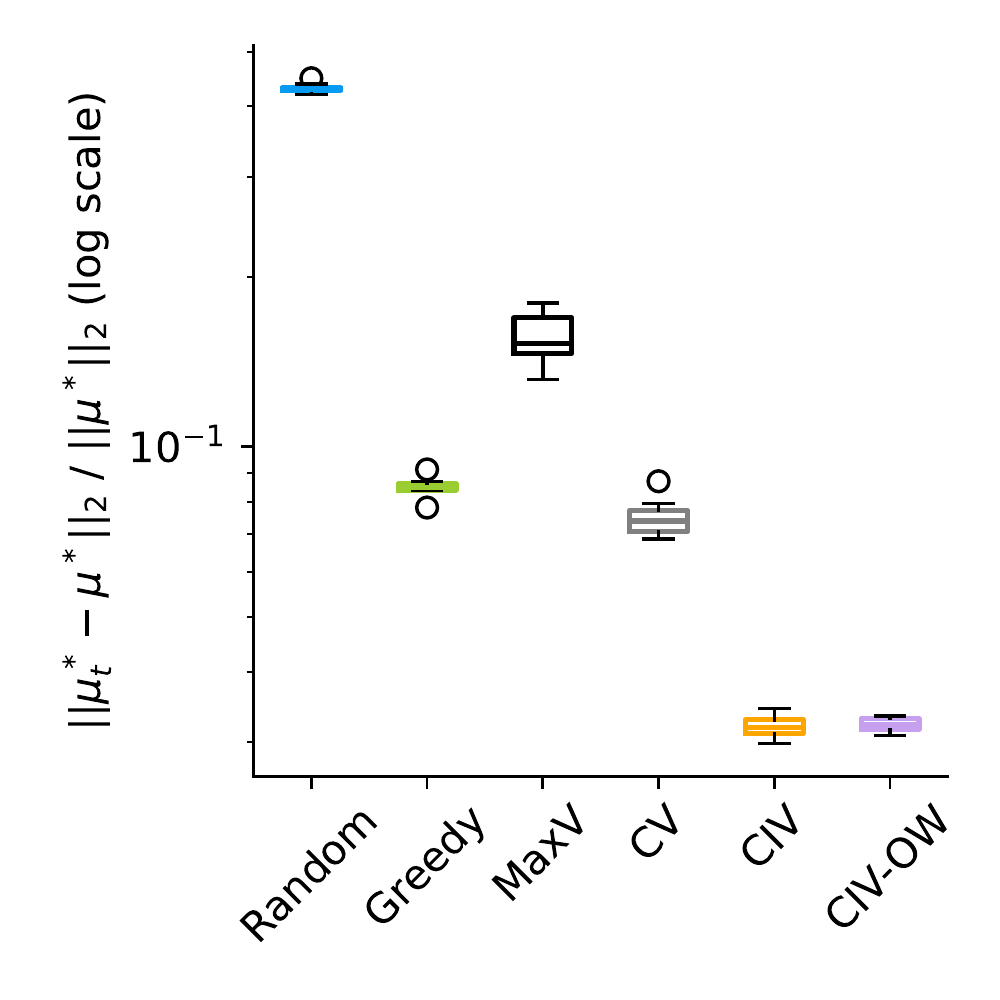}
         \caption{$\|\ba^*\|_0=1$}
     \end{subfigure}
    %  \hfill
     \begin{subfigure}[b]{0.24\textwidth}
         \centering
         \includegraphics[width=.8\textwidth]{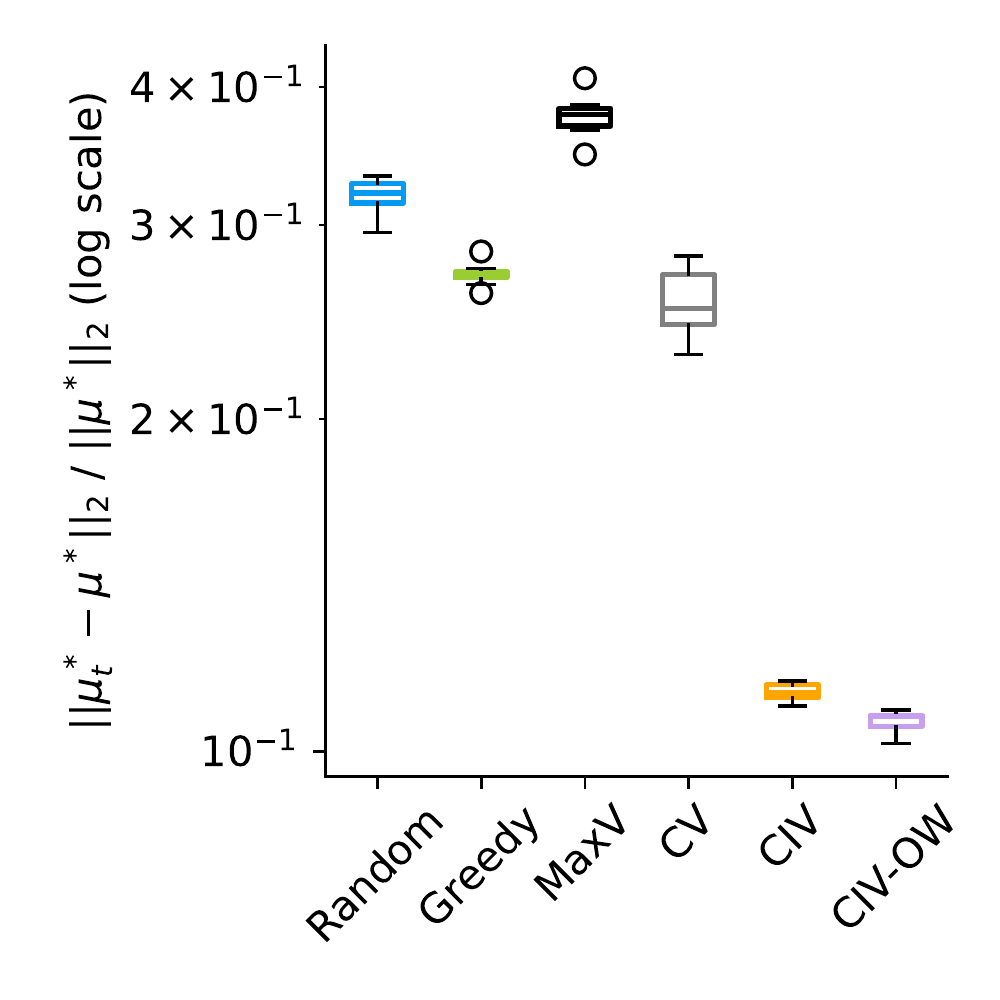}
         \caption{$\|\ba^*\|_0=5$}
     \end{subfigure}
    %  \hfill
     \begin{subfigure}[b]{0.24\textwidth}
         \centering
         \includegraphics[width=.8\textwidth]{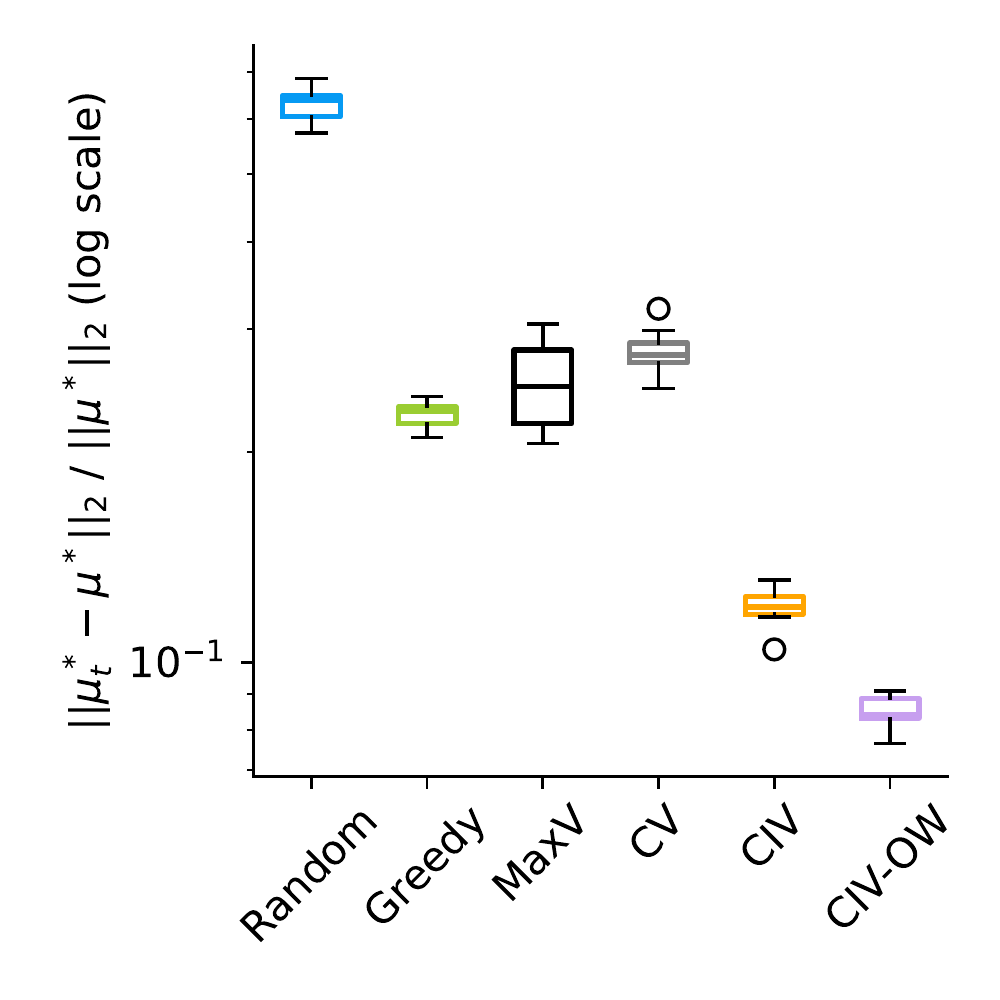}
         \caption{$\|\ba^*\|_0=10$}
     \end{subfigure}
    % \hfill
    \begin{subfigure}[b]{0.24\textwidth}
         \centering
         \includegraphics[width=.8\textwidth]{final-figs/appendix/relative-err-lastround_complete-30-15.pdf}
         \caption{$\|\ba^*\|_0=15$}
     \end{subfigure}
    \\
     \begin{subfigure}[b]{0.24\textwidth}
         \centering
         \includegraphics[width=\textwidth]{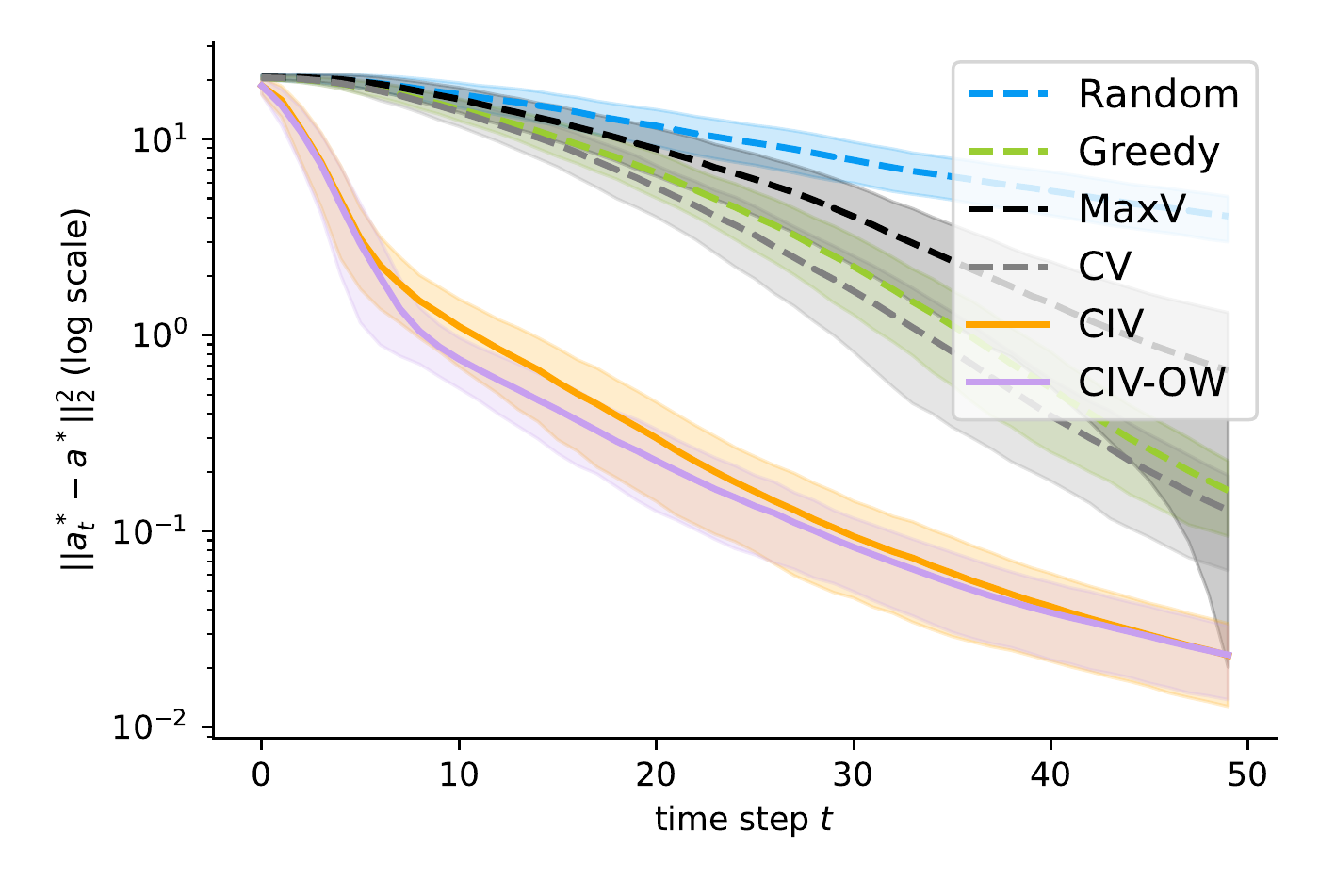}
         \caption{$\|\ba^*\|_0=1$}
     \end{subfigure}
     \begin{subfigure}[b]{0.24\textwidth}
         \centering
         \includegraphics[width=\textwidth]{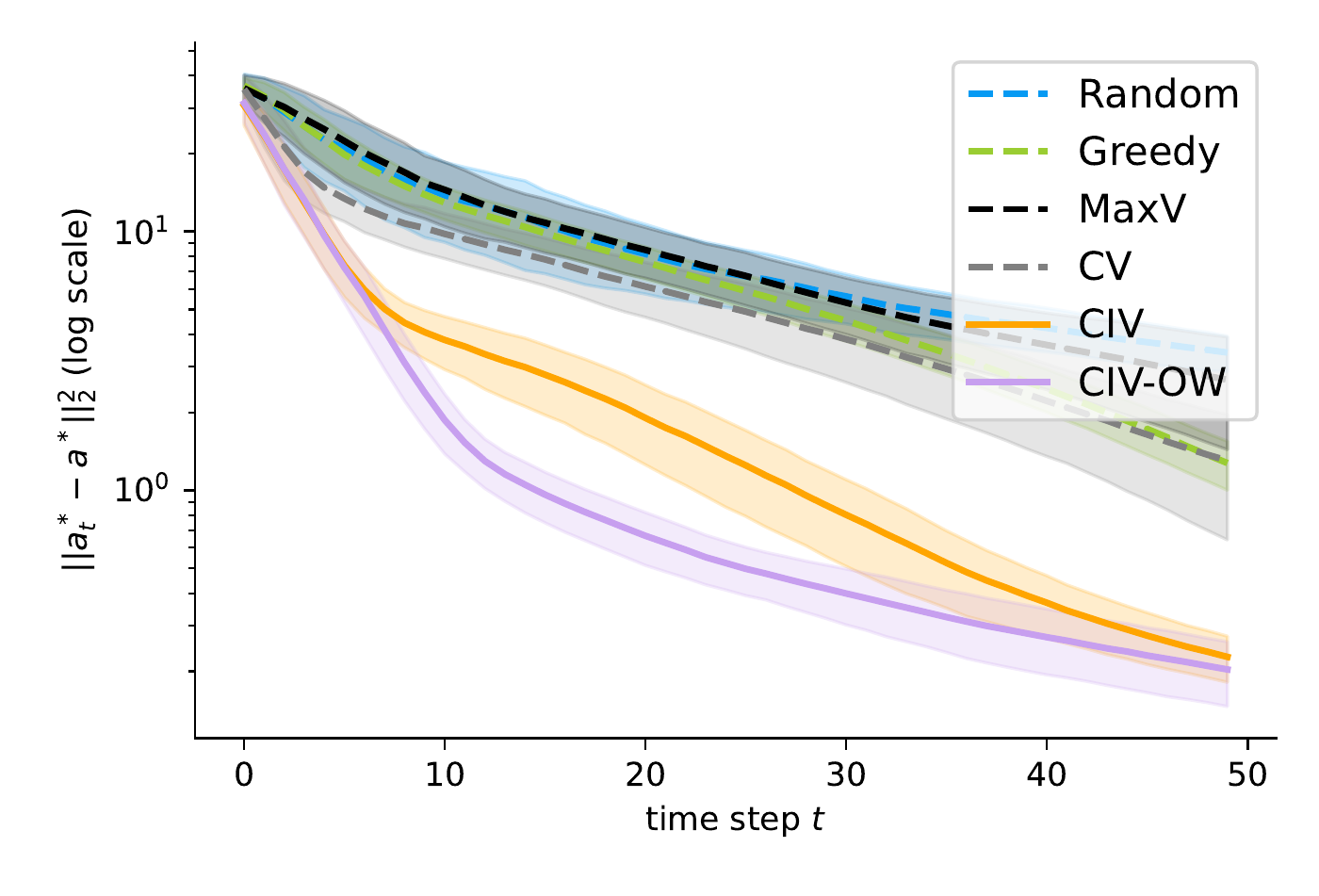}
         \caption{$\|\ba^*\|_0=5$}
     \end{subfigure}
    %  \hfill
     \begin{subfigure}[b]{0.24\textwidth}
         \centering
         \includegraphics[width=\textwidth]{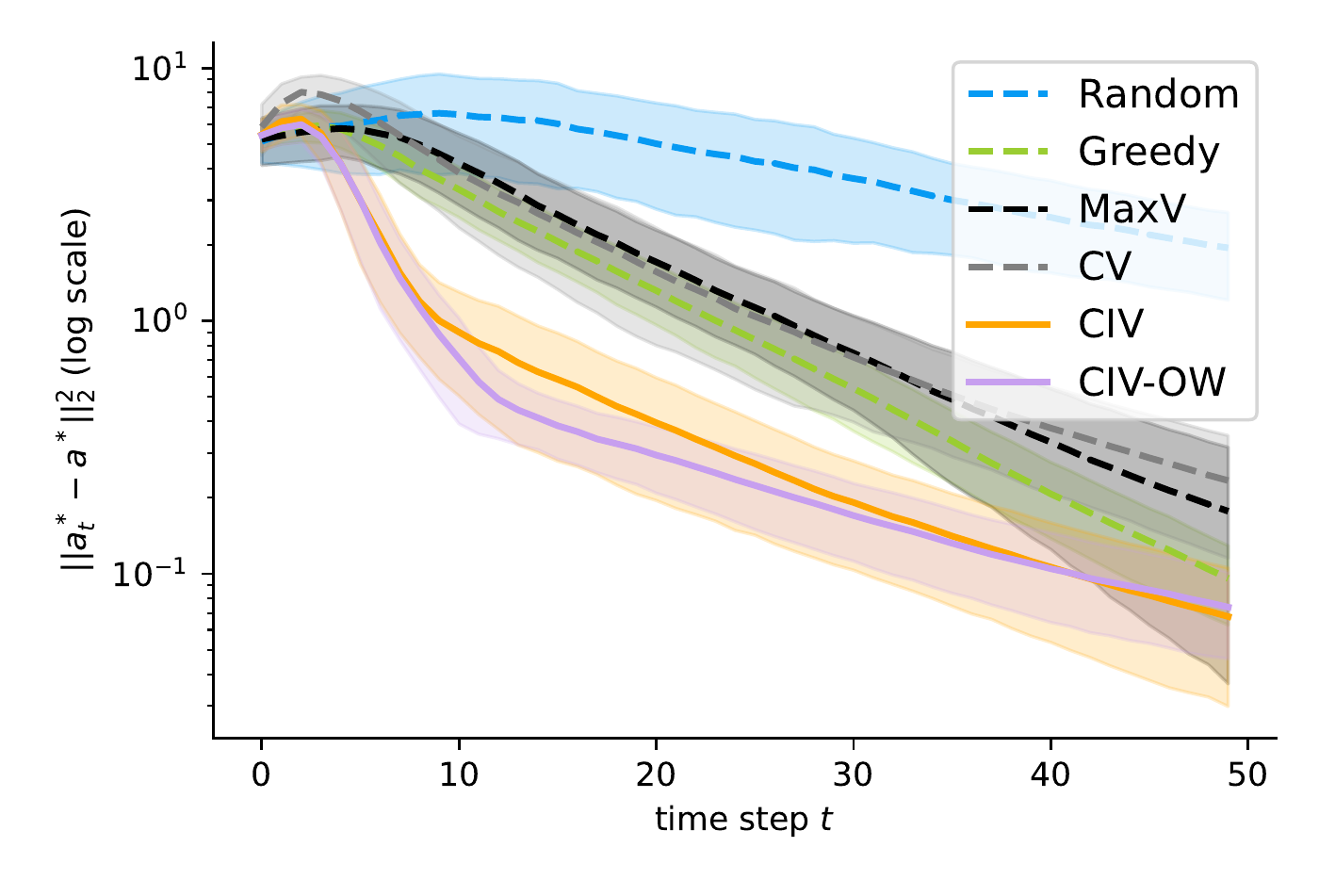}
         \caption{$\|\ba^*\|_0=10$}
     \end{subfigure}
    % \hfill
    \begin{subfigure}[b]{0.24\textwidth}
         \centering
         \includegraphics[width=\textwidth]{final-figs/appendix/mse-int-mean_complete-30-15.pdf}
         \caption{$\|\ba^*\|_0=15$}
     \end{subfigure}
    \caption{\rec{\textbf{Comparison of different acquisition functions in a simulation study where the underlying causal graph is the complete graph on 30 nodes, the most downstream nodes are fixed as intervention targets, and we vary the number of intervention targets.}} Each plot corresponds to an average of $10$ instances and each method is run 20 times and averaged. {Top row: Relative distance} between the target mean $\bmu^*$ and the best approximation $\bmu^*_t$ (defined in Fig.~\ref{fig:5}A in the main text) up to time step $t$. Lines denote the mean over 10 instances; the shading corresponds to one standard deviation. {Middle row: Relative distance} statistic of each method \rec{averaged over $10$ instances} at the last time step ($t=50$). {Bottom row: Squared distance \rec{presented as mean value +/- SEM} between the optimal intervention $\ba^*$ and the best approximation $\ba_t^*$ that is used to obtain $\bmu^*_t$ up to time step $t$.}
    }
    \label{fig:s6}
\end{figure}

\begin{figure}[!t]
     \centering
     \begin{subfigure}[b]{0.24\textwidth}
         \centering
         \includegraphics[width=\textwidth]{final-figs/appendix/relative-rmse_complete-30-15.pdf}
         \caption{Complete graph}
     \end{subfigure}
     \begin{subfigure}[b]{0.24\textwidth}
         \centering
         \includegraphics[width=\textwidth]{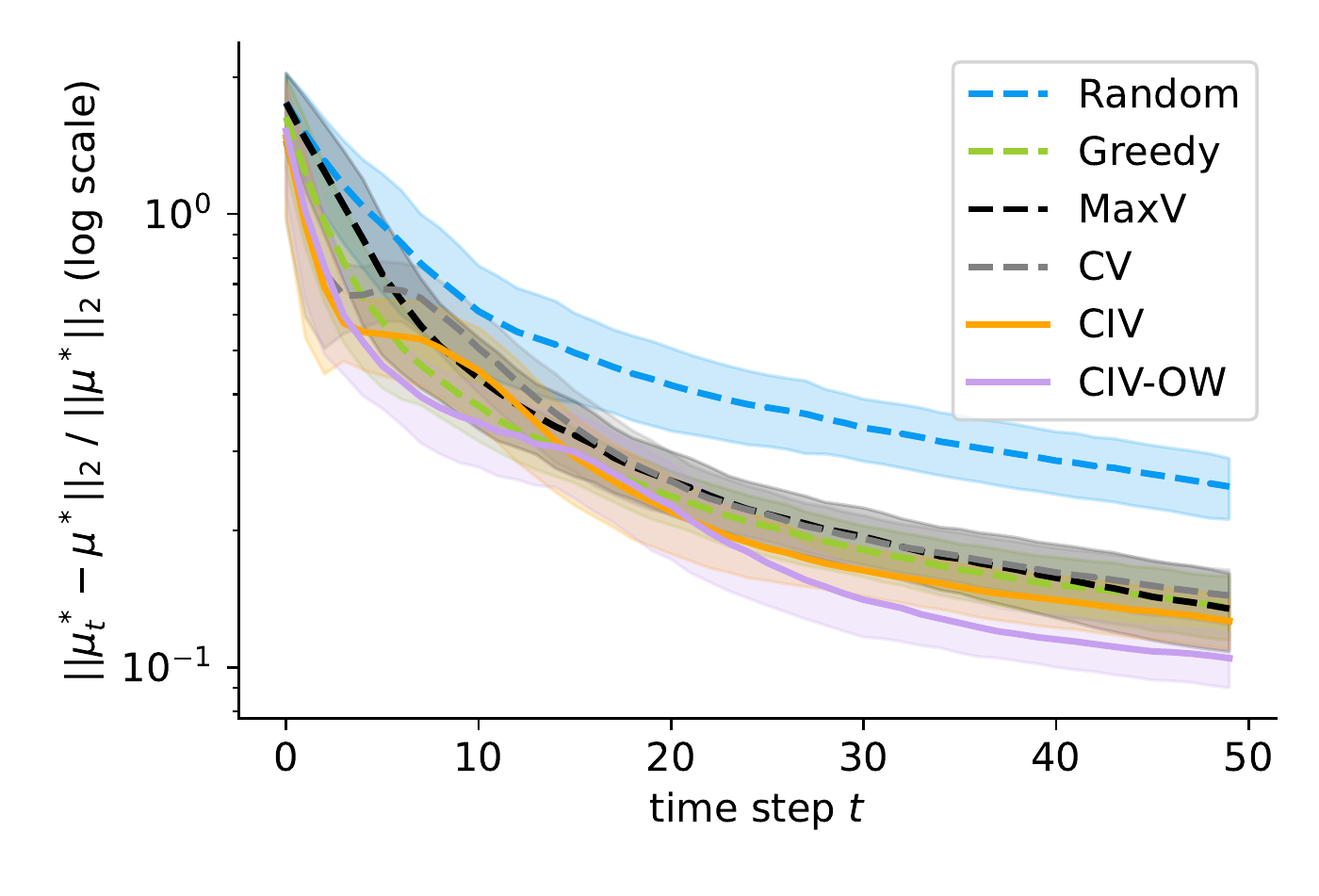}
         \caption{Erdös-Rényi graph ($0.8$)}
     \end{subfigure}
    %  \hfill
     \begin{subfigure}[b]{0.24\textwidth}
         \centering
         \includegraphics[width=\textwidth]{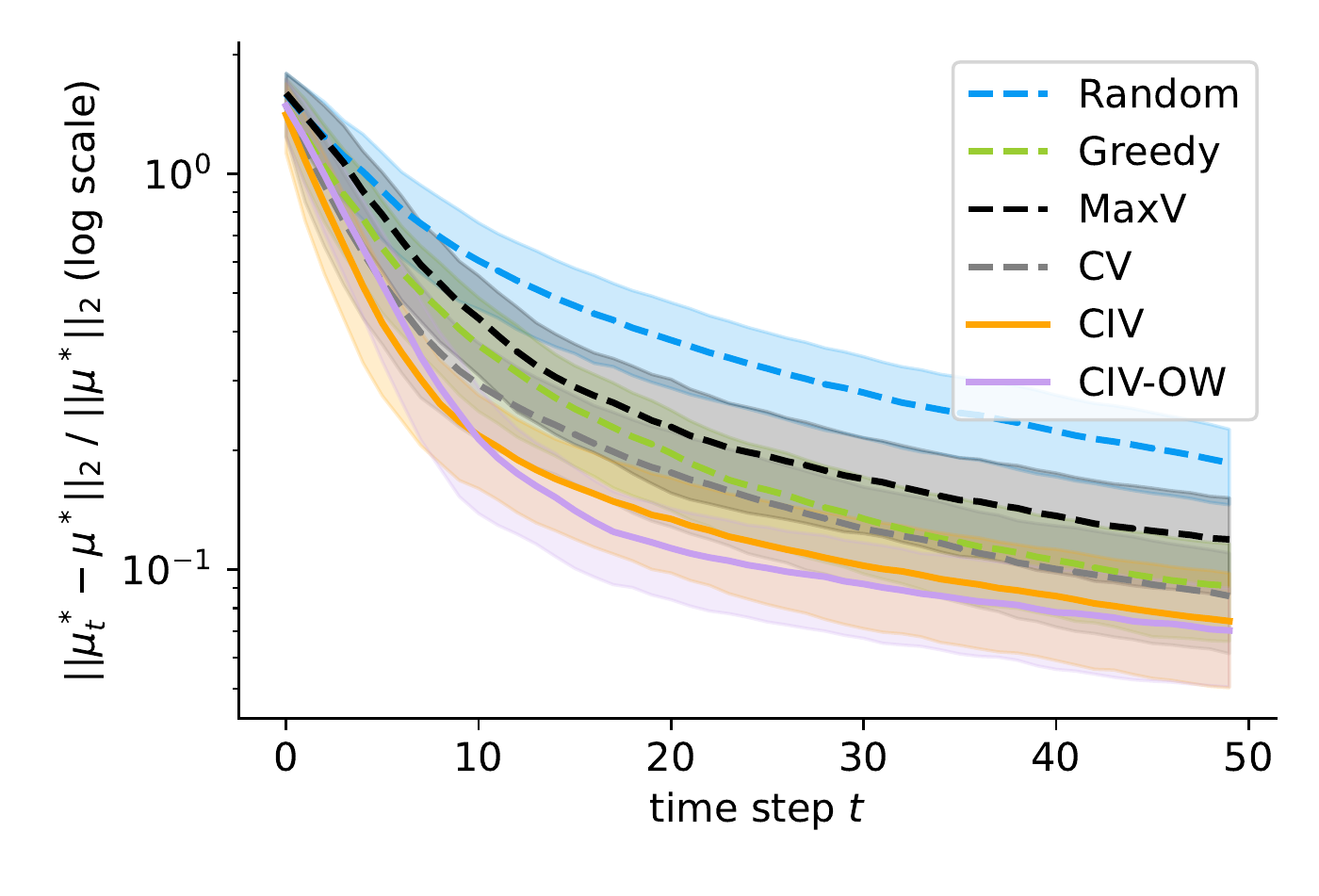}
         \caption{Erdös-Rényi graph ($0.5$)}
     \end{subfigure}
    % \hfill
    \begin{subfigure}[b]{0.24\textwidth}
         \centering
         \includegraphics[width=\textwidth]{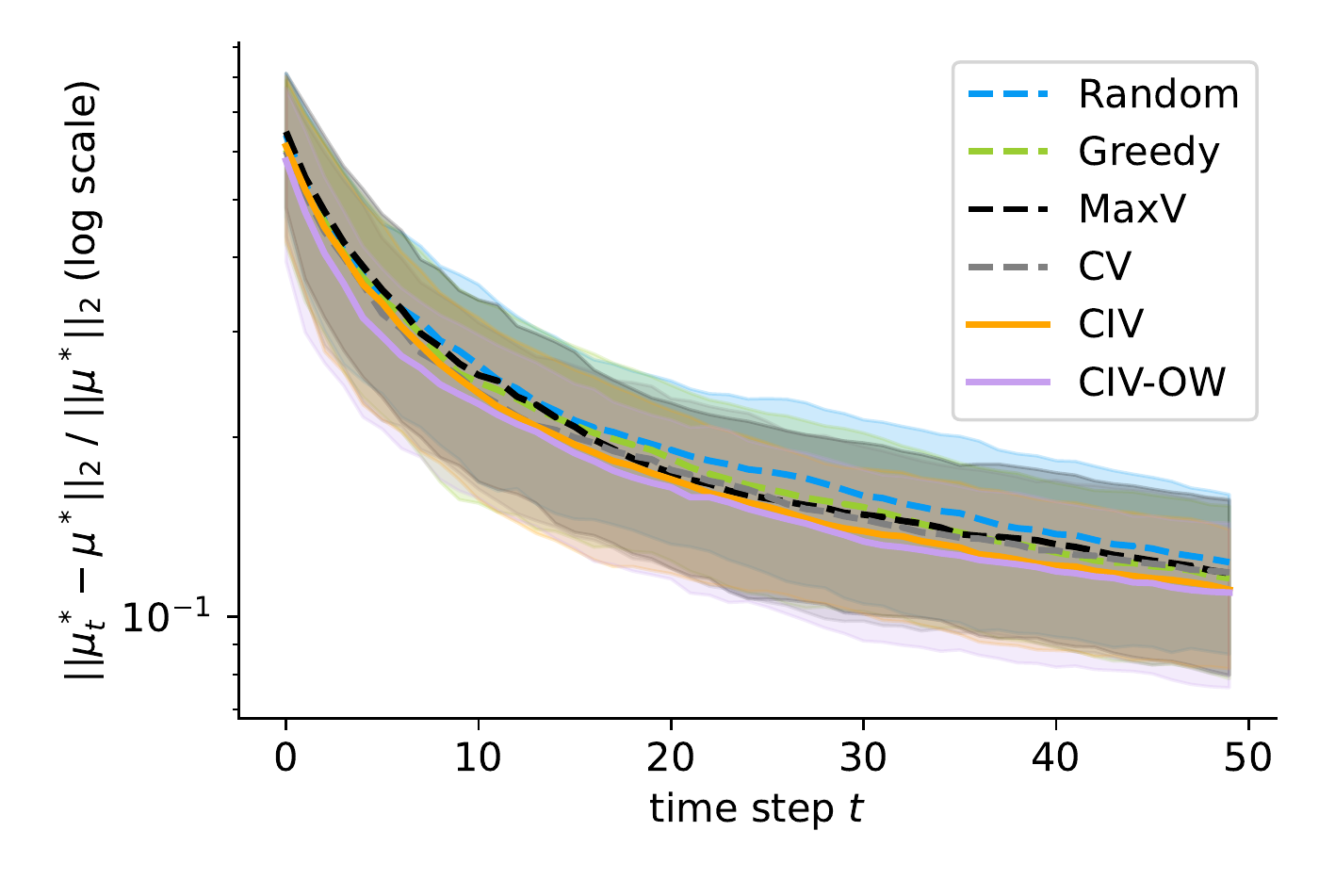}
         \caption{Path graph}
     \end{subfigure}
     \\
     \begin{subfigure}[b]{0.24\textwidth}
         \centering
         \includegraphics[width=.8\textwidth]{final-figs/appendix/relative-err-lastround_complete-30-15.pdf}
         \caption{Complete graph}
     \end{subfigure}
    %  \hfill
     \begin{subfigure}[b]{0.24\textwidth}
         \centering
         \includegraphics[width=.8\textwidth]{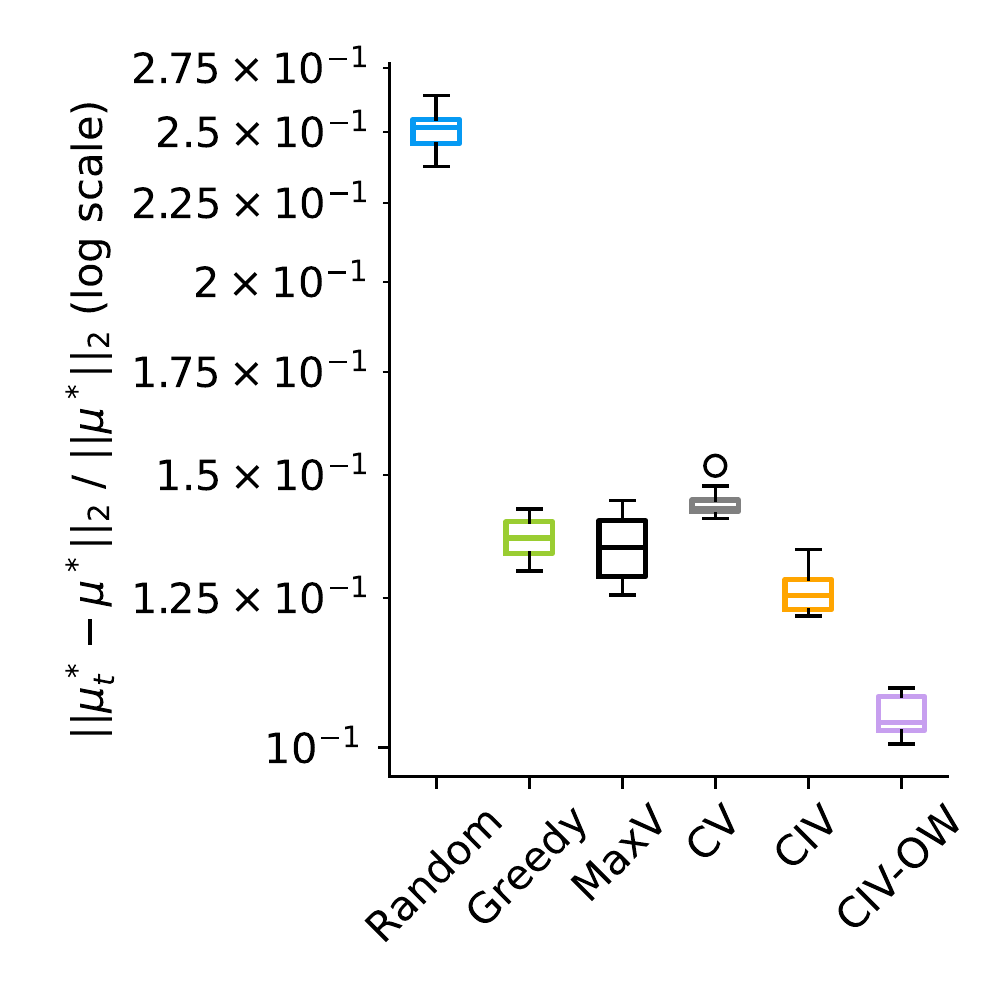}
         \caption{Erdös-Rényi graph ($0.8$)}
     \end{subfigure}
    %  \hfill
     \begin{subfigure}[b]{0.24\textwidth}
         \centering
         \includegraphics[width=.8\textwidth]{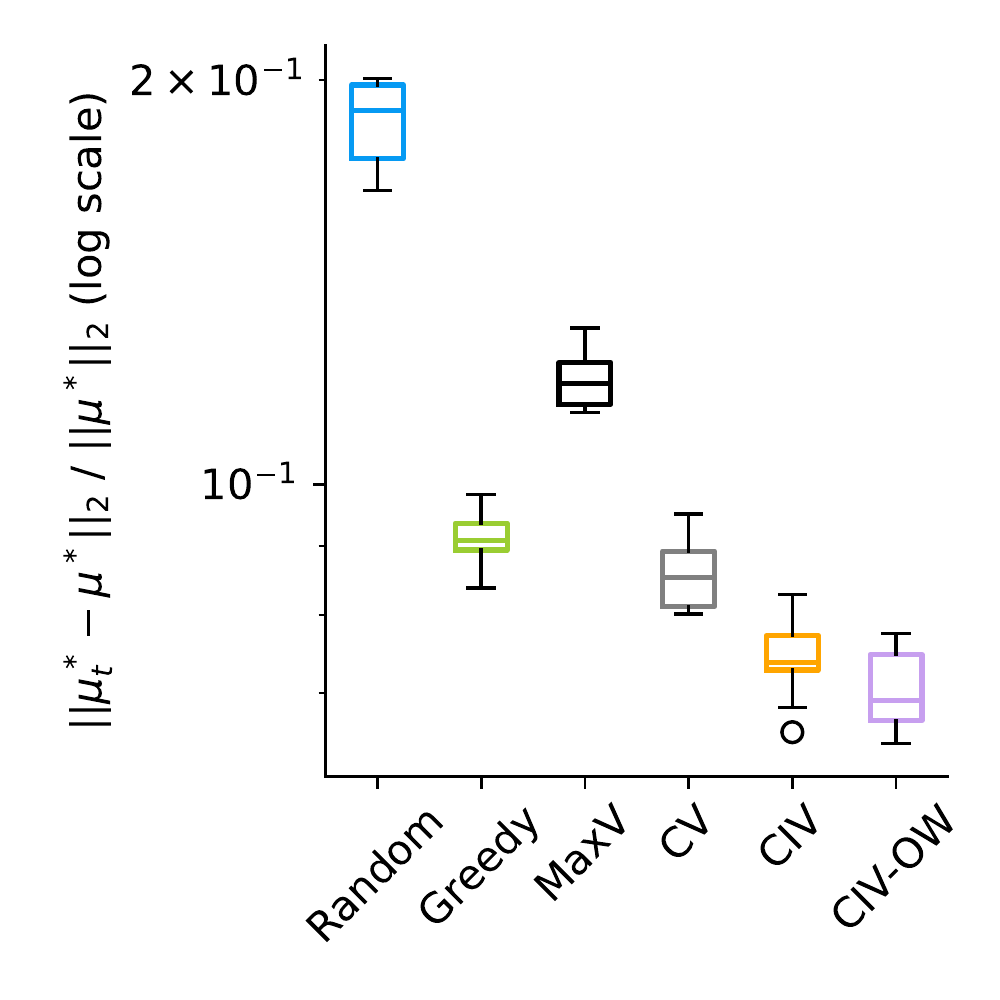}
         \caption{Erdös-Rényi graph ($0.5$)}
     \end{subfigure}
    % \hfill
    \begin{subfigure}[b]{0.24\textwidth}
         \centering
         \includegraphics[width=.8\textwidth]{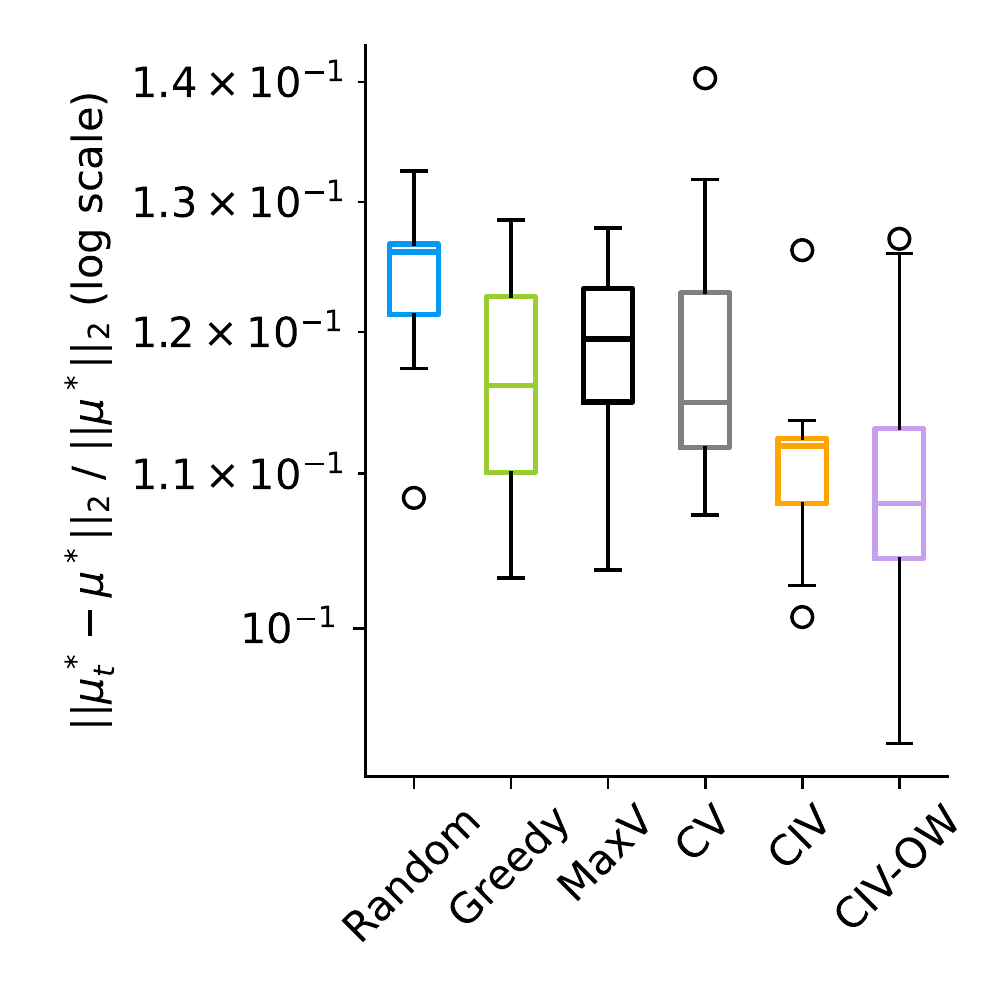}
         \caption{Path graph}
     \end{subfigure}
     \\
     \begin{subfigure}[b]{0.24\textwidth}
         \centering
         \includegraphics[width=\textwidth]{final-figs/appendix/mse-int-mean_complete-30-15.pdf}
         \caption{Complete graph}
     \end{subfigure}
     \begin{subfigure}[b]{0.24\textwidth}
         \centering
         \includegraphics[width=\textwidth]{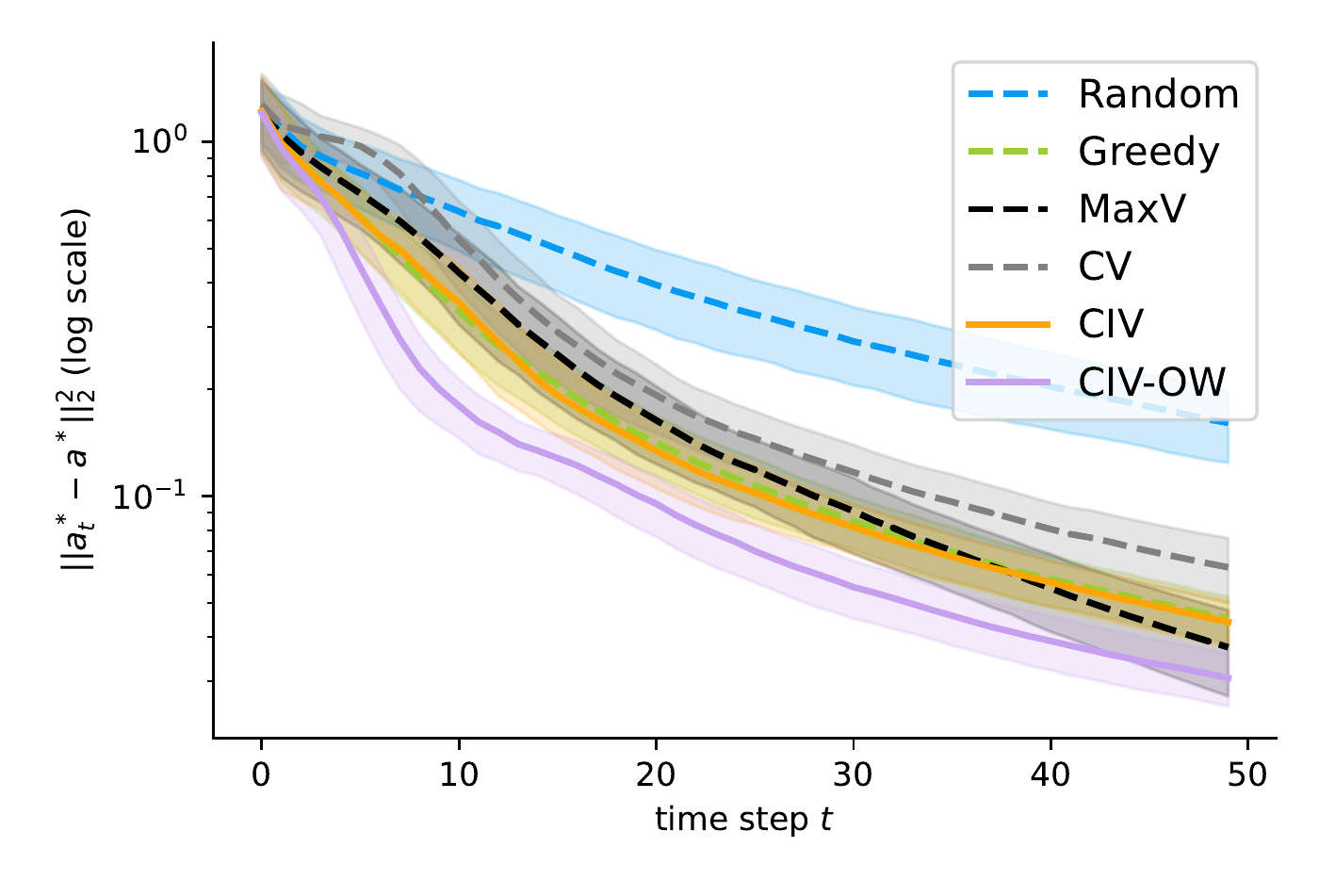}
         \caption{Erdös-Rényi graph ($0.8$)}
     \end{subfigure}
    %  \hfill
     \begin{subfigure}[b]{0.24\textwidth}
         \centering
         \includegraphics[width=\textwidth]{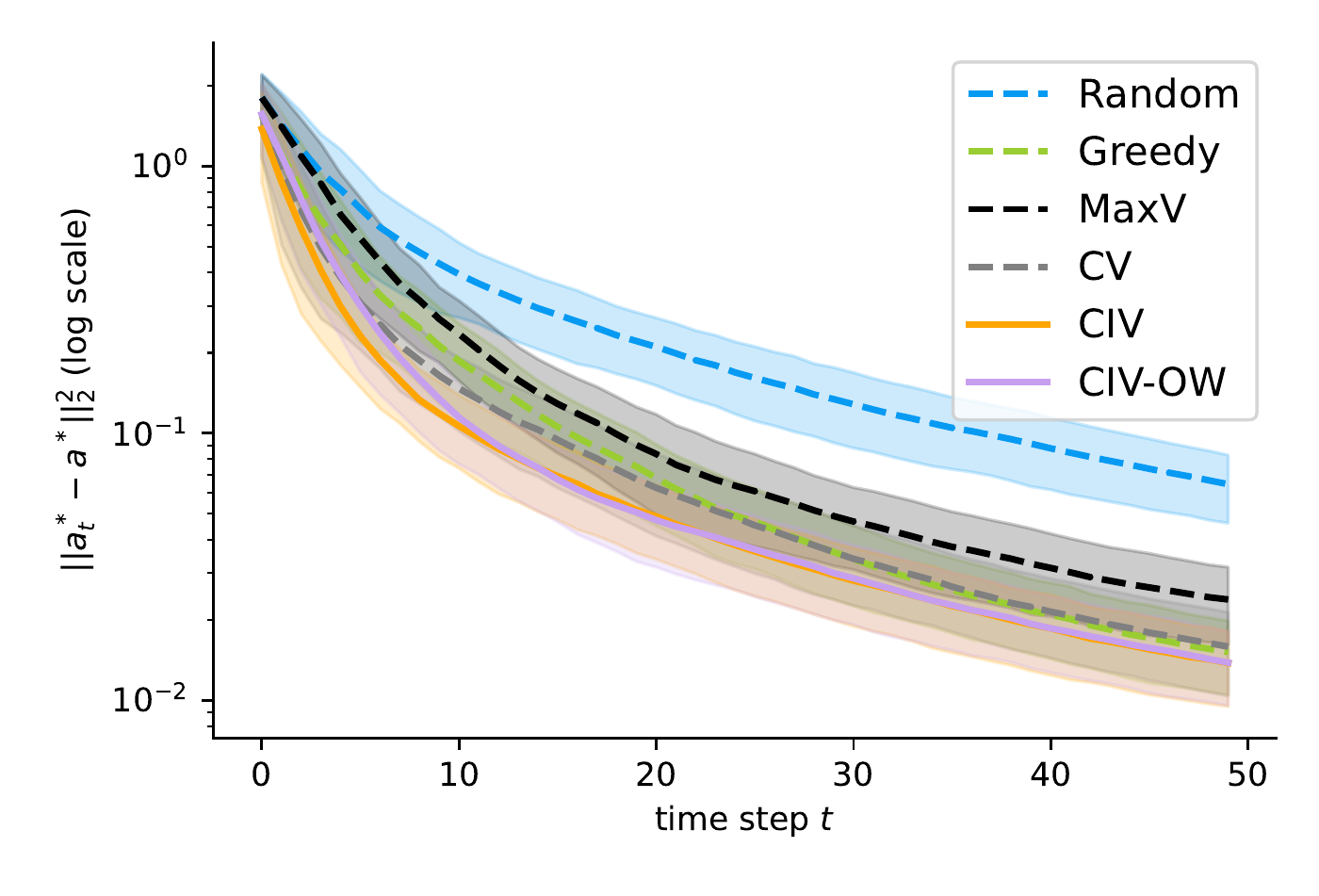}
         \caption{Erdös-Rényi graph ($0.5$)}
     \end{subfigure}
    % \hfill
    \begin{subfigure}[b]{0.24\textwidth}
         \centering
         \includegraphics[width=\textwidth]{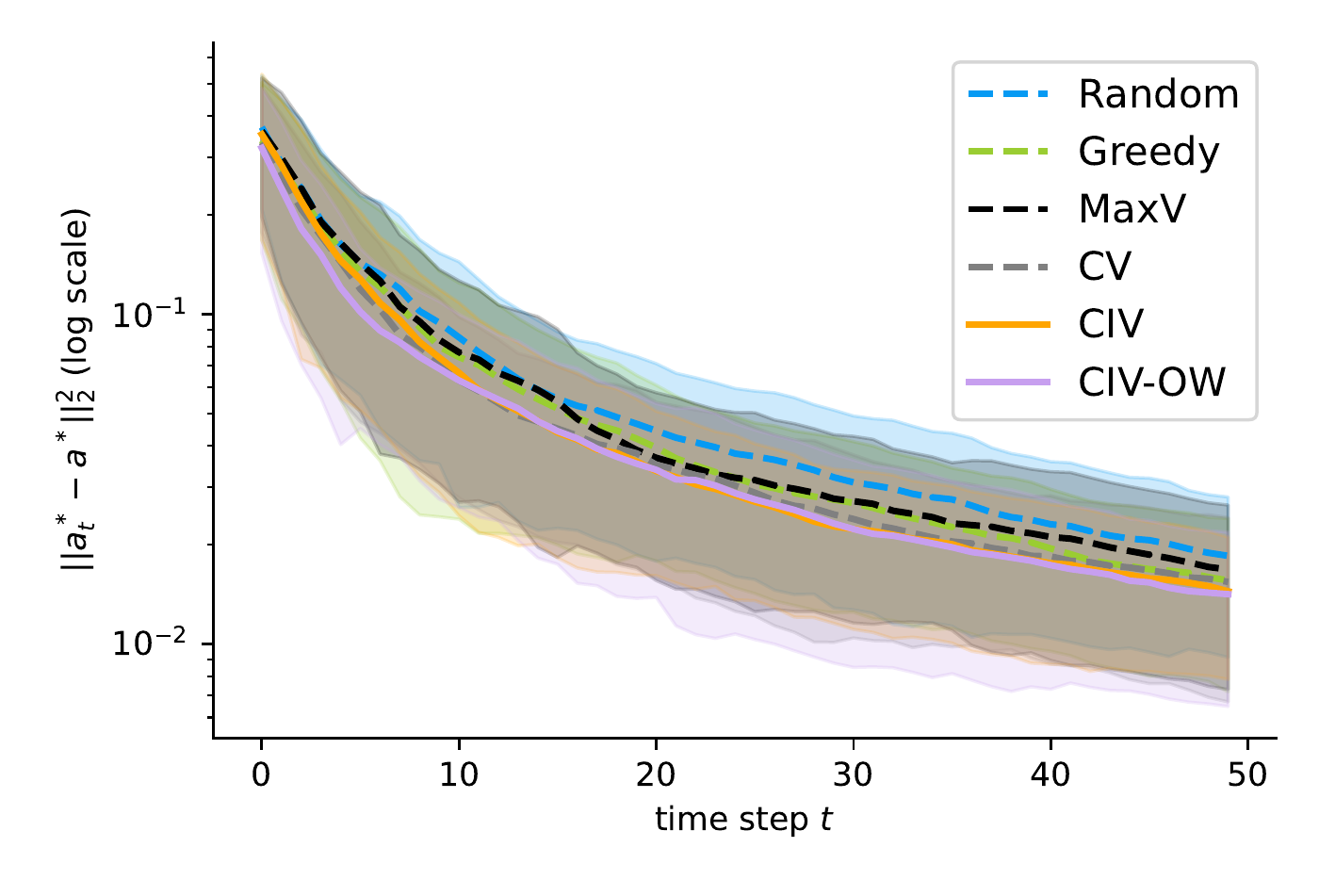}
         \caption{Path graph}
     \end{subfigure}
    \caption{\rec{\textbf{Comparison of different acquisition functions in a simulation study where we vary the underlying causal graph (complete graph, Erd\"os-R\'enyi graph with edge probability 0.8, Erd\"os-R\'enyi graph with edge probability 0.8, path graph) and the most downstream half of the nodes are fixed as intervention targets.}}  Each plot corresponds to an average of $10$ instances on a 30-node DAG with 15 perturbation targets. Each method is run 20 times and averaged. {Top row: Relative distance} between the target mean $\bmu^*$ and the best approximation $\bmu^*_t$ (defined in Fig.~\ref{fig:5}A in the main text) up to time step $t$. Lines denote the mean over 10 instances; the shading corresponds to one standard deviation. {Middle row: Relative distance} statistic of each method \rec{averaged over $10$ instances} at the last time step ($t=50$). Note that the DAGs become sparser from left to right.  {Bottom row: Squared distance \rec{presented as mean value +/- SEM} between the optimal intervention $\ba^*$ and the best approximation $\ba_t^*$ that is used to obtain $\bmu^*_t$ up to time step $t$.}}
    \label{fig:s7}
\end{figure}

\subsection{Varying Graph Types}

In this set of experiments, we consider the effect of graph types. 
While the complete graph is the densest among all DAGs of a given size, the path graph is among the sparsest connected DAGs. 
To examine graphs with density between these extremes, we also consider Erdös-Rényi graphs \cite{erdos1960evolution}, one with edge probability $0.8$ and one with edge probability $0.5$. 
Extended Data Fig.~\ref{fig:s7} shows that the proposed methods, CIV and CIV-OW, outperform the other baselines, with the difference being larger for denser graphs.

{
\subsection{Additional Baselines}
\label{add_baselines}

We perform an experimental comparison to the following three additional acquisition functions from prior works \cite{aglietti2020causal,astudillo2021bayesian,houlsby2011bayesian,bubeck2012regret} (discussed in more detail in Supplementary Information~\ref{sec:a0}), which we adapted for the task of identifying optimal interventions:
\begin{itemize}
    \item[(1)] \textbf{Expected Improvement.} In \cite{aglietti2020causal, astudillo2021bayesian}, the expected improvement (EI) in the objective function has been used as an acquisition function. To compute EI, \cite{aglietti2020causal} used a GP between input and output and \cite{astudillo2021bayesian} used a series of GPs to model a network of functions. We adapt EI to use the squared distance $\|(\bI-\bB)^{-1}\ba-\bmu^*\|_2^2$ between the interventional mean and target mean (Eq.~\eqref{eq:3} with no noise) as the objective function together with the Bayesian model for unknown $\bB$ used also by the other methods (e.g., CIV). Minimizing EI at time step $t$ can be written as \[\min_{\ba'\in\cA}\bbE\big(\min{\{\|(\bI-\bB)^{-1}\ba'-\bmu^*\|_2^2-F^*_t,0\}}|\cD_t\big),\] where $F_t^*$ denotes the minimum squared distance of the interventions selected up to time point $t$. Since this acquisition function cannot be evaluated in closed form, we use Monte Carlo samples of $\bB$ from $\bbP(\bB|\cD_t)$ to approximate it. To optimize it over $\ba'\in \cA$, we sample multiple instances of $\ba'$ by perturbing the current best guess $\ba_{t}^*$ and use the one with the smallest estimated EI value. {\rec{For the current best guess of $\ba^*$ we use its unbiased estimate $\ba_{t}^*=(\bI-\bbE(B|\cD_t))\bmu^*$.}} This baseline is denoted EI-Int in the experiments.
    \item[(2)] \textbf{Mutual Information.} In concurrent work \cite{toth2022active} as well as in \cite{houlsby2011bayesian}, the mutual information (MI) between a quantity of interest and the samples one wishes to acquire is used as the acquisition function. As discussed in Section~\ref{sec:42}, the MI acquisition function equals the entropy decay. Using the optimal intervention $\ba^*=(\bI-\bB)\bmu^*$ as the quantity of interest, then minimizing MI at time step $t$ can be written as
    \[
    \max_{\ba'\in\cA} I\big((\bI-\bB)\bmu^*;\bx'|\ba',\cD_t\big) \Longleftrightarrow \min_{\ba'\in\cA}H\big((\bI-\bB)\bmu^*|\cD_t\cup(\bx',\ba')\big),
    \]
    where $\bx'$ are interventional samples from $\ba'$. Note that although for a fixed $\bx'$, the entropy of $(\bI-\bB)\bmu^*$ can be evaluated in closed form {\rec{($\frac12\ln|\Var(\bB\bmu^*|\cD_t\cup(\bx',\ba'))|+\frac{p}{2}(1+\ln{2\pi})$, where $|\cdot|$ denotes the determinant)}} since it follows a Gaussian distribution, the conditional entropy $H((\bI-\bB)\bmu^*|\cD_t\cup(\bx',\ba'))$ cannot be computed in closed form as it requires marginalizing over $\bbP(\bx'|\ba',\cD_t)$. Therefore we use Monte Carlo samples of $\bx'$ from $\bbP(\bx'|\ba',\cD_t)$ to approximate it. To optimize MI over $\ba'$, we adopt the same strategy as described for EI above. We denote this baseline as MI-Int in the experiments.
    \item[(3)] \textbf{Upper Confidence Bound.} In concurrent work \cite{sussex2023modelbased}, the upper-confidence bound (UCB) from the bandit literature \cite{bubeck2012regret} is used as an acquisition function, assuming the noise variances are known. Adapting this to our setting, we consider the upper confidence bound of the optimality gap $\|(\bI-\bB)\bmu^*-\ba\|_2^2$ (Eq.~\eqref{eq:6} with no noise). This allows for closed-form evaluation and can thus be optimized easily. Acquiring new points according to UCB can then be written as 
    \[
    \max_{a'\in \cA} - \bbE\big(\|(\bI-\bB)\bmu^*-\ba'\|_2^2|\cD_t\big) +\beta \sqrt{\Var\big(\|(\bI-\bB)\bmu^*-\ba'\|_2^2|\cD_t\big)},
    \]
    where $\beta>0$ is a hyperparameter that controls the exploration-exploitation trade-off. Note that the first term equals $-\|(\bI-\bbE(\bB|\cD_t))\bmu^*-\ba'\|_2^2$ plus the trace of $\Var((\bI-\bB)\bmu^*-\ba'|\cD_t)$, which is a constant that does not depend on $\ba'$. The second term can be computed using similar arguments as in Corollary~1 and equals $\beta\sqrt{4\sum_{i=1}^p(\sigma_i^2\bmu_{\pa(i)}^{*\top} \bM_i(\cD_t)\bmu_{\pa(i)}^*)(a'_i-b_i)^2+\beta_1}$ with $\beta_1\geq0$ being a constant that does not depend on $\ba'$. We denote this baseline as UCB-Int in the experiments. 
\end{itemize}

Extended Data Fig.~\ref{fig:rebut-s7} compares the methods described in the previous sections with these three baselines. Since evaluating EI-Int and MI-Int requires Monte Carlo estimation, we perform the benchmark on a smaller DAG with $10$ nodes. The set-up is the same as in Extended Data Fig.~\ref{fig:s5} (A) and (D). We use $200$ Monte Carlo samples for EI-Int and MI-Int for each of the $20$ samples of $\ba'$ around $\ba_t^*$. For UCB-Int, we set $\beta=\frac12$ and $\beta_1=0$. In addition to the relative error across time steps (Extended Data Fig.~\ref{fig:rebut-s7}a) and at the last round (Extended Data Fig.~\ref{fig:rebut-s7}b), we also report the per-iteration runtime (Extended Data Fig.~\ref{fig:rebut-s7}c). {\rec{Note that a total of $10\times 20\times 50$ iterations are needed to generate Extended Data Fig.~8a, since we average across $10$ instances and repeat each algorithm for $20$ runs using a time step of $50$.}} We observe that the three additional active baselines are superior to the passive baseline (i.e., Random) and that our proposed methods (i.e., CIV and CIV-OW) outperform the three active baselines. Computation time of EI-Int and MI-Int is much higher than for the other methods, due to the required Monte Carlo estimation both when evaluating and optimizing the acquisition function.

\begin{figure}[!h]
    \centering
    \begin{subfigure}[b]{0.35\textwidth}
         \centering
         \includegraphics[width=\textwidth]{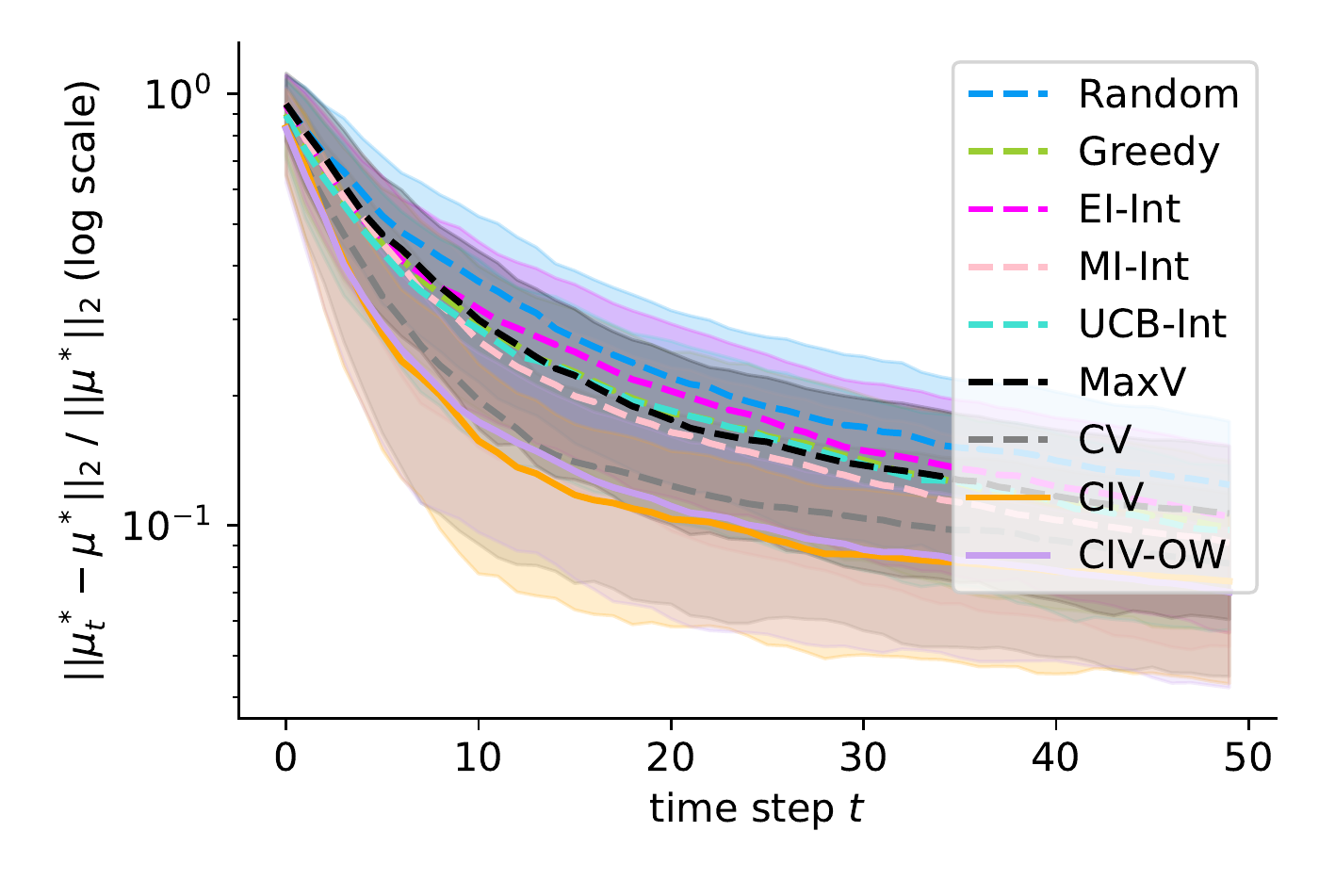}
         \caption{{Relative distance across time steps}}
    \end{subfigure}
    \begin{subfigure}[b]{0.25\textwidth}
         \centering
         \includegraphics[width=\textwidth]{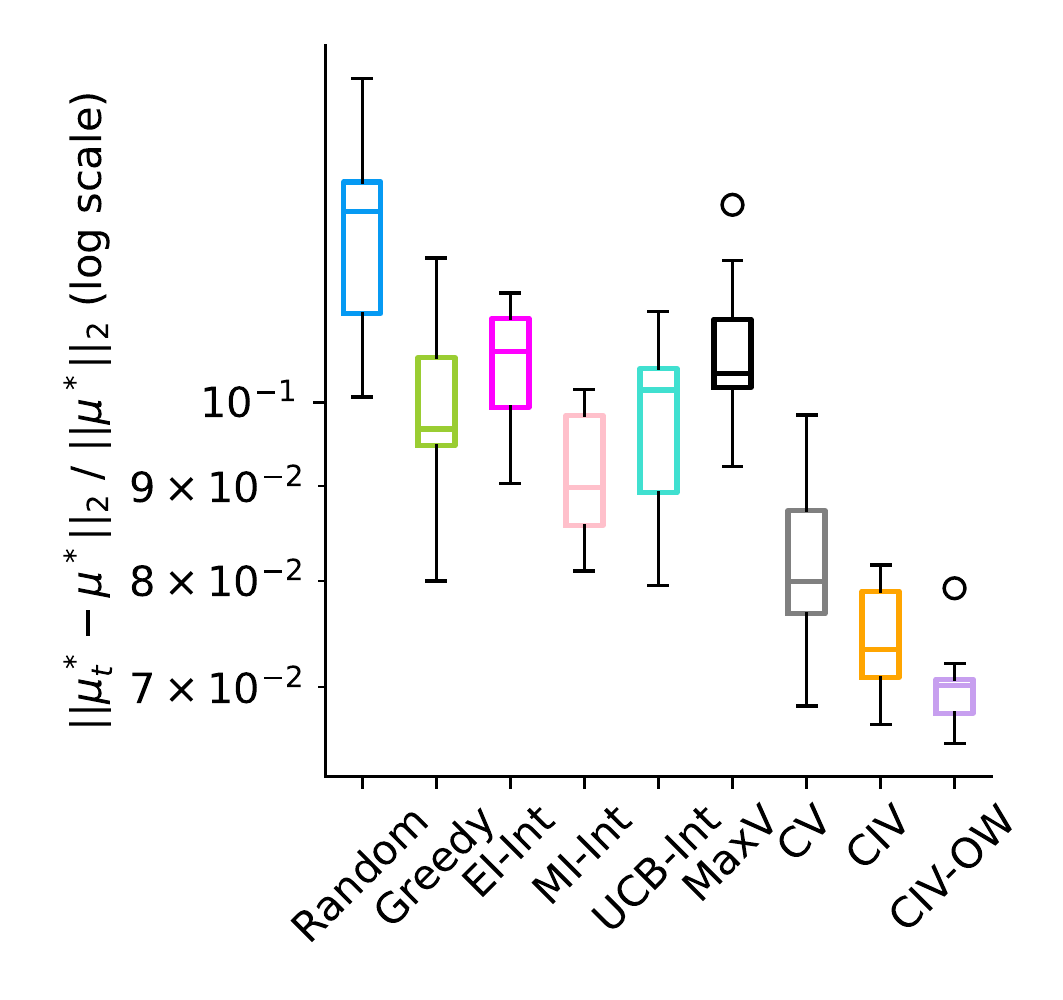}
         \caption{{Last time step}}
    \end{subfigure}
    \begin{subfigure}[b]{0.25\textwidth}
         \centering
         \includegraphics[width=\textwidth]{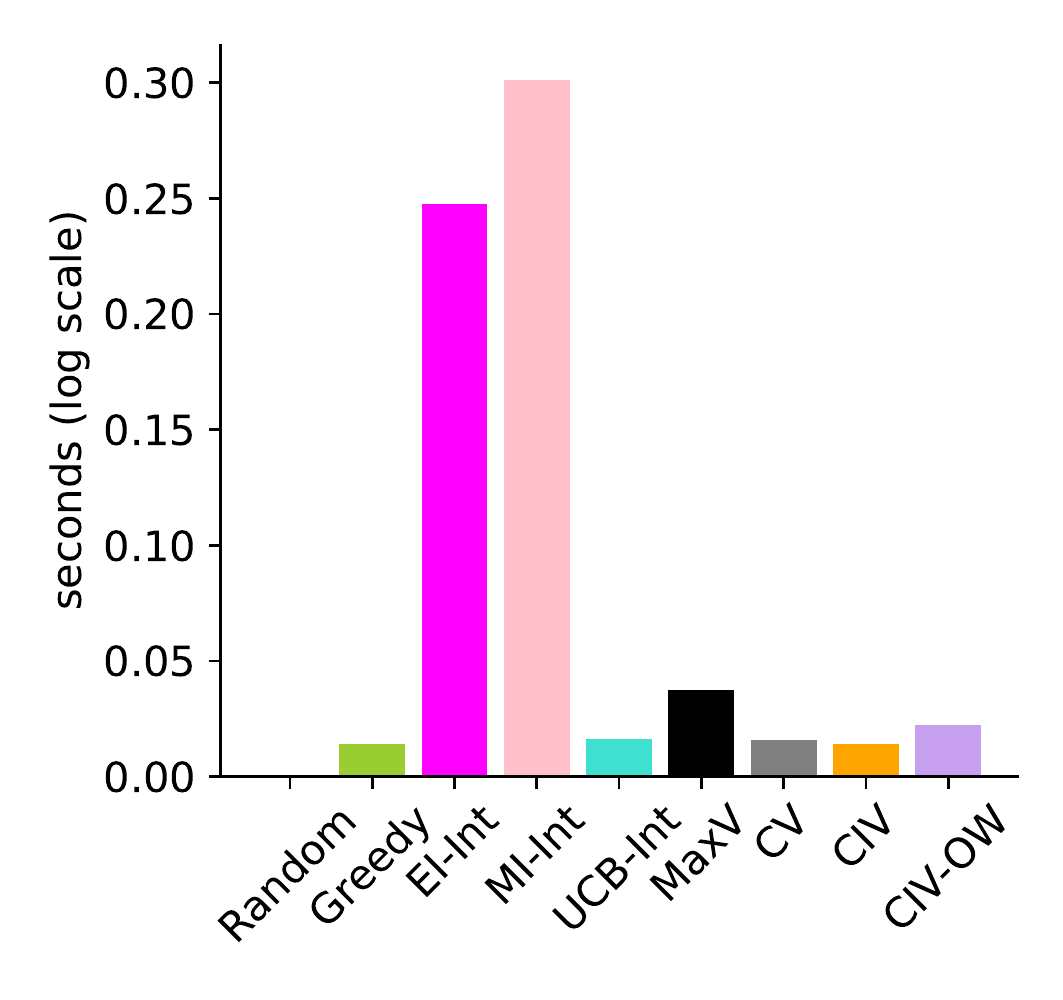}
         \caption{{Runtime per iteration}}
    \end{subfigure}
    \caption{\rec{\textbf{Comparison of our acquisition functions to baseline acquisition functions adapted from prior works (EI-Int: based on Expected Improvement, MI-Int: based on Mutual Information, and UCB-Int: based on Upper Confidence Bound) in a simulation study where the underlying causal graph is the complete graph on 10 nodes and the most downstream 5 nodes are fixed as intervention targets.}} Each plot corresponds to an average of 10 instances and each method is run 20 times and averaged. (A) Relative distance between the target mean $\bmu^*$ and the best approximation $\bmu^*_t$ (defined in Fig. \ref{fig:5}A in the main text) up to time step $t$. Lines denote the mean over 10 instances; the shading corresponds to one standard deviation. (B) Relative distance statistic of each method \rec{averaged over $10$ instances} at the last time step ($t=50$). (C) Runtime per iteration of each method in seconds.}
    \label{fig:rebut-s7}
\end{figure}

\subsection{Misspecifying the Underlying Causal Graph}
In the following, we investigate the performance of different methods when the underlying causal structure is misspecified. We consider three different types of misspecification: missing edges, excessive edges, and reversed edges; see Supplementary Fig.~\ref{fig:rebut-figs9} for an example. Note that any graph misspecification can be represented as a combination of these three cases.

For simplicity, we consider a $5$-node random Erdös-Rényi DAG with edge density $0.5$. The ground-truth optimal intervention is set to target $3$ nodes. As in the previous experiments, we generate $10$ instances with varying edge weights and optimal intervention vector and repeat each method for $10$ runs using a time step of $10$. Extended Data Fig.~\ref{fig:rebut-figs10} shows the relative distance at the last time step (averaged across runs and instances) for different methods. The structural Hamming distance (SHD) records the level of misspecification. We observe that while the ordering of the different methods is mostly preserved, performance of all methods decreases with increasing number of missing and reversed edges. In the presence of excessive edges, the degradation in performance is relatively mild. The excessive edge setting corresponds to additional non-zero entries in the prior of $\bB$. The true values of these entries can be recovered as we learn $\bB$ with more samples. Thus our consistency results still apply in this setting. This is however not the case for missing or reversed edges, where the misspecification of the causal structure introduces irreversible errors. This could explain the different effects of graph misspecification observed in our synthetic data experiments. %A more careful experimental and theoretical investigation  We believe this is an interesting future direction and leave the exact theoretical characterization as future works.
\begin{figure}[!ht]
    \centering
    \begin{subfigure}[b]{0.19\textwidth}
         \centering
         \includegraphics[width=.8\textwidth]{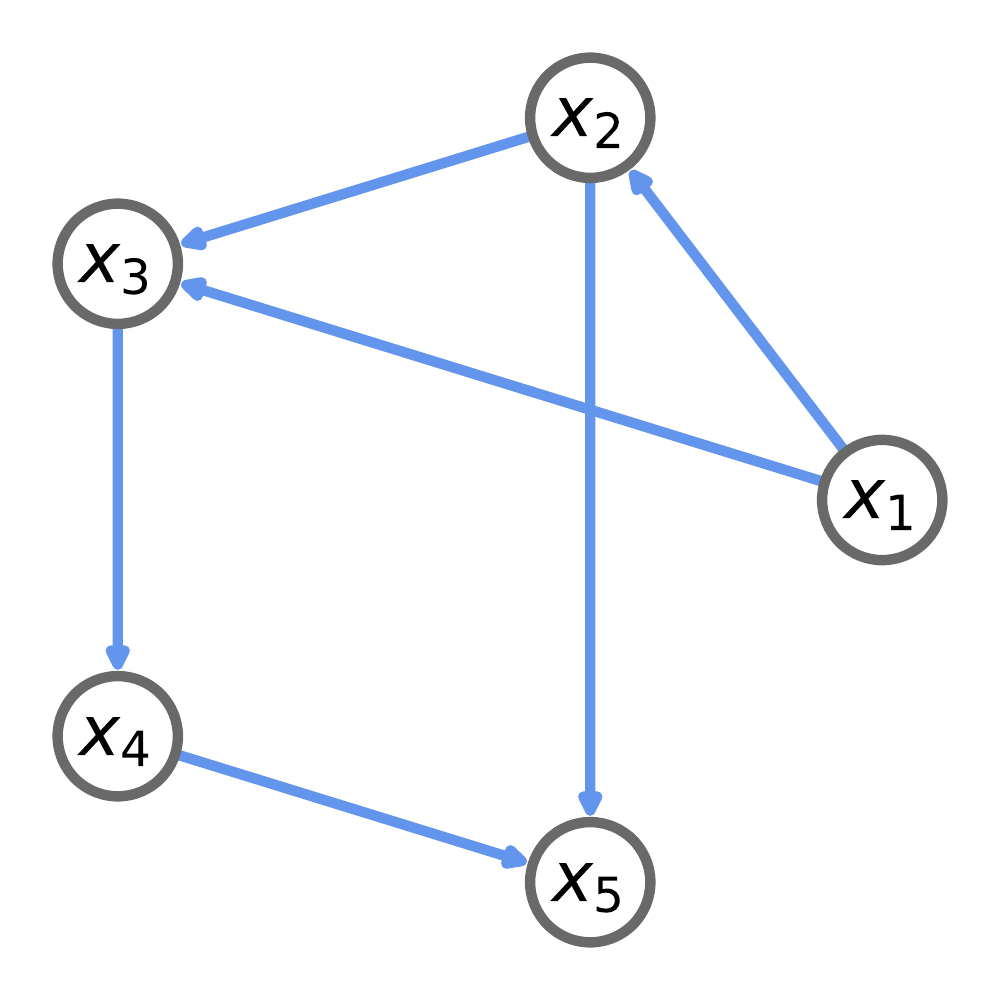}
         \caption{Correct DAG}
    \end{subfigure}
    \begin{subfigure}[b]{0.19\textwidth}
         \centering
         \includegraphics[width=.8\textwidth]{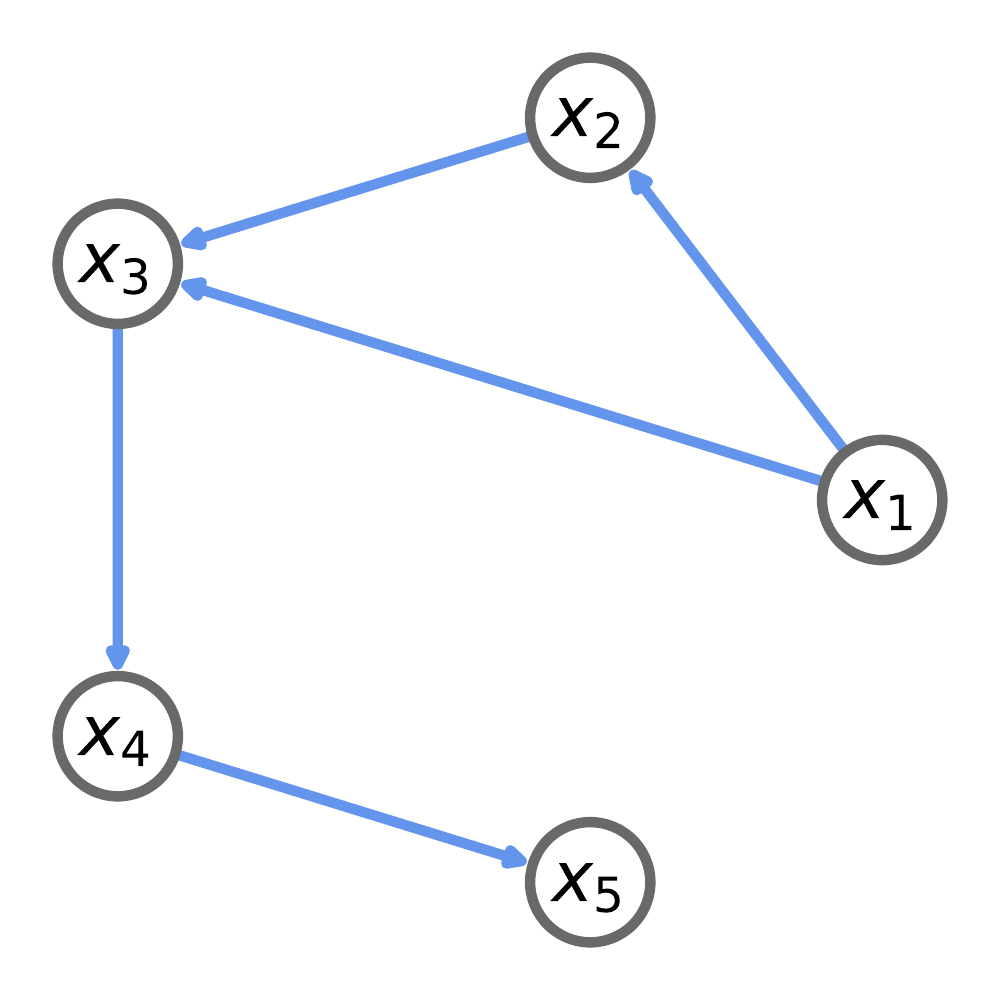}
         \caption{Missing edge}
    \end{subfigure}
    \begin{subfigure}[b]{0.19\textwidth}
         \centering
         \includegraphics[width=.8\textwidth]{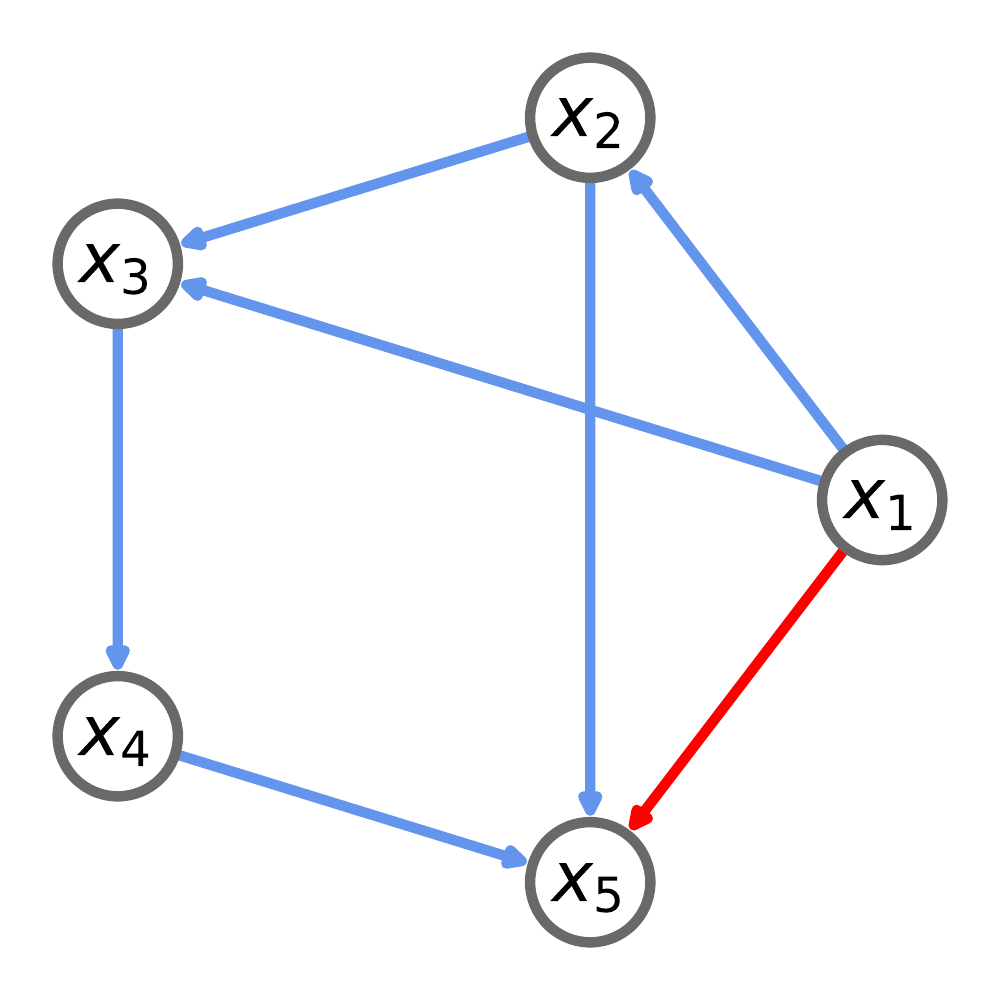}
         \caption{Excessive edge}
    \end{subfigure}
    \begin{subfigure}[b]{0.19\textwidth}
         \centering
         \includegraphics[width=.8\textwidth]{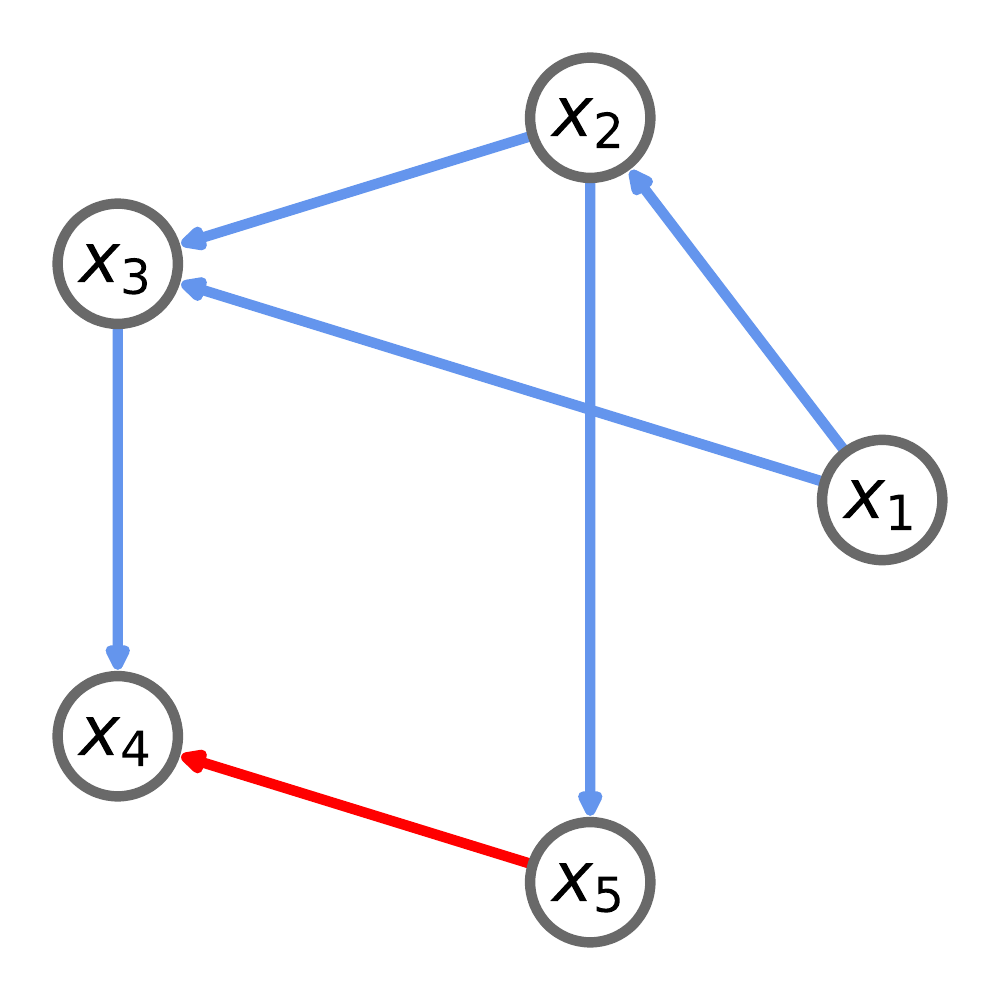}
         \caption{Reversed edge}
    \end{subfigure}
    \caption{{\textbf{Three types of graph misspecification.} The correct DAG is shown in (A). Edge $2\rightarrow 5$ is missing in (B). Excessive edge $1\rightarrow 5$ is added in (C). Edge $4\rightarrow 5$ is reversed in (D).}}
    \label{fig:rebut-figs9}
\end{figure}

\begin{figure}[!ht]
    \centering
    \begin{subfigure}[b]{0.31\textwidth}
         \centering
         \includegraphics[width=\textwidth]{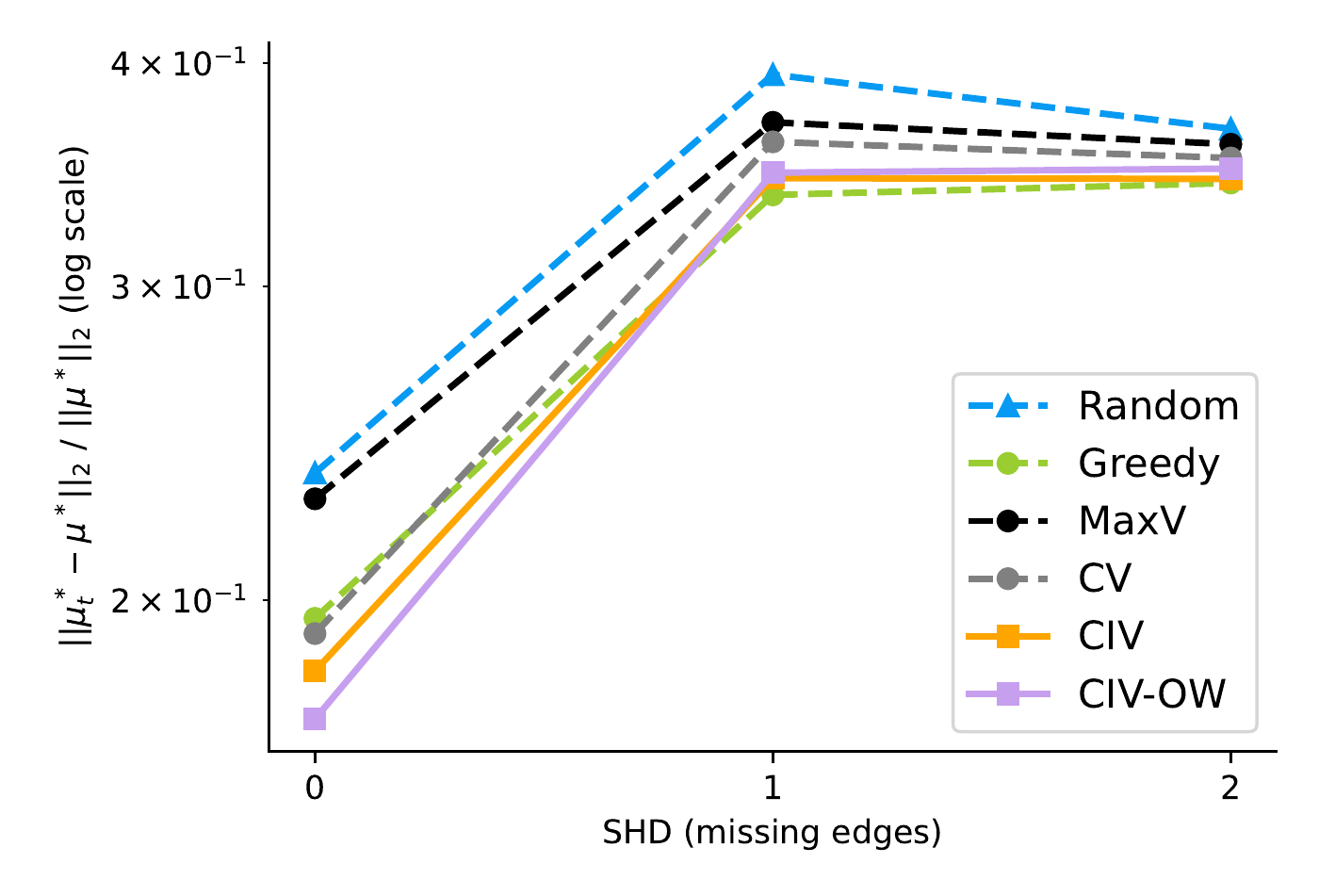}
         \caption{Missing edge}
    \end{subfigure}
    \begin{subfigure}[b]{0.31\textwidth}
         \centering
         \includegraphics[width=\textwidth]{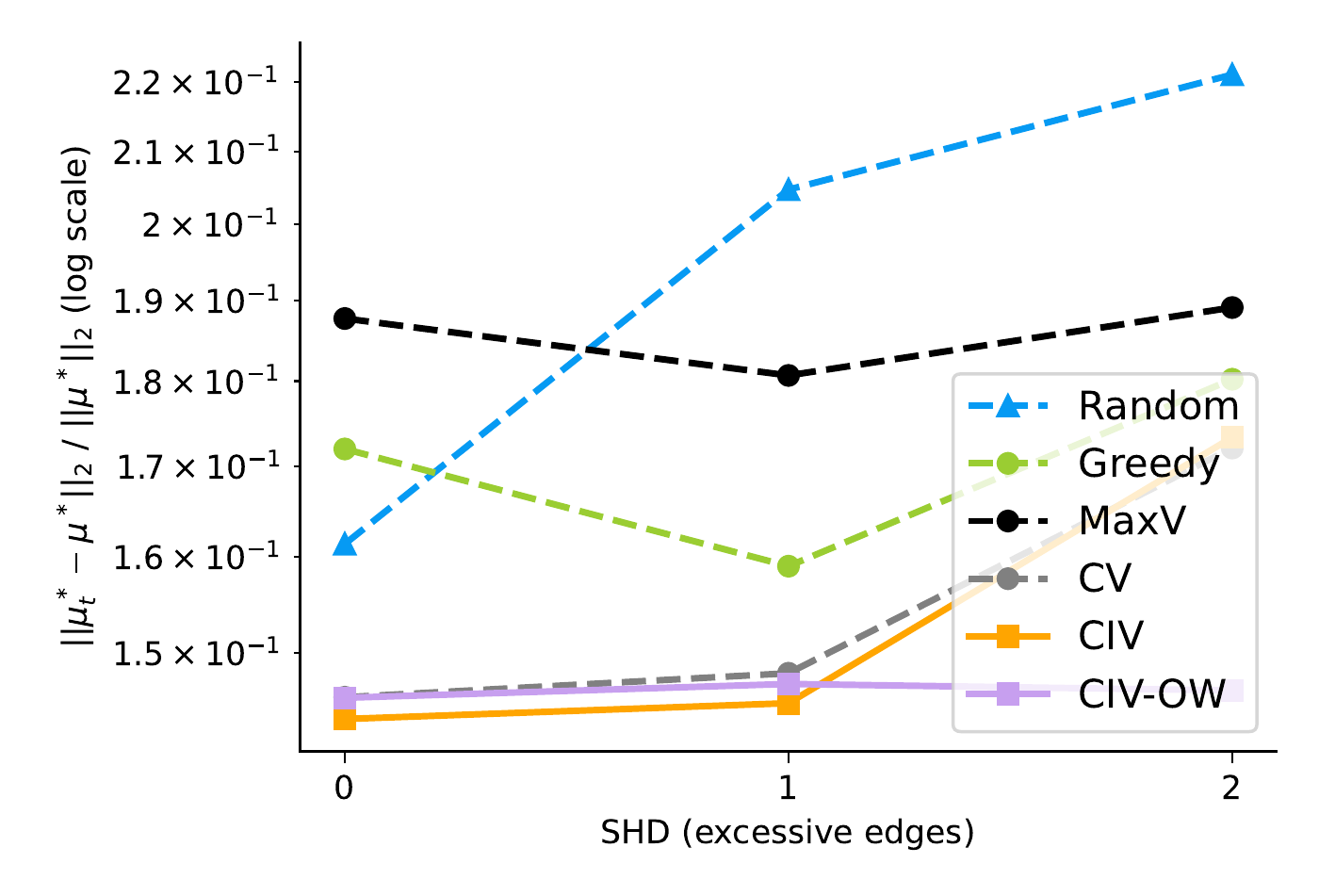}
         \caption{Excessive edge}
    \end{subfigure}
    \begin{subfigure}[b]{0.31\textwidth}
         \centering
         \includegraphics[width=\textwidth]{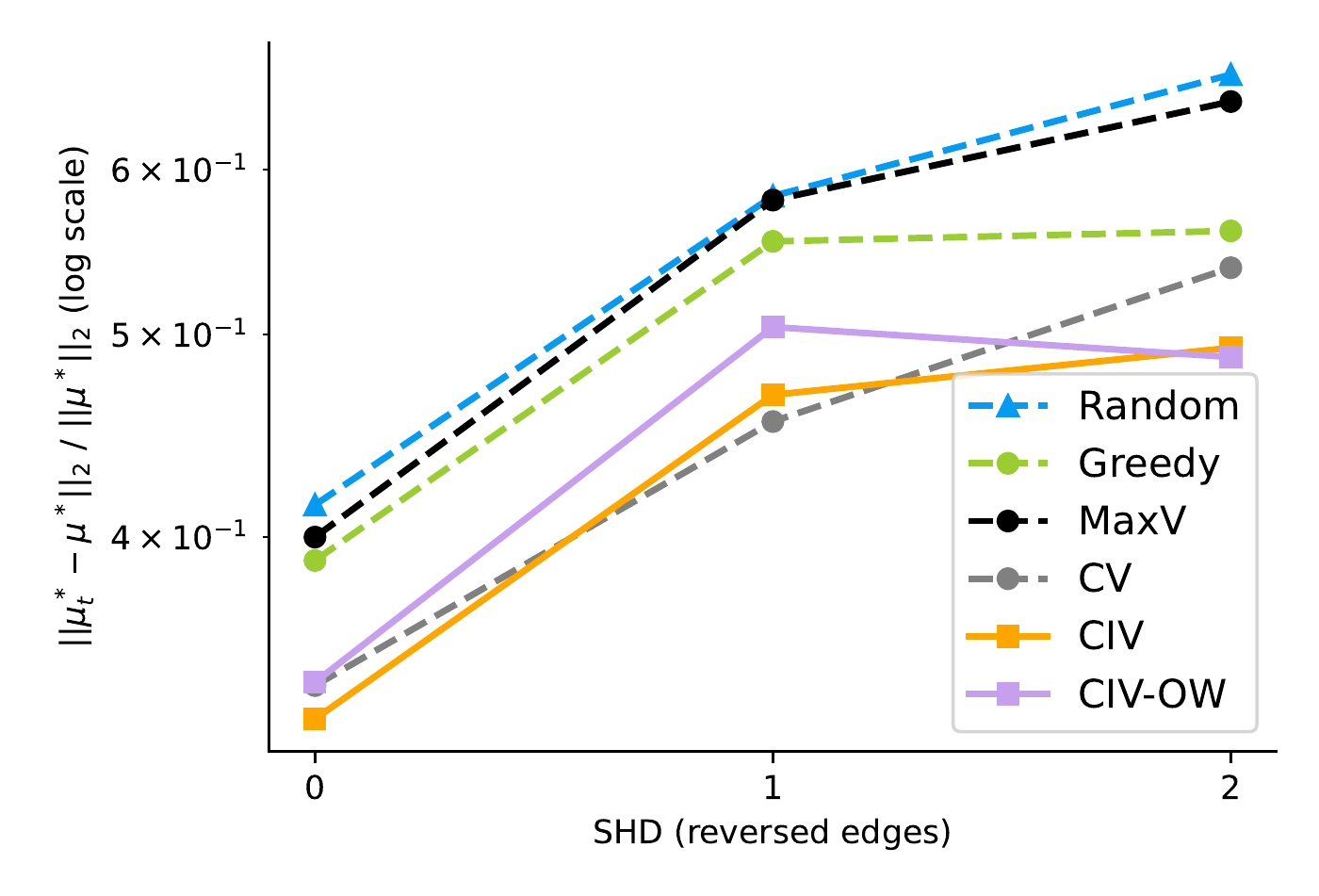}
         \caption{Reversed edge}
    \end{subfigure}
    \caption{
    {\textbf{Performance of the different acquisition functions under three types of DAG misspecifications where the underlying causal graph is a 5-node random Erd\"os-R\'enyi DAG with edge density 0.5 and 3 intervention targets.} Each plot corresponds to an average of the relative distance at time step $10$ across $10$ instances. Each method is run 10 times and averaged. SHD denotes the number of misspecified edges.}
    }
    \label{fig:rebut-figs10}
\end{figure}
}

\section{Extended Experiments on Biological Dataset}\label{sec:h}

In the following set of experiments, we present our analysis of the biological dataset used in the main text and present additional experimental results. 
As in the previous section, the metrics used for the additional experiments are the same as those in the main text.

We retrieved Perturb-CITE-seq data from \cite{frangieh2021multimodal}, consisting of pooled CRISPR screens on patient-derived melanoma cells. 
In a pooled CRISPR screen, a large amount of genetically-encoded perturbations are introduced into a pool of cells.
In the performed experiments, each perturbation targets a specific gene in order to silence its expression \cite{bock2022high}. In these experiments, perturbations were performed on $248$ genes associated with immunotherapy resistance.
Cells may receive no perturbation or a combination of perturbations. The Perturb-CITE-seq technology allows one to identify the perturbations received by each cell and read out the effects of the perturbations on the expression of each gene via single-cell RNA sequencing.

Depending on the culturing conditions, there are three different screens each with its subset of single cells (Supplementary Fig.~\ref{fig:s8}). 
To avoid batch effects and treatment effects from additional substances, we use the control screen that measures melanoma cells maintained in culture medium alone. 
This dataset contains the log-TPM-transformed expression of $23,712$ genes in $57,627$ single cells. 
Among these, $5,039$ cells have no perturbation, which we refer to as \textit{control cells} (or ctrl cells in abbreviation); and $30,486$ cells have interventions over a single target gene (we call these \textit{perturbed cells} or ptb cells). 
The remaining cells received interventions with more than one target gene. We ignore these cells, since each of these multi-target interventions appears in a very small number of cells, with most such interventions having no more than $1$ sample; see Supplementary Fig.~\ref{fig:s9}. 
Since single-cell measurements are highly noisy, estimation and inference based on such a small sample size can be very inaccurate.
Therefore we do not consider multi-target interventions in our experiments.
These data issues can potentially be alleviated using recent technologies \cite{wessels2022efficient} that perform perturbations of combinations more efficiently.

\begin{figure}[!t]
     \centering
     \includegraphics[width=.35\textwidth]{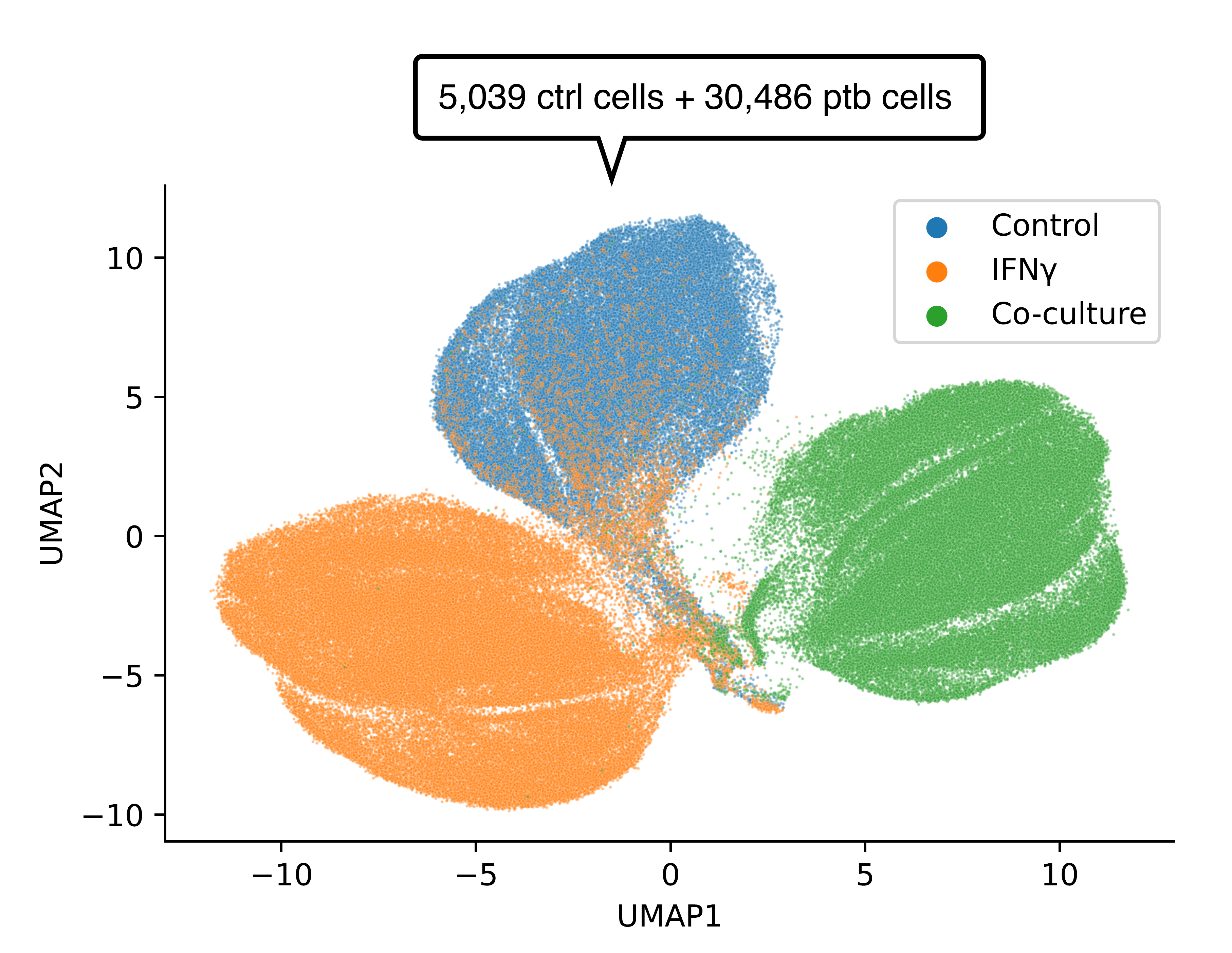}
    \caption{\textbf{UMAP of single-cell gene expression data from~\cite{frangieh2021multimodal} across three culturing conditions.} Control: cells maintained in culture medium alone; IFN$\gamma$: cells treated with IFN-$\gamma$ after 14 days; Co-culture: cells co-cultured with TIL doses after 14 days. We consider only the Control screen in our experiments to avoid batch effects as well as additional treatment effects.}
    \label{fig:s8}
\end{figure}

\begin{figure}[!t]
     \centering
    \begin{subfigure}[b]{0.74\textwidth}
         \centering
         \includegraphics[width=.9\textwidth]{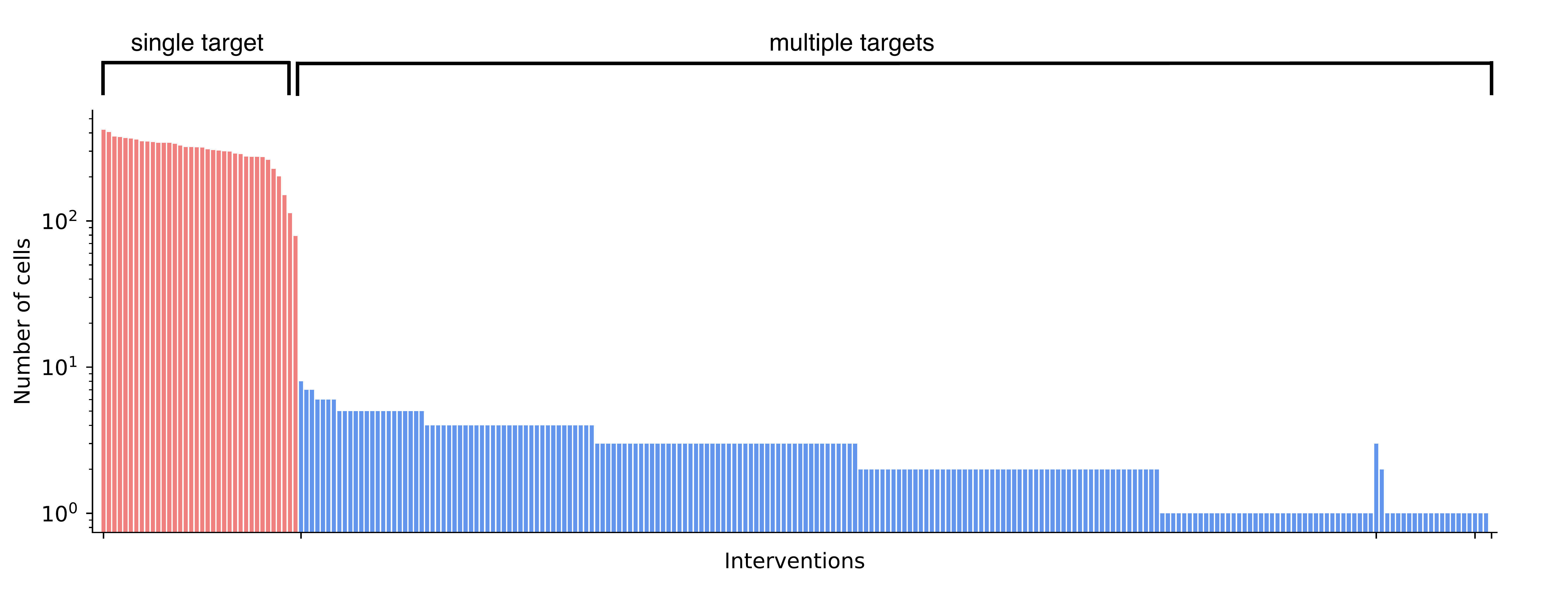}
         \caption{Number of cells for each intervention. Interventions are sorted first by its\\number of targets and then by its number of cells.}
     \end{subfigure}
    \begin{subfigure}[b]{0.24\textwidth}
         \centering
         \includegraphics[width=.9\textwidth]{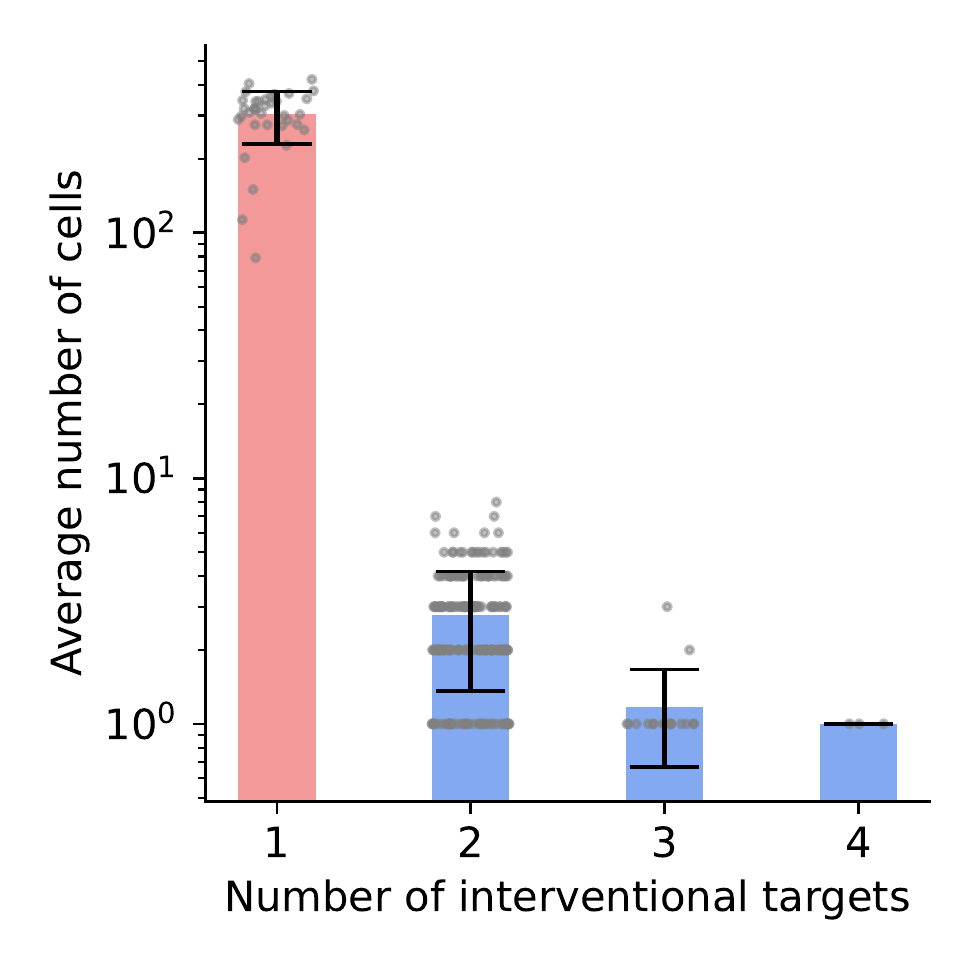}
         \caption{Average number of cells for interventions with the same number of target genes.}
     \end{subfigure}
    \caption{\textbf{Number of samples for the interventions on the $36$ target genes considered in our analysis.} (A) finer-level view of number of cells per intervention. (B) coarser-level view of number of cells per intervention by grouping together and averaging over all interventions with the same number of targets \rec{(36, 196, 18, 3 interventions with 1, 2, 3, 4 targets, respectively)}.}
    \label{fig:s9}
\end{figure}

To reduce the computational burden of considering the entire gene space, we focus on 36 important genes selected by \cite{frangieh2021multimodal} in their Figure~4d that are known to affect immunotherapy resistance in melanoma cell lines, including members of the IFN-$\gamma$ response pathway, surface checkpoints, as well as other genes involved in antigen presentation, inflammation, and cell differentiation. 
These genes were grouped by \cite{frangieh2021multimodal} into several co-functional modules and co-regulatory programs. 
The authors established partial functional relationships between these gene modules and programs (see Supplementary Fig.~\ref{fig:s10}).
These directed relationships can either be used as a prior for learning the underlying DAG or for validating the estimated DAG. Since in many applications, prior knowledge on the underlying causal structure is not available and our goal is to demonstrate the performance of our method in a typical instance, we here estimate the DAG from the data and only use the functional relationships to analyze the performance of the DAG learning algorithm.

\begin{figure}[!t]
    \centering
    \includegraphics[width=.3\textwidth]{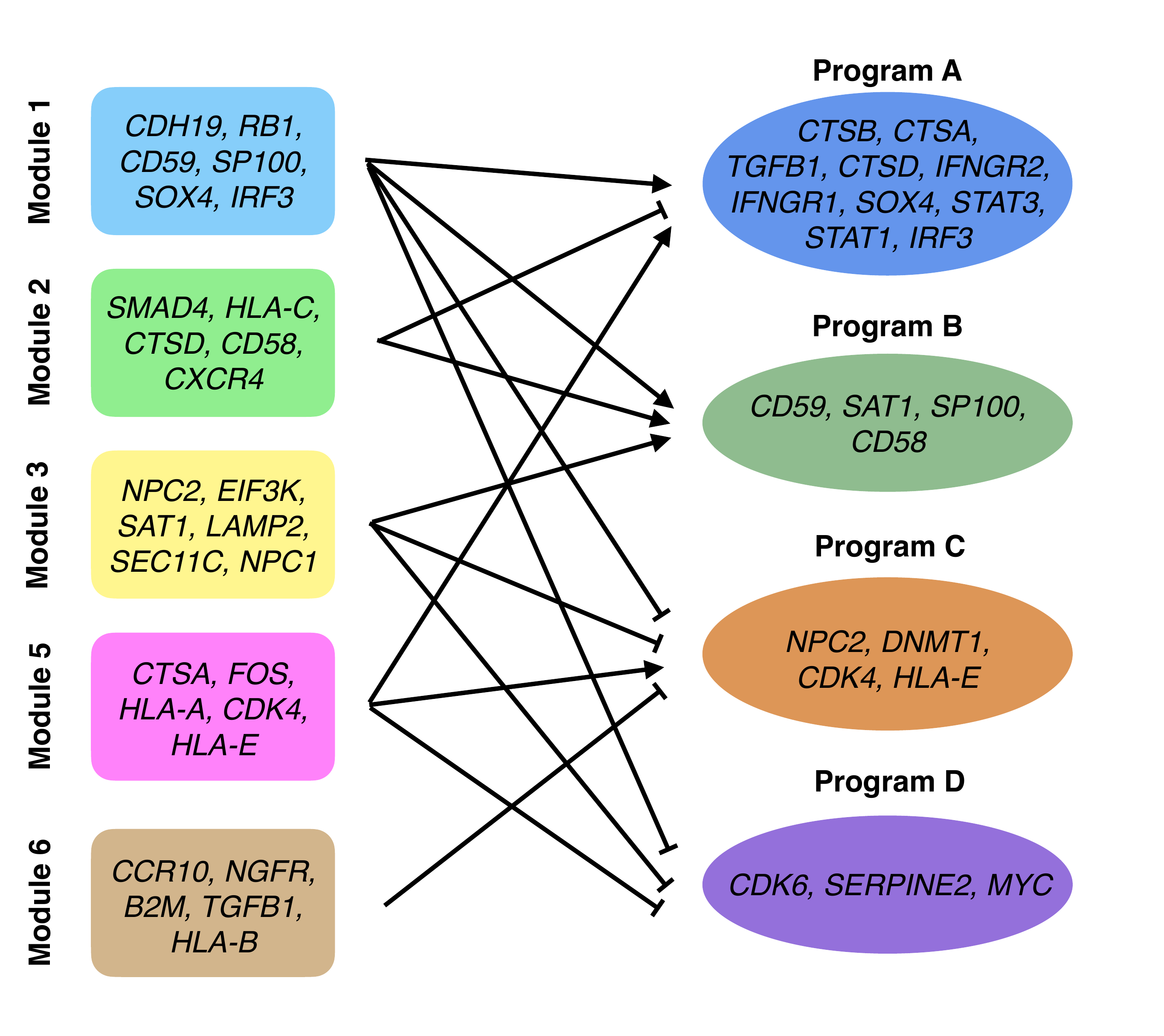}
    \caption{\textbf{Co-functional modules and co-regulatory programs given in \cite{frangieh2021multimodal}.} Intersected with the considered $36$ target genes in our experiment. Arrows $\rightarrow$ represent inducing relationships, whereas arrows $\raisebox{-.2ex}{\rotatebox{90}{$\bot$}}$ represent reducing relationships.}
    \label{fig:s10}
\end{figure}

\begin{figure}[!ht]
     \centering
    \begin{subfigure}[b]{0.85\textwidth}
         \centering
         \includegraphics[width=.95\textwidth]{final-figs/appendix/DAG-gsp.pdf}
         \caption{}
     \end{subfigure}\\
    \begin{subfigure}[b]{0.48\textwidth}
         \centering
         \includegraphics[width=.75\textwidth]{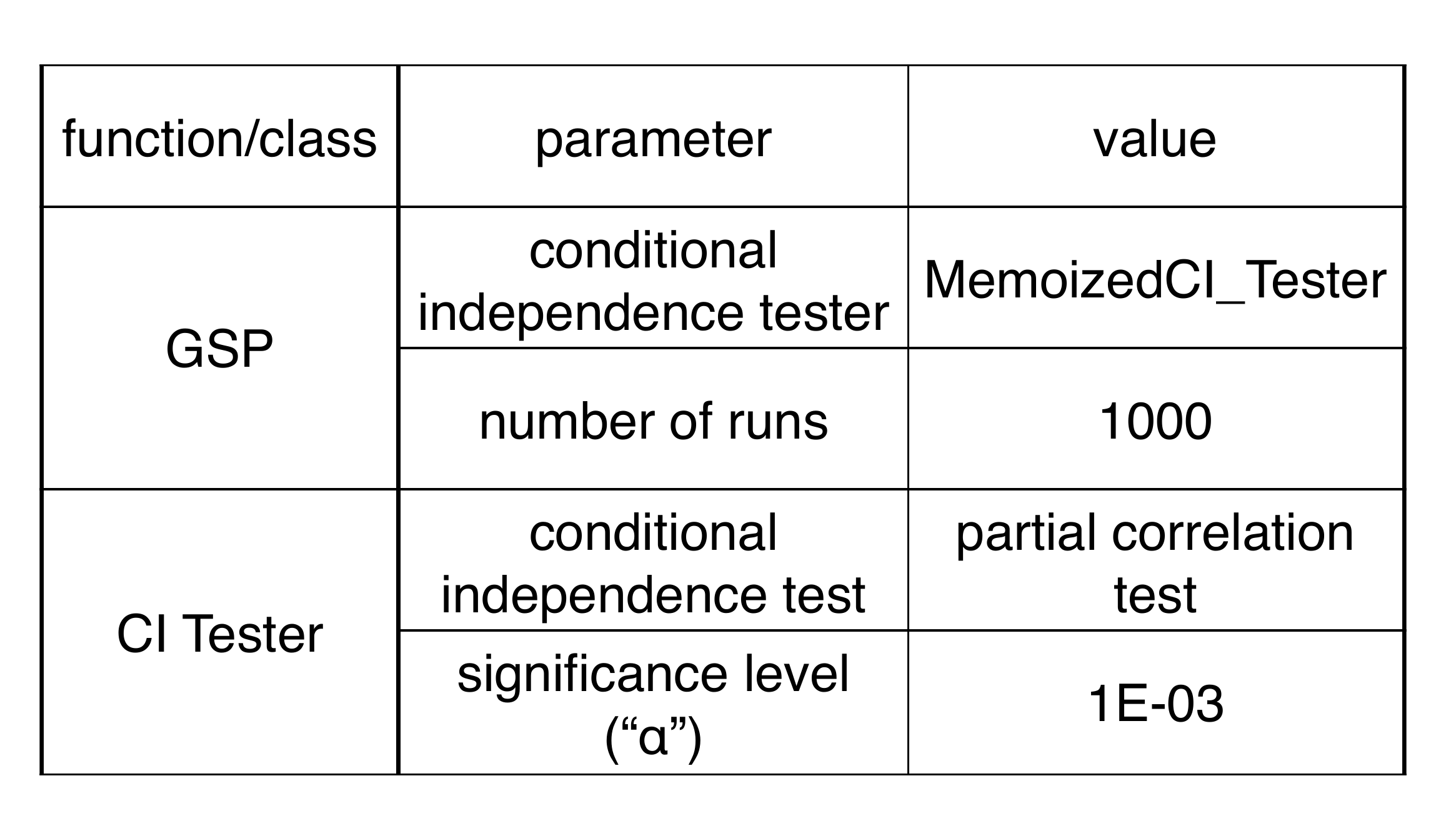}
         \caption{}
     \end{subfigure}
    \begin{subfigure}[b]{0.43\textwidth}
         \centering
         \includegraphics[width=.9\textwidth]{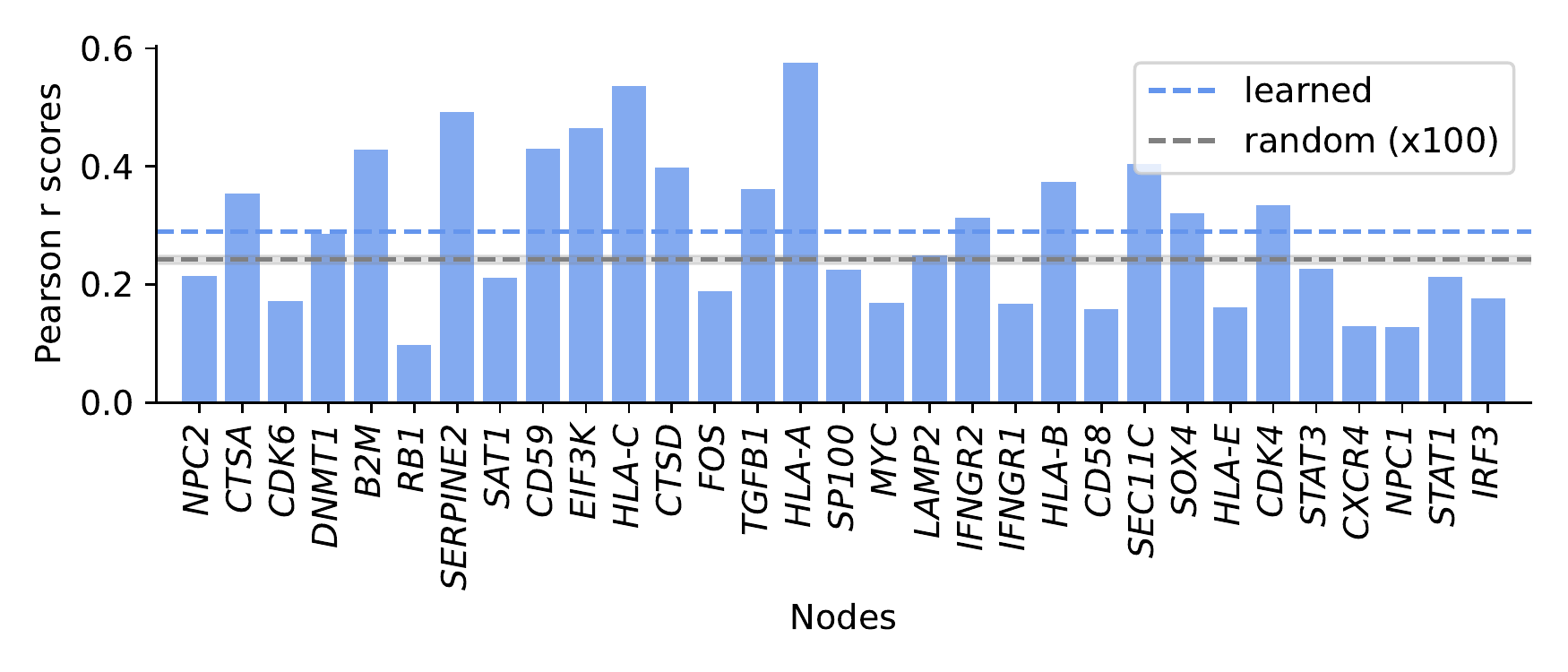}
         \caption{}
     \end{subfigure}
    \caption{\rec{\textbf{Learned linear Gaussian SCM on the $36$ genes of interest based on the control cells.}} (A) Learned DAG on the $36$ considered genes. Nodes are oriented up-down by their topological order and colored by the module/program they belong to in Supplementary Fig.~\ref{fig:s10}. (B) Parameters used in GSP \cite{solus2021consistency} to learn the above DAG. (C) Pearson r scores of regressing each non-source gene against its parents. Blue: average scores in the learned DAG; grey (with errorbars): average of scores in a random graph (100 samples).}
    \label{fig:s11}
\end{figure}

To learn a DAG over the 36 genes, we use the Greedy Sparsest Permutation (GSP) algorithm~\cite{solus2021consistency} based on the control cells (i.e., observational samples with no perturbations).
The parameters used in GSP are given in Extended Data Fig.~\ref{fig:s11}B. 
Extended Data Fig.~\ref{fig:s11}A shows the learned DAG, where we color the nodes based on the module/program they belong to  in Supplementary Fig.~\ref{fig:s10}.
Since each gene can appear in one of the modules, one of the programs, or both sides in Supplementary Fig.~\ref{fig:s10},
we color each node by the module it belongs to on the inside and the program it belongs to on the rim. 
This allows comparing the learned causal relationships with the given partial functional relationships. For example, the edge \emph{NPC2}$\rightarrow$\emph{CDK6} in Extended Data Fig.~\ref{fig:s11}A agrees with Supplementary Fig.~\ref{fig:s10} since Module 3 inhibits Program D. 
However, not all edges agree; in fact, in Supplementary Fig.~\ref{fig:s10} there are bidirectional relationships between genes (e.g., \emph{SOX4} and \emph{IRF3} both belong to Module 1 and Program A), which is not consistent with a DAG structure. Thus there exists no DAG that is in complete agreement with all the directed graph in Supplementary Fig.~\ref{fig:s10}.

When no prior knowledge is available about the DAG structure, the inferred DAG can nevertheless be validated by analyzing how well the expression of a downstream gene can be predicted from the expression of its parents. This is done by regressing the expression of each non-source gene on the expression of its parent genes. 
Extended Data Fig.~\ref{fig:s11}C shows the Pearson correlation coefficients for all non-source genes. 
We compare the average of these correlation coefficients against the average of the correlation coefficients based on a randomly generated DAG (we use an Erdös-Rényi graph~\cite{erdos1960evolution} with the same sparsity as the learned DAG. 
\rec{We achieve this by generating the Erdös-Rényi graph with edge probability equaling $\nicefrac{2 K}{p(p-1)}$, where $K$ is the number of edges in the learned DAG.}).
%
%This way, we have the expectation of the number of edges in the random DAG equaling the number of edges in the learned DAG.}
%
Generating $100$ such random DAGs and computing the average correlation coefficients, we observe that the average score in the learned DAG is significantly higher than it is in a random DAG. This shows that the inferred DAG better describes the observed data than a random DAG.  This inferred DAG is used in the sequel to run our optimal perturbation design algorithms.
%Note that although these expression values are log-TPM transformed, they can still be highly zero-inflated due to technolo., ,gical limits.

We consider the following optimal perturbation design problem: let the control cell population (i.e., observational distribution) be the source cell state and let a particular perturbed cell population with a single target gene amongst the considered $36$ genes (i.e., single-node
interventional distribution) be the target cell state. The goal is to identify the targeted gene or a gene target with a similar effect.
In Supplementary Fig.~\ref{fig:s12}A, we show a UMAP \cite{mcinnes2018umap} plot of the control cells and the perturbed cells for $6$ representative perturbations. 
Importantly, this plot shows that the alterations induced by single-target perturbations are subtle and can in general not be captured by conventional dimension reduction techniques. 
To visualize the subtle alterations induced by the different interventions, we use \emph{contrastive PCA} \cite{abid2018exploring}, a method to extract low-dimensional structure enriched in a foreground dataset relative to a background dataset. 
Specifically, we use all the perturbed cells as foreground and all the control cells as background. 
Supplementary Fig.~\ref{fig:s12}B shows the resulting visualization of the $6$ representative interventional distributions, including the ones discussed in the main text. In contrast to the UMAP plot, there are visible differences in the interventional distributions for different targeted genes; e.g., the left two columns are similar and distinct from the right-most column.

\begin{figure}[!h]
    \centering
    \begin{subfigure}[b]{0.41\textwidth}
         \centering
         \includegraphics[width=\textwidth]{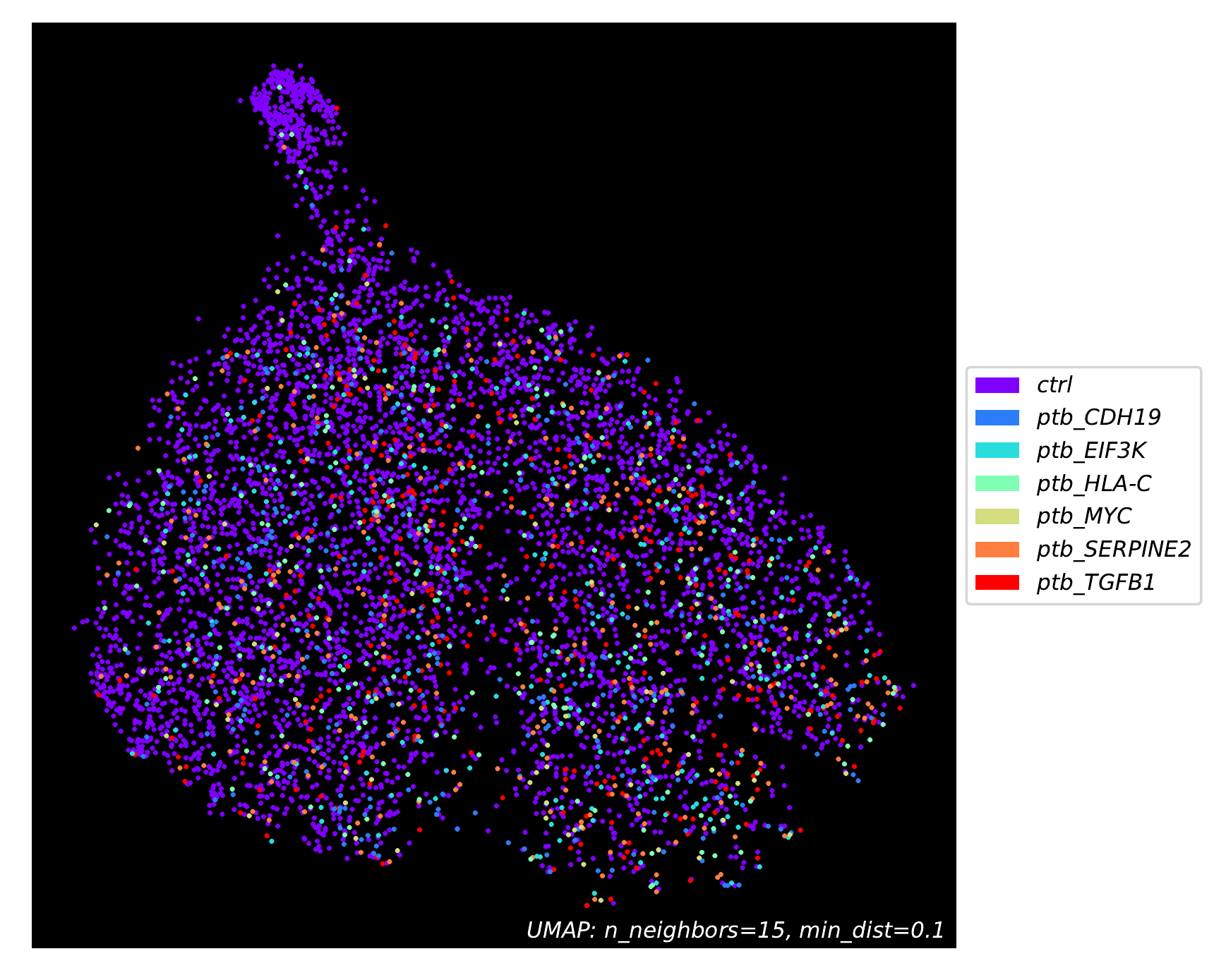}
         \caption{UMAP plot of cells, colored by whether\\the cell belongs to the observational distrib-\\ution or one of the interventional distributio-\\ns.}
    \end{subfigure}
    \begin{subfigure}[b]{0.48\textwidth}
         \centering
         \includegraphics[width=\textwidth]{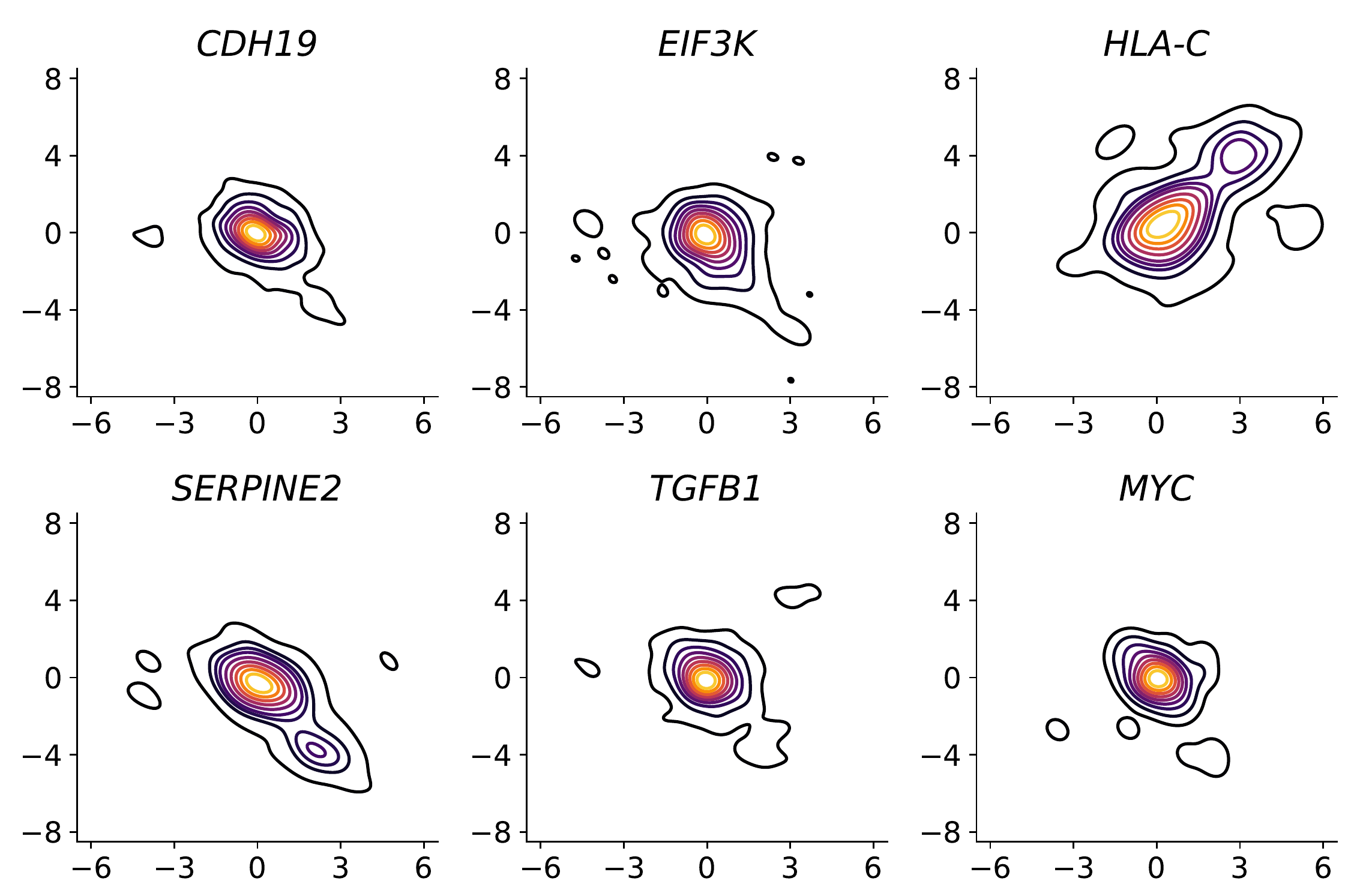}
         \caption{Kernel density estimate (KDE) plot of the 2-d projection of the interventional distribution obtained by targeting $6$ representative genes: \emph{CDH19}, \emph{EIF3K}, \emph{HLA-C}, \emph{SERPINE2}, \emph{TGFB1}, and \emph{MYC}.}
    \end{subfigure}
    \caption{\textbf{Visualization of interventional distributions.} (A) UMAP plot of control cells and perturb cells from $6$ representative interventions. (B) KDE plots using the first two contrastive PCA coordinates on the same $6$ interventional distributions.}
    \label{fig:s12}
\end{figure}

\begin{figure}[!ht]
     \centering
     \begin{subfigure}[b]{0.19\textwidth}
         \centering
         \includegraphics[width=\textwidth]{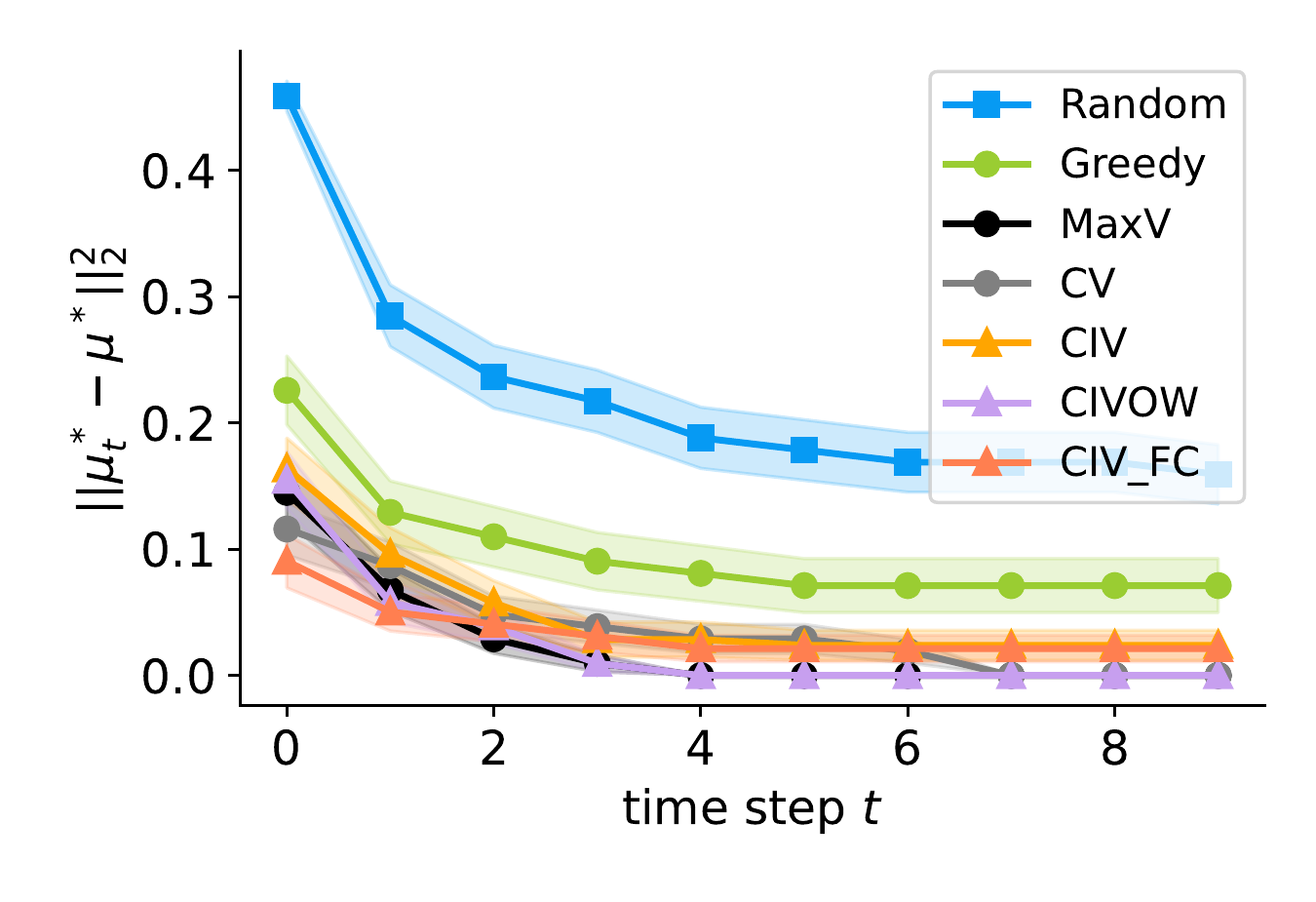}
        \caption{\emph{CDH19}.}
     \end{subfigure}
     \begin{subfigure}[b]{0.19\textwidth}
         \centering
         \includegraphics[width=\textwidth]{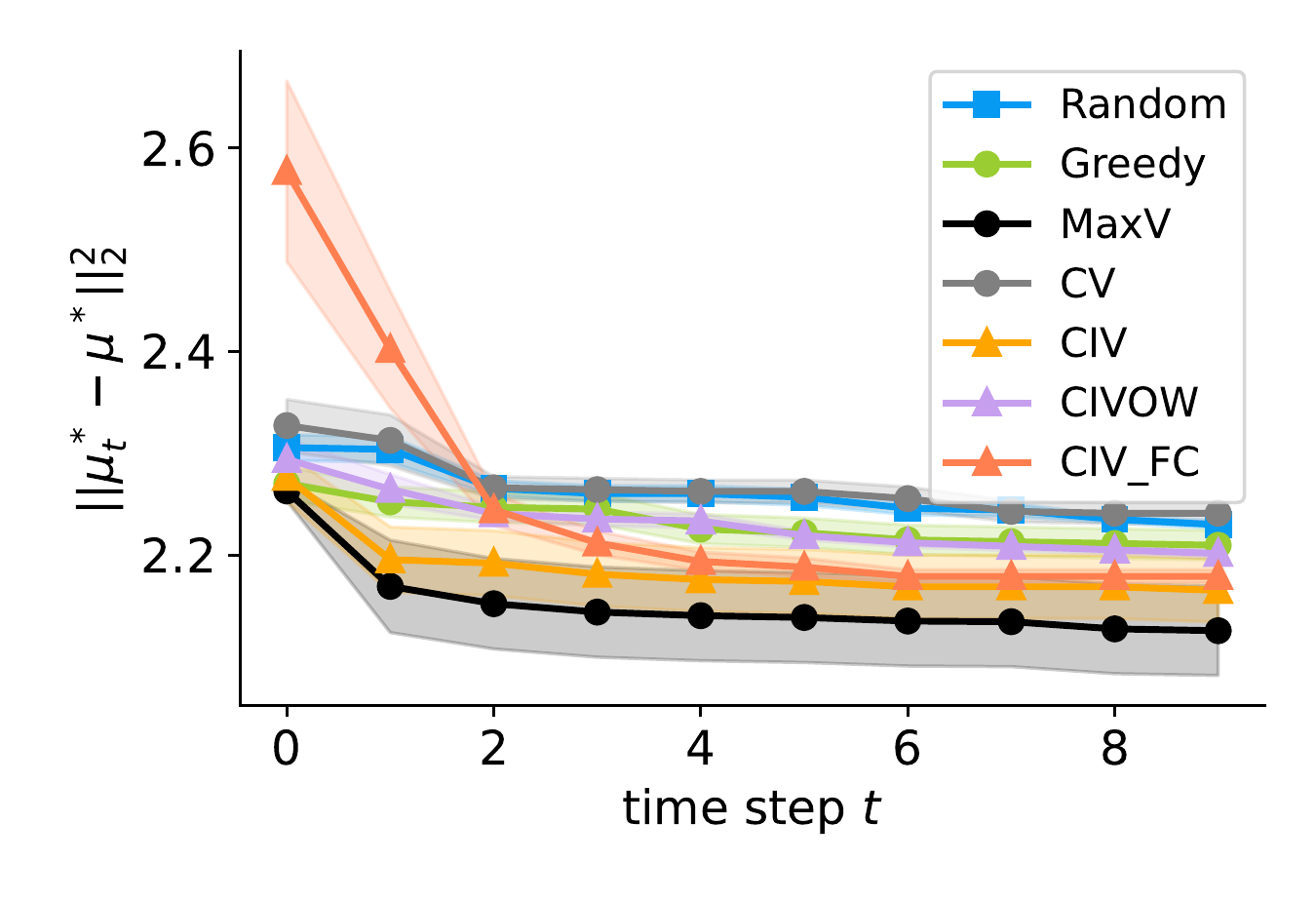}
        \caption{\emph{SERPINE2}.}
     \end{subfigure}
    %  \hfill
     \begin{subfigure}[b]{0.19\textwidth}
         \centering
         \includegraphics[width=\textwidth]{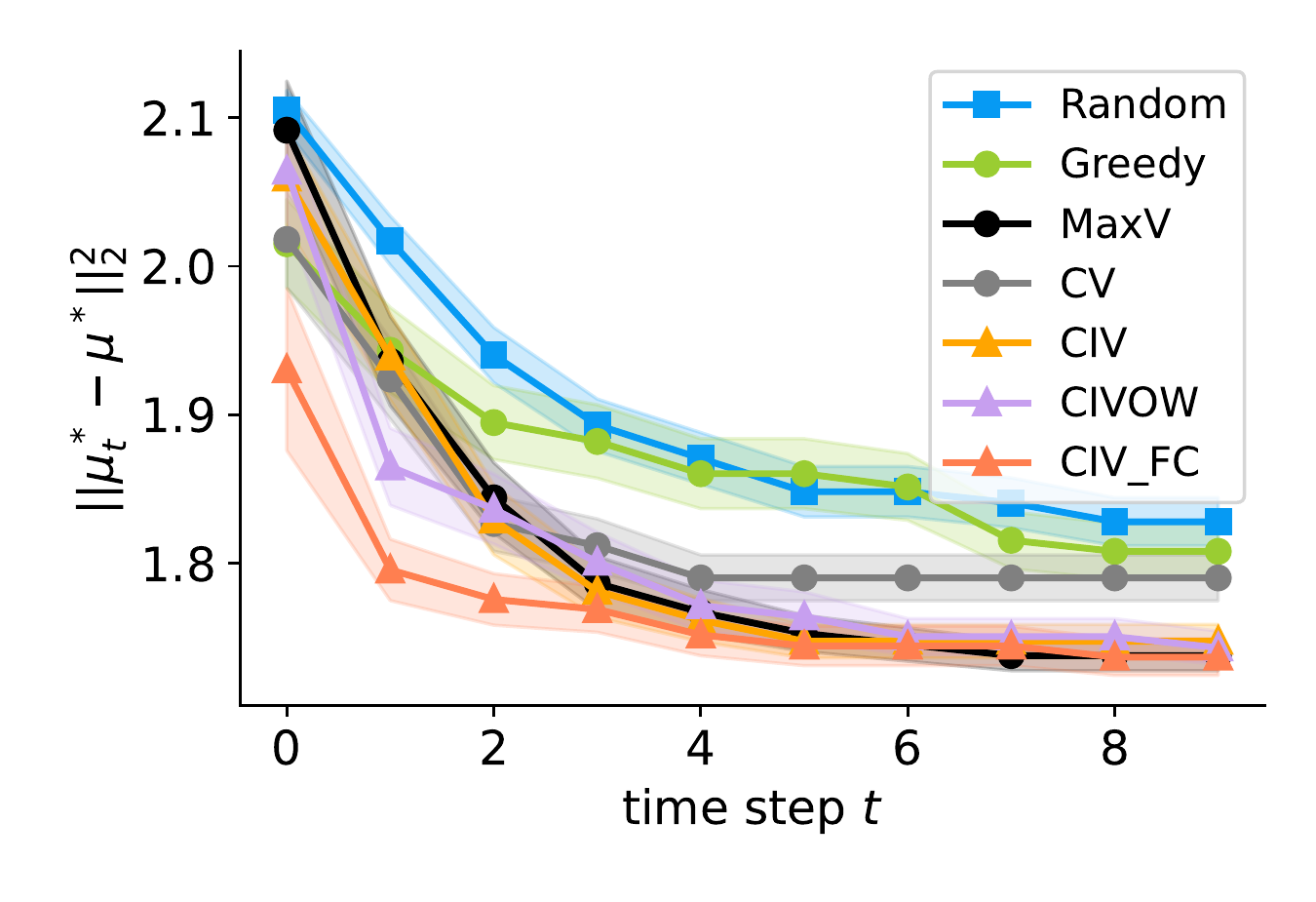}
         \caption{\emph{EIF3K}.}
     \end{subfigure}
    %  \hfill
     \begin{subfigure}[b]{0.19\textwidth}
         \centering
         \includegraphics[width=\textwidth]{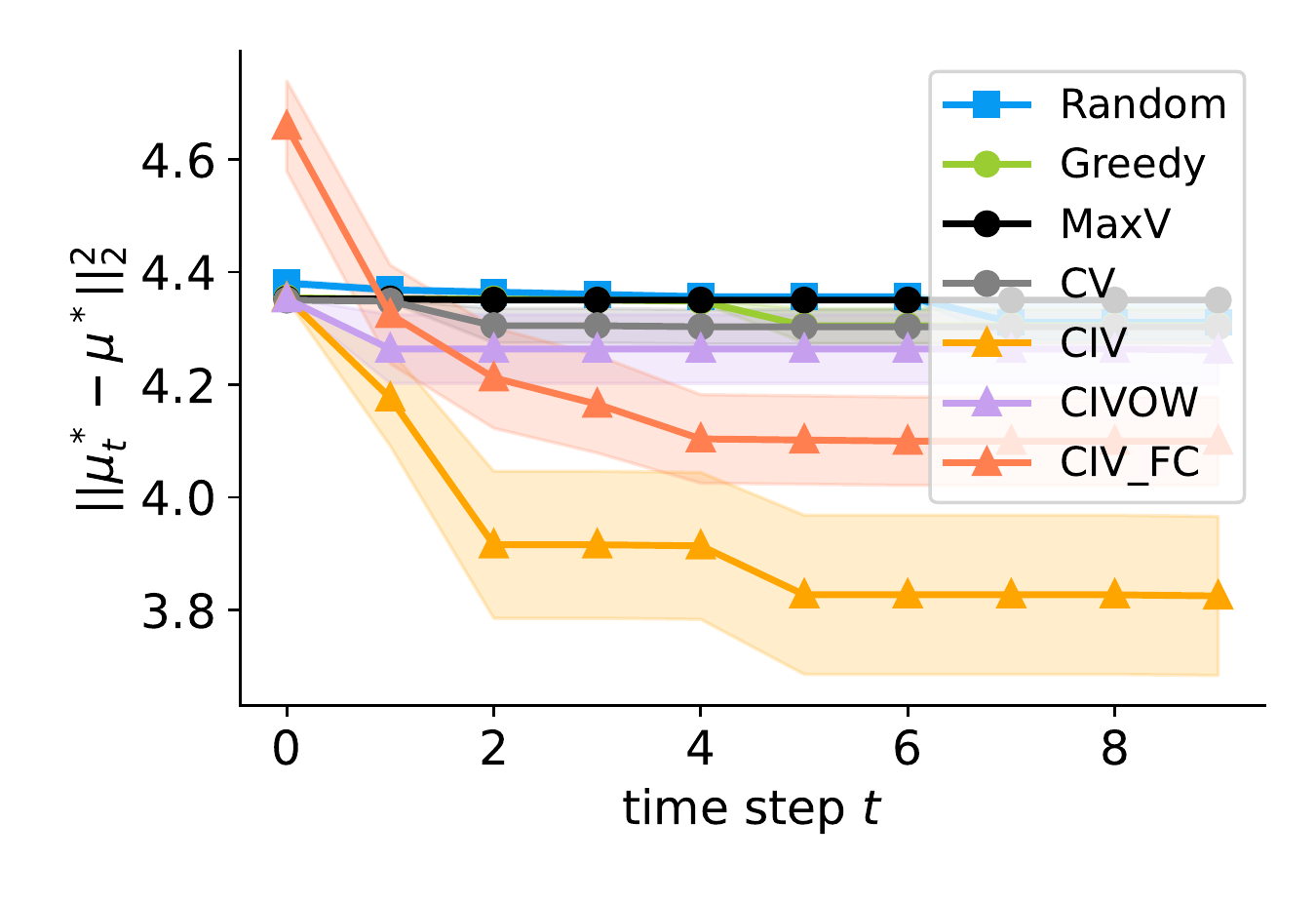}
         \caption{\emph{HLA-C}.}
     \end{subfigure}
    % \hfill
    \begin{subfigure}[b]{0.19\textwidth}
         \centering
         \includegraphics[width=\textwidth]{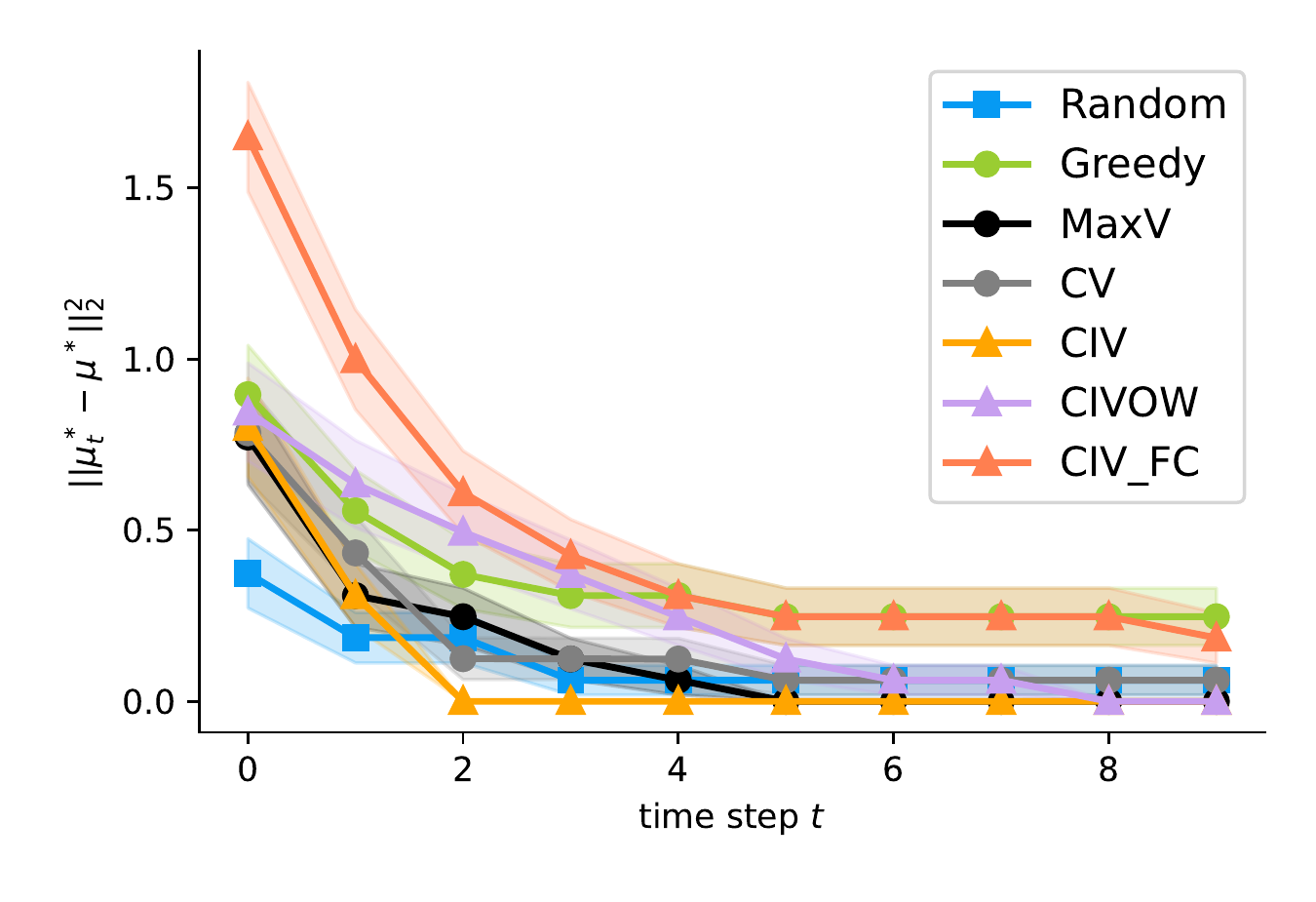}
         \caption{\emph{TGFB1}.}
     \end{subfigure} \\
       \begin{subfigure}[b]{0.19\textwidth}
         \centering
         \includegraphics[width=\textwidth]{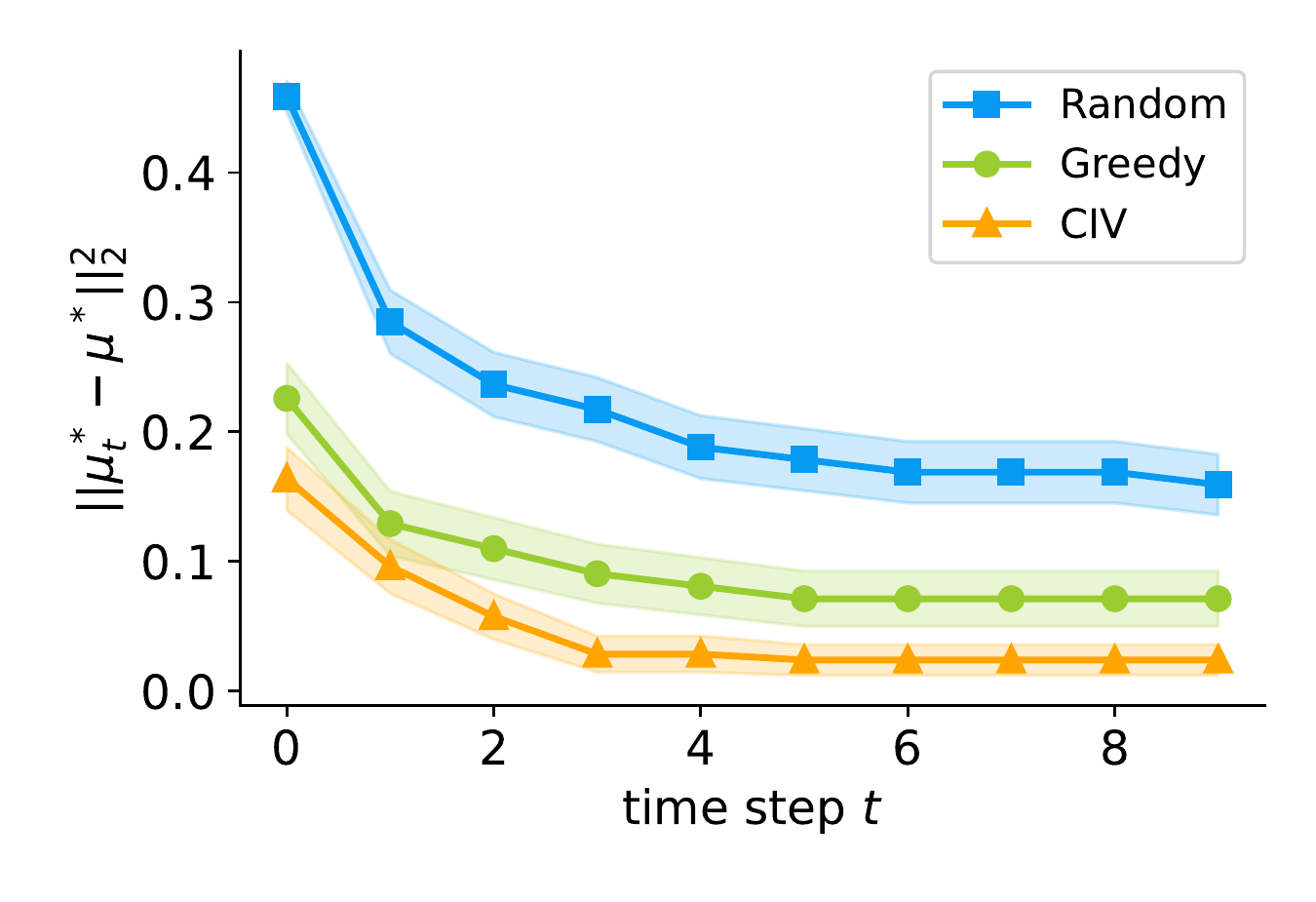}
        \caption{\emph{CDH19}.}
     \end{subfigure}
     \begin{subfigure}[b]{0.19\textwidth}
         \centering
         \includegraphics[width=\textwidth]{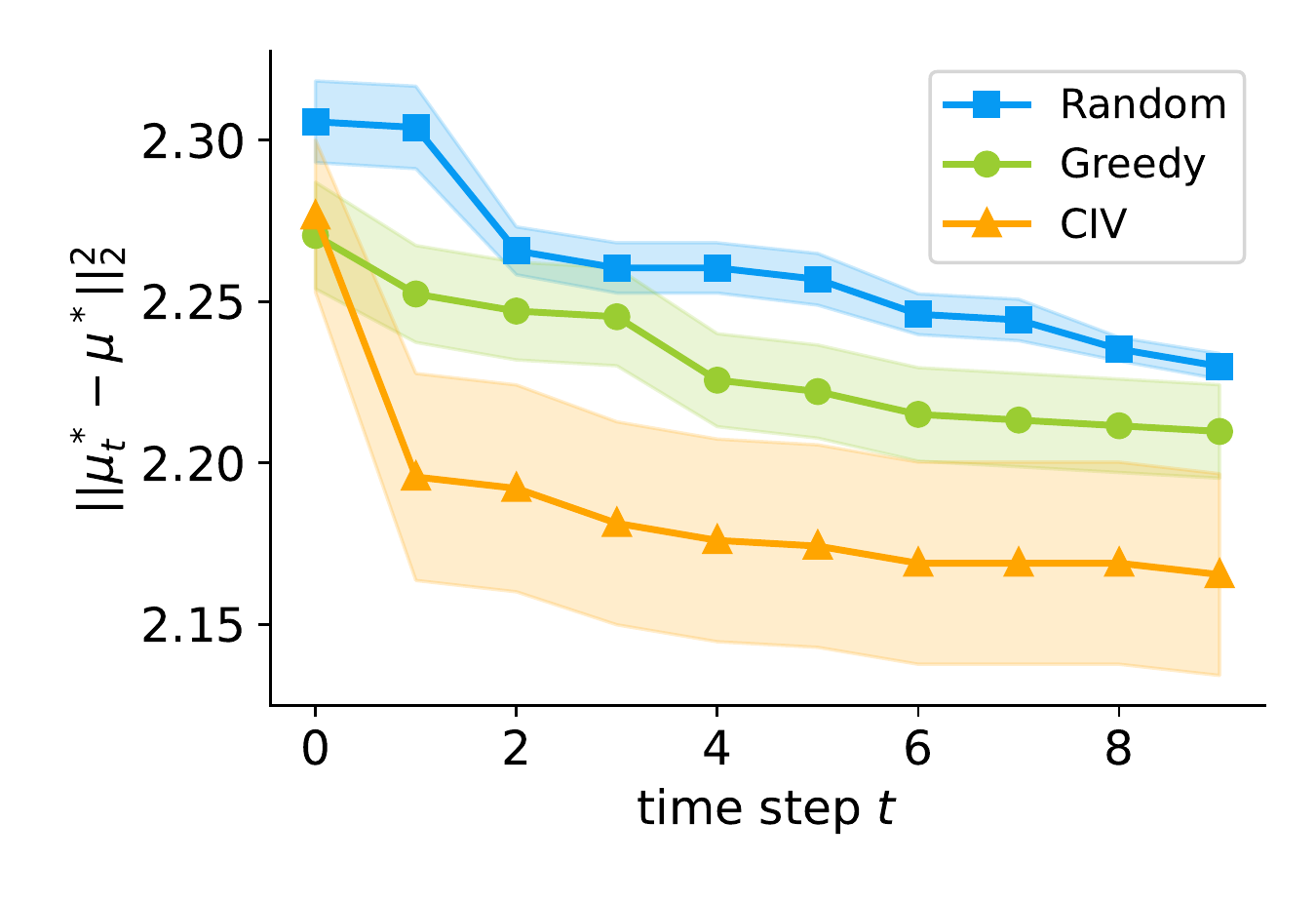}
        \caption{\emph{SERPINE2}.}
     \end{subfigure}
    %  \hfill
     \begin{subfigure}[b]{0.19\textwidth}
         \centering
         \includegraphics[width=\textwidth]{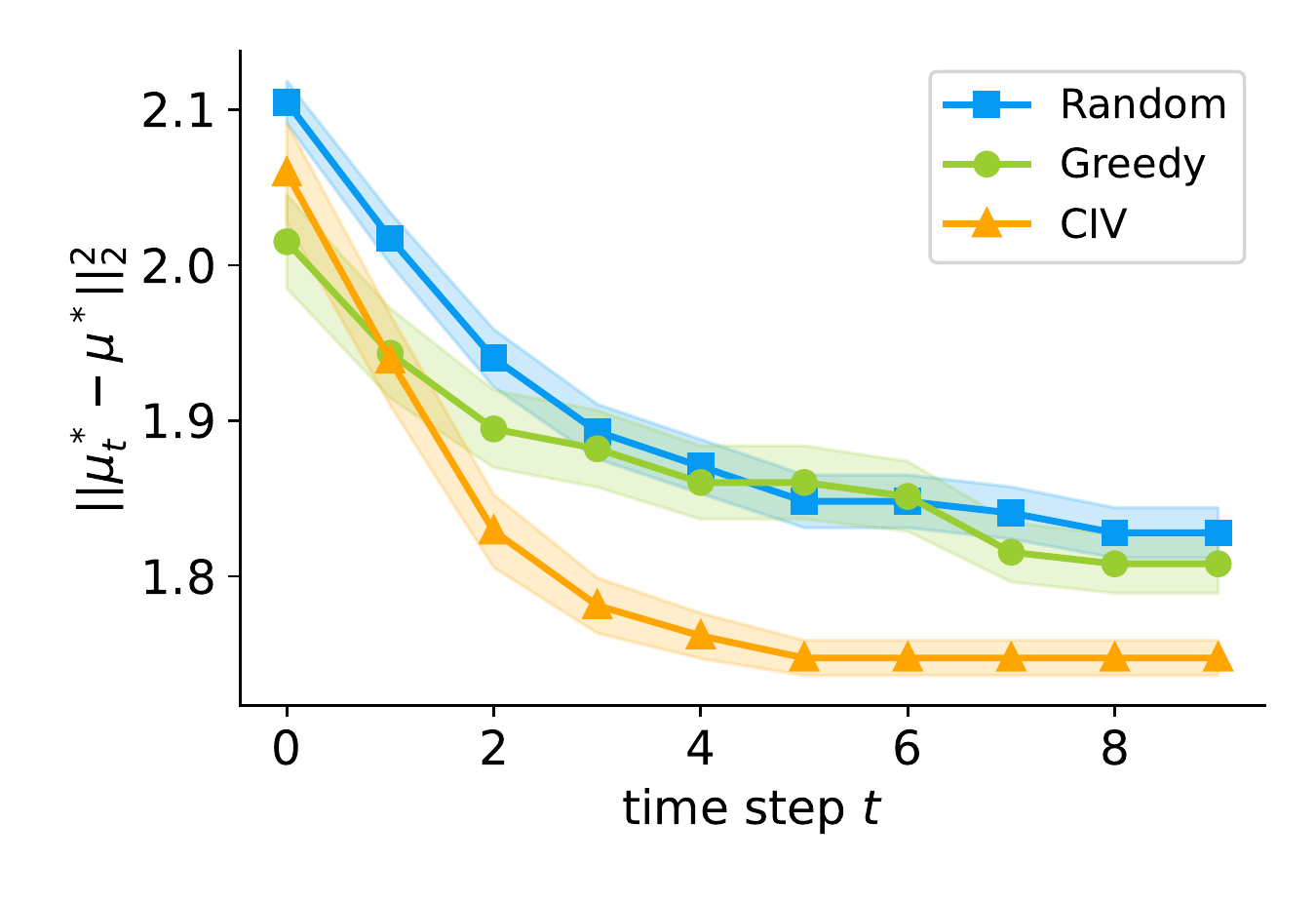}
         \caption{\emph{EIF3K}.}
     \end{subfigure}
    %  \hfill
     \begin{subfigure}[b]{0.19\textwidth}
         \centering
         \includegraphics[width=\textwidth]{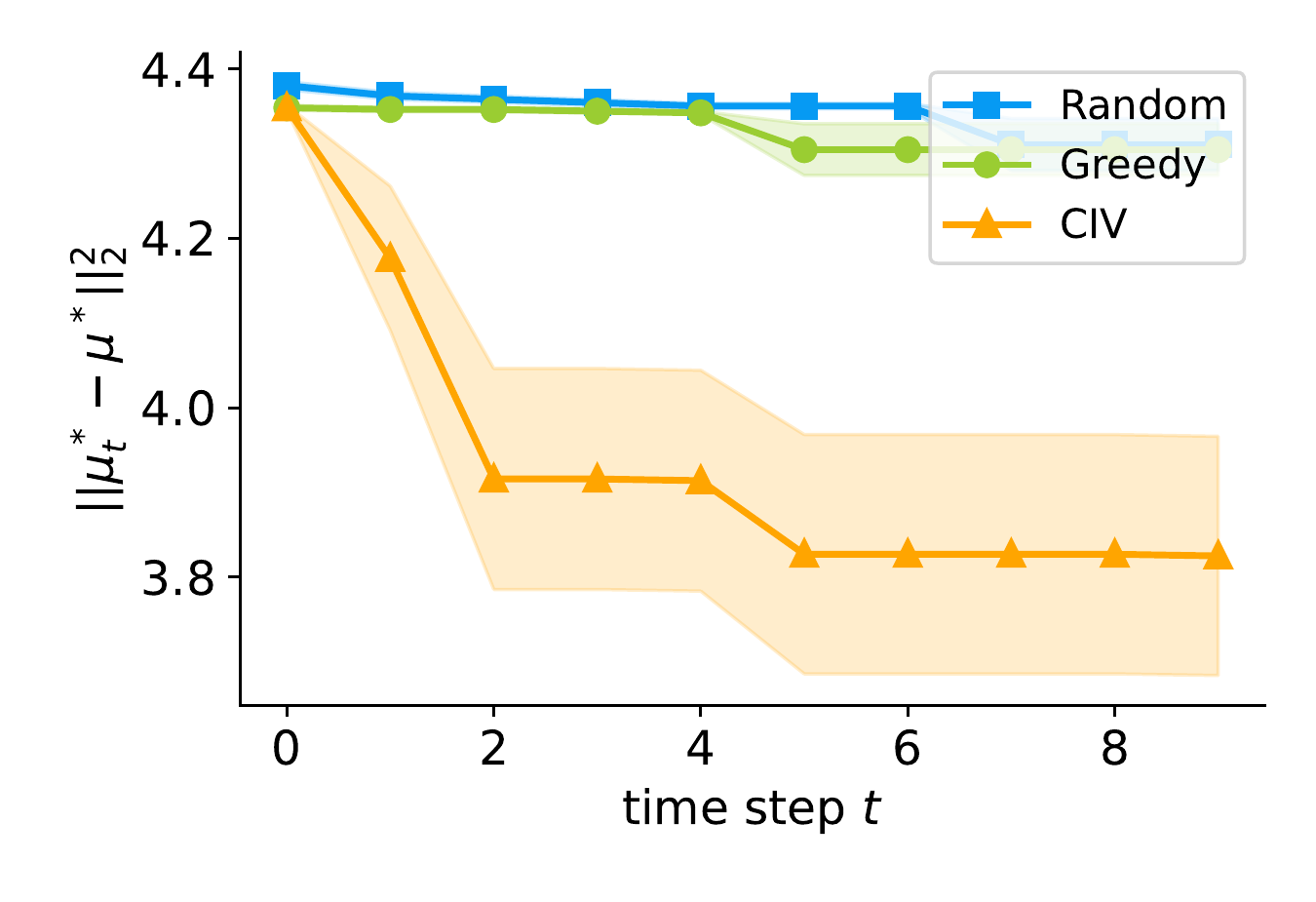}
         \caption{\emph{HLA-C}.}
     \end{subfigure}
    % \hfill
    \begin{subfigure}[b]{0.19\textwidth}
         \centering
         \includegraphics[width=\textwidth]{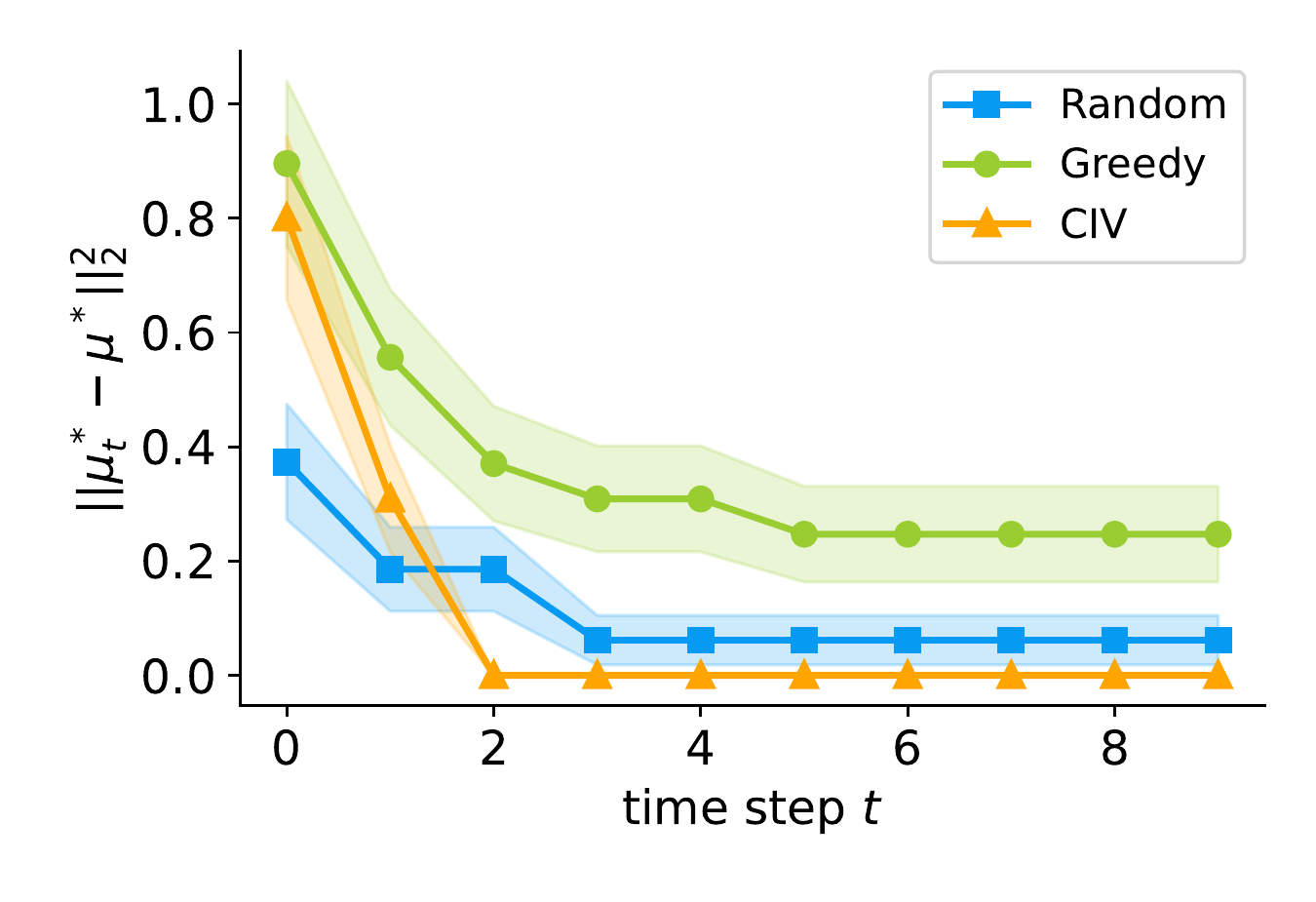}
         \caption{\emph{TGFB1}.}
     \end{subfigure}
    \caption{\rec{\textbf{Comparison of the different acquisition functions for identifying the intervention that matches the target mean for $5$ different ground-truth target genes.}} Square distance \rec{presented as mean value +/- SEM} between the target mean $\bmu^*$ and the best approximation $\bmu^*_t$ across time step $t$ is reported. Top: comparison of {6 acquisition functions}. Bottom: same as top, showing only 3 methods to de-clutter the plots, comparing our CIV acquisition function against the random and greedy baseline. Each plot is captioned with its ground-truth target gene.}
    \label{fig:s13}
\end{figure}

We end by presenting additional experimental results. 
Including the gene \emph{CDH19}, which is discussed in the main text, we selected one gene from each of the $5$ modules in \cite{frangieh2021multimodal} and Supplementary Fig.~\ref{fig:s10}. We selected these genes at random among all genes for which the gene perturbation was effective, i.e., genes that when targeted showed a reduced expression as compared to the control cells. The results for the representative 5 genes \emph{CDH19} (results also discussed in the main text), \emph{SERPINE2}, \emph{EIF3K}, \emph{HLA-C} and \emph{TGFB1} are shown in Extended Data Fig.~\ref{fig:s13}.
{In addition to the $5$ acquistion functions discussed in the main text and also shown in the synthetic experiments, we add another acquisition function to assess the sensitivity of our method to the learned DAG: CIV-FC corresponds to using the CIV acquisition function with a fully-connected version of the learned DAG.}
The results show similar trends as already observed in the main text for the gene \emph{CDH19}: as compared to the passive baseline, the active methods are able to more quickly identify perturbation targets that move the distribution to the desired target distribution given by perturbing one of the genes. In addition, the acquisition functions CIV and CIV-OW are generally among the best performing active methods. 
{While CIV-FC outperforms the passive and greedy baselines, it generally performs worse than CIV, as it requires the estimation of more parameters.}
We note that some of the curves do not go to zero within 50 steps, meaning that in average they did not yet identify the correct intervention target that gives rise to the desired interventional distribution. This is because within $50$ steps some runs only identify interventions with similar effect as the targeted gene.

\begin{figure}[!t]
     \centering
     \begin{subfigure}[b]{0.3\textwidth}
         \centering
    \includegraphics[width=.9\textwidth]{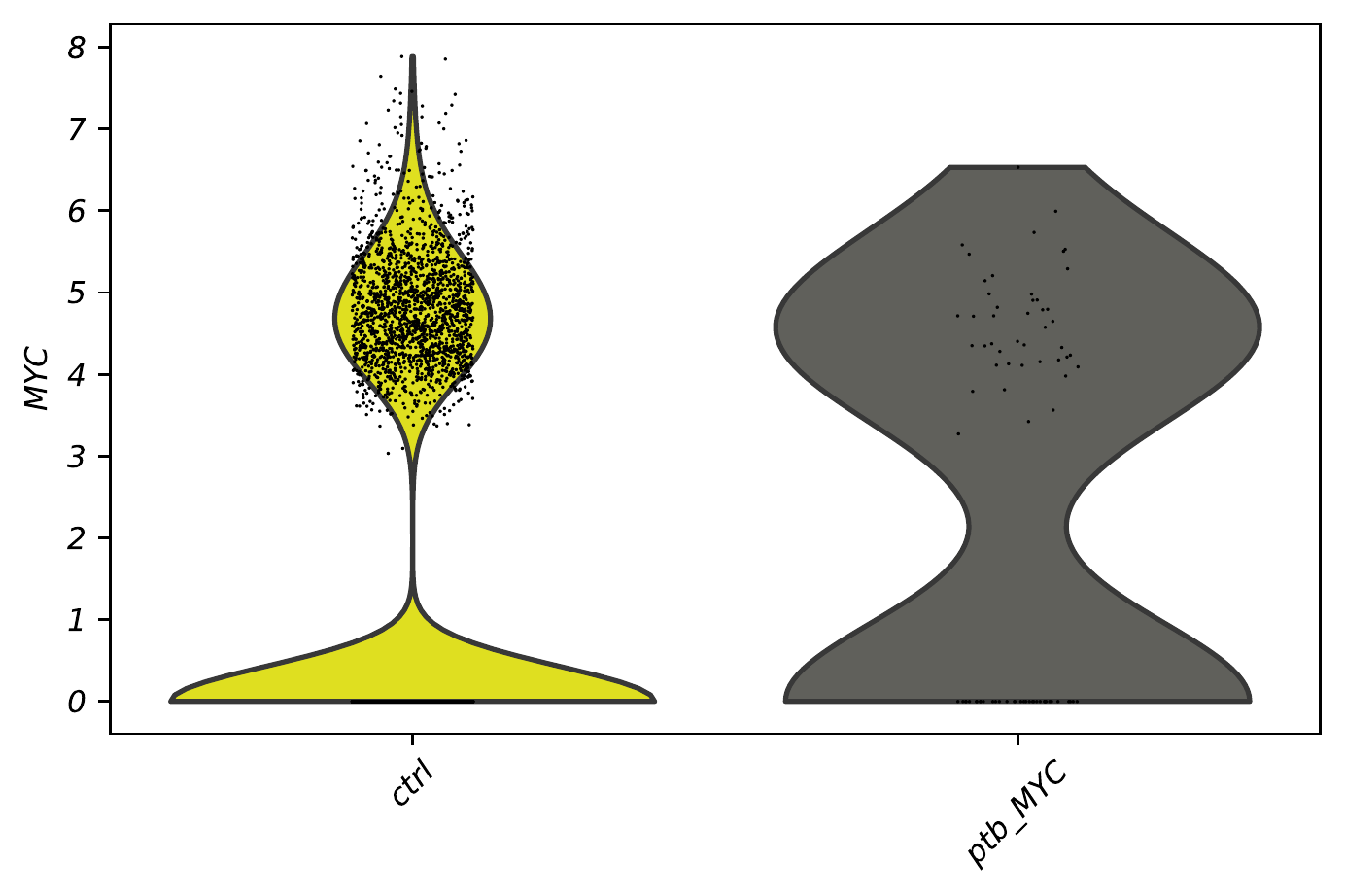}
        \captionsetup{justification=centering}
        \caption{\emph{MYC} expression. Mean:\\1.52~(ctrl)~v.s.~2.58~(ptb\_\emph{MYC}).}
     \end{subfigure}
    %  \hfill
     \begin{subfigure}[b]{0.3\textwidth}
         \centering
         \includegraphics[width=.9\textwidth]{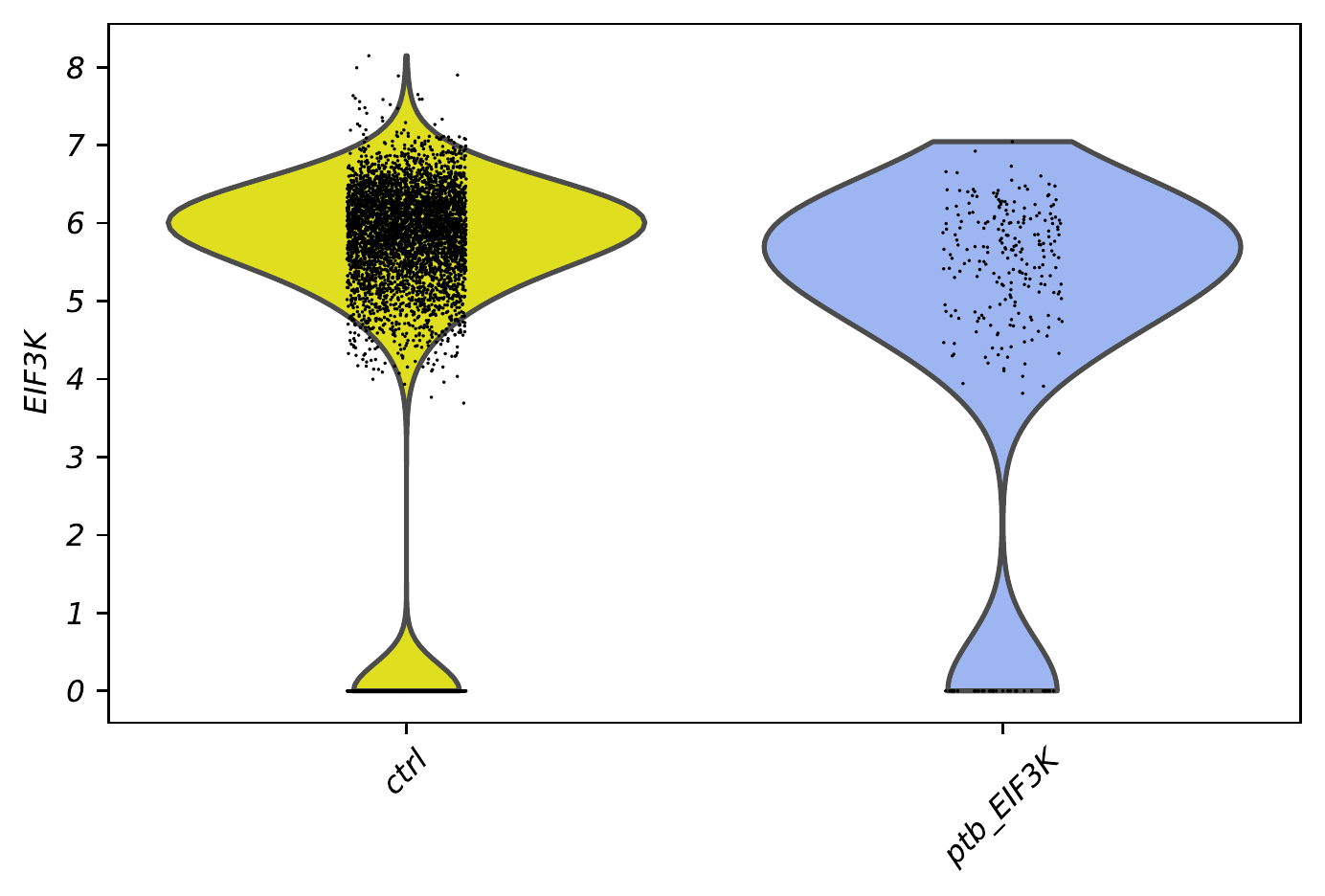}
         \captionsetup{justification=centering}
         \caption{\emph{EIF3K} expression. Mean:\\ 5.19~(ctrl)~v.s.~4.78~(ptb\_\emph{EIF3K}).}
     \end{subfigure}
    %  \hfill
     \begin{subfigure}[b]{0.3\textwidth}
         \centering
         \includegraphics[width=.9\textwidth]{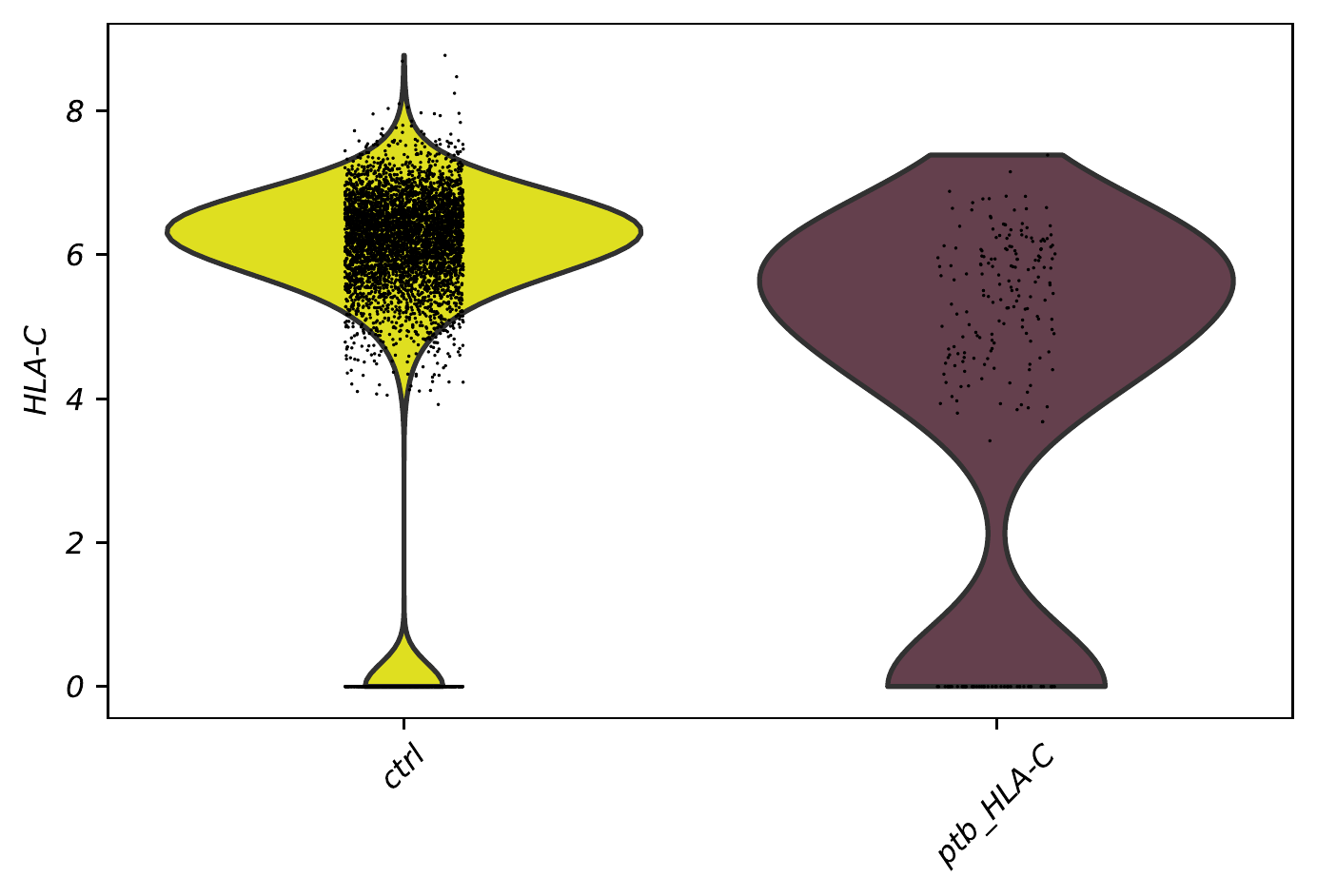}
         \captionsetup{justification=centering}
         \caption{\emph{HLA-C} expression. Mean:\\5.98~(ctrl)~v.s.~4.16~(ptb\_\emph{HLA-C}).}
     \end{subfigure}
     \caption{\rec{\textbf{Gene expression changes for three examples of different knock-out perturbations.} Comparing} target-gene expression in the control cell population and the perturbed cell population of the corresponding knock-out experiment. Included target genes: \emph{MYC}, \emph{EIF3K}, and \emph{HLA-C}. The mean expression of the target gene is given in each subcaption.}
     \label{fig:s14}
\end{figure}

As described above, we were careful to only consider gene targets, where the perturbation was actually effective. To end, we also show an example of a target gene whose expression was not reduced in the corresponding knock-out experiment.
Extended Data Fig.~\ref{fig:s14} shows that knocking out \emph{MYC} does not seem to reduce its expression, while knocking out \emph{EIF3K} or \emph{HLA-C} results in a reduction in the expression of the corresponding gene.
Extended Data Fig.~\ref{fig:s15} shows the results when applying the different methods to identify intervention targets that have a similar effect as knocking out \emph{MYC}.
As expected, the active methods do not outperform the random baseline in this case, since the active methods assume that an intervention on a target gene shifts its expression down. 
As also shown in the above experiments, this limitation is easily alleviated by pre-screening the interventions and sorting out interventions that seem ineffective in that they don't reduce the expression of the targeted gene.

\begin{figure}[!h]
    \centering
    \begin{subfigure}[b]{0.25\textwidth}
         \centering
         \includegraphics[width=\textwidth]{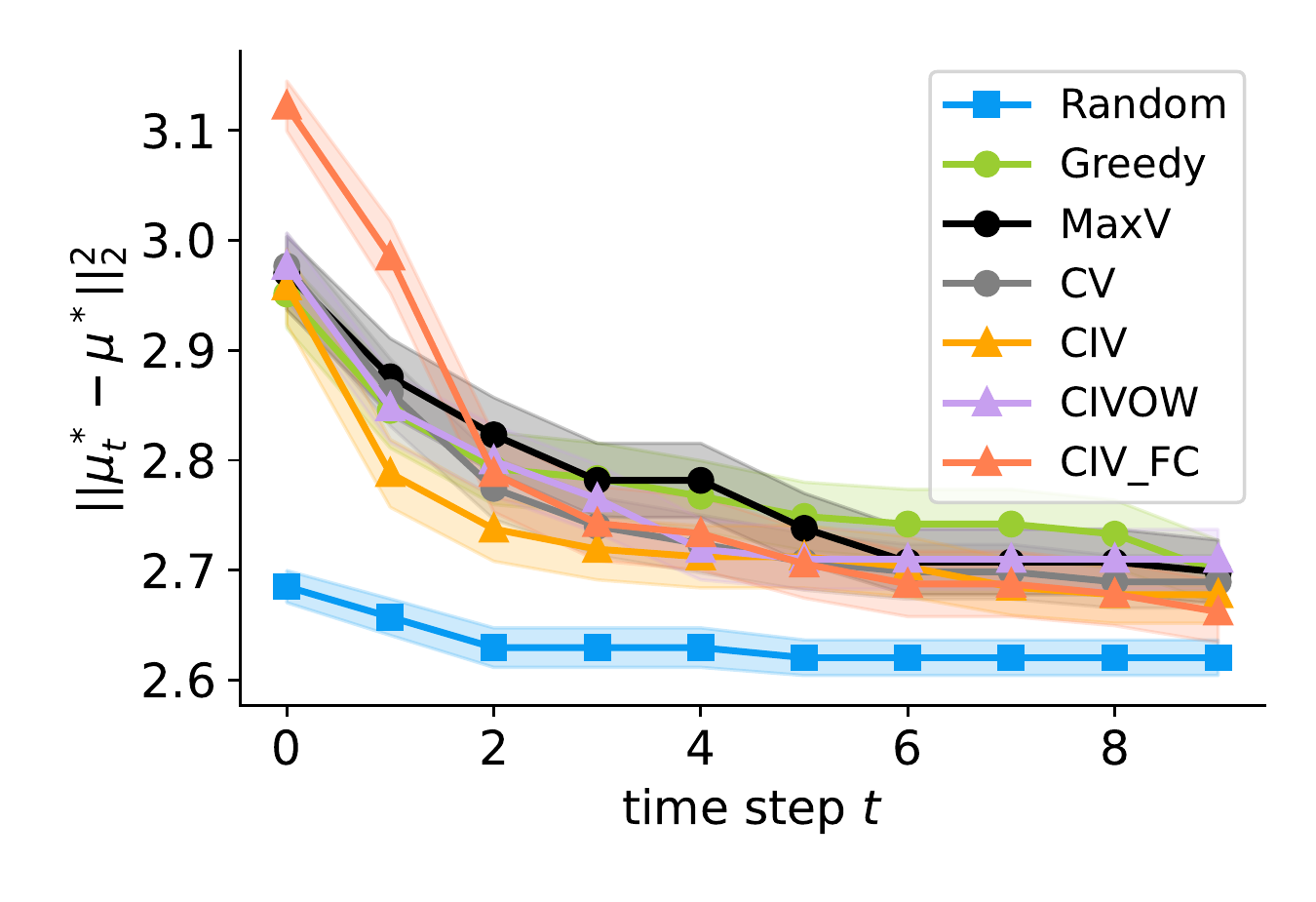}
         \caption{}
     \end{subfigure}
     \begin{subfigure}[b]{0.25\textwidth}
         \centering
         \includegraphics[width=\textwidth]{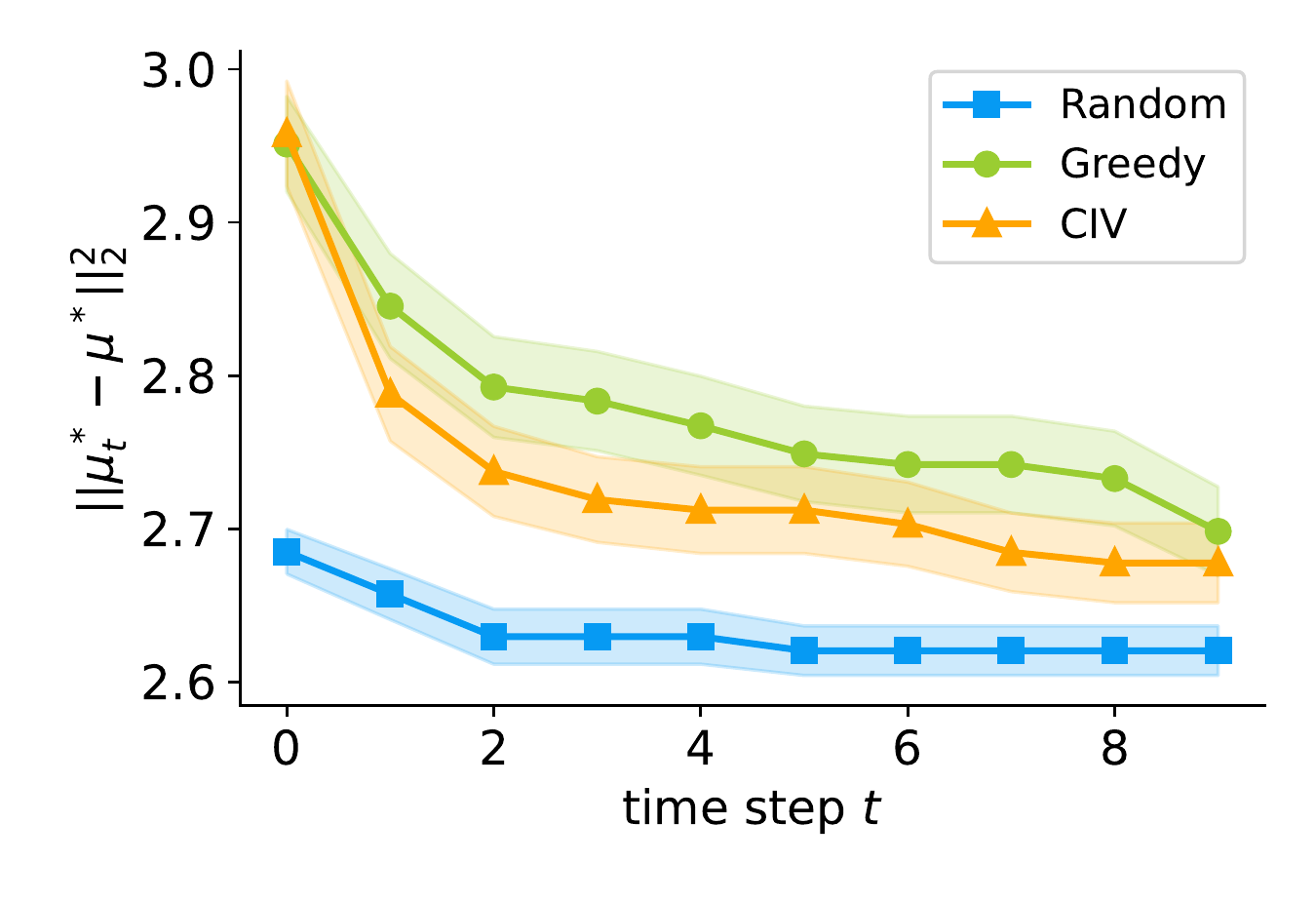}
         \caption{}
     \end{subfigure}
    \caption{\textbf{Comparison of acquisition functions for identifying interventions that match the target mean of perturbing \emph{MYC}.} The reported metric is the \rec{square distance presented as mean value +/- SEM between the target mean $\bmu^*$ and the best approximation $\bmu^*_t$ across all time steps $t$}. (A). All methods. (B) De-cluttered subset of methods.}
    \label{fig:s15}
\end{figure}

{
\section{Related Works}\label{sec:a0}

In this section, we expand on related literature at the intersection of causality and sequential decision making. While \textit{causal experimental design} is concentrated on \emph{estimation}, e.g.~to identify the most effective interventions for estimating a treatment effect, our work is more related to \textit{causal Bayesian optimization}, where the goal is \emph{optimization}, e.g.~to identify an intervention that optimizes treatment effect. In the related causal bandit and reinforcement learning (RL) literature,  the goal is to optimize the \emph{cumulative} reward/regret over multiple time steps rather than at a single endpoint, which is the problem we consider in our work.

%In this section, we expand the introduction and discuss further relevant literature at the intersection of causality and sequential decision making. We begin with \textit{experimental design}, which aims to select the most effective experiments for estimating unknown parameters or functions. We then shift our focus from estimation to optimization, and discuss the closely related field of \textit{Bayesian optimization}. Finally, we review works in \textit{bandit} and \textit{RL}, where the goal is to optimize cumulative reward/regret over multiple time steps rather than at a single endpoint.

\smallskip

\noindent\textbf{Causal Experimental Design.}
Research in this area is centered around  learning the underlying structural causal model or estimating some parameters of the model. The setup typically assumes an unknown directed acyclic graph (DAG) where the goal is to learn this DAG \cite{he2008active} or a specific feature of it (such as the existence of a particular edge) \cite{agrawal2019abcd} by querying interventions. Works have investigated both the noiseless setting, where an infinite number of samples are assumed for each intervention \cite{he2008active, shanmugam2015learning, squires2018causaldag,zhang2021matching}, as well as the noisy finite sample setting \cite{murphy2001active,tong2001active,agrawal2019abcd,gamella2020active,sussex2021near, tigas2022interventions,toth2022active}. A common approach in the finite sample setting is to model the unknown DAG probabilistically using Bayesian methods as in \cite{geiger2002parameter, kuipers2014addendum} and update the posterior distribution based on the obtained samples. Selecting the next intervention often requires marginalizing over all DAGs in the posterior support, which is generally intractable due to a super-exponential number of plausible DAGs. Several approximations have been proposed to overcome this limitation, including using only one DAG that maximizes the posterior likelihood \cite{agrawal2019abcd,sussex2021near}, sampling several DAGs from the posterior either through bootstrapping \cite{friedman2013data} or Markov Chain Monte Carlo \cite{agrawal2018minimal}, and variational approximations \cite{lorch2021dibs} enabled by recent developments in structure learning through continuous optimization \cite{zheng2018dags}. These works will be critical for extending our approach to the unknown DAG setting as discussed in Supplementary Information~\ref{sec:d}. %However, the scalability and theoretical characterization of the approximation errors produced by such methods are difficult.

\smallskip

\noindent\textbf{Causal Bayesian Optimization.} 
%Another field that is closer related to the present work is causal Bayesian optimization (CBO), where the goal is to find an optimal intervention rather than to estimate unknown quantities. 
A related problem considered in \cite{aglietti2020causal} is to identify the optimal hard intervention that maximizes the expected value of a target node in a known DAG. Towards this, the authors proposed a three-step approach for acquiring new interventions: (i) an exploration set of intervention targets is identified using \cite{lee2018structural}, (ii) a Gaussian process (GP) model is fitted between the exploration set and the target node, and (iii) the expected improvement of the model output is used as acquisition function. {\textcolor{black}{When the ground-truth causal graph is known, (i) leads to a reduced feasible space that is guaranteed to contain the optimal intervention.} \rec{Similar techniques as (i) in \cite{lee2018structural,lee2020characterizing} could be applied as a pre-processing step to first identify a set of potential intervention targets and then run our method for each element in this set.}} \rev{Note that the causal structure is used to prune intervention targets and incorporated into the predictive model by specifying the GP model as a function of the pruned targets. While this acquisition function can be optimized efficiently over continuous-valued interventions in this model, it could be beneficial to exploit the structural relationship between intervention targets more explicitly.} Concurrent work \cite{sussex2023modelbased} leveraged the causal structure by using multiple GP models, one for each causal mechanism, and proposed an upper confidence bound acquisition function. However, since this acquisition function has no closed form, it leads to scalability issues for continuous-valued interventions in higher dimensions. Similar computational issues arise when using a mutual-information-based acquisition function as proposed in \cite{branchini2022causal} to extend causal Bayesian optimization to the unknown DAG setting. These three acquisition functions were discussed in more detail in Supplementary Information~\ref{add_baselines}, and a comparison to our proposed method was shown in Extended Data~Fig.~\ref{fig:rebut-s7}. %They proposed a mutual-information-based acquisition function and used mixture sampling techniques to approximate it. As such approximation is required, the computation complexity scales linearly with the number of candidate interventions, which is infinite in the continuous-value case.

Bayesian optimization techniques for structured systems have also been explored in contexts beyond causality. For instance, \cite{astudillo2021bayesian,kusakawa2021bayesian} investigated optimizing the output of a network of functions, where each function is a mapping to a node from its parent nodes in a given DAG. While \cite{astudillo2021bayesian} focused on noiseless systems, \cite{kusakawa2021bayesian} considered noisy systems with a fully-connected graph. Additionally, \cite{alabed2022bograph} tackled the same problem with an unknown DAG, proposing a method to first learn the causal structure at each time step. It is worth noting that these works differ from causal Bayesian optimization in that they optimize for system configurations, i.e., the realization of the causal variables, which can be viewed as a special form of hard interventions that target all causal variables.

%Our work differs from previous works in several ways. First, rather than optimizing a single target node, we aim to optimize the entire distribution mean to match a desired distribution. Second, we explicitly incorporate the causal structure of the system into our acquisition function by utilizing the parameters of the mechanisms. Third and importantly, we derive a closed-form expression for the acquisition, allowing for efficient optimization over continuous-valued interventions that enable us to scale to much higher dimensions.

\smallskip

\noindent\textbf{Causal Bandits, RL, and Control.}
In bandit settings, the goal is to minimize cumulative regret by selecting an arm at each time step \cite{bubeck2012regret}. Prior works have shown that utilizing causal relations between arms and regret can improve the dependence on the total number of arms compared to previous regret bounds \cite{lattimore2016causal, lee2018structural, yabe2018causal}. These works are mainly based on known causal structure (e.g., \cite{lattimore2016causal}) and have recently been extended to unknown causal structure using graphical concepts such separating sets \cite{de2020causal} and directed trees \cite{lu2021causal}. Concurrent work \cite{varici2022causal} considered unknown causal structure and the problem of selecting a set of nodes to intervene on. To avoid estimating a combinatorial number of interventional distributions, the proposed approach estimates all model parameters to compute the reward. In the  applications motivating our work, the simple regret at the last round is a more appropriate metric than cumulative regret, a setting related to the optimal-arm identification problem \cite{audibert2010best}, where the quantity of interest is the ``arm'' with the best performance. In addition, in our problem setting we are concerned with continuous-valued variables and interventions. This results in a harder exploration problem with an infinite number of arms and requires parametric assumptions such as linearity to obtain tractable algorithms. An interesting direction for future work is to test and analyze our uncertainty-based acquisition functions on bandit problems with continuous arms. RL differs from bandits and our setting in that actions accumulate and the system is not reset after each iteration. In systems defined by causal rules, the authors in \cite{zhang2022online} proposed an online algorithm to identify the optimal policy for discrete and finite variables. They considered mixed policy scopes \cite{lee2020characterizing}, which are defined by a target node and a set of context variables. %Each policy removes all incoming edges to the target node and add directed edges from the context variables to it. 
Control, on the other hand, typically deals with continuous time settings. A common approach is to first identify the system and then solve for the optimal controller \cite{aastrom2021feedback}. The authors in \cite{weichwald2022learning} designed a challenge at the intersection of these areas, providing an initiative to explore the utilization of causal structures for successful control.
}

\section{Other Applications of Our Framework}\label{sec:j}

While here discussed in the context of the application to cell reprogramming, we envision our framework to be applicable broadly for sequential design problems arising in complex systems.
% %
% Examples include those in fluid mechanics \cite{brunton2019machine}, optimal sensor placement \cite{yuen2015efficient}, emulators, as well as environmental and climate modeling~\cite{serrao2013climate}. 
% %
% In such physical systems, the variables are often governed by known structural relationships. Our framework allows encoding and using this knowledge for downstream analysis, to identify the key parameters and acquire new data in a sample-efficient way.}
%{
%
In the following, we provide other applications where we envision that our method could be used to identify optimal interventions, with potentially significantly fewer samples than current methods. 
We focus on contexts where there is an underlying network structure describing the relations among the variables, a setting that is ubiquitous in a wide range of domains.
%
%Such problems are ubiquitous in a wide-range of domains.
%
In the following, we describe how the structured network emerges in the context of fluid mechanics, dynamic pricing and cancer immunotherapy.
We then provide an example sequential design problem in each domain and discuss how our method can be adapted to make use of the underlying network structure for optimal intervention design.
The goal of this discussion is to showcase how very different applications fit into our proposed framework; once set up, practical considerations can be injected to solve the domain-specific problems.

\smallskip

\noindent\textbf{Fluid mechanics.}
The presence of flow structures and physical interactions among fluid elements, such as a group of vortices, give rise to complex dynamics and turbulence in fluid mechanics \cite{taira2022network}.
These structures and interactions can be modelled via an underlying network structure by discretizing the fluid flow;
for example, the so-called \textit{modal interaction network} describes the interactions among spatial modes of unsteady flows.
This network is often represented through partial differential equations with potentially nonlinear terms and can be obtained through Galerkin projection of the governing equations on the most energetic Fourier modes or principal components \cite{holmes2012turbulence} .

Recent work has explored how to use a directed modal interaction network to control the unsteady wake behind a circular cylinder \cite{nair2018networked}.
In this case, time series from perturbed flow simulations were used to regress and estimate the linear and nonlinear interactions in the network.
Then based on the resulting networked oscillator model, feedback control was implemented to suppress the modal amplitudes that cause wake unsteadiness with drag reduction.
Our framework can be used to identify the optimal interventions by sequentially simulating perturbed flows, updating the parameters of the network model, and optimizing feedback control.
%
%In particular, the posterior update can be replaced by the Bayesian linear regression of the interactions, while 
In this application, the CIV-class of acquisition functions would be modified for the goal of suppressing wake oscillations, for example, by setting the target mean $\bmu^*$ to a lower bound on the wake oscillations.
The resulting acquisition functions can be computed or approximated similarly as per the derivations presented in this work. %by following the derivations in this paper, thereby enabling optimal feedback control with a potentially smaller Following our derivations, the acquisition functions can be similarly computed or approximated in an appropriate manner.
Given our results, this approach may enable optimal feedback control with a reduced number of perturbed flow simulations.

\smallskip

\noindent\textbf{Dynamic pricing.}
A common assumption in dynamic pricing models is that when an individual adopts a particular product, then customers that are close to this individual (for example with respect to social interactions or direct business relationships) become more likely to adopt the same product. 
The increased benefit with respect to the adoption of a product given by the neighboring customers is known as the \textit{local network benefit} \cite{hill2006network}.
This benefit has been incorporated in studies to devise optimal dynamic development, launch, or post-launch strategies of a product for a network of customers \cite{sunar2019optimal}.
Here the customer network is a connected graph that encodes how the adoption of a product diffuses among customers through time.
Assuming a fully known network model, this work explicitly solves for the optimal development, launch, and post-launch strategies jointly.
An interesting future direction, as mentioned in the discussion section, is to deal with the case where the network model is unknown;
an example being when the firm needs to learn the heterogeneity and parameters of customer-customer interactions from accumulating sales
data~\cite{den2015dynamic}.
Our framework can be applied in this setting to solve for the optimal dynamic pricing strategy in an uncertain environment where the parameters of the customer network need to be estimated.
In this application, an intervention would correspond to a pricing policy and the target mean would be the target revenue.
Using an appropriate posterior model of the unknown parameters in a Bayesian setting, our proposed acquisition functions could provide interesting candidates for designing optimal policies.

\smallskip

\noindent\textbf{Cancer immunotherapy.} 
The development of chimeric antigen receptor (CAR)-T cell therapy is considered a revolution in cancer treatment for its effectiveness in, for example, B cell leukemia and lyphoma \cite{sterner2021car}.
In CAR-T cell therapy, the T cells are extracted from the blood of a patient and engineered in the laboratory so as to induce a more powerful anti-tumor response. Successfully activated cells are then given to the patient by infusion.
Although such treatment has shown great success for certain subsets of blood cancer, it has been challenging to obtain effective responses in solid tumors and hematological malignancies~\cite{sterner2021car}. 

One approach to engineer T cells with desirable properties is through genome editing with lentiviruses \cite{milone2021engineering}.
In this context, the interventions and samples could for example correspond to gene knockouts and single-cell RNA sequencing data.
This is similar to the motivating example of cellular reprogramming considered in this work, where the \textit{gene regulatory network} encodes the relations between individual genes and how a genetic perturbation on one gene affects the expression of other genes.
In CAR-T cell therapy, the desired outcome is better described by a target distribution over cell states (e.g., the enrichment of effector cells and depletion of exhausted cells) rather than the mean expression of a target cell state, which we considered in this work. 
%
%will no longer be described by the mean expression of a target cell state but rather defined by a distribution of cell states (e.g., a certain proportion of effector cells versus exhausted cells).
%
The calculations of the acquisition function can be adapted accordingly, thereby giving rise to an iterative framework for identifying perturbations that could be prioritized for experimental validation. 
%Based on this desired outcome description, we can then adapt the calculations of the acquisition functions in order to run the iterative framework for identifying well-behaved perturbations.
%}

% \begin{@fileswfalse}
% \bibliography{references}
% \end{@fileswfalse}
% \bibliography{references}
%TC:endignore

\end{document}